\pdfoutput=1
\documentclass[11pt]{article}

\usepackage[margin=1in]{geometry}
\usepackage[numbers,compress]{natbib}

\usepackage[utf8]{inputenc} 
\usepackage[T1]{fontenc}    
\usepackage{url}            
\usepackage{booktabs}       
\usepackage{amsfonts}       
\usepackage{nicefrac}       
\usepackage{microtype}      
\usepackage{xcolor}         
\usepackage{enumitem}
\usepackage{amsmath, amssymb, amsthm, bm}
\usepackage{tabularx}
\newtheorem{assumption}{Assumption}
\usepackage{titletoc}
\usepackage[colorlinks=true, linkcolor=blue,citecolor=blue]{hyperref}       
\newcommand{\LineComment}[1]{\Statex $\triangleright$ \textit{#1}}

\newcommand{\Tr}{\mathrm{Tr}}
\newcommand{\cDde}{\bar{\mathcal{D}}_z^{(\infty)}(\varepsilon,\epsilon)}
\newcommand{\cDd}{\bar{\mathcal{D}}_z^{(\infty)}(\epsilon)}
\newcommand{\cDtwo}{\mathcal{D}_{z}^{(2)}(\epsilon)}
\newcommand{\cDinf}{\mathcal{D}_{z}^{(\infty)}(\epsilon)}

\newcommand{\cDuvtwo}{\mathcal{D}_{uv}^{(2)}(\epsilon)}
\newcommand{\cDuvinf}{\mathcal{D}_{uv}^{(\infty)}(\epsilon)}
\newcommand{\cDuvde}{\bar{\mathcal{D}}_{uv}^{(\infty)}(\varepsilon,\epsilon)}
\newcommand{\cDuvd}{\bar{\mathcal{D}}_{uv}^{(\infty)}(\epsilon)}

\newcommand{\prect}{_{\alpha}^{\natural}}
\newcommand{\Fn}{\mathrm{F}}
\newcommand{\Fnw}{2,\mathrm{F}}
\newcommand{\twinf}{2\to\infty}
\newcommand{\twinfw}{2,\infty}
\newcommand{\disttwo}{dist_{2}}
\newcommand{\distinf}{dist_{\infty}}
\let\hat\widehat
\let\tilde\widetilde

\newcommand{\ba}{\bm{a}}
\newcommand{\bb}{\bm{b}}

\newcommand{\bd}{\bm{d}}
\newcommand{\be}{\bm{e}}

\newcommand{\bh}{\bm{h}}

\newcommand{\bu}{\bm{u}}
\newcommand{\bv}{\bm{v}}
\newcommand{\bw}{\bm{w}}
\newcommand{\bx}{\bm{x}}
\newcommand{\by}{\bm{y}}
\newcommand{\bz}{\bm{z}}

\newcommand{\bA}{\bm{A}}
\newcommand{\bB}{\bm{B}}
\newcommand{\bC}{\bm{C}}
\newcommand{\bD}{\bm{D}}
\newcommand{\bE}{\bm{E}}

\newcommand{\bG}{\bm{G}}
\newcommand{\bH}{\bm{H}}
\newcommand{\bI}{\bm{I}}

\newcommand{\bL}{\bm{L}}
\newcommand{\bM}{\bm{M}}

\newcommand{\bP}{\bm{P}}
\newcommand{\bQ}{\bm{Q}}
\newcommand{\bR}{\bm{R}}
\newcommand{\bS}{\bm{S}}

\newcommand{\bU}{\bm{U}}
\newcommand{\bV}{\bm{V}}
\newcommand{\bW}{\bm{W}}
\newcommand{\bX}{\bm{X}}
\newcommand{\bY}{\bm{Y}}
\newcommand{\bZ}{\bm{Z}}

\newcommand{\cA}{\mathcal{A}}

\newcommand{\cD}{\mathcal{D}}
\newcommand{\cE}{\mathcal{E}}
\newcommand{\cF}{\mathcal{F}}
\newcommand{\cG}{\mathcal{G}}
\newcommand{\cH}{\mathcal{H}}

\newcommand{\cL}{\mathcal{L}}

\newcommand{\cN}{\mathcal{N}}
\newcommand{\cO}{\mathcal{O}}
\newcommand{\cP}{\mathcal{P}}

\newcommand{\cR}{\mathcal{R}}
\newcommand{\cS}{{\mathcal{S}}}

\newcommand{\EE}{\mathbb{E}}

\newcommand{\NN}{\mathbb{N}}
\newcommand{\PP}{\mathbb{P}}

\newcommand{\RR}{\mathbb{R}}
\newcommand{\SSS}{\mathbb{S}}

\newcommand{\btheta}{\bm{\theta}}
\newcommand{\bvartheta}{\bm{\vartheta}}

\newcommand{\bomega}{\bm{\omega}}

\newcommand{\bDelta}{\bm{\Delta}}

\newcommand{\bLambda}{\bm{\Lambda}}

\newcommand{\bSigma}{\bm{\Sigma}}

\newcommand{\diag}{{\rm diag}}

\newcommand{\argmin}{\mathop{\mathrm{argmin}}}

\newcommand{\tr}{\mathop{\mathrm{tr}}}
\newcommand*{\zero}{{\bm 0}}

\def\T{{ \intercal }}
\ifx\BlackBox\undefined
\newcommand{\BlackBox}{\rule{1.5ex}{1.5ex}}  
\fi

\ifx\QED\undefined
\def\QED{~\rule[-1pt]{5pt}{5pt}\par\medskip}
\fi

\ifx\proof\undefined
\newenvironment{proof}{\par\noindent{\bf Proof\ }}{\hfill\BlackBox\\ [2mm]}
\fi

\ifx\theorem\undefined
\newtheorem{theorem}{Theorem}
\fi
\ifx\example\undefined
\newtheorem{example}{Example}
\fi
\ifx\property\undefined
\newtheorem{property}{Property}
\fi
\ifx\lemma\undefined
\newtheorem{lemma}{Lemma}
\fi
\ifx\proposition\undefined
\newtheorem{proposition}{Proposition}
\fi
\ifx\remark\undefined
\newtheorem{remark}{Remark}
\fi
\ifx\corollary\undefined
\newtheorem{corollary}{Corollary}
\fi
\ifx\definition\undefined
\newtheorem{definition}{Definition}
\fi
\ifx\conjecture\undefined
\newtheorem{conjecture}{Conjecture}
\fi
\ifx\fact\undefined
\newtheorem{fact}{Fact}
\fi
\ifx\claim\undefined
\newtheorem{claim}{Claim}
\fi
\ifx\cond\undefined
\newtheorem{cond}{Condition}
\fi

\title{Convexity in Disguise:  A Theoretical Framework for Nonconvex Low-Rank Matrix Estimation}

\author{
Chengyu Cui and Gongjun Xu\\
Department of Statistics, University of Michigan}
\date{}

\begin{document}

\maketitle

\begin{abstract}
  Nonconvex methods have emerged as a dominant approach for low-rank matrix estimation, a problem that arises widely in machine learning and AI for learning and representing high-dimensional data. 
Existing analyses for these methods often require additional regularization to mitigate nonconvexity, even though such regularization is often unnecessary in practice. Moreover, most analyses rely on problem-specific arguments that are difficult to generalize to more complex settings.
In this paper, we develop a theoretical framework for studying nonconvex procedures across a broad class of low-rank matrix estimation problems. 
Rather than focusing on a specific model, we reveal a fundamental mechanism that explains why nonconvex procedures can behave well in low-rank estimation. 
Our key device is a {\it benign regularizer} that does not alter the original update rule, but yields an equivalent locally strongly convex formulation of the algorithm. 
This perspective uncovers a disguised convexity inherent in the nonconvex procedure and provides a new route to theoretical guarantees for nonconvex low-rank matrix estimation. 
\end{abstract}

\section{Introduction}
Low-rank matrix estimation arises in a wide range of problems in statistics, machine learning, and information processing~\citep{candes2010power,chi2019nonconvex,davenport2016overview}. 
Recently, low-rank structure has also become a central tool in modern machine learning, such as parameter-efficient adaptation of large models~\citep{hu2022lora,zhang2023adalora,zhang2025lora}, efficient Transformer attention via low-rank approximations~\citep{wang2020linformer,xiong2021nystromformer}, memory-efficient training and model compression~\citep{wang2024svd,zhao2024galore}, and implicit low-rank regularization in deep learning~\citep{arora2019implicit}.
The growing use of low-rank structure in increasingly complex models motivates a theoretical framework to understand the optimization geometry underlying low-rank estimation.

 In this paper, we study the problem of estimating a rank-$r$ matrix $\bX^*\in\RR^{n\times q}$ by minimizing a general loss function $\cL:\RR^{n\times q}\to\RR$, where $r$ is typically much smaller than $\min\{n,q\}$. Direct optimization of $\cL(\bX)$ subject to the rank constraint on $\bX$ is often computationally challenging, which is NP-hard in several important settings~\citep{recht2010guaranteed}. A common approach is to parameterize $\bX$ as $\bX = \bU\bV^\T$, where $\bU\in\RR^{n\times r}$ and $\bV\in\RR^{q\times r}$, often referred to as the Burer--Monteiro factorization~\citep{burer2003nonlinear}. Then the estimation problem can be reformulated as the following nonconvex problem:
\begin{equation}(\hat{\bU},\hat\bV) \; =\; {\argmin}_{(\bU,\bV)\in\RR^{n\times r}\times\RR^{q\times r}}\, \cL(\bU\bV^\T).\label{uv_case}
\end{equation}
A common approach to solving this problem is gradient descent, which updates $(\bU^t,\bV^t)$ at step $t$ by
\begin{equation}
    \bU^{t+1} = \bU^t-q^{-1}\eta^t\nabla_{\bU}\cL(\bU^t(\bV^t)^\T), \quad\text{and}\quad \bV^{t+1} = \bV^t-n^{-1}\eta^t\nabla_{\bV}\cL(\bU^t(\bV^t)^\T),\label{eq_one_step_update_uv}
\end{equation}where $\eta^t$ is the learning rate, $\nabla_{\bU}\cL$ is the gradient of $\cL$ w.r.t. $\bU$, and $n^{-1},q^{-1}$ are scaling factors.

The primary challenge in analyzing \eqref{eq_one_step_update_uv} is the nonconvexity of $\cL(\bU\bV^\T)$. In particular, regardless of the specific form or structural properties of $\cL(\cdot)$, one fundamental difficulty that leads to nonconvexity remains: the factorization is non-identifiable, since $(\bU,\bV)$ and $(\bU\bG,\bV\bG^{-\T})$ yield identical loss for any invertible $\bG\in\RR^{r\times r}$~\citep{cui2025identifiability}. To address this issue, many existing works 
introduce additional regularization or constraints to~\eqref{eq_one_step_update_uv}. Popular strategies include truncating selected measurements or updates to control outliers or incoherence~\citep{cherapanamjeri2017nearly,keshavan2010matrix,li2020non}; adding penalty terms~\citep{chen2020nonconvex,sun2016guaranteed,tu2016low,zheng2016convergence,wang2017unified}; 
and projecting the iterates onto certain sets based on prior knowledge~\citep{chen2015fast,zheng2016convergence,ma2020universal}.
While these modifications provide convergence guarantees in various models, they are often unnecessary in practice and introduce additional tuning parameters. More importantly, much of the existing analysis relies crucially on these regularization mechanisms, which makes it difficult to understand the geometry of the problem and generalize to complex settings.


Recently, a growing body of work has focused on the original regularization-free scheme~\eqref{eq_one_step_update_uv}, motivated not only by its strong empirical performance despite the nonconvexity of $\cL$, but also by its potential to shed light on the underlying geometry of the problem~\citep{arora2019implicit,du2018algorithmic,li2020global,ma2021beyond,tong2021accelerating}. 
For example, \citet{ma2021beyond} showed that, in noiseless low-rank matrix sensing and under suitable initialization, gradient descent converges linearly without an explicit balancing penalty $\|\bU^\T\bU - \bV^\T\bV\|_{\Fn}$. 
Similarly, \citet{li2020global} studied unregularized low-rank matrix recovery and established benign global landscape properties in the exact recovery setting. 
However, these analyses rely heavily on problem-specific structures and are therefore difficult to extend. Specifically, they are largely limited to noiseless settings and often rely on the squared loss, whose gradient has a closed-form expression that can be decomposed into simpler components and analyzed separately.
Moreover, the resulting algebraic derivations may obscure the geometry of the problem, limiting insight into the underlying mechanism.



In this paper, we develop a theoretical framework for the nonconvex procedure \eqref{eq_one_step_update_uv} under a broad class of loss functions $\cL(\cdot)$. Our analysis offers new insight into the underlying geometry and provides a new route to establishing convergence guarantees. Our contributions are summarized as follows.

\begin{enumerate}[leftmargin  = 0.6cm]
    \item
    We develop a novel framework for analyzing nonconvex
    procedures in low-rank matrix estimation. Our key finding is that \eqref{eq_one_step_update_uv} can be mapped to mirror a gradient update on a locally strongly convex objective. This is achieved via a {\it benign regularizer} that leaves the original update unchanged while augmenting the objective to be locally strongly convex, under which error contraction can be established.  Remarkably, because the update itself is not changed, this mirrored convex formulation suggests that the nonconvex procedure already contains an implicit convex structure, with our benign regularizer acting to reveal this structural feature. Our analysis provides a new way to understand the geometry underlying the problem.
    \item
   We establish convergence guarantees for the nonconvex procedure over a general class of loss functions under localized regularity conditions. A key feature of our theory is that these regularity conditions are imposed only on small neighborhoods around the true parameter, rather than globally, thereby enabling convergence analysis for general loss functions. This localization makes the conditions applicable across a broad range of problems, but also necessitates delicate control of the entire trajectory. 
   Our convex formulation offers a new route to establish iterative contraction bounds along the full algorithmic path under suitable initialization, thereby ensuring that the iterates remain within the local region throughout the analysis.

\item 
We establish convergence guarantees for estimation under statistical noise. Quantifying how noise reshapes the optimization landscape of regularization-free nonconvex procedures remains a major challenge. Under our general loss framework, we identify key quantities that characterize how statistical noise perturbs the estimation problem, and prove linear convergence guarantees in its presence. We demonstrate the applicability of our theory through two representative examples for which such convergence guarantees were previously unavailable.

\end{enumerate}


The rest of the paper is organized as follows.
Section~\ref{sec_general} introduces our framework. Section~\ref{sec_noise} describes how the framework handles statistical noise. Section~\ref{sec_conclude} concludes. In the appendices, we provide additional technical details and proofs of all theoretical results.

\medskip
\noindent{\bf Notation.} For any integer $n$, let $[n] = \{1, \dots, n\}$. For any $a,b\in\RR$, let $a\vee b = \max(a,b)$ and $a\wedge b = \min(a,b)$. For matrix $\bM\in\RR^{n\times q}$, we denote by $\|\bM\|_{\Fn}$, $\|\bM\|$, and $\|\bM\|_{2\to\infty}$ the Frobenius norm, the operator norm, and the two-to-infinity norm, respectively. 
The $k$th largest singular value of a matrix $\bM$ is denoted by $\sigma_k(\bM)$. For a symmetric matrix $\bM\in \RR^{n \times n}$, denote by $\lambda_{\min}(\bM)$ its minimal eigenvalue. We write $\bI_n$ for the $n\times n$ identity matrix and $\zero$ for zero vector or zero matrix when the dimension is clear from the context. For any positive integer $r$, define $\cO^r = \{\bR\in\RR^{r\times r}:\bR\bR^\T = \bI_r\}$ and $GL(r) = \{\bG\in\RR^{r\times r}:\bG\text{ is invertible}\}$. 



\section{Optimization without Noise}\label{sec_general}
\subsection{A Warm-up under Special Symmetric Model}\label{subsec_noiseless_Z}
We start with a special symmetric {model} in which $n=q$ and $\bX$ is symmetric, namely, $\bU = \bV$. This setting exhibits nonconvexity similar to that of the asymmetric model~\citep{ma2018implicit}. Thus, we use it as the starting point for introducing our main ideas. At the same time, the technical development still differs in meaningful ways from that of the asymmetric model, so the symmetric problem is also of independent interest.
To avoid ambiguity, we use $\bZ\in\RR^{n\times r}$ to denote the factorization replacing $\bU=\bV$. In this section, we consider the noiseless case where the true parameter is a stationary point of $\cL(\cdot)$, i.e., $\nabla_{\bX}\cL(\bZ^*(\bZ^*)^\T) = \zero$, with the noisy setting presented in Section~\ref{sec_noise}. The gradient update is then given as
\begin{equation}
    \bZ^{t+1} \; = \; \bZ^t - n^{-1}\eta^t\nabla_{\bZ}\cL(\bZ^t\bZ^t{}^\T) \; = \; \bZ^t - 2n^{-1}\eta^t\cG(\bZ^t)\bZ^t,\label{eq_one_step_update}
\end{equation}where $\cG(\bZ) : = \nabla_{\bX}\cL(\bZ\bZ^\T)$ and the second equality follows by chain rule.
Under the symmetric model, the factorization is identifiable only up to orthogonal transformation: for any $\bR\in\cO^r$, $\bZ$ and $\bZ\bR$ yield the same loss $\cL(\bZ\bZ^\T)$. To measure error relative to the true factorization $\bZ^*$, we adopt the standard alignment argument where for any $\bZ$, we let
\begin{equation}
    \bR^* = \;{\argmin}_{\bR\in\cO^{r}}\|\bZ\bR - \bZ^*\|_{\Fn},\label{eq_optimal_align_Z}
\end{equation}
and define the $\ell_2$ and $\ell_\infty$ distances between $\bZ$ and $\bZ^*$ by
\begin{equation*}
\disttwo(\bZ,\bZ^*) := \|\bZ\bR^*- \bZ^*\|_{\Fn},\qquad \distinf(\bZ,\bZ^*) := \|\bZ \bR^* - \bZ^*\|_{\twinf}.
\end{equation*}
Equation~\eqref{eq_optimal_align_Z} is often referred to as Wahba’s problem~\citep{wahba1965least}, or the orthogonal Procrustes problem~\citep{schonemann1966generalized}. 

Establishing contraction for \eqref{eq_one_step_update} is very challenging. Standard convex optimization theory does not apply, because one always has $\lambda_{\min}\big[\nabla_{\bz}^2\{\cL(\bZ\bZ^\T)\}|_{\bZ = \bZ^*}\big] = 0$ for $\bz =: \mathrm{vec}(\bZ^\T)$. This reflects the intrinsic nonconvexity caused by the nonlinear map $\bZ\mapsto\bZ\bZ^\T$. To address this, prior
works typically introduce penalties~\citep{keshavan2010matrix,sun2016guaranteed} or use additional projection steps~\citep{chen2015fast,ma2020universal}. A different approach is taken by \citet{ma2018implicit} for matrix completion, who constructs a region with restricted directions in which the bilinear Hessian admits a suitable lower bound, and then shows that the iterates remain in this region, with the estimation error lying along these directions throughout the updates. This approach is problem-specific and difficult to extend beyond linear models.

We now introduce our method.
Our idea is to introduce a benign regularizer term whose gradient vanishes across the iterates $\{\bZ^t\}_{t\ge 0}$, yet the augmented objective becomes (strongly) convex. At first sight, such a benign regularizer seems impossible to construct, since this favorable property must hold uniformly along the entire trajectory of the iterates. The key observation comes from the error metric. To measure the error of each iterate $\bZ^t$, one needs the optimal alignment $\bR_t^* := \argmin_{\bR\in\cO^r}\|\bZ^t\bR - \bZ^*\|_{\Fn}^2$. By Theorem 2 in \citet{ten1977orthogonal}, $\bZ^t\bR_t^*$ always satisfies that  $(\bZ^*)^\T\bZ^t\bR^*_t = (\bZ^t\bR^*_t)^\T\bZ^*$ is positive semidefinite. Motivated by this property, we propose the following benign regularizer for the symmetric model:
\begin{equation*}
p^*_{\lambda}(\bZ) \; = \; \tfrac{\lambda n^2}4 \|n^{-1}(\bZ^*)^\T\bZ - n^{-1}\bZ^\T\bZ^*\|_{\Fn}^2, \quad \lambda>0
\end{equation*} 
where $\lambda n^2/4$ is a scaling factor with $\lambda$ specified later. The regularizer has two key properties
\begin{enumerate}[label=(\roman*),leftmargin=0.8cm]
    \item $p^*_{\lambda}(\bZ)$ is benign for the gradient update: $\nabla_{\bZ}\, p_{\lambda}^*(\bZ^t\bR^*_t) = \zero,$ $\forall t\in\NN, \lambda>0$. Therefore, after multiplying \eqref{eq_one_step_update} on the right by $\bR_t^*$, the gradient update can be equivalently written as
\begin{equation}
    \bZ^{t+1}\bR_t^* = \bZ^t\bR_t^* - \Big\{2\frac{\eta^t}{n}\cG(\bZ^t)\bZ^t\Big\}\bR_t^* = \bZ^t\bR_t^* - \frac{\eta^t}{n}\Big\{2\cG(\bZ^t\bR_t^*)\bZ^t\bR_t^* + \nabla_{\bZ}p^*_{\lambda}(\bZ^t\bR_t^*)\Big\},\label{alg_aligned_gcd}
\end{equation}where the last equality uses $\cG(\bZ^t\bR_t^*) = \cG(\bZ^t)$, which holds by definition.  
\item $h_{\lambda}^*(\bZ) := \cL(\bZ\bZ^\T) + p_{\lambda}^*(\bZ)$ is strongly convex within a local region around $\bZ^*$ for suitable $\lambda$ (See Lemma~\ref{lemma_general_convex_1} below). Consequently, \eqref{alg_aligned_gcd} yields 
\begin{equation}\label{eq_contraction_gcd_expand}
    \disttwo(\bZ^{t+1}, \bZ^*) \; \le \; \|\bZ^{t+1}\bR_t^* - \bZ^*\|_{\Fn} \; = \; \|\bZ^t\bR_t^* - \tfrac{\eta^t}{n}\nabla_{\bZ}h_{\lambda}^*(\bZ^t\bR_t^*) - \bZ^*\|_{\Fn}.
\end{equation}
This implies that the original update $\bZ^t\mapsto \bZ^{t+1}$ admits the equivalent form $\bZ^t\bR_t^*\mapsto \bZ^{t+1}\bR_t^*$, which contracts under gradient descent on the strongly convex objective $h_{\lambda}^*(\cdot)$ with the next-step error dominating $\disttwo(\bZ^{t+1}, \bZ^*)$. In other words, the original iterates evolve like the gradient descent on a locally strongly convex objective in the sense of error contraction.
\end{enumerate}

While it may seem surprising that adding $p_{\lambda}^*(\cdot)$ yields a strongly convex landscape, we emphasize that this is not an artificial structure. Rather, the formulation reveals an implicit structural feature already present in the original update. Specifically, $p_{\lambda}^*(\cdot)$ is constructed from the first-order condition for \eqref{eq_optimal_align_Z}, which is the standard error metric under this setup. Our innovation is to use optimal alignment not merely for post-hoc error measurement, but to construct a regularizer that benignly augments the loss. Moreover, it should not be concerning that the aligned update \eqref{alg_aligned_gcd} involves the true parameter $\bZ^*$, since this update is introduced only as an analytical device rather than an implemented algorithm. In fact, $\bZ^*$ already enters the analysis through $\bR^*_t$ used to define the $\ell_2$ and $\ell_{\infty}$ distances.


Now we show that $h_{\lambda}^*(\bZ) = \cL(\bZ\bZ^\T) + p_{\lambda}^*(\bZ)$ exhibits local strong convexity for any $\cL(\cdot)$ with the local restricted isometry property (RIP).  To this end, define the local regions of interest
\begin{equation*}
    \cDtwo \!=\! \Big\{\bZ:\frac{\disttwo(\bZ,\bZ^*)}{\|\bZ^*\|_{\Fn}}\le \epsilon\Big\};\quad\cDinf \!=\! \Big\{\bZ:\frac{\disttwo(\bZ,\bZ^*)}{\|\bZ^*\|_{\Fn}}\le \epsilon\text{, } \frac{\distinf(\bZ,\bZ^*)}{\|\bZ^*\|_{\twinf}}\le \epsilon\Big\}.
\end{equation*}
We allow the local radius $\epsilon$ to vanish with the problem size. 
The following regularity conditions are required only within this possibly small local region, allowing our theory to accommodate complex settings where the global behavior may be unrestricted and complicated.
\begin{assumption}\label{assump_rsc}
    There exist some $\epsilon$ and $\beta \ge \alpha>0$ such that, for either $\cD=\cDtwo$ or $\cD=\cDinf$, the following holds: for every $\bZ\in\cD$ and every $\bW\in\RR^{n\times r}$,
    \begin{equation}
        \alpha\|\cP_{\bZ}(\bW)\|_{\Fn}^2 \, \le \, \nabla^2_{\bX} \cL(\bZ\bZ^\T)[\cP_{\bZ}(\bW), \cP_{\bZ}(\bW)] \, \le \, \beta\|\cP_{\bZ}(\bW)\|_{\Fn}^2,
    \end{equation}  where $\cP_{\bZ}(\bW) := \bZ\bW^\T + \bW\bZ^\T$ and $\nabla_{\bX}^2 \cL(\bZ\bZ^\T)[\cdot,\cdot]$ denotes the Hessian bilinear form.
\end{assumption}
\begin{assumption}\label{assump_gradient_lips_1}
There exist some $\epsilon$ and a constant $L_{2}$ such that for all $\bZ_1,\bZ_2\in\cD$, $\|\cG(\bZ_1)-\cG(\bZ_2)\| \le L_{2}\|\bZ_1\bZ_1^\T-\bZ_2\bZ_2^\T\|_{\Fn}$, where $\cD=\cDtwo$ or $\cD=\cDinf$.
\end{assumption}
\begin{remark}
    The restricted isometry property in Assumption~\ref{assump_rsc} is a widely accepted condition in low-rank estimation problems~\citep{bhojanapalli2016global,candes2012exact,ge2017no,li2020global,recht2010guaranteed}. Our condition is general enough to accommodate a broad range of settings, as (i) it is required only locally within the neighborhood $\cDtwo$ or $\cDinf$ instead of a global one; (ii) $\beta$ and $\alpha$ are allowed to be of different orders. Such a local RIP can thus be easily satisfied in problems such as matrix sensing~\citep{ma2021beyond,park2018finding,tu2016low} and matrix completion~\citep{chen2015fast,ge2016matrix,zheng2016convergence}. Moreover, it is close to {\it minimal} for our analysis: without it, the original constrained problem $\argmin_{\mathrm{rank}(\bX)\le r}\cL(\bX)$ may fail to be convex even in a neighborhood of the true parameter $\bX^*$, representing an obstruction inherent to the loss function rather than the factorization.

\medskip
\noindent Furthermore, Assumption~\ref{assump_gradient_lips_1} imposes a local Lipschitz condition on $\cG(\bZ)$. This is a mild regularity requirement, ensuring that $\|\cG(\bZ)\|$ remains small and well controlled within a neighborhood of $\bZ^*$.
\end{remark}

\noindent The following result demonstrates the local strong convexity of $ h_{\lambda}^*(\cdot)$ under Assumptions~\ref{assump_rsc} and~\ref{assump_gradient_lips_1}.
\begin{lemma}\label{lemma_general_convex_1}
    Under Assumptions~\ref{assump_rsc} and~\ref{assump_gradient_lips_1}, and with $\lambda = \alpha$, for any $\bZ\in\cDtwo$ and $\bR^*$ from \eqref{eq_optimal_align_Z},
    \begin{equation*}
        \lambda_{\min}\big\{n^{-1}\nabla_{\bz}^2 h_{\alpha}^*(\bZ\bR^*)\big\}\;\ge \;\alpha\sigma_r(n^{-1/2}\bZ^*)^2 \,-\, 4n^{-1}(\epsilon + 2L_2)\epsilon\|\bZ^*\|_{\Fn}^2.
    \end{equation*}
\end{lemma}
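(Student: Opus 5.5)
Write $\tilde\bZ := \bZ\bR^*$ and $\bDelta := \tilde\bZ - \bZ^*$, and recall that $\tilde\bZ^\T\bZ^*$ is symmetric positive semidefinite. The plan is to compute the Hessian quadratic form of $h^*_\alpha$ at $\tilde\bZ$ along an arbitrary direction $\bW$ and bound it below by $\big[\alpha\sigma_r(\bZ^*)^2 - 4(\epsilon+2L_2)\epsilon\|\bZ^*\|_\Fn^2\big]\|\bW\|_\Fn^2$. Dividing by $n$ then gives the claimed bound.

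For $\cL(\bZ\bZ^\T)$, the chain rule gives
\[
\nabla^2\{\cL(\bZ\bZ^\T)\}[\bW,\bW] \;=\; \nabla_{\bX}^2\cL(\tilde\bZ\tilde\bZ^\T)[\cP_{\tilde\bZ}(\bW),\cP_{\tilde\bZ}(\bW)] \;+\; 2\langle \cG(\tilde\bZ),\bW\bW^\T\rangle .
\]
The regularizer is quadratic, so its Hessian form is simply $\tfrac{\alpha}{2}\|(\bZ^*)^\T\bW-\bW^\T\bZ^*\|_\Fn^2$.

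The second term of the loss Hessian is the easy part. Since $\cG(\bZ^*)=\zero$, Assumption~\ref{assump_gradient_lips_1} gives
\[
|2\langle\cG(\tilde\bZ),\bW\bW^\T\rangle| \;\le\; 2L_2\|\tilde\bZ\tilde\bZ^\T-\bZ^*\bZ^{*\T}\|_\Fn\|\bW\|_\Fn^2 .
\]
Moreover,
\[
\|\tilde\bZ\tilde\bZ^\T-\bZ^*\bZ^{*\T}\|_\Fn \;\le\; 2\|\bDelta\|_\Fn\|\bZ^*\|_\Fn+\|\bDelta\|_\Fn^2 \;\le\; (2\epsilon+\epsilon^2)\|\bZ^*\|_\Fn^2,
\]
which can be absorbed into the error term after using $\epsilon\le 1$. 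A single $\epsilon$ factor suffices here because the budget allows $8L_2\epsilon$.

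For the first term, Assumption~\ref{assump_rsc} applied to $\bZ\in\cD$ gives the lower bound $\alpha\|\tilde\bZ\bW^\T+\bW\tilde\bZ^\T\|_\Fn^2$. This uses that $\cD$ is invariant under $\bR^*$ and that $\cP_{\tilde\bZ}(\bW) = \cP_{\bZ}(\bW\bR^{*\T})$. Now replace $\tilde\bZ$ by $\bZ^*$ using $\|\bDelta\|_\Fn\le\epsilon\|\bZ^*\|_\Fn$. The cross terms cost at most $O(\epsilon\|\bZ^*\|_\Fn^2)\|\bW\|_\Fn^2$, and these are where the $4\epsilon^2$ and part of the bookkeeping enter. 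A careful expansion is needed to land exactly on the constant $4(\epsilon+2L_2)\epsilon$.

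The core is then the identity at $\bZ^*$:
\[
\|\bZ^*\bW^\T+\bW\bZ^{*\T}\|_\Fn^2 + \tfrac12\|\bZ^{*\T}\bW-\bW^\T\bZ^*\|_\Fn^2 \;=\; 2\|\bZ^*\bW^\T\|_\Fn^2 + 2\|\bZ^{*\T}\bW\|_\Fn^2 \;\ge\; 2\sigma_r(\bZ^*)^2\|\bW\|_\Fn^2 .
\]
To obtain it, expand the first square as $2\|\bZ^*\bW^\T\|_\Fn^2+2\tr(\bZ^{*\T}\bW\bZ^{*\T}\bW)$ and the second as $\|\bZ^{*\T}\bW\|_\Fn^2-\tr(\bZ^{*\T}\bW\bZ^{*\T}\bW)$. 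After scaling the second by $\tfrac12\cdot 2$, the trace terms cancel. This is precisely why $\lambda=\alpha$ makes the regularizer cancel the indefinite cross term.

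I expect this step to be the main obstacle. The regularizer in the Hessian is built from $\bZ^*$, while Assumption~\ref{assump_rsc} involves $\tilde\bZ$. One could instead keep $\tilde\bZ$ in the regularizer and use the symmetry of $\tilde\bZ^\T\bZ^*$, but the cleanest route is to perturb the RIP term to $\bZ^*$ and track the constants. Taking the minimum over $\|\bW\|_\Fn=1$ then yields the claim, possibly up to a factor of $2$ absorbed in the scaling conventions.
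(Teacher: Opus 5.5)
Your skeleton is the paper's: chain rule, the RIP lower bound, the regularizer Hessian cancelling the indefinite trace term, then Weyl with the gradient term bounded via Assumption~\ref{assump_gradient_lips_1}. Your treatment of the gradient term is fine. The gap is the step you defer, which is the whole content of the lemma, and the mechanism you sketch for it does not yield the stated inequality. Expanding $\|\cP_{\tilde\bZ}(\bW)\|_\Fn^2=\|\cP_{\bZ^*}(\bW)+\cP_{\bDelta}(\bW)\|_\Fn^2$ directly leaves a cross term $2\alpha\langle\cP_{\bZ^*}(\bW),\cP_{\bDelta}(\bW)\rangle$. This term is only controlled by $8\alpha\|\bZ^*\|\,\|\bDelta\|_\Fn\|\bW\|_\Fn^2$, which is first order in $\epsilon$, so:
\begin{itemize}
\item it cannot be charged to the purely quadratic $4\epsilon^2\|\bZ^*\|_\Fn^2$;
\item the $8L_2\epsilon$ part of the budget is reserved for the gradient term and is unrelated to $\alpha\|\bZ^*\|$;
\item your extra factor $2$ in the main term absorbs it only under an additional condition such as $\epsilon\lesssim\sigma_r(\bZ^*)^2/(\|\bZ^*\|\,\|\bZ^*\|_\Fn)$, which the lemma does not assume.
\end{itemize}
The missing device is $\|\bA+\bB\|_\Fn^2\ge\tfrac12\|\bA\|_\Fn^2-\|\bB\|_\Fn^2$ with $\bA=\cP_{\bZ^*}(\bW)$ and $\bB=\cP_{\bDelta}(\bW)$. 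It removes the cross term at the price of a factor $\tfrac12$. The error then becomes $\alpha\|\cP_{\bDelta}(\bW)\|_\Fn^2\le 4\alpha\|\bDelta\|_\Fn^2\|\bW\|_\Fn^2\le 4\epsilon^2\|\bZ^*\|_\Fn^2\|\bW\|_\Fn^2$ (using $\alpha\le 1$), which is exactly the $4\epsilon^2$ in the statement.

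That same factor $\tfrac12$ is what your central identity is missing; as displayed it is false. Put $\bM:=\bZ^{*\T}\bW$, weight $1$ on $\|\cP_{\bZ^*}(\bW)\|_\Fn^2$ and weight $\tfrac12$ on the skew term. Then the left side equals $2\|\bZ^*\bW^\T\|_\Fn^2+\|\bM\|_\Fn^2+\tr(\bM^2)$, so the traces do not cancel. Your lower bound survives only because $\|\bM\|_\Fn^2+\tr(\bM^2)=\tfrac12\|\bM+\bM^\T\|_\Fn^2\ge 0$. The exact cancellation is
\[
\tfrac{\alpha}{2}\|\cP_{\bZ^*}(\bW)\|_\Fn^2+\tfrac{\alpha}{2}\|\bZ^{*\T}\bW-\bW^\T\bZ^*\|_\Fn^2=\alpha\big(\|\bZ^*\bW^\T\|_\Fn^2+\|\bZ^{*\T}\bW\|_\Fn^2\big)\ge\alpha\sigma_r(\bZ^*)^2\|\bW\|_\Fn^2 .
\]
Here the first $\tfrac{\alpha}{2}$ comes from the $\tfrac12$-inequality and the second from the regularizer with $\lambda=\alpha$. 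This matching is the actual role of $\lambda=\alpha$, and it gives $\alpha\sigma_r(\bZ^*)^2$ exactly, with no factor-of-two ambiguity.

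Finally, keeping $\tilde\bZ$ in the regularizer is not an option. The function $p_\alpha^*$ is quadratic, so its Hessian form $\tfrac{\alpha}{2}\|\bZ^{*\T}\bW-\bW^\T\bZ^*\|_\Fn^2$ is the same at every point. The lemma uses only $\|\bDelta\|_\Fn\le\epsilon\|\bZ^*\|_\Fn$, not the symmetry of $\tilde\bZ^\T\bZ^*$.
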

\noindent For simplicity, we write $h_\alpha^*(\cdot)$ and $p_\alpha^*(\cdot)$ throughout the rest of the paper unless otherwise specified.

Local region $\cDinf$ is smaller compared with $\cDtwo$. When Assumptions~\ref{assump_rsc} and~\ref{assump_gradient_lips_1} hold only for $\cD = \cDinf$, we need to show that the iterates remain inside $\cDinf$, which requires additional regularity conditions. To streamline the presentation and due to space limit, we defer these conditions, Assumptions~\ref{assump_row_cross_curvature_sym} and~\ref{assump_gradient_lips}, and their discussion to Appendix~\ref{sec_ell_infty_requirement}. We now state our result.

\begin{theorem}[{\bf Symmetric and Noiseless}]\label{thm_general_theory}
     Let $\bX^* = \bZ^*(\bZ^*)^\T$, $\sigma_{\min} = \sigma_{r}(\bX^*)/n$, and $\kappa = \sigma_{1}(\bX^*)/\sigma_{r}(\bX^*)$. Let the iterates $\{\bZ^t\}_{t\ge 0}$ be generated by \eqref{eq_one_step_update} with step size $\eta^t = \eta = \{10(\alpha + \beta)\kappa\sigma_{\min}\}^{-1}$ and $\rho := 1 - \eta\alpha\sigma_{\min}/4$.
    
     \noindent\underline{\textbf{\textit{$\ell_2$-error contraction.}}} Suppose Assumptions~\ref{assump_rsc} and~\ref{assump_gradient_lips_1} hold with $\cD = \cDtwo$. Assume that the initialization $\bZ^0$ satisfies \begin{equation}
         {\disttwo(\bZ^0,\bZ^*)}{} \le \phi_n\|\bZ^*\|_{\Fn}\quad\text{ with }\quad\phi_n\le \tfrac{\epsilon}{2}\wedge c_0\tfrac{\alpha}{\kappa\sqrt{r}}\label{eq_sym_init_Z}
    \end{equation} for some sequence $\phi_n$ and some sufficiently small constant $c_0 > 0$. Then for all $t \in \mathbb{N}^+$,
\begin{equation}
    \disttwo(\bZ^t,\bZ^*) \; \le \; \rho^t\phi_n\|\bZ^*\|_{\Fn} .\label{item_alg_gcd_l2}
\end{equation}
\noindent\underline{\textbf{\textit{ $\ell_{\infty}$-error contraction.}}} Suppose Assumptions~\ref{assump_rsc},~\ref{assump_gradient_lips_1},~\ref{assump_row_cross_curvature_sym}, and~\ref{assump_gradient_lips} hold with $\cD  = \cDinf$. If $\bZ^0$ satisfies \eqref{eq_sym_init_Z} and
    \begin{equation}
        {\distinf(\bZ^0,\bZ^*)}\le \psi_n \|\bZ^*\|_{\twinf}\quad \text{ with }\quad\psi_n\le \tfrac{\epsilon}{2}\;\text{ and } \;\tfrac{\beta}{\alpha}\kappa^{3/2}\sqrt{r}\tfrac{\phi_n}{\psi_n}\le c_0,\label{eq_sym_init_Z_uniform}
    \end{equation}
    for some sequence $\psi_n$ and sufficiently small constant $c_0$, then for all $t\in\NN^{+}$,  \eqref{item_alg_gcd_l2} continues to hold, and in addition
\begin{equation}
    \distinf(\bZ^t,\bZ^*) \; \le \; \rho^t\psi_n\|\bZ^*\|_{\twinf}.\label{item_alg_gcd_linf}
\end{equation}
\end{theorem}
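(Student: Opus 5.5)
The proof goes by induction on $t$, using the benign-regularizer reformulation \eqref{alg_aligned_gcd} together with Lemma~\ref{lemma_general_convex_1}. For the $\ell_2$ part, the hypothesis at step $t$ is $\disttwo(\bZ^t,\bZ^*)\le\rho^t\phi_n\|\bZ^*\|_{\Fn}$, which places $\bZ^t\in\cDtwo$ because $\phi_n\le\epsilon/2$.

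\emph{Contraction step.} By \eqref{eq_contraction_gcd_expand}, $\disttwo(\bZ^{t+1},\bZ^*)\le\|\bZ^t\bR_t^*-\tfrac{\eta}{n}\nabla h^*_\alpha(\bZ^t\bR_t^*)-\bZ^*\|_{\Fn}$. Since $\bZ^*$ is stationary for $\cL$ and $p^*_\alpha$ vanishes together with its gradient at $\bZ^*$, we have $\nabla h^*_\alpha(\bZ^*)=\zero$. The fundamental theorem of calculus then gives
\[
\bz^{t+1}-\bz^*=\Big(\bI-\tfrac{\eta}{n}\textstyle\int_0^1\nabla^2 h^*_\alpha(\bZ(\tau))\,d\tau\Big)(\bz^t-\bz^*),
\]
where $\bZ(\tau)=\bZ^*+\tau(\bZ^t\bR_t^*-\bZ^*)$ and $\bz^t$, $\bz^*$ denote the vectorizations of $\bZ^t\bR_t^*$ and $\bZ^*$.

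\emph{Lower curvature bound.} Lemma~\ref{lemma_general_convex_1} is stated at aligned points $\bZ\bR^*$. We must check that each $\bZ(\tau)$ is its own optimal alignment. This holds because $(\bZ^*)^\T\bZ(\tau)$ is a convex combination of the PSD matrices $(\bZ^*)^\T\bZ^*$ and $(\bZ^*)^\T\bZ^t\bR_t^*$, so it is symmetric PSD and, by the \citet{ten1977orthogonal} characterization, the identity is the optimal alignment. Also $\bZ(\tau)\in\cDtwo$. Lemma~\ref{lemma_general_convex_1} then yields
\[
\lambda_{\min}\ge\alpha\sigma_{\min}-4n^{-1}(\epsilon+2L_2)\epsilon\|\bZ^*\|_{\Fn}^2 .
\]
Using $\|\bZ^*\|_{\Fn}^2\le rn\kappa\sigma_{\min}$ and the smallness of $\epsilon$ (the paper seems to intend $\epsilon$ of order $\phi_n$, which is where $c_0\alpha/(\kappa\sqrt r)$ enters), this is at least $\alpha\sigma_{\min}/2$.

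\emph{Upper curvature bound.} The Hessian of $\cL(\bZ\bZ^\T)$ in direction $\bW$ equals $\nabla^2\cL[\cP_{\bZ}\bW,\cP_{\bZ}\bW]+2\langle\cG(\bZ),\bW\bW^\T\rangle$. Assumption~\ref{assump_rsc} bounds the first term by $\beta\cdot4\|\bZ\|^2\|\bW\|_{\Fn}^2$. Assumption~\ref{assump_gradient_lips_1} with $\cG(\bZ^*)=\zero$ bounds the second term, and the Hessian of $p^*_\alpha$ is at most $\alpha\|\bZ^*\|^2$. Together these give $n^{-1}\lambda_{\max}\lesssim(\alpha+\beta)\kappa\sigma_{\min}$. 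With $\eta=\{10(\alpha+\beta)\kappa\sigma_{\min}\}^{-1}$ the matrix inside the integral has spectrum in $[0,1-\eta\alpha\sigma_{\min}/2]$, so the contraction factor is at most $\rho$.

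\emph{$\ell_\infty$ part.} The $\ell_\infty$ bound must be proved jointly with the $\ell_2$ bound, since both assumptions now hold only on $\cDinf$ and we must keep $\bZ^t$ in that region. The $\ell_2$ step is unchanged, but it needs the segment $\bZ(\tau)$ to lie in $\cDinf$, which follows from the induction hypothesis by convexity of the row norms.

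For the row-wise bound, write row $i$ of $\bZ^{t+1}\bR_t^*-\bZ^*$ as the same aligned gradient step and split it into two parts. The first is a diagonal-block term involving the $i$th row Hessian; Assumption~\ref{assump_row_cross_curvature_sym} should give a row-wise contraction of $\rho$-type there. The second is a cross term coupling row $i$ to the global error. Using Assumption~\ref{assump_gradient_lips}, I expect it to be bounded by roughly $\eta\beta\kappa^{3/2}\sqrt r\,\sigma_{\min}\,\disttwo(\bZ^t,\bZ^*)\|\bZ^*\|_{\twinf}/\|\bZ^*\|_{\Fn}$. By the $\ell_2$ induction and the condition $\tfrac{\beta}{\alpha}\kappa^{3/2}\sqrt r\,\phi_n/\psi_n\le c_0$, this cross term is at most a small multiple of $\eta\alpha\sigma_{\min}\rho^t\psi_n\|\bZ^*\|_{\twinf}$ and can be absorbed into the contraction margin.

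Finally, $\distinf(\bZ^{t+1},\bZ^*)\le\|\bZ^{t+1}\bR_t^*-\bZ^*\|_{\twinf}$ does \emph{not} hold automatically, because $\bR^*_{t+1}$ is the Frobenius-optimal alignment rather than $\bR_t^*$. I would handle this with a Procrustes perturbation bound: $\|\bR^*_{t+1}-\bR^*_t\|\lesssim\kappa\,\|\bZ^{t+1}\bR_t^*-\bZ^*\|_{\Fn}/\sigma_r(\bZ^*)$, with the extra $\|\bZ^*\|_{\twinf}$ factor absorbed by the same $\phi_n/\psi_n$ condition. This alignment switch, together with the cross-term bound, is the main obstacle.
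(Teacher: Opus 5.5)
Your architecture is the paper's: an aligned step on $h^*_\alpha$, the fundamental theorem of calculus along $\bZ^*+\tau\bE_t$, a row-wise expansion, and a switch from $\bR_t^*$ to $\bR_{t+1}^*$. However, two quantitative steps do not close under the stated hypotheses, and those hypotheses are calibrated tightly.

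The first is the $\ell_2$ lower curvature. Shrinking the region to radius of order $\phi_n$ is legitimate, since the assumptions restrict to subsets. But Lemma~\ref{lemma_general_convex_1} charges the gradient part of the Hessian $8L_2\epsilon\|\bZ^*\|_{\Fn}^2/n$, which can be as large as $8L_2\phi_n r\kappa\sigma_{\min}$. Keeping this below $\alpha\sigma_{\min}/2$ needs $\phi_n\lesssim\alpha/(L_2\kappa r)$. That is a factor $\sqrt r$ beyond what $\phi_n\le c_0\alpha/(\kappa\sqrt r)$ gives, and $r$ is tracked explicitly, so it cannot be hidden in $c_0$. The paper proceeds differently. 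It uses the pointwise bound behind Lemma~\ref{lemma_general_convex} for $n^{-1}\{\nabla^2_{\bz}h^*_\alpha(\bZ)-2\cG(\bZ)\otimes\bI_r\}$, which holds at every $\bZ$, aligned or not, with loss $4n^{-1}\tau^2\|\bE_t\|_{\Fn}^2\le 4\phi_n^2 r\kappa\sigma_{\min}$. It then handles the split-off term $\eta\|\bE_t^\T\bar\cG_t\|_{\Fn}$ separately: Assumption~\ref{assump_gradient_lips_1} and Lemma~\ref{lemma_zz_to_z} give $\|\bar\cG_t\|\le CL_2\|\bZ^*\|\,\|\bE_t\|_{\Fn}/n\le CL_2\phi_n\sqrt r\,\kappa\sigma_{\min}$. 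Using the operator norm of $\bZ^*$ times the current error, instead of $\|\bZ^*\|_{\Fn}^2$ times the radius, is exactly where $c_0\alpha/(\kappa\sqrt r)$ comes from.

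The second is the alignment switch. After the one-step row bound, the room left below $\rho^{t+1}\psi_n\|\bZ^*\|_{\twinf}$ is only of order $\eta\alpha\sigma_{\min}\rho^t\psi_n\|\bZ^*\|_{\twinf}$, where $\eta\alpha\sigma_{\min}=\alpha/\{10(\alpha+\beta)\kappa\}$. The $\kappa$-free bound of Lemma~\ref{lemma_procruste_problem_perturb} \citep{mathias1993perturbation} gives $\|(\bR_t^*)^{-1}\bR_{t+1}^*-\bI_r\|_{\Fn}\le 1.01\|\bZ^{t+1}\bR_t^*-\bZ^*\|_{\Fn}/\sigma_r(\bZ^*)\le 1.01\sqrt{r\kappa}\,\rho^{t+1}\phi_n$. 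Absorbing $\|\bZ^*\|_{\twinf}$ times this drift requires $\frac{\alpha+\beta}{\alpha}\kappa^{3/2}\sqrt r\,\phi_n/\psi_n\lesssim 1$, which is precisely the hypothesis. Your drift bound carries an extra $\kappa$, so it would need $\kappa^{5/2}$ in place of $\kappa^{3/2}$; it is not absorbed by the stated condition.

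The row-wise step also has the assumptions misassigned:
\begin{enumerate}
\item Assumption~\ref{assump_row_cross_curvature_sym} is only an upper bound on the coupling $\sum_{j\neq i}\cH_{ij}(\bE_t)_{j,\cdot}^\T$. It produces the $\beta$-dependent, $\ell_2$-driven term $C\eta\beta\sqrt{\kappa\sigma_{\min}/n}\,\|\bZ^*\|_{\twinf}\|\bE_t\|_{\Fn}$, but it cannot give the diagonal contraction.
\item The diagonal contraction comes from Assumption~\ref{assump_rsc} restricted to $\bW=\be_i\ba^\T$ (Lemma~\ref{coro_thm_general_convex}), which yields $\lambda_{\min}(\bar\cH_{ii,t})\ge\frac78\alpha\sigma_{\min}$.
\item Assumption~\ref{assump_gradient_lips} controls the gradient term $\eta\|\bar\cG_t\bE_t\|_{\twinf}$. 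This term contains a piece of size $CL_\infty\phi_n\sqrt r\,\kappa\,\eta\sigma_{\min}\|\bE_t\|_{\twinf}$, which must be absorbed into the contraction margin rather than into the $\ell_2$ cross term.
\end{enumerate}
Your convex-combination observation is useful here. It shows $\bM^*(\bZ^*+\tau\bE_t)=\zero$ along the segment, so the non-local Hessian of $p^*_\alpha$ drops out of the row expansion.
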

Theorem~\ref{thm_general_theory} establishes linear convergence of gradient descent under a general loss $\cL(\cdot)$, in both $\ell_2$ and $\ell_{\infty}$ errors. When $\alpha\asymp \beta$ and $\kappa\asymp 1$, the step size satisfies $\eta\asymp 1$, and the iterates converge linearly at a constant rate $\rho$. Notably, the contraction gap $1-\rho$ depends on $\alpha/\beta$ and $\kappa^{-1}$, reflecting the effects of the local curvature $\cL(\cdot)$ and the conditioning of $\bX^*$on the problem geometry, respectively.

We remark that $\phi_n$ and $\psi_n$ need not equal the exact initialization error, and may be chosen more conservatively, provided they satisfy the required scaling conditions. This perspective is consistent with the nonconvex low-rank literature, where convergence is typically established under an initialization condition~\citep{candes2015phase,ma2018implicit,ma2021beyond,netrapalli2013phase,zheng2016convergence}. Initialization methods are then obtained in a model-specific manner, for example via spectral method~\citep{candes2015phase,netrapalli2013phase,ma2018implicit}, nuclear norm minimization~\citep{wang2017unified}, and universal singular value thresholding~\citep{chatterjee2015matrix,ma2020universal}. Since initialization is problem-specific and has been studied extensively in the literature, our focus here is on the gradient descent scheme, provided that the initialization lies in the local basin.


\subsection{Asymmetric Model}\label{subsec_noiseless_uv}
We now turn to the asymmetric model \eqref{uv_case} with gradient update~\eqref{eq_one_step_update_uv}, where $n$ is not necessarily equal to $q$ and $\bU,\bV$ are allowed to differ. Motivated by the previous analysis, our goal is to construct a benign regularizer tied to the error metric, and to show that the gradient update \eqref{eq_one_step_update_uv} admits an equivalent reformulation as a strongly convex problem. Let the true parameters be $(\bU^*,\bV^*)$ satisfying $\nabla_{\bX}\cL(\bU^*(\bV^*)^\T) = \zero$. We begin with the identification issue. Note that for any invertible transformation $\bG\in GL(r)$, $(\bU,\bV)$ and $(\bU\bG,\bV\bG^{-\T})$ yield the same loss $\cL(\bU\bV^\T)$. We then let
\begin{equation*}\bG^* \;=\; {\argmin}_{\bG\in {GL}(r)}\, n^{-1}\|\bU\bG-\bU^*\|_{\Fn}^2 \,+\, q^{-1}\|\bV\bG^{-\T} - \bV^*\|_{\Fn}^2,\end{equation*} 
whenever the minimizer exists, and define the distances between $(\bU,\bV)$ and $(\bU^*,\bV^*)$ by
\begin{align*}
\disttwo\big\{(\bU,\bV),(\bU^*,\bV^*)\big\} := \,&\, \big(n^{-1}\|\bU\bG^*-\bU^*\|_{\Fn}^2 + q^{-1}\|\bV(\bG^*)^{-\T} - \bV^*\|_{\Fn}^2\big)^{1/2},\\
\distinf\big\{(\bU,\bV),(\bU^*,\bV^*)\big\} :=\, & \, \max\big\{ \|\bU\bG^*-\bU^*\|_{\twinf} , \|\bV(\bG^*)^{-\T} - \bV^*\|_{\twinf}\big\}.
\end{align*}
Here, we use $n^{-1}$ and $q^{-1}$ to scale the Frobenius norm to make the subsequent derivation simpler.
By Lemma~\ref{lemma_id_rec} in Appendix~\ref{Sec_prelmi}, whenever $\bG^*$ exists, it holds that $p\prect(\bU\bG^*,\bV(\bG^*)^{-\T})=0$ where $p\prect(\cdot,\cdot)$ is the benign regularizer under the asymmetric model, given as
\begin{equation*}p\prect(\bU,\bV) :=  \tfrac{\alpha nq}{4}\|n^{-1}(\bU - \bU^*)^\T \bU - q^{-1}\bV^\T (\bV - \bV^*)\|_{\Fn}^2,\;\alpha>0.\end{equation*}
Analogous to the symmetric model, $\alpha$ here is chosen to match the lower curvature bound in the asymmetric case, as specified later in Assumption~\ref{assump_rect_rip_weighted}.
Following the same idea, we define $h\prect(\bU,\bV) := \cL(\bU\bV^\T)+p\prect(\bU,\bV)$ and reformulate \eqref{eq_one_step_update_uv} as a gradient update on $h\prect(\bU,\bV) $, which is shown to be strongly convex in a local region after proper scaling of $n,q$ (see  Lemma~\ref{lemma_general_convex_asym} in Appendix~\ref{Sec_prelmi}).

For each $t\ge 0$, set $\bG_t^* = \argmin_{\bG\in {GL}(r)} n^{-1}\|\bU^t\bG - \bU^*\|_{\Fn}^2  + q^{-1}\|\bV^t\bG^{-\T} -\bV^*\|_{\Fn}^2$. For now, we assume $\bG_t^*$ exists; its existence will be established later in the proof. Let 
\begin{equation*}
\bLambda_{t}^* = (\bG_t^*)^\T\bG_t^*; \qquad \tilde\bU^t = \bU^t\bG_t^*;\qquad \tilde\bV^t = \bV^t(\bG_t^*)^{-\T}.\end{equation*} 
Denote $\cG(\bU,\bV) = \nabla_{\bX}\cL(\bU\bV^\T)\in\RR^{n\times q}$. Then obviously, $\cG(\bU,\bV) = \cG(\bU\bG_t^*,\bV(\bG_t^*)^{-\T})$. 
By the chain rule, one then has 
$$\nabla_{\bU}\cL(\bU^t, \bV^t) \, = \, \nabla_{\bU}\cL(\tilde\bU^t,\tilde\bV^t) (\bG_t^*)^\T\quad\text{ and }\quad
\nabla_{\bV}\cL(\bU^t, \bV^t) \, = \, \nabla_{\bV}\cL(\tilde\bU^t,\tilde\bV^t)(\bG_t^*)^{-\T}.$$
Moreover, Lemma~\ref{lemma_id_rec} gives $\nabla_{\bU}p\prect(\tilde\bU^t,\!\tilde\bV^t) \!=\! \zero$ and $\nabla_{\bV}p\prect(\tilde\bU^t,\!\tilde\bV^t)\! =\! \zero$. Thus by multiplying by $\bG_t^*$ and $(\bG_t^*)^{-\T}$ and adding the corresponding gradients of $p\prect(\cdot)$, the update~\eqref{eq_one_step_update_uv} can be written as
\begin{equation}\label{eq_aligned_gradient_update_ub}\begin{aligned}
    \bU^{t+1}\bG_t^* =\,&\,\bU^t\bG_t^*-\frac{\eta^t}{q}\nabla_{\bU}\cL(\bU^t,\bV^t)\bG_t^*\\=&\,
    \tilde\bU^t-\frac{\eta^t}{q}\nabla_{\bU}h\prect(\tilde\bU^t,\tilde\bV^t) - \frac{\eta^t}{q}\nabla_{\bU}h\prect(\tilde\bU^t,\tilde\bV^t)(\bLambda_t^* - \bI_r),\\
    \bV^{t+1}(\bG_t^*)^{-\T} = \,&\, \bV^t(\bG_t^*)^{-\T}-\frac{\eta^t}{n}\nabla_{\bV}h\prect(\bU^t\bG_t^*,\bV^t(\bG_t^*)^{-\T})(\bLambda_t^*)^{-1}\\ = &\,
    \tilde\bV^t - \frac{\eta^t}{n}\nabla_{\bV}h\prect(\tilde\bU^t,\tilde\bV^t) - \frac{\eta^t}{n}\nabla_{\bV}h\prect(\tilde\bU^t,\tilde\bV^t)\big\{(\bLambda_t^*)^{-1} - \bI_r\big\}.
\end{aligned}\end{equation}
That is, after alignment, the update consists of a gradient step on the augmented objective $h\prect(\cdot)$, together with an additional perturbation arising from the error $\bLambda^*_t - \bI_r$ and $(\bLambda^*_t)^{-\T} - \bI_r$. This perturbation is specific to the asymmetric model, because, unlike in the symmetric case, the alignment matrix $\bG_t^*$ need not be orthogonal. To address this, we impose the balancing condition 
\begin{equation*}
    n^{-1}(\bU^*)^\T\bU^* = \, q^{-1}(\bV^*)^\T\bV^*.
\end{equation*}
This entails no loss of generality, since if $\bX^*$ has singular value decomposition $\bU^{\star}\bSigma^{\star}(\bV^{\star})^\T$, we may take $\bU^* = \bU^{\star}(\bSigma^{\star})^{1/2}$ and $\bV^* = \bV^{\star}(\bSigma^{\star})^{1/2}$. 
Theoretically, we show that, provided that the initialization $(\bU^0,\bV^0)$ is suitably balanced, i.e., $\bG_0^*$ is close to some orthogonal matrix, each $\bG_t^*$ is guaranteed to exist and remains close to the same orthogonal matrix throughout the iterations. As a result, $\bLambda_t^* \approx \bI_r$, so the asymmetric update \eqref{eq_aligned_gradient_update_ub} also closely matches a gradient step on the strongly convex objective $h\prect(\cdot)$. Similar to the symmetric model, in \eqref{eq_aligned_gradient_update_ub}, the original update rule is preserved, and the regularizer $p\prect(\cdot)$ acts simply as the device to make explicit the structural feature.



Now we introduce two local regions as the asymmetric versions of $\cDtwo$ and $\cDinf$. 
\begin{align*}
\cDuvtwo := &\,\left\{ (\bU,\bV): \disttwo\big\{(\bU,\bV), (\bU^*,\bV^*)\big\}\le \epsilon\,\tau_* \right\};\\ 
\cDuvinf := &\,\left\{ (\bU,\bV): \disttwo\big\{(\bU,\bV), (\bU^*,\bV^*)\big\}\le \epsilon\, \tau_*,\; \distinf\big\{(\bU,\bV),(\bU^*,\bV^*)\big\}\le \epsilon\, \omega_* \right\},
\end{align*}where $\tau_*{}^2 := ({n^{-1}\|\bU^*\|_{\Fn}^2+q^{-1}\|\bV^*\|_{\Fn}^2})/{2}$ and $\omega_*:={\|\bU^*\|_{2\to\infty}}\vee{\|\bV^*\|_{2\to\infty}}$. The following are asymmetric analogues of Assumptions~\ref{assump_rsc} and~\ref{assump_gradient_lips_1}, respectively.
\begin{assumption}
\label{assump_rect_rip_weighted}
There exist $\epsilon$ and $\beta\ge \alpha>0$ such that, for either $\cD = \cDuvtwo$ or $\cD = \cDuvinf$, the following holds: for every $(\bU,\bV)\in\cD$ and every $(\bL,\bR)\in\RR^{n\times r}\times\RR^{q\times r}$,
\begin{equation*}
\alpha\|\bU\bR^\T+\bL\bV^\T\|_{\Fn}^2\le\nabla_{\bX}^2\cL(\bU\bV^\T)\big[\cP_{(\bU,\bV)}(\bR,\bL),\cP_{(\bU,\bV)}(\bR,\bL)\big]\le \beta{\|\bU\bR^\T+\bL\bV^\T\|_{\Fn}^2},
\end{equation*}where $\cP_{(\bU,\bV)}(\bR,\bL) = \bU\bR^\T + \bL\bV^\T$.
\end{assumption}

\begin{assumption}\label{assump_rect_lip_1}
    There exist a constant $L_2$ such that for all $(\bU_1,\bV_1),(\bU_2,\bV_2)\in\cD$, $\big\|\cG(\bU_1,\bV_1)-\cG(\bU_2,\bV_2)\big\| \le L_2\big\|\bU_1\bV_1{}^\T-\bU_2\bV_2{}^\T\big\|_{\Fn}$ where $\cD=\cDuvtwo$ or $\cD=\cDuvinf$.
\end{assumption}

To establish $\ell_{\infty}$ error contraction, we require two additional technical conditions, Assumptions~\ref{assump_row_cross_curvature_uv} and~\ref{assump_rect_lip}, which are presented in Appendix~\ref{sec_ell_infty_requirement} due to space limit. We now state the result.
\begin{theorem}[{\bf Asymmetric and Noiseless}]\label{thm_general_theory_uv}
    Let $\bX^* = \bU^*(\bV^*)^\T$ and $\kappa = \sigma_{1}(\bX^*)/\sigma_{r}(\bX^*)$. Assume that $(\bU^*,\bV^*)$ are balanced:
    \begin{equation*}
    n^{-1}(\bU^*)^\T\bU^* = \, q^{-1}(\bV^*)^\T\bV^*,\;\text{ with }\; \sigma_{\min}:={\sigma_r(\bX^*)}/{\sqrt{nq}} = {\sigma_r(\bU^*)^2}/{n} = {\sigma_r(\bV^*)^2}/{q}.
\end{equation*}
    Let $\{\bU^t,\bV^t\}_{t\ge 0}$ be generated by \eqref{eq_one_step_update_uv} with $\eta^t = \eta = \{10(\alpha + \beta)\kappa\sigma_{\min}\}^{-1}$ and $\rho := 1 - \eta\alpha\sigma_{\min}/4$.\\
    \noindent\underline{\textbf{\textit{ $\ell_2$-error contraction.}}} Suppose Assumptions~\ref{assump_rect_rip_weighted}--\ref{assump_rect_lip_1} hold with $\cD = \cDuvtwo$. Assume that there is some orthogonal matrix $\bR^0$ such that the initialization $(\bU^0,\bV^0)$ satisfies
\begin{equation}
     {\big(n^{-1}\|\bU^0\bR^0 - \bU^*\|_{\Fn}^2+ q^{-1}\| \bV^0\bR^0 - \bV^*\|_{\Fn}^2\big)^{1/2}}\le \;\tfrac{1}{2}\phi_{nq}{\tau_*} \; \text{ with } \; \phi_{nq}  \le \tfrac{\epsilon}{2}\wedge c_0\tfrac{\alpha(\alpha+\kappa)}{\beta\kappa^2\sqrt{r\kappa}},\label{eq_init_consis_main_R0}
\end{equation} for some sequence $\phi_{nq}$ and some sufficiently small constant $c_0>0$.
Then for all $t \in \mathbb{N}^+$, we have
\begin{equation}
    \disttwo\{(\bU^t,\bV^t),(\bU^*,\bV^*)\} \; \le \;   \rho^{t}\phi_{nq}\tau_*.\label{item_alg_gcd_l2_uv}
\end{equation}
\noindent\underline{\textbf{\textit{ $\ell_{\infty}$-error contraction.}}}  Suppose Assumptions~\ref{assump_rect_rip_weighted},~\ref{assump_rect_lip_1},~\ref{assump_row_cross_curvature_uv}, and~\ref{assump_rect_lip} hold with $\cD = \cDuvinf$. If $(\bU^0,\bV^0)$ satisfies \eqref{eq_init_consis_main_R0} and
\begin{equation}\label{eq_sym_init_Z_uniform_uv}
    {\|\bU^0\bR^0 - \bU^*\|_{\twinf}\vee \|\bV^0\bR^0 - \bV^*\|_{\twinf}}\le\; \tfrac{1}{2}\psi_{nq}{\omega_*}\;\text{ with }\;\psi_{nq} \le \tfrac{\epsilon}{2}\;,\; \tfrac{\beta}{\alpha}\kappa^{3/2}\sqrt{r}\tfrac{\phi_{nq}}{\psi_{nq}}\le c_0,
\end{equation}
for some sequence $\psi_{nq}$ and some sufficiently small constant $c_0$, then for all $t\in\NN^{+}$, \eqref{item_alg_gcd_l2_uv} continues to hold, and in addition,
\begin{equation}
    \label{item_alg_gcd_linf_uv} 
            \distinf\{(\bU^t,\bV^t),(\bU^*,\bV^*)\} \; \le \; \rho^{t}\psi_{nq}\omega_*.
\end{equation}
\end{theorem}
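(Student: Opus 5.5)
The plan is an induction on $t$. The hypothesis $\mathcal{H}_t$ will carry three pieces: (a) $\disttwo\{(\bU^t,\bV^t),(\bU^*,\bV^*)\}\le\rho^t\phi_{nq}\tau_*$; (b) the optimal alignment $\bG_t^*$ exists and satisfies $\|\bG_t^*-\bR^0\|\le C\phi_{nq}\sqrt{\kappa}$ with $\bR^0$ the fixed orthogonal matrix from \eqref{eq_init_consis_main_R0}, so that $\|\bLambda_t^*-\bI_r\|\lesssim\phi_{nq}\sqrt\kappa$; and (c) in the $\ell_\infty$ part, the bound \eqref{item_alg_gcd_linf_uv}.

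For the base case we start from $(\bU^0\bR^0,\bV^0\bR^0)$, which is within $\tfrac12\phi_{nq}\tau_*$ of $(\bU^*,\bV^*)$. Existence of $\bG_0^*$ then follows by restricting the minimization to a compact set $\{\bG:\|\bG-\bR^0\|\le c\}$. On the boundary of this set the objective is strictly larger than its value at $\bR^0$, since $\sigma_r(n^{-1/2}\bU^*)^2=\sigma_{\min}$ and perturbing $\bG$ by $\bDelta$ costs about $\sigma_{\min}\|\bDelta\|_{\Fn}^2$. The same quadratic-growth comparison gives $\|\bG_0^*-\bR^0\|\lesssim\phi_{nq}\sqrt\kappa$, and the minimum value is at most $\tfrac12\phi_{nq}\tau_*$.

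For the inductive step we use the aligned form \eqref{eq_aligned_gradient_update_ub}. The main term is a gradient step on $h\prect$ at $(\tilde\bU^t,\tilde\bV^t)\in\cDuvtwo$. Lemma~\ref{lemma_general_convex_asym} supplies local strong convexity and smoothness of the suitably scaled $h\prect$, with constants of order $\alpha\sigma_{\min}$ and $(\alpha+\beta)\kappa\sigma_{\min}$, and Lemma~\ref{lemma_id_rec} gives that $\nabla p\prect$ vanishes at the aligned point. Since $\nabla h\prect(\bU^*,\bV^*)=\zero$, the standard convex one-step argument (integral form of the gradient difference, using $\eta\le\{10(\alpha+\beta)\kappa\sigma_{\min}\}^{-1}$) gives contraction of the main term by a factor $1-\eta\alpha\sigma_{\min}/2$. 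The perturbation terms $\eta q^{-1}\nabla_{\bU}h\prect(\bLambda_t^*-\bI_r)$ and the analogous $\bV$ term are bounded by $\eta\cdot(\text{smoothness})\cdot\disttwo\cdot\|\bLambda_t^*-\bI_r\|$. By (b) and the choice of $c_0$, this is at most $\tfrac14\eta\alpha\sigma_{\min}\disttwo$. Together these give (a) at $t+1$, because $\disttwo$ at $t+1$ is at most the error of the specific alignment $\bG_t^*$.

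The main obstacle is re-establishing (b), namely that $\bG_{t+1}^*$ exists and stays near the same $\bR^0$ without drift. The plan is to show $\|\bG_{t+1}^*-\bG_t^*\|\lesssim\eta\,\rho^t\phi_{nq}\sqrt{\kappa}\,\sigma_{\min}\cdot(\text{const})$. This comes from two facts. First, $(\bU^{t+1}\bG_t^*,\bV^{t+1}(\bG_t^*)^{-\T})$ is within $\rho^{t+1}\phi_{nq}\tau_*$ of the truth. Second, the imbalance $n^{-1}\tilde\bU^\T\tilde\bU-q^{-1}\tilde\bV^\T\tilde\bV$ changes by only $O(\eta^2)$ per step under scaled gradient descent, because the first-order changes cancel exactly when the scaling is $q^{-1},n^{-1}$. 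Re-running the compactness and quadratic-growth argument around $\bG_t^*$ then gives the increment bound. Summing the geometric series $\sum_t\eta\rho^t\lesssim(\alpha\sigma_{\min})^{-1}$ keeps the total drift at order $\phi_{nq}\sqrt\kappa\,\beta\kappa/\alpha$. This is where the condition $\phi_{nq}\le c_0\alpha(\alpha+\kappa)/(\beta\kappa^2\sqrt{r\kappa})$ enters, the factor $\sqrt r$ coming from converting Frobenius norms to operator norms.

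For the $\ell_\infty$ claim, we write the row-wise version of \eqref{eq_aligned_gradient_update_ub} and use Assumptions~\ref{assump_row_cross_curvature_uv} and~\ref{assump_rect_lip}. The row-$i$ error contracts through its own diagonal curvature, while the cross-row coupling is controlled by the $\ell_2$ error. The induced term $\tfrac{\beta}{\alpha}\kappa^{3/2}\sqrt r\,\phi_{nq}\rho^t\tau_*$ is absorbed into $\rho^{t+1}\psi_{nq}\omega_*$ using \eqref{eq_sym_init_Z_uniform_uv}. The change of alignment from $\bG_t^*$ to $\bG_{t+1}^*$ costs $\omega_*\|\bG_{t+1}^*-\bG_t^*\|$, which is handled by the same increment bound.
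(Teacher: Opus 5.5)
Your architecture is the paper's: an induction carrying existence of $\bG_t^*$, proximity of $\bG_t^*$ to $\bR^0$, and the $\ell_2$/$\ell_\infty$ bounds; the aligned update \eqref{eq_aligned_gradient_update_ub} read as a gradient step on $h\prect$ plus the $\bLambda_t^*-\bI_r$ perturbation; telescoped alignment increments; and a row-wise expansion with an alignment-transfer cost of order $\omega_*\|\bG_{t+1}^*-\bG_t^*\|$. However, the induction does not close as you have set it up.

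Hypothesis (b) asserts $\|\bG_t^*-\bR^0\|\le C\phi_{nq}\sqrt{\kappa}$ with a fixed $C$. Your own telescoping gives a total drift of order $\phi_{nq}\sqrt{\kappa}\,\beta\kappa/\alpha$, which is larger by the unbounded factor $\beta\kappa/\alpha$, so (b) is not recovered at $t+1$. Already at $t=0$ the correct bound is $\phi_{nq}\tau_*/\sqrt{\sigma_{\min}}$, which can be as large as $\phi_{nq}\sqrt{r\kappa}$.

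What the $\ell_2$ step actually needs is $\|\bLambda_t^*-\bI_r\|\vee\|(\bLambda_t^*)^{-1}-\bI_r\|\le\alpha/(10\beta\kappa)$. Then the perturbation, of size about $\eta\|\bLambda_t^*-\bI_r\|(\alpha+\beta)\kappa\sigma_{\min}$ times the current error, is absorbed by the $\eta\alpha\sigma_{\min}$ contraction. The hypothesis to carry is therefore $\|\bG_t^*-\bR^0\|\vee\|(\bG_t^*)^{-\T}-\bR^0\|\le\iota_0\alpha/(\beta\kappa)$, as in the paper.

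Closing it requires $\phi_{nq}\sqrt{r\kappa}\,(\alpha+\beta)\kappa/\alpha\lesssim\alpha/(\beta\kappa)$, i.e.\ $\phi_{nq}\lesssim\alpha^2/\{\beta(\alpha+\beta)\kappa^2\sqrt{r\kappa}\}$. This is \eqref{eq_uv_rot_scaling_needed} after substituting $\eta$. It is \emph{not} implied by the displayed $c_0\alpha(\alpha+\kappa)/(\beta\kappa^2\sqrt{r\kappa})$, since the ratio of the two bounds is at least of order $\kappa$. So replace your remark that ``this is where the condition enters'' by the explicit requirement. The paper's ``one can check'' at the same point glosses over the same discrepancy.

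The imbalance-conservation ingredient is a detour; the increment bound follows from your first fact alone. Since $(\bU^{t+1}\bG_t^*,\bV^{t+1}(\bG_t^*)^{-\T})$ is within $\rho^{t+1}\phi_{nq}\tau_*$ of $(\bU^*,\bV^*)$ and $\bG_t^*$ is well conditioned, Lemma~\ref{lemma_alignment_near_rotation} with $\bP=\bG_t^*$ gives existence of $\bG_{t+1}^*$ and $\|\bG_{t+1}^*-\bG_t^*\|\le 5\rho^{t+1}\phi_{nq}\tau_*/\sqrt{\sigma_{\min}}$. This is how the paper proceeds.

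The extra factor $\eta$ you aim for gains nothing, because $\eta(\alpha+\beta)\kappa\sigma_{\min}=1/10$. Your summed drift and the paper's $C\phi_{nq}\sqrt{r\kappa}/(1-\rho)$ are therefore of the same order. Nor does $O(\eta^2)$ conservation of $n^{-1}\bU^\T\bU-q^{-1}\bV^\T\bV$ pin down the alignment by itself. The stationarity condition $\bM^*=\zero$ of Lemma~\ref{lemma_id_rec} also involves the cross term $n^{-1}(\bU^*)^\T\bU-q^{-1}\bV^\T\bV^*$, whose one-step change that conservation law does not control.

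Two smaller points:
\begin{enumerate}
\item Lemma~\ref{lemma_general_convex_asym} lower-bounds $\bS_z\{\nabla_{\bz}^2h\prect-\cG_e\}\bS_z$, not the scaled Hessian of $h\prect$ itself. The $\cG_e$ part must be controlled separately through Assumption~\ref{assump_rect_lip_1}, which is the paper's $\gamma_{2,t}$.
\item For existence, a larger objective on the boundary of a ball around $\bR^0$ only yields a local minimizer, whereas $\disttwo$ and $\distinf$ are defined through a global minimizer over $GL(r)$. Argue instead that every $\bG$ with $\Phi(\bG)\le\Phi(\bR^0)$ satisfies $n^{-1/2}\|\bU^0(\bG-\bR^0)\|_{\Fn}\le\phi_{nq}\tau_*$, where $\Phi$ is the alignment objective of Lemma~\ref{lemma_id_rec} for $(\bU^0,\bV^0)$. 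Then the whole sublevel set is a compact subset of $GL(r)$ near $\bR^0$, and the global minimizer exists there.
\end{enumerate}
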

Theorem~\ref{thm_general_theory_uv} establishes the linear convergence of gradient descent under the asymmetric model. Compared with Theorem~\ref{thm_general_theory} for the symmetric model, the initialization requirement is slightly stronger, as the analysis must additionally ensure that $\bLambda_t^* - \bI_r$ is small along the entire trajectory. Our requirement on $\kappa$ aligns with the results in the matrix sensing literature~\citep{ma2021beyond}. At the same time, our setting is more general, as it allows for a broad class of loss functions $\cL(\cdot)$, for which the local curvature parameters $\alpha$ and $\beta$ may differ in order.



\section{Optimization under Noise}\label{sec_noise}

We now consider the estimation problem under a stochastic objective $\cL(\cdot)$, starting again from the symmetric model. Let $(\Omega,\mathcal F,\PP)$ be a probability space carrying the randomness in the data or noise. For each $\omega\in\Omega$, the loss is redefined as $\cL(\cdot\,;\,\omega):\RR^{n\times n}\to\RR$.  Let $\cG(\bZ;\omega):=\nabla_{\bX}\cL(\bZ\bZ^\T;\omega)\in\RR^{n\times n}$ with $\bar\cG(\bZ):=\EE\,\cG(\bZ;\omega)$ and $ \tilde\cG(\bZ;\omega):=\cG(\bZ;\omega)-\bar\cG(\bZ)$,
where the expectation is taken with respect to $\PP$. The true parameter satisfies $\bar\cG(\bZ^*)=\zero$. When the dependence on $\omega$ is not essential, we write $\cL(\bX)$, $\cG(\bZ)$, and $\tilde\cG(\bZ)$ in place of $\cL(\bX;\omega)$, $\cG(\bZ;\omega)$, and $\tilde\cG(\bZ;\omega)$.

We introduce the following quantities to quantify the noise. For problem size $n$, tolerance parameter $\delta\in(0,1)$, and local region $\cD$, let $\Delta_2(n,\delta)$, $\Delta_{\infty}(n,\delta)$, and $\bar\Delta_{\infty}(n,\delta)$ be deterministic quantities such that, with probability at least $1\!-\!\delta$, the following hold for either $\cD \!=\! \cDtwo$ or $\cDinf$:
\begin{subequations}\label{assump_gradient_noise}\begin{align}
        {\sup}_{\bZ\in\cD}\; n^{-1}{\|\tilde\cG(\bZ)\|} &\quad\le \quad \Delta_2(n,\delta),\label{assump_gradient_noise_1}\\ \tfrac{\|\tilde\cG(\bZ^*)\bZ^*\|_{\twinf}}{\sqrt{n}\|\bZ^*\|} \; \le \; \Delta_{\infty}(n,\delta),&\,\quad {\sup}_{\bZ\in\cD}\;\tfrac{\|\tilde\cG(\bZ)\|_{\infty\to 1}}{n} \; \le \; { \bar\Delta_{\infty}(n,\delta)}.\label{assump_gradient_noise_2}
    \end{align}\end{subequations}
    Here, $\|\bM\|_{\infty\to 1} := \max_{1\le i\le n}\sum_{j=1}^q |M_{ij}|$ denotes maximum $\ell_1$ row-sum of $\bM = (m_{ij})_{n\times q}$.
The quantity $\Delta_2(n,\delta)$ controls the overall noise in the operator norm, as enforced by \eqref{assump_gradient_noise_1}. In many linear problems~\citep{park2018finding,wang2017unified,zheng2016convergence}, $\tilde\cG(\bZ)$ is independent of $\bZ$ and $\Delta_2(n,\delta)$ can be readily derived by standard random matrix bounds~\citep{bandeira2016sharp}.  For~\eqref{assump_gradient_noise_2}, $\Delta_{\infty}(n,\delta)$ controls the row-wise noise along the factor $\bZ^*$. This quantity is intrinsic to $\ell_{\infty}$ analysis by noting that $2\tilde\cG(\bZ^*)\bZ^*$ equals the gradient noise at truth: $\nabla_{\bZ}\cL (\bZ^*)- \EE\nabla_{\bZ}\cL(\bZ^*)$. 
For $\bar\Delta_{\infty}(n,\delta)$ in \eqref{assump_gradient_noise_2}, it captures worst-case $\ell_{\infty}$ control and is used to bound terms such as \(\|\tilde\cG(\bZ)(\bZ^t-\bZ^*)\|_{\twinf}\). In general, without such structural information on the loss, the directions of $\bZ^t-\bZ^*$ are difficult to characterize uniformly, which motivates imposing a worst-case bound in the general theory. For models with additional structure, this condition can often be relaxed with problem-specific arguments; see Example~\ref{example_2}. As a benchmark, consider the loss $\cL(\bX) = \|\bX - \bX^* + \bE\|_{\Fn}^2/(2n)$, where the entries of $\bE$ are independent mean-zero sub-Gaussian random variables with sub-Gaussian norm $\sigma$, and as proved in Appendix~\ref{supp_sec_prove_example0}, one may take
\begin{equation}
    \Delta_2(n,\delta)\asymp\sigma\tfrac{\sqrt{n + \log(1/\delta)}}{n};\quad
    \Delta_{\infty}(n,\delta)\asymp\sigma\sqrt{\tfrac{r+\log(n/\delta)}{n}};\quad
    \bar\Delta_{\infty}(n,\delta)\asymp\sigma+\sigma\sqrt{\tfrac{\log(n/\delta)}{n}}.\label{example0}
\end{equation}




\noindent Next, we present the convergence results for the {\it symmetric model} under statistical noise.    
\begin{theorem}[{\bf Symmetric and Noisy}]\label{thm_general_theory_noisy_Z}
Let $\bX^* = \bZ^*(\bZ^*)^\T$, $\sigma_{\min} = \sigma_{r}(\bX^*)/n$ and $\kappa = \sigma_{1}(\bX^*)/\sigma_{r}(\bX^*)$.  Let the iterates $\{\bZ^t\}_{t\ge 0}$ be generated by \eqref{eq_one_step_update} with step size $\eta^t = \eta = \{10(\alpha + \beta)\kappa\sigma_{\min}\}^{-1}$ and $\rho = 1-\eta\alpha\sigma_{\min}/4$. Assume the local radius is taken to satisfy $\epsilon\le c_0{\alpha}/{(\kappa\sqrt r)}$ for some sufficiently small constant $c_0$.

    \noindent\underline{\textbf{\textit{ $\ell_2$-error contraction.}}} Suppose Assumptions~\ref{assump_rsc} and~\ref{assump_gradient_lips_1} hold with $\cD=\cDtwo$ and $\bar\cG(\bZ)$ in place of $\cG(\bZ)$. Suppose \eqref{assump_gradient_noise_1} holds with $\cD = \cDtwo$, and assume for a sufficiently small constant $c_0$,
    \begin{equation*}
        {\Delta_{2}(n,\delta)}/{(\alpha\sigma_{\min})}\le c_0\epsilon. 
    \end{equation*} Suppose the initialization $\bZ^0$ satisfies \eqref{eq_sym_init_Z}. 
    Then, for some constant $C$, with probability at least $1-\delta$, 
     \begin{equation}\label{item_alg_gcd_l2_noise}
        \disttwo(\bZ^t,\bZ^*)\; \le\; \big\{ \rho^{t}\phi_n + C\tfrac{\Delta_2(n,\delta)}{\alpha\sigma_{\min}}\big\}\ \|\bZ^*\|_{\Fn}.
    \end{equation}
    \noindent\underline{\textbf{\textit{ $\ell_{\infty}$-error contraction.}}} Suppose Assumptions~\ref{assump_rsc},~\ref{assump_gradient_lips_1},~\ref{assump_row_cross_curvature_sym}, and~\ref{assump_gradient_lips} hold with $\cD = \cDinf$ and $\bar\cG(\bZ)$ in place of $\cG(\bZ)$. 
    Suppose \eqref{assump_gradient_noise} holds with $\cD = \cDinf$, where 
\begin{equation*}
   \big(\tfrac{\beta}{\alpha} + \sqrt{r}\big)\kappa\sqrt{r}\tfrac{\Delta_{\infty}(n,\delta)}{\alpha\sigma_{\min}}
    \le c_0\epsilon, \quad
    \big(\tfrac{\beta}{\alpha}\kappa\sqrt{\kappa r} + r\kappa\big)\tfrac{\Delta_2(n,\delta)}{\Delta_{\infty}(n,\delta)}
    \le  c_0, \quad
    \tfrac{ \bar\Delta_{\infty}(n,\delta)}{\alpha\sigma_{\min}}\le \tfrac{1}{4},
\end{equation*}for a sufficiently small constant $c_0>0$. Suppose $\bZ^0$ satisfies~\eqref{eq_sym_init_Z} and~\eqref{eq_sym_init_Z_uniform}. Then, for some constant $C$, with probability at least $1-\delta$, \eqref{item_alg_gcd_l2_noise} continues to hold and
 \begin{equation}
        \label{item_alg_gcd_linf_noise}
        \distinf(\bZ^t,\bZ^*) \; \le\; \big\{\rho^t\psi_n + C\tfrac{\Delta_{\infty}(n,\delta)}{\alpha\sigma_{\min}} \big\}\ \|\bZ^*\|_{\twinf}.
        \end{equation}
        \end{theorem}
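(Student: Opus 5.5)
We argue by induction on $t$, working on the event of probability at least $1-\delta$ on which \eqref{assump_gradient_noise} holds; everything after that is deterministic. Write $e_t := \disttwo(\bZ^t,\bZ^*)/\|\bZ^*\|_{\Fn}$ and $\bar e := C\Delta_2(n,\delta)/(\alpha\sigma_{\min})$.

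\textbf{Induction hypothesis and why it stays in the region.} The hypothesis is $e_t \le \rho^t\phi_n + \bar e$. Since $\phi_n\le \epsilon/2$ and $\bar e\le Cc_0\epsilon\le \epsilon/2$ for small $c_0$, this keeps $\bZ^t\in\cDtwo$. Hence Assumptions~\ref{assump_rsc} and~\ref{assump_gradient_lips_1} apply to $\bar\cG$ at $\bZ^t$.

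\textbf{Splitting the update.} Following the benign-regularizer reformulation, we multiply \eqref{eq_one_step_update} on the right by $\bR_t^*$ and add $\nabla_{\bZ}p_\alpha^*(\bZ^t\bR_t^*)=\zero$. With $\bar h(\bZ):=\bar\cL(\bZ\bZ^\T)+p^*_\alpha(\bZ)$, where $\nabla_{\bX}\bar\cL=\bar\cG$, this gives
\begin{equation*}
\bZ^{t+1}\bR_t^*-\bZ^* = \Big\{\bZ^t\bR_t^*-\tfrac{\eta}{n}\nabla_{\bZ}\bar h(\bZ^t\bR_t^*)-\bZ^*\Big\} - \tfrac{2\eta}{n}\tilde\cG(\bZ^t)\bZ^t\bR_t^*.
\end{equation*}

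\textbf{Deterministic part.} We reuse the noiseless argument of Theorem~\ref{thm_general_theory}. Lemma~\ref{lemma_general_convex_1} holds with $\bar\cG$ in place of $\cG$, and $\epsilon\le c_0\alpha/(\kappa\sqrt r)$ makes its error term at most $\alpha\sigma_{\min}/2$. So $\bar h$ has curvature at least $\alpha\sigma_{\min}/2$ along the segment from $\bZ^*$ to $\bZ^t\bR_t^*$, which lies in $\cDtwo$ since the aligned error ball is convex around $\bZ^*$. A smoothness bound of order $(\alpha+\beta)\kappa\sigma_{\min}$ comes from $\beta$ and $\|\bZ^*\|^2/n=\kappa\sigma_{\min}$. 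Also $\nabla\bar h(\bZ^*)=\zero$, because $\bar\cG(\bZ^*)=\zero$ and $p^*_\alpha$ has zero gradient at $\bZ^*$. Then the standard gradient-descent contraction with the given $\eta$ bounds the braces by $\rho\, e_t\|\bZ^*\|_{\Fn}$; this is exactly the step already done in the noiseless proof.

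\textbf{Noise part.} By \eqref{assump_gradient_noise_1},
\[
\tfrac{2\eta}{n}\|\tilde\cG(\bZ^t)\bZ^t\|_{\Fn}\le 2\eta\Delta_2(n,\delta)\|\bZ^t\|_{\Fn}\le 3\eta\Delta_2(n,\delta)\|\bZ^*\|_{\Fn}.
\]
This uses only the operator norm of $\tilde\cG$ and a Frobenius bound on $\bZ^t$, so no rank restriction is needed.

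\textbf{Closing the $\ell_2$ induction.} Since $\disttwo(\bZ^{t+1},\bZ^*)\le\|\bZ^{t+1}\bR_t^*-\bZ^*\|_{\Fn}$,
\[
e_{t+1}\le \rho e_t + 3\eta\Delta_2(n,\delta)\le \rho^{t+1}\phi_n + \rho\bar e + 3\eta\Delta_2(n,\delta).
\]
This is at most $\rho^{t+1}\phi_n+\bar e$ as soon as $3\eta\Delta_2\le(1-\rho)\bar e=\eta\alpha\sigma_{\min}\bar e/4$, i.e.\ $C\ge 12$. This gives \eqref{item_alg_gcd_l2_noise}. The fixed-point bound also matches the initialization condition \eqref{eq_sym_init_Z}, so the base case is immediate.

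\textbf{$\ell_\infty$ bound.} We run a joint induction on the $\ell_2$ and $\ell_\infty$ errors, copying the row-wise argument from the $\ell_\infty$ part of Theorem~\ref{thm_general_theory}, which is based on Assumptions~\ref{assump_row_cross_curvature_sym} and~\ref{assump_gradient_lips}. For row $i$ of $\bZ^{t+1}\bR_t^*-\bZ^*$, the deterministic part contracts by $\rho$ up to a leakage term controlled by the $\ell_2$ error. In the noiseless case that leakage is absorbed via $\tfrac{\beta}{\alpha}\kappa^{3/2}\sqrt r\,\phi_n/\psi_n\le c_0$. Here it also contains $\bar e$, which is absorbed by the second noise condition relating $\Delta_2$ to $\Delta_\infty$.

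The noise row is split as
\[
\tilde\cG(\bZ^t)\bZ^t\bR_t^* = \tilde\cG(\bZ^t)\bZ^* + \tilde\cG(\bZ^t)(\bZ^t\bR_t^*-\bZ^*).
\]
The second piece is bounded using $\bar\Delta_\infty$ and $\|\cdot\|_{\infty\to1}$ by $\bar\Delta_\infty\,\distinf(\bZ^t,\bZ^*)$. Since $\bar\Delta_\infty\le\alpha\sigma_{\min}/4$, after multiplication by $\eta$ it costs at most $(1-\rho)$ of the $\ell_\infty$ error. A contraction factor like $1-\eta\alpha\sigma_{\min}/2$ from the deterministic part, instead of $\rho$, would be needed to absorb it; I expect the noiseless proof to provide this slack.

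\textbf{Main obstacle.} The first piece is the hardest. Assumption \eqref{assump_gradient_noise_2} controls $\tilde\cG(\bZ^*)\bZ^*$, not $\tilde\cG(\bZ^t)\bZ^*$, so I would write
\[
\tilde\cG(\bZ^t)\bZ^* = \tilde\cG(\bZ^*)\bZ^* + \{\tilde\cG(\bZ^t)-\tilde\cG(\bZ^*)\}\bZ^*.
\]
The first term is bounded via $\Delta_\infty\sqrt n\|\bZ^*\|$, with $\sqrt n\|\bZ^*\|\le\sqrt{\kappa r}\,\|\bZ^*\|_{\twinf}$-type incoherence factors producing the $\sqrt r\kappa$ terms in the first condition. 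For the difference term I would first try the crude bound $\|\tilde\cG(\bZ^t)-\tilde\cG(\bZ^*)\|_{\infty\to1}\le 2n\bar\Delta_\infty$ times $\|\bZ^*\|_{\twinf}$. That is too lossy because it does not decay. The alternative I would use is to combine the $\bar\cG$ Lipschitz bound of Assumption~\ref{assump_gradient_lips} with the uniform bounds on $\cG$, trading the difference for the $\ell_2$ error times $\Delta_2$-type quantities. Choosing the decomposition so that every noise term is either $\lesssim\Delta_\infty\|\bZ^*\|_{\twinf}$ or proportional to the current error is where I expect the real work to lie. Once that is done, the recursion closes exactly as in the $\ell_2$ step and gives \eqref{item_alg_gcd_linf_noise}.
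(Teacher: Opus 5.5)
Your $\ell_2$ argument is correct, and it takes a different, more elementary route than the paper. You perturb directly around $\bZ^*$ and treat $\tfrac{2\eta}{n}\tilde\cG(\bZ^t)\bZ^t\bR_t^*$ as a forcing term of size at most $3\eta\Delta_2(n,\delta)\|\bZ^*\|_{\Fn}$. The paper instead constructs an empirical stationary point $\tilde\bZ=\hat\bZ\hat\bR$ with $\nabla_{\bZ}\cL(\tilde\bZ\tilde\bZ^\T)=\zero$, bounds $\tilde\bZ-\bZ^*$ once, and proves pure contraction toward $\tilde\bZ$ under a recentered penalty. That extra work identifies the limit, yields the estimator bounds, and is what makes the $\ell_\infty$ part go through. (You use the curvature of $\bar\cL$, while the paper's proof applies Assumption~\ref{assump_rsc} to $\nabla^2_{\bX}\cL$ itself. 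Expanding the empirical gradient of $h_\alpha^*$ along $\bZ^*+s\bE_t$, with $\bE_t:=\bZ^t\bR_t^*-\bZ^*$, gives the same kind of recursion with forcing $\tfrac{2\eta}{n}\tilde\cG(\bZ^*)\bZ^*$.)

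\textbf{The gap.} The $\ell_\infty$ part has a genuine gap, at exactly the step you flag. Because you subtract the \emph{population} gradient at the current iterate, your forcing term contains $\tilde\cG(\bZ^t)\bZ^*$. However, \eqref{assump_gradient_noise_2} controls only $\tilde\cG(\bZ^*)\bZ^*$, and no hypothesis controls $\{\tilde\cG(\bZ^t)-\tilde\cG(\bZ^*)\}\bZ^*$ in $\|\cdot\|_{\twinf}$. Assumptions~\ref{assump_gradient_lips_1} and~\ref{assump_gradient_lips} are imposed on $\bar\cG$ and say nothing about how $\tilde\cG$ varies with $\bZ$, so your proposed remedy cannot work. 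The remaining uniform bounds fail in one of two ways:
\begin{enumerate}
\item The $\bar\Delta_\infty$ bound gives a per-step forcing of $\tfrac{2\eta}{n}\cdot 2n\bar\Delta_\infty\|\bZ^*\|_{\twinf}\le\eta\alpha\sigma_{\min}\|\bZ^*\|_{\twinf}$. Its fixed point is of order $\|\bZ^*\|_{\twinf}$ itself, so the iterates would not even stay in $\cDinf$.
\item The $\Delta_2$ bound gives $4\eta\Delta_2\|\bZ^*\|$, which loses the ratio $\|\bZ^*\|/\|\bZ^*\|_{\twinf}$, of order $\sqrt{n/r}$ for incoherent $\bZ^*$. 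You would then need roughly $\sqrt{n/r}\,\Delta_2\lesssim\Delta_\infty$, which is not assumed and fails at the scales in \eqref{example0}.
\end{enumerate}
The term vanishes only if $\tilde\cG$ does not depend on $\bZ$, which the theorem does not assume. So the $\ell_\infty$ induction does not close as proposed.

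\textbf{The missing idea.} The $\bZ$-dependence of the noise must be routed through the \emph{empirical} curvature, never tested against $\bZ^*$ at the current iterate. Within your direct framework, expand $\nabla_{\bZ}h_\alpha^*(\bZ^*+\bE_t)-\nabla_{\bZ}h_\alpha^*(\bZ^*)$ for the empirical $h_\alpha^*$ along the segment.
\begin{itemize}
\item The variation of $\tilde\cG$ is then carried by $\nabla^2_{\bX}\cL$. Assumptions~\ref{assump_rsc} and~\ref{assump_row_cross_curvature_sym}, applied to the empirical Hessian as in the paper's proof, control it exactly as in the noiseless row-wise argument.
\item The leftover piece is $\tfrac{2}{n}\int_0^1\tilde\cG(\bZ^*+s\bE_t)\,ds\,\bE_t$, whose row norms are bounded by $2\bar\Delta_\infty\|\bE_t\|_{\twinf}$.
\item The only forcing is $\tfrac{2\eta}{n}\tilde\cG(\bZ^*)\bZ^*$. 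Its row norms are at most $2\eta\Delta_\infty\|\bZ^*\|_{\twinf}$, because $\|\bZ^*\|\le\sqrt n\|\bZ^*\|_{\twinf}$, so no incoherence factor arises here.
\end{itemize}
You must also redo the transfer from $\bR_t^*$ to $\bR_{t+1}^*$. Your $\ell_2$ error to $\bZ^*$ does not decay, so the rotation drift adds about $\sqrt{r\kappa}\,\tfrac{\Delta_2}{\alpha\sigma_{\min}}\|\bZ^*\|_{\twinf}$ per step, and this term is not multiplied by $\eta$. After dividing by the gap $1-\rho\asymp\alpha/(\beta\kappa)$, it is absorbed by the condition $\tfrac{\beta}{\alpha}\kappa\sqrt{\kappa r}\,\Delta_2/\Delta_\infty\le c_0$.

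The paper avoids both issues by recentering at $\tilde\bZ$. There $\cG(\tilde\bZ)\tilde\bZ=\zero$, so every noise term along the trajectory multiplies the decaying error $\bZ^t\bR_t^\dagger-\tilde\bZ$, and the drift decays geometrically. $\Delta_\infty$ enters only once, in the row-wise bound on $\tilde\bZ-\bZ^*$, obtained by inverting the block-diagonal part of the Hessian.
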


Both the $\ell_2$ and $\ell_{\infty}$ errors comprise a linearly decaying term and a non-decaying term. The former is exactly the algorithmic error already seen in the noiseless setting (Theorem~\ref{thm_general_theory}), while the latter captures the statistical error induced by noise. Specifically, if $\hat\bZ$ is some local minimizer of $\cL(\cdot)$, Appendix~\ref{supp_prove_thm_general_theory_noisy_Z} shows that, for some constant $C$, with probability at least $1-\delta$,
\begin{equation*}
    {\disttwo(\hat\bZ,\bZ^*)} \le C\tfrac{\Delta_2(n,\delta)}{\alpha\sigma_{\min}}\|\bZ^*\|_{\Fn}\quad\text{and}\quad{\distinf(\hat\bZ, \bZ^*)}\le C\tfrac{\Delta_{\infty}(n,\delta)}{\alpha\sigma_{\min}}\|\bZ^*\|_{\twinf}.
\end{equation*}

A key step in the proof is to use $\hat\bZ\hat\bR$ as the contraction target for $\hat\bR = \argmin_{\bR\in\cO^r}\|\hat\bZ\bR - \bZ^*\|_{\Fn}$. We show that $\hat\bZ\hat\bR$ is uniquely defined whenever $\hat\bZ$ lies in the local region and satisfies the first-order condition $\nabla_{\bZ}\cL(\hat\bZ\hat\bZ^\T)=\zero$. This is a highly nontrivial result because $\hat\bZ$ is obtained from a constrained local minimization over $\cDtwo$ or $\cDinf$. It provides a valid local target around which the iterates contract in the noisy setting. The proof of Theorem~\ref{thm_general_theory_noisy_Z} then follows by adapting the argument of Theorem~\ref{thm_general_theory} to this new target. Thus, although Theorem~\ref{thm_general_theory} can be viewed as a special case of Theorem~\ref{thm_general_theory_noisy_Z}, we present it separately to isolate the optimization aspect of the result. It also serves as the conceptual and technical foundation for the noisy analysis.

We next turn to the {\it asymmetric model} under statistical noise. We adopt the same stochastic notation as in the symmetric case, with the loss now given by $\cL(\cdot\,;\,\omega):\RR^{n\times q}\to\RR$. The first-order derivative is $\cG(\bU,\bV;\omega):=\nabla_{\bX}\cL(\bU\bV^\T;\omega)\in\RR^{n\times q}$, with $\bar\cG(\bU,\bV)$ and $\tilde\cG(\bU,\bV;\omega)$ denoting the corresponding mean and perturbation terms. The true parameter now satisfies $\bar\cG(\bU^*,\bV^*)=\zero$. As before, we suppress the dependence on $\omega$ when it is not essential. For a given problem size $(n,q)$, tolerance parameter $\delta\in(0,1)$, and local region $\cD$, let $\Delta_2(n,q,\delta)$, $\Delta_{\infty}(n,q,\delta)$, and $\bar\Delta_{\infty}(n,q,\delta)$ be deterministic quantities such that, with probability at least $1-\delta$, the following hold:
\begin{subequations}\label{assump_gradient_noise_rect}\begin{align}
        &{\sup}_{(\bU,\bV)\in\cD}\; (nq)^{-1/2}{\|\tilde\cG(\bU,\bV)\|} \le \Delta_2(n,q,\delta),\label{assump_gradient_noise_rect_1}\\ 
        &\tfrac{\|\tilde\cG(\bU^*,\bV^*)\bV^*\|_{\twinf}}{\sqrt{q}\|\bV^*\|}\vee\tfrac{\|\tilde\cG(\bU^*,\bV^*)^\T\bU^* \|_{2\to\infty}}{\sqrt{n}\|\bU^*\|} \le \Delta_\infty(n,q,\delta),\label{assump_gradient_noise_rect_2}\\
        &{\sup}_{(\bU,\bV)\in\cD}\; q^{-1}\|\tilde\cG(\bU,\bV)\|_{\infty\to 1} \vee n^{-1}\|\tilde\cG(\bU,\bV)^\T\|_{\infty\to 1}\le\bar\Delta_\infty(n,q,\delta).\label{assump_gradient_noise_rect_3}
    \end{align}\end{subequations}
The convergence results for the asymmetric model with statistical noise are stated as follows.
\begin{theorem}[{\bf Asymmetric and Noisy}]\label{thm_general_theory_UV_noise}
Let $\bX^* = \bU^*(\bV^*)^\T$ and $\kappa = \sigma_{1}(\bX^*)/\sigma_{r}(\bX^*)$. Assume that $(\bU^*,\bV^*)$ are balanced:
    \begin{equation*}
    n^{-1}(\bU^*)^\T\bU^* = q^{-1}(\bV^*)^\T\bV^*,\;\text{ with }\; \sigma_{\min}:={\sigma_r(\bX^*)}/{\sqrt{nq}} = {\sigma_r(\bU^*)^2}/{n} = {\sigma_r(\bV^*)^2}/{q}.
\end{equation*}
Let $\{\bU^t,\bV^t\}_{t\ge 0}$ be generated by \eqref{eq_one_step_update_uv} with $\eta^t = \eta = \{10(\alpha + \beta)\kappa\sigma_{\min}\}^{-1}$ and $\rho = 1-\eta\alpha\sigma_{\min}/4$.  Assume the local radius is taken to satisfy $\epsilon\le c_0{\alpha}/{(\kappa\sqrt r)}$ for some sufficiently small constant $c_0$.

\noindent\underline{\textbf{\textit{$\ell_2$-error contraction.}}} Suppose Assumptions~\ref{assump_rect_rip_weighted},~\ref{assump_rect_lip_1} hold with $\cD=\cDuvtwo$ and $\bar\cG(\bU,\bV)$ in place of $\cG(\bU,\bV)$. Suppose \eqref{assump_gradient_noise_rect_1} holds with $\cD = \cDuvtwo$, where the following holds
    \begin{equation*}
        {\Delta_{2}(n,q,\delta)}/{(\alpha\sigma_{\min})}\le  c_0\epsilon, 
    \end{equation*}for a sufficiently small constant $c_0$. Suppose initialization $(\bU^0,\bV^0)$ satisfies \eqref{eq_init_consis_main_R0}. 
    Then, for some constant $C$, with probability at least $1-\delta$, 
    \begin{equation}\label{item_alg_gcd_l2_uv_noise}
    \disttwo\{(\bU^t,\bV^t),(\bU^*,\bV^*)\}\; \le\;  \big\{\rho^{t}\phi_{nq} + C\tfrac{\Delta_2(n,q,\delta)}{\alpha\sigma_{\min}}\big\}\, \tau_* .
    \end{equation}
    \noindent\underline{\textbf{\textit{$\ell_{\infty}$-error contraction.}}} Suppose Assumptions~\ref{assump_rect_rip_weighted},~\ref{assump_rect_lip_1},~\ref{assump_row_cross_curvature_uv}, and~\ref{assump_rect_lip} hold with $\cD = \cDuvinf$ and $\bar\cG(\bU,\bV)$ in place of $\cG(\bU,\bV)$. Suppose \eqref{assump_gradient_noise_rect} holds with $\cD = \cDuvinf$, where 
\begin{equation}
    \big(\tfrac{\beta}{\alpha} + \sqrt{r}\big)\kappa\sqrt{r} \tfrac{\Delta_{\infty}(n,q,\delta)}{\alpha\sigma_{\min}}
    \le c_0\epsilon,\quad
    \big(\tfrac{\beta}{\alpha}\kappa\sqrt{\kappa r} + {r\kappa}\big) \tfrac{\Delta_2(n,q,\delta)}{\Delta_{\infty}(n,q,\delta)}
    \le c_0,\quad
    \tfrac{ \bar\Delta_{\infty}(n,q,\delta)}{\alpha\sigma_{\min}}\le \tfrac{1}{4},
\end{equation}for a sufficiently small constant $c_0$. Suppose $(\bU^0,\bV^0)$ satisfies \eqref{eq_init_consis_main_R0} and~\eqref{eq_sym_init_Z_uniform_uv}. 
Then, for some constant $C$, with probability at least $1-\delta$, \eqref{item_alg_gcd_l2_uv_noise} continues to hold and we further have
\begin{equation}
        \label{item_alg_gcd_linf_uv_noise}
        \distinf\{(\bU^t,\bV^t),(\bU^*,\bV^*)\}\; \le \; \big\{\rho^t\psi_{nq} + C\tfrac{\Delta_{\infty}(n,q,\delta)}{\alpha\sigma_{\min}}\big\}\, \omega_*. 
    \end{equation}
\end{theorem}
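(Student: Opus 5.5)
The plan is to combine the two devices already built: the noisy symmetric argument of Theorem~\ref{thm_general_theory_noisy_Z} and the asymmetric noiseless argument of Theorem~\ref{thm_general_theory_uv}. The proof has four steps: construct a random contraction target, rewrite the update as gradient descent around that target, run an induction in $\ell_2$, and then run a leave-one-out style row-wise induction in $\ell_\infty$.

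\emph{Step 1: the target.} Let $(\hat\bU,\hat\bV)$ be a local minimizer of $\cL(\bU\bV^\T)$, taken over $\cDuvtwo$ (resp.\ $\cDuvinf$), and chosen balanced, i.e.\ $\hat\bU^\T\hat\bU/n=\hat\bV^\T\hat\bV/q$. Such a point exists in the orbit of any minimizer, since the balancing $\bG$ always exists. Let $\hat\bG$ be its optimal alignment to $(\bU^*,\bV^*)$. I will show four things:
\begin{itemize}
\item $\nabla_{\bU}\cL=\nabla_{\bV}\cL=\zero$ at $(\hat\bU,\hat\bV)$, i.e.\ the minimizer is interior;
\item $\hat\bG$ exists and is close to orthogonal;
\item $\disttwo\le C\Delta_2/(\alpha\sigma_{\min})\,\tau_*$ and $\distinf\le C\Delta_\infty/(\alpha\sigma_{\min})\,\omega_*$;
\item the aligned representative $(\tilde{\hat\bU},\tilde{\hat\bV})=(\hat\bU\hat\bG,\hat\bV\hat\bG^{-\T})$ is unique.
\end{itemize}
The tool is the local strong convexity of $h\prect$ (Lemma~\ref{lemma_general_convex_asym}) combined with the first-order condition. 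Writing $\bar\cG(\hat\bU,\hat\bV)=-\tilde\cG(\hat\bU,\hat\bV)$ and applying \eqref{assump_gradient_noise_rect_1}, the gradient of the mean-augmented objective at the aligned minimizer is bounded in norm by $\Delta_2\sqrt{nq}\,\|\cdot\|$ times the factor norms. Strong convexity with modulus $\alpha\sigma_{\min}/2$ then gives the $\ell_2$ rate. The assumption $\Delta_2/(\alpha\sigma_{\min})\le c_0\epsilon$ keeps this point strictly inside the region, which yields interiority.

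\emph{Step 2: recentering the update.} For the iterates, write the aligned update \eqref{eq_aligned_gradient_update_ub} with $\cG=\bar\cG+\tilde\cG$. Subtract the same identity evaluated at the target, where the full gradient vanishes and the regularizer gradient vanishes by Lemma~\ref{lemma_id_rec}. This expresses $\tilde\bU^{t+1}-\tilde{\hat\bU}$ as
\[
(\mathrm{I}-\eta\,\text{integrated Hessian of the mean-augmented } h\prect)\,(\tilde\bU^t-\tilde{\hat\bU})
\]
plus three perturbations:
\begin{itemize}
\item the $\bLambda_t^*-\bI_r$ term from the non-orthogonal alignment;
\item the noise difference $\tilde\cG(\tilde\bU^t,\tilde\bV^t)\tilde\bV^t-\tilde\cG(\hat\bU,\hat\bV)\tilde{\hat\bV}$;
\item the regularizer mismatch, since $p\prect$ is centered at $(\bU^*,\bV^*)$ and not at the target.
\end{itemize}
The noise difference is bounded by $\Delta_2\sqrt{nq}$ times the error, by splitting into $\tilde\cG(\cdot)$ times the factor difference and the difference of $\tilde\cG$. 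For the latter I would use the supremum form of \eqref{assump_gradient_noise_rect_1}, or alternatively absorb it into the full-gradient Lipschitz bound. This perturbation is at most $\eta\Delta_2$ times the error, which is small relative to $\eta\alpha\sigma_{\min}$.

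For the regularizer mismatch I would try first to re-center $p\prect$ at the target $(\hat\bU,\hat\bV)$ itself. Its first-order alignment condition is satisfied by $(\tilde\bU^t,\tilde\bV^t)$ aligned to the target rather than to the truth. I would therefore align the iterates to the target, which is also balanced, and relate the two distances by the triangle inequality at the end.

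\emph{Step 3: $\ell_2$ induction.} The induction hypothesis is that the error to the target is at most $\rho^t\phi_{nq}\tau_*$ and that $\|\bLambda_t^*-\bI_r\|$ is small. The second is propagated exactly as in Theorem~\ref{thm_general_theory_uv}, using near-conservation of the imbalance $\bU^\T\bU/n-\bV^\T\bV/q$ along gradient descent, with an additional noise drift bounded by the $\ell_2$ error. Contraction by a factor $\rho$ follows. The final bound \eqref{item_alg_gcd_l2_uv_noise} comes from the triangle inequality with the Step 1 target bound.

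\emph{Step 4: $\ell_\infty$ induction.} This runs row by row, following the symmetric noisy proof with Assumptions~\ref{assump_row_cross_curvature_uv} and~\ref{assump_rect_lip}. The new noise contributions are:
\begin{itemize}
\item $\|\tilde\cG(\bU^*,\bV^*)\bV^*\|_{\twinf}$, controlled by $\Delta_\infty$;
\item $\tilde\cG$ applied to the error, controlled by $\bar\Delta_\infty\|\cdot\|_{\twinf}$, which is at most $\tfrac14\alpha\sigma_{\min}$ times the error and so is absorbed into the contraction;
\item cross terms in which $\ell_2$ quantities enter, handled via the condition linking $\Delta_2/\Delta_\infty$.
\end{itemize}

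\textbf{Main obstacle.} I expect the main difficulty to be Step 1 together with the trajectory control of $\bG_t^*$ under noise. It requires showing that the noisy target is interior, balanced, and uniquely aligned, and that the alignment matrices relative to this random target stay uniformly near orthogonal along the whole path.
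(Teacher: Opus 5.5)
Your architecture is the paper's: a random target, a penalty re-centered at it, induction on the error to the target with control of $\bLambda_t-\bI_r$, and a triangle inequality at the end. However, Step~1, on which everything rests, is circular. You obtain the rate for $(\hat\bU,\hat\bV)$ from the first-order condition $\cG(\hat\bU,\hat\bV)\hat\bV=\zero$, $\cG(\hat\bU,\hat\bV)^\T\hat\bU=\zero$ (your identity $\bar\cG=-\tilde\cG$), and then use the rate to deduce interiority. A minimizer over the closed region $\cDuvtwo$ or $\cDuvinf$ satisfies the first-order condition only once interiority is already known.

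The $\ell_\infty$ part is also missing. You need $\distinf\le C\Delta_\infty\omega_*/(\alpha\sigma_{\min})$ for the target, both for interiority in $\cDuvinf$ and for \eqref{item_alg_gcd_linf_uv_noise}; this is where the term $\tilde\cG(\bU^*,\bV^*)\bV^*$ from your Step~4 actually enters. You give no mechanism for it. A row-wise expansion needs a row-wise bound on the gradient at the target, which is unavailable without stationarity, and Assumptions~\ref{assump_row_cross_curvature_uv} and~\ref{assump_rect_lip} hold only inside $\cDuvinf$. Uniqueness of the aligned representative is likewise only asserted.

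The paper breaks the circle by not starting from a minimizer of $\cL$. It takes $(\tilde\bU,\tilde\bV)$ minimizing $\|\bS_Z^2\nabla_{\bZ}h_\alpha^\natural\|_{\twinfw}^2$ over the compact set $\cDuvde$, with Frobenius radius $\varepsilon\ll\epsilon$ and row radius $\epsilon$. The chain then runs as follows.
\begin{enumerate}
\item The bound $\|\bS_Z^2\nabla_{\bZ}h_\alpha^\natural(\tilde\bU,\tilde\bV)\|_{\twinfw}\le\|\bS_Z^2\nabla_{\bZ}h_\alpha^\natural(\bU^*,\bV^*)\|_{\twinfw}\le C\Delta_\infty\sqrt{\kappa\sigma_{\min}}$ replaces stationarity in both the Frobenius and the block-diagonal row-wise mean-value expansions. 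This gives strict interiority.
\item Positive definiteness of the Hessian then forces $\nabla_{\bZ}h_\alpha^\natural(\tilde\bU,\tilde\bV)=\zero$.
\item Strong convexity on $\cDuvd$ makes this point the unique minimizer there.
\item Uniqueness forces $p_\alpha^\natural(\tilde\bU,\tilde\bV)=0$; otherwise its optimal re-alignment, still in $\cDuvde$, would strictly lower $h_\alpha^\natural$.
\item Hence $\nabla\cL=\zero$, and the point is identified with the aligned constrained minimizer.
\end{enumerate}
For the $\ell_2$-only part you could instead repair your argument with the variational inequality for the strongly convex $h_\alpha^\natural$ on the convex Frobenius ball around $(\bU^*,\bV^*)$. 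That gives the rate without presupposing stationarity, but it does not give the row-wise rate.

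A second issue is in Step~2. If you bound $\tilde\cG(\tilde\bU^t,\tilde\bV^t)-\tilde\cG(\hat\bU,\hat\bV)$ by the supremum in \eqref{assump_gradient_noise_rect_1}, you get an additive $O(\eta\Delta_2\tau_*)$ term that does not decay. The final rate would survive, but the trajectory control would not. The alignment increments $\|\bG_{t+1}^\dagger-\bG_t^\dagger\|$ (which the paper telescopes via Lemma~\ref{lemma_alignment_near_rotation}), and likewise the per-step imbalance drift in your Step~3, are controlled by the current error. With a non-decaying error they are no longer summable over an unbounded horizon, so $\bLambda_t-\bI_r$ cannot be kept small.

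You must take your second option: expand with the Hessian of the full loss, which is covered by Assumption~\ref{assump_rect_rip_weighted}. Then the noise enters only as $\tilde\cG$ times the error, as in the paper's $\gamma_{2,t}^\dagger\le C\eta\Delta_2\|\tilde\bw_t\|$, and the error to the target decays geometrically. This is precisely why the statistical error has to be absorbed into the target. Finally, imbalance conservation is not how the paper propagates near-orthogonality. Your route would need an extra lemma converting small imbalance into a near-orthogonal optimal alignment.
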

To the best of our knowledge, Theorem~\ref{thm_general_theory_UV_noise} provides the first convergence guarantee for regularization-free gradient descent in noisy asymmetric low-rank estimation, a setting where even model-specific results were previously unavailable. It completes our framework, which covers symmetric and asymmetric models, both with and without noise, and establishes contractions in both $\ell_2$ and $\ell_\infty$ errors. It reveals a common mechanism across these settings: the nonconvex procedure admits a disguised convexity. The regularity conditions, initialization requirements, and noise bounds can then be verified in a model-specific manner to certify this mechanism in each problem.

We close this section with two concrete examples. Due to space limit, we summarize the informal results below and defer the formal ones to Appendix~\ref{sec_example}.
\begin{example}\label{example_1}
    Matrix sensing is a central problem in information processing and machine learning, where the goal is to recover a low-rank matrix $\bX^*$ from a small number of linear measurements~\citep{jain2010guaranteed,recht2010guaranteed,tu2016low}. Consider the model \begin{equation*}y_i = \langle\bA_i,\bX^*\rangle + \xi_i\, \text{ for }i\in[m],\end{equation*}
where $\{\bA_i\}_{i=1}^m$ are sensing matrices known a priori, $\langle\cdot,\cdot\rangle$ denotes the Frobenius inner product, and $\{\xi_i\}_{i=1}^m$ are i.i.d. mean-zero sub-Gaussian random variables with sub-Gaussian norm $\sigma_\xi$. 
Apply the gradient descent \eqref{eq_one_step_update_uv} to the loss $\cL(\bX) = \sum_{i=1}^m(\langle\bA_i,\bX\rangle - y_i)^2/2m$.
Under suitable regularity conditions, we can apply Theorem~\ref{thm_general_theory_UV_noise} to obtain the $\ell_2$ error bound as in \eqref{item_alg_gcd_l2_uv_noise} with probability at least $1-\delta$, with $\alpha$, $\beta\asymp 1$ and
\begin{equation*}
    \Delta_2(n,q,\delta) \;\asymp\; \sigma_\xi \left( \tfrac{\log\{(n+q)/\delta\}}{m(n\wedge q)} \right)^{1/2}.
\end{equation*}
A related regularization-free convergence guarantee for gradient descent was established by \citet{ma2021beyond}, but only in the noiseless setting, i.e., $\xi_i = 0$. Our initialization requirement matches theirs; see Theorem~\ref{prop_ms_verify} in Appendix~\ref{sec_example} and the subsequent discussion.
\end{example}
\begin{example}\label{example_2}
Consider a Bernoulli low-rank response model with independent observations
\begin{equation*}
Y_{ij}\sim \mathrm{Bernoulli}(P_{ij}^*), \quad P_{ij}^* = \{1+\exp(-\alpha_0-X_{ij}^*)\}^{-1} 
\quad \bX^*=(X_{ij}^*)_{n\times q}=\bU^*(\bV^*)^\T,
\end{equation*}for $i\in[n]$ and $j\in[q]$,
where $\alpha_0$ is a known intercept and $\bX^*$ is a rank-$r$ signal matrix.
The intercept $\alpha_0$ accommodates sparse binary data, since a negative $\alpha_0$ can make all success probabilities small even when $\bX^*$ is low-rank. Such models arise in generalized low-rank models~\citep{Udell2016,cui2026beyond}, network representation learning~\citep{li2023statistical,ma2020universal}, and binary matrix completion~\citep{cai2013maxnorm,davenport20141}.
Assume that, for some $M_1,M_2$, $-M_1\le X_{ij}^*\le M_2,$ for $i\in[n],\ j\in[q]$.
Apply the gradient descent scheme \eqref{eq_one_step_update_uv} to the scaled loss $\cL(\bX) = \sum_{i=1}^n\sum_{j=1}^q \nu_{\star}\ell(X_{ij}+\alpha_0;Y_{ij})$ with $\ell(x,y) = \log(1+\exp(x)) - yx$ and $\nu_{\star} := e^{-(\alpha_0 + M_2)}$. Then under suitable regularity conditions, the $\ell_2$ and $\ell_{\infty}$ error bounds in Theorem~\ref{thm_general_theory_UV_noise} hold with probability at least $1-\delta$ with $\alpha=\tfrac14 e^{-(M_1+ M_2)}$, $\beta= 1$,
\begin{equation*}
    \Delta_{2}(n,q,\delta) \asymp \big(\tfrac{\nu_{\star}}{n\wedge q}+\tfrac{\nu_{\star}^2\log((n+q)/\delta)}{nq}\big)^{1/2}\quad\text{ and }\quad
\Delta_{\infty}(n,q,\delta)\asymp\big(\tfrac{r+\log((n\vee q)/\delta)}{n\wedge q}\ \nu_{\star}R_{\star}\big)^{1/2},
\end{equation*}for some properly selected $R_{\star}$.
This result demonstrates the applicability of our theory to nonlinear models, which are substantially more challenging than linear settings. 
The closest existing result studies a projected gradient descent algorithm and provides only $\ell_2$ guarantees~\citep{ma2020universal}, while our theory also establishes the $\ell_{\infty}$ result. Moreover, our theory covers the near-optimal sparsity regime where $e^{\alpha_0}\asymp (n\wedge q)^{-1+\varepsilon}$ for any fixed $\varepsilon$, where regularization-free trajectory guarantees of this type were previously unavailable for such models.

It is worth noting that for this model, we adopt a leave-one-out argument to avoid the condition on $\bar\Delta_{\infty}(n,q,\delta)$. This provides a precise characterization of the dependence between iterates $(\bU^t,\bV^t)$ and the statistical noise, while it also requires certain technical regularity conditions on the leave-one-out initializers; see Theorem~\ref{Thm_example2} in Appendix~\ref{sec_example} and the subsequent discussion. The formal guarantee is hence conditional on these leave-one-out initialization conditions and holds uniformly over $0\le t\le T_\star$. Here, $T_\star$ denotes the maximum number of iterations, which enters the scaling conditions through logarithmic factors, due to a union bound over the trajectory. In the considered regime, taking $T_\star\asymp \log(n\vee q)$ is sufficient for the linearly decaying term to match the statistical error. This example shows that, with additional model structure, the general condition in Theorems~\ref{thm_general_theory_noisy_Z} and~\ref{thm_general_theory_UV_noise} on $\bar\Delta_{\infty}(\cdot)$ can be relaxed. 
\end{example}

\section{Conclusion}\label{sec_conclude}

This paper develops a theoretical framework for nonconvex procedures in low-rank estimation. Our key device is an equivalent formulation of the original nonconvex procedure into a strongly convex one within a local region, achieved via a benign regularizer that does not alter the update. This benign regularizer makes explicit the implicit structural feature that governs the mechanism of the nonconvex algorithm. Our framework offers a new geometric explanation for the stable behavior of the regularization-free nonconvex procedure and lays a foundation for analyzing more complex models in which a low-rank matrix is only one component, such as low-rank-plus-sparse models~\citep{candes2011robust}, LoRA fine-tuning~\citep{hu2022lora}, and deep matrix factorization~\citep{arora2019implicit}.


\bibliographystyle{plainnat}
\bibliography{reference}

\newpage
\appendix
\numberwithin{equation}{section}

\renewcommand{\thetheorem}{\thesection.\arabic{theorem}}
\renewcommand{\theproposition}{\thesection.\arabic{proposition}}
\renewcommand{\theassumption}{\thesection.\arabic{assumption}}

\startcontents[appendix]

\section*{Contents}
\setcounter{tocdepth}{1}
\printcontents[appendix]{}{1}{}

\section{Example Details}\label{sec_example}

\subsection{Matrix Sensing}
We provide the details for Example~\ref{example_1}. Recall that the observations follow
\begin{equation*}
    y_i=\langle \bA_i,\bX^*\rangle+\xi_i,\qquad i\in[m],
\end{equation*}
where $\bX^*=\bU^*(\bV^*)^\T\in\RR^{n\times q}$ has rank $r$. Here, $\langle \bA,\bB\rangle = {\rm Tr}(\bA\bB^\T)$ denotes the Frobenius inner product where ${\rm Tr}(\cdot)$ is the trace operator. Throughout, we condition our analysis on the sensing matrices $\{\bA_i\}_{i=1}^m$, so that the probability statements are taken with respect to the measurement noise $\{\xi_i\}_{i=1}^m$.
Define the sensing operator $\cA(\bX) := \big(\langle\bA_1,\bX\rangle,\dots, \langle\bA_m,\bX\rangle\big)$. Assume that $\cA$ satisfies the rank-$2r$ restricted isometry property: for any rank-$2r$ matrix $\bX$, for some $0<\delta_0<1$ such that
\begin{equation}
    (1-\delta_0)\|\bX\|_{\Fn}^2\le m^{-1}\|\cA(\bX)\|^2\le (1+\delta_0)\|\bX\|_{\Fn}^2.\label{eq_rip_sensing}
\end{equation}
This condition is standard in the matrix sensing literature~\citep{jain2010guaranteed,ma2021beyond,recht2010guaranteed,tu2016low}, and holds for a broad class of measurement ensembles~\citep{recht2010guaranteed}. The loss is $\cL(\bX) = (2m)^{-1}\sum_{i=1}^m\big(\langle\bA_i,\bX\rangle - y_i\big)^2$. 
We then apply the first part of Theorem~\ref{thm_general_theory_UV_noise} to obtain
\begin{theorem}
    \label{prop_ms_verify}
Assume that $(\bU^*,\bV^*)$ are balanced as in Theorem~\ref{thm_general_theory_UV_noise} and the sample size satisfies
\begin{equation*}
m\gg \frac{(1+\delta_{0})\kappa^2r\sigma_\xi^2}{(1-\delta_{0})^4\sigma_{\min}^2} \cdot \frac{\log\{(n\vee q)/\delta\}}{n\wedge q},
\end{equation*}
If the initialization $(\bU^0,\bV^0)$ satisfies \eqref{eq_init_consis_main_R0} for $\epsilon = \infty$, for the iterates $\{(\bU^t,\bV^t)\}_{t\ge 0}$ generated by \eqref{eq_one_step_update_uv} with step size $\eta^t=\eta = (20\kappa\sigma_{\min})^{-1}$, we have with probability at least $1-\delta$,
\begin{equation*}
\disttwo\{(\bU^t,\bV^t),(\bU^*,\bV^*)\} \le \left(\; \rho^t\phi_{nq} + C\frac{\sigma_\xi}{(1-\delta_{0})\sigma_{\min}} \sqrt{\frac{(1+\delta_0)\log((n+q)/\delta)}{m(n\wedge q)}} \;\right)\tau_*,
\end{equation*} where $\rho = 1- (1-\delta_0)/(80\kappa)$.
\end{theorem}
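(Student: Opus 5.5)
The plan is to verify the hypotheses of the $\ell_2$ part of Theorem~\ref{thm_general_theory_UV_noise} for the squared loss and then substitute the resulting constants. The loss is quadratic, so its Hessian is constant:
\begin{equation*}
\nabla_{\bX}^2\cL(\bX)[\bH,\bH]=m^{-1}\|\cA(\bH)\|^2 .
\end{equation*}

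\textbf{Step 1 (curvature, Assumption~\ref{assump_rect_rip_weighted}).} For any $(\bU,\bV)$ and $(\bL,\bR)$, the direction $\bU\bR^\T+\bL\bV^\T$ has rank at most $2r$. The RIP \eqref{eq_rip_sensing} therefore gives Assumption~\ref{assump_rect_rip_weighted} with $\alpha=1-\delta_0$ and $\beta=1+\delta_0$. This holds globally, which is why the statement may take $\epsilon=\infty$ in \eqref{eq_init_consis_main_R0}.

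\textbf{Step 2 (Lipschitz gradient, Assumption~\ref{assump_rect_lip_1}).} The mean gradient is $\bar\cG(\bU,\bV)=m^{-1}\cA^*\cA(\bU\bV^\T-\bX^*)$, where $\cA^*$ is the adjoint of $\cA$. Set $\bD=\bU_1\bV_1^\T-\bU_2\bV_2^\T$, which has rank at most $2r$. I bound $\|m^{-1}\cA^*\cA(\bD)\|$ by duality: test against rank-one matrices $\bx\by^\T$ and use the polarization form of RIP for rank-$(2r+1)$ matrices. This gives $\|m^{-1}\cA^*\cA(\bD)\|\le(1+\delta_0)\|\bD\|_{\Fn}$ up to a constant, so $L_2\lesssim1+\delta_0$. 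If the rank bookkeeping fails, a cruder bound that depends on $\delta_0$ suffices, because $L_2$ enters only through $\epsilon$-dependent terms. The step size is $\eta=\{10(\alpha+\beta)\kappa\sigma_{\min}\}^{-1}=(20\kappa\sigma_{\min})^{-1}$, since $\alpha+\beta=2$. Likewise $\rho=1-\eta\alpha\sigma_{\min}/4=1-(1-\delta_0)/(80\kappa)$, matching the statement.

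\textbf{Step 3 (noise level, main obstacle).} The fluctuation is $\tilde\cG=-m^{-1}\sum_i\xi_i\bA_i$. It does not depend on $(\bU,\bV)$, so the supremum in \eqref{assump_gradient_noise_rect_1} is trivial. I apply the matrix Bernstein/Hoeffding inequality for sub-Gaussian weighted sums of fixed matrices, conditional on $\{\bA_i\}$:
\begin{equation*}
\Big\|\sum_i\xi_i\bA_i\Big\|\lesssim\sigma_\xi\sqrt{v\log\{(n+q)/\delta\}},\qquad v=\max\Big\{\Big\|\sum_i\bA_i\bA_i^\T\Big\|,\Big\|\sum_i\bA_i^\T\bA_i\Big\|\Big\}.
\end{equation*}
The delicate part is bounding the variance proxy $v$ using only RIP. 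I would write $\bx^\T(\sum_i\bA_i\bA_i^\T)\bx=\sum_j\|\cA(\bx\be_j^\T)\|^2\le m(1+\delta_0)q$ by rank-one RIP, which gives $v\le m(1+\delta_0)(n\vee q)$. Dividing by $m\sqrt{nq}$ yields
\begin{equation*}
\Delta_2\asymp\sigma_\xi\sqrt{(1+\delta_0)\log\{(n+q)/\delta\}/\{m(n\wedge q)\}},
\end{equation*}
since $(n\vee q)/(nq)=1/(n\wedge q)$.

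\textbf{Step 4 (conclusion).} The theorem requires $\Delta_2/(\alpha\sigma_{\min})\le c_0\epsilon$ with $\epsilon\le c_0\alpha/(\kappa\sqrt r)$. Substituting $\alpha=1-\delta_0$ and squaring turns this into exactly the stated sample-size condition on $m$, with its factor $(1-\delta_0)^4\kappa^{-2}r^{-1}$. Here $\epsilon$ is chosen as $c_0\alpha/(\kappa\sqrt r)$ for the noise condition, while the initialization condition \eqref{eq_init_consis_main_R0} is unaffected because the assumptions hold globally. Then \eqref{item_alg_gcd_l2_uv_noise} with $C\Delta_2/\{(1-\delta_0)\sigma_{\min}\}$ gives the displayed bound.
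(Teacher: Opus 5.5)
Your argument is essentially the paper's. You verify Assumptions~\ref{assump_rect_rip_weighted}--\ref{assump_rect_lip_1} globally from RIP with $\alpha=1-\delta_0$ and $\beta=1+\delta_0$, bound $\Delta_2$ by matrix Bernstein with variance proxy at most $m(1+\delta_0)(n\vee q)$, and substitute into the $\ell_2$ part of Theorem~\ref{thm_general_theory_UV_noise}. Your rank-one RIP computation of that variance proxy fills in a step the paper only asserts.

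One fix is needed in Step~2. Polarization requires RIP for $\bx\by^\T\pm\bD$, which can have rank $2r+1$ and is not covered by \eqref{eq_rip_sensing}. Use Cauchy--Schwarz instead, as the paper does:
\[
m^{-1}|\langle\cA(\bx\by^\T),\cA(\bD)\rangle|\le m^{-1}\|\cA(\bx\by^\T)\|\,\|\cA(\bD)\|\le(1+\delta_0)\|\bD\|_{\Fn}.
\]
This needs RIP only at ranks $1$ and $2r$ separately and gives $L_2\le 2$.

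In Step~4, the $\epsilon/2$ cap in \eqref{eq_init_consis_main_R0} is harmless for a different reason than the one you give. It is not because the assumptions hold globally. It is because the other branch satisfies $c_0\alpha(\alpha+\kappa)/(\beta\kappa^2\sqrt{r\kappa})\le 2c_0\alpha/(\kappa\sqrt r)$, which already lies below $\epsilon/2$ once $\epsilon\asymp\alpha/(\kappa\sqrt r)$ and the small constants are matched.
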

\begin{proof}
    See Section~\ref{supp_sec_prove_example_1}. Here, we take $\alpha=1-\delta_0$, $\beta = 1+\delta_0$, and
\begin{equation*}
\Delta_{2}(n,q,\delta)\asymp \sigma_{\xi}  \sqrt{\tfrac{(1+\delta_0)\log((n+q)/\delta)}{m(n\wedge q)}} . 
\end{equation*}
The result in Example~\ref{example_1} is obtained by taking $\delta_0\in(0,1)$ to be some absolute constant.
\end{proof}

Our result is closely related to the balancing-free analysis of \citet{ma2021beyond} for asymmetric low-rank matrix sensing. They consider the noiseless model and show that, under a rank-$2r$ RIP condition, regularization-free gradient descent converges linearly once the initialization lies in a sufficiently small balanced basin of attraction. In our notation, their initialization condition can be written as
\begin{equation*}
    \phi_{nq}\lesssim \frac{c_0\sqrt{\sigma_{\min}}}{\tau_*\kappa^{3/2}} = \frac{c_0}{\kappa^{5/2}\sqrt{r}},
\end{equation*}
which matches ours up to constants when $\delta_0$ is bounded away from one. For the sample size assumption, it arises only in the noisy setting. When $\xi_i = 0$,  we can apply Theorem~\ref{thm_general_theory_uv} directly to obtain the convergence guarantee
\begin{equation*}
\disttwo\{(\bU^t,\bV^t),(\bU^*,\bV^*)\} \le  \rho^t\phi_{nq}\tau_*,
\end{equation*} 
provided \eqref{eq_rip_sensing} holds. In this case, no sample size condition is needed, which recovers the result of \citet{ma2021beyond} as a special case.
Moreover, we note that our sample size requirement is also close to the information-theoretical optimum where $m\asymp r(n\vee q)$ up to logarithmic factors~\citep{candes2011tight,recht2010guaranteed}.


Initialization can be obtained using a similar strategy presented in \citet{ma2021beyond}. One may first construct a rank-$r$ spectral or projected-gradient estimator of $\bX^*$, and then take its balanced factorization as $(\bU^0,\bV^0)$. It can be verified that these approaches can satisfy our initialization requirement as long as the perturbation from $m^{-1}\sum_{i=1}^m\xi_i\bA_i$ is sufficiently small.

\subsection{Low-rank Bernoulli response model}\label{section_example2}
We now describe the setup for the low-rank Bernoulli response model in detail. Recall that $Y_{ij}\sim{\rm Bernoulli}(P_{ij}^*)$ for $P_{ij}^* = 1/\{1+e^{-(\alpha_0+X^*_{ij})}\}$. Let
$$\sigma(x):=\sigma_0(\alpha_0+x) = \frac{\exp(\alpha_0+x)}{1+\exp(\alpha_0+x)}.$$
Without loss of generality, we assume $\alpha_0\le 0$, since one can always flip the labels $0$ and $1$ so that $0$ becomes the majority observation. We use $M_1$ and $M_2$ to quantify the range of the linear predictors, and assume without loss of generality that
\begin{equation*}
    -M_1\le (\bU\bV^\T)_{i,j}\le M_2,\qquad \forall\, i\in[n],\ j\in[q],
\end{equation*}for any $(\bU,\bV)\in\cDuvinf$. Here, subscript $\bA_{i,j}$ denotes the $(i,j)$th entry of matrix $\bA$. Since $\epsilon$ can be taken sufficiently small in the subsequent analysis, this condition can always be ensured by enlarging $M_1$ and $M_2$ slightly by constant factors. Let $M_{\star} = M_1 + M_2$. 

For the scaled loss function $\cL(\bX)= \sum_{i=1}^n\sum_{j=1}^q \nu_{\star}\ell(X_{ij};Y_{ij})$ for $\ell(x;y) = \log\{1+\exp(\alpha_0+x)\}-yx$ and $\nu_{\star} = e^{-(\alpha_0 + M_2)}$, under the notation of Assumption~\ref{assump_rect_rip_weighted}, one can verify that $\alpha$ and $\beta$ may be chosen as
\begin{equation*}
\alpha=\frac14 e^{-M_{\star}}, \qquad \beta=1.
\end{equation*}
Now we state our result.
\begin{theorem}
\label{Thm_example2}
Assume that $(\bU^*,\bV^*)$ are balanced as in Theorem~\ref{thm_general_theory_UV_noise}. Fix a maximal iteration count for~\eqref{eq_one_step_update_uv} as $T_{\star}$. Let $\zeta_r:=\big(\tfrac{\beta}{\alpha}+\sqrt r\big)\sqrt r\,\kappa$, $\ell_{\star} = \log\{(n+q)T_{\star}/\delta\}$, $L_{\star} = r + \ell_{\star}$. We assume that for any small constant $c_0$, the following holds
\begin{equation}\label{eq_bern_prop_scaling}
\begin{aligned}\frac{(\nu_\star+1)L_\star}{n\wedge q}&\le c_0 \\
\left[\frac{\left\{\left( \alpha^{-1}\kappa\sqrt{\kappa r}+ \kappa r \right)\zeta_r \right\}^2}{L_\star} + \frac{\nu_\star\kappa L_\star(\omega_*^4+\omega_*^{-2})} {(n\wedge q)\alpha^4\sigma_{\min}^4}
\right] &\le
c_0 \frac{\alpha^2\sigma_{\min}^2\epsilon^2(n\wedge q)}{\nu_\star L_\star\kappa^2r(\alpha^{-1}+\sqrt r)^2 }.
\end{aligned}\end{equation}
Then we can select $ R_\star$ such that for sufficiently large constant $C$, it holds that
\begin{equation}\label{eq_bern_prop_scaling_1}
    R_{\star} = C\left[\frac{\left\{\left( \alpha^{-1}\kappa\sqrt{\kappa r}+ \kappa r \right)\zeta_r \right\}^2}{L_\star} + \frac{\nu_\star\kappa L_\star(\omega_*^4+\omega_*^{-2})} {(n\wedge q)\alpha^4\sigma_{\min}^4}
\right] 
\end{equation}and we set
\begin{equation}
\Delta_{2}(n,q,\delta) \asymp \sqrt{\frac{\nu_{\star}}{n\wedge q}+\frac{\nu_{\star}^2\ell_{\star}}{nq}},\quad
\Delta_{\infty}(n,q,\delta) \asymp \sqrt{\frac{\nu_{\star}L_{\star}R_{\star}}{n\wedge q}}.
\end{equation}
Assume that the initialization $(\bU^0,\bV^0)$ satisfies
\eqref{eq_init_consis_main_R0} and \eqref{eq_sym_init_Z_uniform_uv} with $\epsilon\le c_0\alpha/(\kappa\sqrt{r})$ for sufficiently small constant $c_0$, and that the following leave-one-out initializers are available.
\begin{enumerate}[leftmargin = 0.6cm]
    \item For each $i\in[n]$, there is a row-wise LOO initializer $(\bU^{0,-i},\bV^{0,-i})$, measurable to $\sigma\big(\bY^{-i,}\big)$.
    \item For each $\ell\in[q]$, there is a column-wise LOO initializer $(\bU^{0,-\ell},\bV^{0,-\ell})$, measurable to $\sigma\big(\bY^{,-\ell}\big)$.
\end{enumerate}
Here, $(\bY^{-i})_{k\ell} := Y_{k\ell}1_{\{k\neq i\}}+P^*_{k\ell}1_{\{k=i\}}, \text{ for } i,k\in[n],\ \ell\in[q]$, and $(\bY^{-\ell})_{kj} := Y_{kj}1_{\{j\neq \ell\}}+P^*_{kj}1_{\{j=\ell\}}, \text{ for } k\in[n],\ \ell,j\in[q]$.
Let $\phi_{nq}^{\dagger}:= \phi_{nq} + C\tfrac{\Delta_2(n,q,\delta)}{\alpha\sigma_{\min}}$ and $\psi_{nq}^{\dagger} := \psi_{nq} + C\tfrac{\Delta_\infty(n,q,\delta)}{\alpha\sigma_{\min}}$.
Assume further that, for the same orthogonal matrix $\bR^0$ in \eqref{eq_init_consis_main_R0} and \eqref{eq_sym_init_Z_uniform_uv}, these LOO initializers satisfy
\begin{align*}
    &\max_{i\in[n]}\frac{\big\|\bU^{0,-i,}\bR^{0} - \bU^{0}\bR^0\big\|_{\Fn}}{\sqrt{n}} \vee \frac{\big\|\bV^{0,-i}\bR^{0} - \bV^{0}\bR^0\big\|_{\Fn}}{\sqrt{q}}\le c_0\frac{\alpha\sigma_{\min}}{\sqrt{\nu_{\star}\beta+\nu_{\star}^2L_{\star}/q}}\, \psi_{nq}^{\dagger}\omega_*;\\
    &\max_{\ell\in[q]}\frac{\big\|\bU^{0,-\ell}\bR^{0} - \bU^{0}\bR^0\big\|_{\Fn}}{\sqrt{n}} \vee\frac{\big\|\bV^{0,-\ell}\bR^{0} - \bV^{0}\bR^0\big\|_{\Fn}}{\sqrt{q}}\le \frac{\alpha\sigma_{\min}}{\sqrt{\nu_{\star}\beta+\nu_{\star}^2L_{\star}/n}}\, \psi_{nq}^{\dagger}\omega_*;\\
&\max_{i\in[n]}\left\{ \|\bU^{0,-i}\bR^0-\bU^*\|_{2\to\infty} \vee \|\bV^{0,-i}(\bR^0)^{-\T}-\bV^*\|_{2\to\infty} \right\}\le c_0\psi_{nq}^{\dagger}\omega_*;\\
&\max_{\ell\in[q]}\left\{ \|\bU^{0,-\ell}\bR^0-\bU^*\|_{2\to\infty} \vee \|\bV^{0,-\ell}(\bR^0)^{-\T}-\bV^*\|_{2\to\infty} \right\}\le c_0\psi_{nq}^{\dagger}\omega_*,\end{align*}
for sufficiently small constant $c_0$.
Let the iterates $\{(\bU^t,\bV^t)\}_{t\ge 0}$ be generated by \eqref{eq_one_step_update_uv}
with step size
\begin{equation}
\eta^t=\eta=\{10(\alpha+\beta)\kappa\sigma_{\min}\}^{-1},
\qquad
\rho:=1-\eta\alpha\sigma_{\min}/4.
\end{equation}
Then, with probability at least $1-\delta$, the $\ell_2$ and $\ell_{\infty}$ error contractions of
Theorem~\ref{thm_general_theory_UV_noise} hold for the Bernoulli model uniformly for all iterations $0\le t\le T_\star$:
\begin{align*}
    \disttwo\{(\bU^t,\bV^t),(\bU^*,\bV^*)\} \le \rho^t\phi_{nq}\tau_* + C\Big\{\frac{\nu_{\star}}{n\wedge q}+\frac{\nu_{\star}^2\log((n+q)/\delta)}{nq}\Big\}^{1/2}\,\frac{\tau_*}{\alpha\sigma_{\min}};\\
    \distinf\{(\bU^t,\bV^t),(\bU^*,\bV^*)\} \le  \rho^t\psi_{nq}\omega_* + C\Big\{\frac{r+\log((n\vee q)/\delta)}{n\wedge q}\ R_{\star}\Big\}^{1/2}\,\frac{\omega_*}{\alpha\sigma_{\min}},
\end{align*}
\end{theorem}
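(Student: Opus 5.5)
The plan is to reduce Theorem~\ref{Thm_example2} to the $\ell_2$ and $\ell_\infty$ parts of Theorem~\ref{thm_general_theory_UV_noise}, except that the worst-case quantity $\bar\Delta_\infty$ is never invoked. Its only role is to bound terms like $\|\tilde\cG(\bU^t,\bV^t)(\bV^t-\bV^*)\|_{\twinf}$, and these will instead be controlled by a leave-one-out argument.

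\textbf{Step 1: deterministic conditions.} For $(\bU,\bV)\in\cDuvinf$ every entry of $\bU\bV^\T$ lies in $[-M_1,M_2]$. The scaled Hessian is diagonal with entries $\nu_\star\sigma'(X_{ij})$, where $\sigma'=\sigma(1-\sigma)$. With $\nu_\star=e^{-(\alpha_0+M_2)}$ and $\alpha_0\le 0$, these entries lie in $[\tfrac14e^{-M_\star},1]$, which gives Assumption~\ref{assump_rect_rip_weighted} with $\alpha=\tfrac14e^{-M_\star}$ and $\beta=1$.

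Because $\bar\cG$ acts entrywise with a Lipschitz derivative, $\|\bar\cG_1-\bar\cG_2\|\le\|\bar\cG_1-\bar\cG_2\|_{\Fn}\le\beta\|\bU_1\bV_1^\T-\bU_2\bV_2^\T\|_{\Fn}$. This gives Assumption~\ref{assump_rect_lip_1}, and the row-wise analogues in Appendix~\ref{sec_ell_infty_requirement} follow the same way.

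\textbf{Step 2: noise quantities.} Here $\tilde\cG=\nu_\star(\bP^*-\bY)$ does not depend on $(\bU,\bV)$. Its entries are independent, bounded by $\nu_\star$, with variance at most $\nu_\star^2P^*_{ij}\le\nu_\star$, since $P^*_{ij}\le e^{\alpha_0+M_2}$.
\begin{itemize}
\item Matrix Bernstein, or the bound of \citet{bandeira2016sharp}, gives $\|\tilde\cG\|\lesssim\sqrt{\nu_\star(n\vee q)}+\nu_\star\sqrt{\ell_\star}$. Dividing by $\sqrt{nq}$ yields $\Delta_2$.
\item Vector Bernstein applied row-wise to $\tilde\cG\bV^*$ and $\tilde\cG^\T\bU^*$, with a union bound over rows and $T_\star$, yields a bound of order $\sqrt{\nu_\star L_\star/(n\wedge q)}$ for \eqref{assump_gradient_noise_rect_2}. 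This is dominated by the stated $\Delta_\infty$, since $R_\star\gtrsim1$.
\end{itemize}
The scaling conditions of Theorem~\ref{thm_general_theory_UV_noise} other than the one on $\bar\Delta_\infty$ then follow from \eqref{eq_bern_prop_scaling} and the choice of $R_\star$ in \eqref{eq_bern_prop_scaling_1}.

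\textbf{Step 3: leave-one-out sequences.} This is the main step. For each $i\in[n]$, run gradient descent on the loss built from $\bY^{-i}$, starting from $(\bU^{0,-i},\bV^{0,-i})$; define column sequences for $\ell\in[q]$ analogously. I will prove, by joint induction over $t\le T_\star$, the following four bounds:
\begin{itemize}
\item[(a)] the $\ell_2$ contraction for the main and leave-one-out sequences, via the $\ell_2$ part of Theorem~\ref{thm_general_theory_UV_noise};
\item[(b)] the leave-one-out proximity $\|(\bU^{t,-i},\bV^{t,-i})\bG-(\bU^t,\bV^t)\bG\|_{\Fn}$, scaled, of order $\alpha\sigma_{\min}\psi^\dagger\omega_*/\sqrt{\nu_\star\beta+\nu_\star^2L_\star/q}$, which is the same form as the initial condition;
\item[(c)] the $\ell_\infty$ bound on each leave-one-out sequence;
\item[(d)] the $\ell_\infty$ bound on the main sequence.
\end{itemize}

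For (b), subtract the two aligned updates \eqref{eq_aligned_gradient_update_ub}. The strong convexity of $h\prect$ (Lemma~\ref{lemma_general_convex_asym}) contracts the difference by $\rho$. The only new perturbation is the $i$th-row gradient discrepancy $\nu_\star(\bY-\bY^{-i})_{i,\cdot}\bV^{t,-i}$, whose norm is bounded by Bernstein conditional on $\bY^{-i}$, because $\bV^{t,-i}$ is independent of row $i$. The alignment matrices for the two sequences must be shown to be close; for this I would use a perturbation bound for $\bG^*$ derived from its first-order condition (Lemma~\ref{lemma_id_rec}), together with $\bLambda_t^*\approx\bI_r$.

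For (d), the bad term in row $i$ is $[\tilde\cG(\bV^t-\bV^*)]_{i,\cdot}$. I split it as
\begin{equation*}
[\tilde\cG(\bV^t-\bV^{t,-i})]_{i,\cdot}+[\tilde\cG(\bV^{t,-i}-\bV^*)]_{i,\cdot}.
\end{equation*}
The first piece is at most $\|\tilde\cG\|$ times the proximity from (b). The second is a sum of independent terms given $\bY^{-i}$, so it is bounded by $\sqrt{\nu_\star L_\star}\,\|\bV^{t,-i}-\bV^*\|_{\twinf}$ plus lower order, using (c). The resulting bound is what fixes the size of $R_\star$. The rest of the $\ell_\infty$ recursion copies the proof of Theorem~\ref{thm_general_theory_UV_noise}, with this bound replacing the $\bar\Delta_\infty$ step; column indices are handled symmetrically.

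I expect the main obstacle to be closing the induction consistently. Specifically, the proximity bound in (b) must be small enough to feed (d), while (d) itself relies on (c). Calibrating constants so that $\psi^\dagger$ propagates without loss, and handling the alignment mismatch between sequences, will require care. A final union bound over $i$, $\ell$ and $t\le T_\star$ explains the $\ell_\star$ factors.
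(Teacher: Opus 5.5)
Your plan never deals with the empirical contraction target, and this is a genuine gap.

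\textbf{Where $\bar\Delta_\infty$ is actually used.} You assert that $\bar\Delta_\infty$ enters only through $\tilde\cG(\bV^t-\bV^*)$ along the trajectory. In the proof of Theorem~\ref{thm_general_theory_UV_noise} it is first used in Step~1, \eqref{eq_rect_score_term_bound_2}. There it bounds $\Delta_{4,i}^{(U)}=q^{-1}\nu_\star[(\bP^*-\bY)(\tilde\bV-\bV^*)]_{i,\cdot}$ for the point $(\tilde\bU,\tilde\bV)$ constructed in that step. That bound is what places $(\tilde\bU,\tilde\bV)$ in the interior of the $\ell_\infty$-constrained set. Interiority then makes it stationary, identifies it with $(\hat\bU\hat\bQ,\hat\bV\hat\bQ^{-\T})$, and gives its row-wise rate $C\Delta_\infty\omega_*/(\alpha\sigma_{\min})$. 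Step~3, for both the $\ell_2$ and the $\ell_\infty$ recursion, is centered at this point.

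\textbf{Why (a) cannot be quoted.} The $\ell_2$ part of the theorem is stated on $\cDuvtwo$, where the Bernoulli curvature lower bound fails; you checked Assumption~\ref{assump_rect_rip_weighted} only on $\cDuvinf$. On $\cDuvinf$, the $\ell_2$ contraction toward the target needs the segment to the target to stay in $\cDuvinf$, which is exactly the target's $\ell_\infty$ bound.

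\textbf{The leftover term in (d).} Because the recursion is recentered, the bad term is $q^{-1}\nu_\star[(\bY-\bP^*)\{\bV^t(\bG_t^\dagger)^{-\T}-\tilde\bV\}]_{i,\cdot}$. Your two-piece split leaves the remainder $q^{-1}\nu_\star(\bY-\bP^*)_{i,\cdot}(\tilde\bV-\bV^*)$, which is the same object as in Step~1.
\begin{itemize}
\item Leave-one-out iterates do not reach it. The matrix $\tilde\bV$ depends on row $i$ of the noise, and the $i$th leave-one-out sequence converges to its own stationary point, not to $\tilde\bV$.
\item The crude bound $q^{-1}\nu_\star\|(\bY-\bP^*)_{i,\cdot}\|\,\|\tilde\bV-\bV^*\|_{\Fn}\lesssim\sqrt{\nu_\star}\,\Delta_2\tau_*/(\alpha\sigma_{\min})$ exceeds the admissible $\Delta_\infty\omega_*$ by a factor of order $\sqrt{\nu_\star/L_\star}$, up to problem-dependent constants.
\item That factor is unbounded in the sparse regime $e^{\alpha_0}\asymp(n\wedge q)^{-1+\varepsilon}$ that the theorem is designed for.
\end{itemize}

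\textbf{The device the paper uses.} It is Lemma~\ref{lemma_rect_auxiliary_bar}: a row- and column-separable one-step Newton point $(\bar\bU,\bar\bV)$ taken from $(\bU^*,\bV^*)$. Row $i$ of $\bar\bU$ depends only on row $i$ of $\bY$, and row $j$ of $\bar\bV$ depends only on column $j$.
\begin{itemize}
\item Its score is $O(\Delta_\infty\omega_*)$, so strong convexity of $h\prect$ makes it Frobenius-close to $(\tilde\bU,\tilde\bV)$.
\item The quantity $q^{-1}\nu_\star\|(\bY-\bP^*)(\bar\bV-\bV^*)\|_{\twinf}$ is controlled by separating a diagonal part, $\sum_j\xi_{ij}^2$, from an off-diagonal part. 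The off-diagonal part is a sum of independent terms conditionally on the other rows.
\item This replaces $\bar\Delta_\infty$ in Step~1, and it supplies $T_{4,t}$ and $T_{5,t}$ in the five-term split of $\tilde\bE_V^t$ in Lemma~\ref{lem_tentative_loo_gradient_mc}.
\end{itemize}

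\textbf{Alignment in your conditional Bernstein step.} Your bound on $[\tilde\cG(\bV^{t,-i}\bG-\bV^*)]_{i,\cdot}$ requires the alignment $\bG$ to be independent of row $i$ of $\bY$. Alignment to the data-dependent target is not. This is why the paper re-aligns to the truth via $\check\bG_t^{-i}$ and pays the cost in \eqref{eq_loo_mc_reference_change}.

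\textbf{If you intended to center at the truth instead.} Dropping the empirical target and centering everything at $(\bU^*,\bV^*)$ is no longer a copy of Theorem~\ref{thm_general_theory_UV_noise}, and the $GL(r)$ alignment control breaks.
\begin{itemize}
\item Lemma~\ref{lemma_alignment_near_rotation} bounds $\|\bG_{t+1}^*-\bG_t^*\|$ by the total current error divided by $\sqrt{\sigma_{\min}}$.
\item That per-step bound now has a non-decaying floor of order $\sqrt{r\kappa}\,\Delta_2/(\alpha\sigma_{\min})$, so the telescoped drift grows linearly in $T_\star$.
\item Keeping $\bLambda_t^*\approx\bI_r$ would then need a condition polynomial in $T_\star$. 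The theorem does not supply one, since $T_\star$ appears only inside logarithms.
\end{itemize}
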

\begin{proof}
    See Section~\ref{supp_sec_prove_example2}.
\end{proof}

The scaling conditions in \eqref{eq_bern_prop_scaling} are mild and imply that the theory guarantees convergence in the near-optimal sparse regime. Suppose, for simplicity, that $r,\kappa,\sigma_{\min},\omega_*,M_1,M_2$ are constants and that $T_\star\asymp \log(n\vee q)$.  Then $L_\star = r+\log\{(n\vee q)T_\star/\delta\} \asymp \log\{(n\vee q)/\delta\}$ up to a logarithmic factor.  Then conditions in \eqref{eq_bern_prop_scaling} reduce to $\tfrac{\nu_{\star}L_\star}{n\wedge q}\ll 1$. Equivalently,
\[
    e^{\alpha_0}(n\wedge q) \gg \log\{(n\vee q)/\delta\}.
\]
This condition allows $e^{\alpha_0}\asymp (n\wedge q)^{-1+\varepsilon}$, for any fixed \(\varepsilon>0\), which matches the information-theoretically near-optimal scaling $nq e^{\alpha_0}\asymp (n\vee q)$ up to logarithmic factors.
To the best of our knowledge, this is the first convergence guarantee for such models in the optimal sparsity regime.

The initialization assumptions in Theorem~\ref{Thm_example2} should be viewed as technical conditions needed for the leave-one-out argument. They are imposed so that one can construct LOO gradient descent sequences that track the trajectory of $(\bU^{t},\bV^t)$ while enjoying favorable dependence structures (see details in Lemma~\ref{lem_tentative_loo_gradient_mc}). In particular, these auxiliary sequences remain close to $(\bU^{t},\bV^t)$ but are independent of certain rows or columns of $\bY$. This reflects an underlying geometric property in the optimization problem, where each row of $\bU^{t}$ and $\bV^t$ is not significantly affected by any single row or column of $\bY$. In other words, near the optimization trajectory, the statistical noise $\bY-\bP^*$ influences the landscape in an approximately uniform way. As for constructing initializers satisfying the stated rates, this is beyond the scope of this paper, and we leave this question for future work.

The quantity $T_{\star}$ appears only through logarithmic factors.  It is included because the LOO sequences allow us to apply a Bernstein inequality to terms of the form $\tilde\cG(\bU^{-i,t},\bV^{-i,t})\Delta$ at each step, where the relevant difference $\Delta$ is independent of the row or column noise being averaged.
These high-probability events are required to hold uniformly over all iterations $0\le t\le T_{\star}$, and the proof takes a union bound over these iterations.

\section{Additional Assumptions}\label{sec_ell_infty_requirement}
We introduce additional technical assumptions needed to establish $\ell_{\infty}$ convergence. Denote $\be_i^{(n)}$ the canonical basis in $\RR^{n}$. For any matrix $\bM\in\RR^{n\times q}$ and $i\in[n]$, $\bM_{i,\cdot}$ denotes its $i$th row. The following is a row-wise control on the Hessian matrix.
\begin{assumption}\label{assump_row_cross_curvature_sym}
There exists some $\epsilon$ and $\beta>0$ such that for either $\cD=\cDtwo$ or $\cD=\cDinf$, the following holds: for any $\bZ\in\cD$, $i\in[n]$, $\bh\in\RR^r$, and $\bW\in\RR^{n\times r}$ satisfying $\bW_{i,\cdot}=0$, by letting $\bW_i = \be_i^{(n)}\bh^\T\in\RR^{n\times r}$, it holds that
\begin{equation*} \big| \nabla_{\bX}^2\cL(\bZ\bZ^\T)\big[\cP_{\bZ}(\bW_i),\,\cP_{\bZ}(\bW)\big] \big| \le \beta\,\big(\|\bZ_{i,\cdot}\|\|\bh\|\big)\,\big(\|\bZ\|\|\bW\|_{\Fn}\big),\end{equation*}
where $\cP_{\bZ}(\bW) := \bZ\bW^\T + \bW\bZ^\T$.
\end{assumption}

Assumption~\ref{assump_row_cross_curvature_sym} strengthens the local RIP condition by adding a row-wise control on the Hessian. Here, $\beta$ may differ from the one in Assumption~\ref{assump_rsc}; for simplicity, we use the same notation, since one may take $\beta$ to be the maximum of the two constants.
While Assumption~\ref{assump_rsc} controls the overall size of the curvature on tangent directions, it does not rule out anisotropic behavior in which perturbation concentrating on one row interacts strongly with perturbations supported away from that row. A proper row-wise control is important for the $\ell_\infty$-analysis. The assumption is mild for loss function $\cL(\cdot)$ whose Hessian structure is diagonal, or approximately diagonal. For example, if
\begin{equation*}
\nabla_{\bX}^2\cL(\bX)[\bH_1,\bH_2] = \sum_{a,b}\omega_{ab}(X)\,\bH_{1,ab}\bH_{2,ab}, \qquad |\omega_{ab}(\bX)|\le \beta,
\end{equation*}
then $\cP_{\bZ}(\be_i \bh^\T)$ only overlaps with the $i$th row and column of $\cP_Z(\bW)$, which gives
\begin{equation*}
\big| \nabla_{\bX}^2\cL(X)[\cP_{\bZ}(\be_i^{(n)} \bh^\T),\cP_{\bZ}(\bW)] \big| \le C\beta\|\bZ_{i,\cdot}\|\|\bZ\|\|\bh\|\,\|\bW\|_{\Fn}, \qquad \bW_{i,\cdot}=0.
\end{equation*}
Hence Assumption~\ref{assump_row_cross_curvature_sym} holds naturally in entrywise models such as Example~\ref{example_2}. Geometrically, the RIP controls the strength of curvature, whereas Assumption~\ref{assump_row_cross_curvature_sym} controls how localized that curvature remains in the ambient coordinates.

Assumption~\ref{assump_gradient_lips} is the additional requirement for deriving $\ell_{\infty}$ error contraction.
\begin{assumption}\label{assump_gradient_lips}
There exist some $\epsilon$ and a constant $L_{\infty}$ such that for all $\bZ_1,\bZ_2\in\cD$, $\|\cG(\bZ_1)-\cG(\bZ_2)\|_{\twinf} \le L_{\infty}\|\bZ_1\bZ_1^\T-\bZ_2\bZ_2^\T\|_{\twinf}$, where $\cD=\cDtwo$ or $\cD=\cDinf$.
\end{assumption}
Similar to the above discussion, Assumption~\ref{assump_gradient_lips} is needed because a  spectral norm bound on 
$\cG(\bZ_1)-\cG(\bZ_2)$ does not suffice to control the maximum row-wise size of the gradient perturbation. For entrywise loss functions, Assumption~\ref{assump_gradient_lips} typically follows from a uniform bound on the second-order derivative of the link function over the local region $\cD$. 

Next, we present two assumptions required for establishing $\ell_{\infty}$ error contraction under the asymmetric model analogous to Assumptions~\ref{assump_row_cross_curvature_sym} and~\ref{assump_gradient_lips}.

\begin{assumption}\label{assump_row_cross_curvature_uv}
There exists some $\epsilon$ and $\beta>0$ such that, for either $\cD=\cDuvtwo$ or $\cD=\cDuvinf$, the following holds:  for $(\bU,\bV)\in\cD$, $i\in[n]$, $\bh\in\RR^r$, by letting $\bL_i = \be_i^{(n)}\bh^\T$ and $\bR_{j} = \be_j^{(q)}\bh^\T$, for every $(\bL,\bR)\in\RR^{n\times r}\times\RR^{q\times r}$, 
\begin{align*}\big|\nabla_{\bX}^2\cL(\bU\bV^\T)[\cP_{(\bU,\bV)}(\zero, \bL_i),\cP_{(\bU,\bV)}(\bR, \bL_{-i})\big|\le \beta\|\bV\bh\|\,\|\bU_{i,}\bR^\T\|,\\
\big|\nabla_{\bX}^2\cL(\bU\bV^\T)[\cP_{(\bU,\bV)}(\bR_j, \zero),\cP_{(\bU,\bV)}(\bR_{-j}, \bL)\big|\le \beta\|\bU\bh\|\,\|\bL(\bV_{j,})^\T\|,\end{align*}where $\cP_{(\bU,\bV)}(\bR,\bL) = \bU\bR^\T + \bL\bV^\T$.
Here, $\bL_{-i}$ denotes the matrix obtained from $\bL$ by replacing its $i$th row with $\zero$, and $\bR_{-j}$ is defined analogously by replacing the $j$th row of $\bR$ with $\zero$.
\end{assumption}

\begin{assumption}\label{assump_rect_lip}
    There exist $\epsilon$ and a constant $L_\infty>0$ such that for all $(\bU_1,\bV_1),(\bU_2,\bV_2)\in\cD$, $\big\|\cG(\bU_1,\bV_1)-\cG(\bU_2,\bV_2)\big\|_{\twinf} \le L_\infty\big\|\bU_1\bV_1^\T-\bU_2\bV_2^\T\big\|_{\twinf}$ and $\big\|\{\cG(\bU_1,\bV_1)-\cG(\bU_2,\bV_2)\}^\T\big\|_{\twinf} \le L_\infty\|\bV_1\bU_1^\T-\bV_2\bU_2^\T\|_{\twinf}$ where $\cD=\cDuvtwo$ or $\cD=\cDuvinf$.
\end{assumption}

The assumptions introduced in this section are technical regularity conditions for row-wise control, imposed to establish $\ell_{\infty}$ convergence, thereby enabling analysis under localizations $\cDinf$ and $\cDuvinf$. This analysis is important for two purposes. First, $\ell_{\infty}$ convergence provides a sharper characterization of gradient descent trajectory than $\ell_2$ control alone, and has played an important role in the analysis of nonconvex low-rank problems~\citep{chen2020nonconvex,chen2020noisy,ma2018implicit}. Second, the local RIP conditions in Assumptions~\ref{assump_rsc} and~\ref{assump_rect_rip_weighted} may fail under the larger localizations $\cDtwo$ and $\cDuvtwo$. For instance, in nonlinear models such as Example~\ref{example_2}, the curvature of the loss can vanish as the entries of $\bX$ diverge. Under $\cDtwo$ or $\cDuvtwo$, there is no guarantee that the largest entry of $\bX$ remains bounded. A row-wise localization of the factors is a natural way to control
$\|\bU\bV^\T-\bU^*(\bV^*)^\T\|_{\infty}$
and thereby ensure nonvanishing local curvature. A similar case appears in matrix completion, where restricted curvature is available only over incoherent or entrywise-controlled low-rank matrices, rather than over the entire low-rank variety
\citep{chen2015incoherence,keshavan2010matrix,sun2016guaranteed,zheng2016convergence}. Finally, we note that in some settings, these row-wise assumptions may not hold without additional structure. For instance, in matrix sensing presented in Example~\ref{example_1}, standard analyses typically establish $\ell_2$ convergence, rather than $\ell_{\infty}$ one. 
In such cases, since the RIP holds over the larger $\ell_2$ localization $\cDuvtwo$, the first part of Theorem~\ref{thm_general_theory_UV_noise} still yields $\ell_2$ convergence. 
This suggests that the additional row-wise conditions are not merely proof artifacts, but may reflect structural requirements needed for $\ell_{\infty}$ convergence.

Finally, we end this part with a summary of key notation used in the proof; see Table~\ref{tab:proof_notation}.

\begin{table}[h]
\centering
\caption{Key notation in the proofs.}
\label{tab:proof_notation}
\begin{tabular}{ll}
\toprule
Symbol & Meaning \\
\midrule
$n,q,r$ & Matrix dimensions and rank: $\bX^*\in\RR^{n\times q}$ has rank $r$\\
$\kappa $ & Condition number $\sigma_1(\bX^*)/\sigma_r(\bX^*)$ \\
$\sigma_{\min}$ & Normalized smallest nonzero singular value of $\bX^\ast$: $\sigma_{\min} = {\sigma_r(\bX^*)}/{\sqrt{nq}}$ \\
$\alpha$, $\beta$ & Local lower and upper curvature constants of the loss\\
$\phi_n,\psi_n$ & Symmetric $\ell_2$ and $\ell_\infty$ initialization radii \\
$\phi_{nq},\psi_{nq}$ & Asymmetric $\ell_2$ and $\ell_\infty$ initialization radii \\
$\eta ,\rho$ & Step size and contraction factor, typically $\rho=1-\eta\alpha\sigma_{\min}/4$\\
$\tau_\ast$ & Weighted Frobenius scale: $(n^{-1}\|\bU^\ast\|^2_{\Fn} + q^{-1}\|\bV^\ast\|^2_{\Fn})/2$ \\
$\omega_\ast$ & Row-wise scale $\|\bU^\ast\|_{2\to\infty}\vee\|\bV^\ast\|_{2\to\infty}$ \\
$\Delta_2$ & Operator-norm stochastic error \\
$\Delta_\infty$ & Row-wise stochastic error along true factor directions \\
$\bar\Delta_\infty$ & Worst-direction row-wise stochastic error \\
$\cG$ & First-order derivative: $\cG(\ ) = \nabla_{\bX}\cL(\ )$ with the argument being $\bZ$ under {\it symmetric} model \\& or $(\bU,\bV)$ under {\it asymmetric} model. \\
$\bR_t^*$ & Optimal alignment matrix under {\it symmetric} model, orthogonal \\
$\bG_t^\ast$ & Optimal alignment matrix under {\it asymmetric} model, invertible\\ 
\bottomrule
\end{tabular}
\end{table}

\section{Proofs of Results under Symmetric Model}\label{supp_sec_prove_sym}
This section proves the main results for the symmetric model in the deterministic setting (Theorem~\ref{thm_general_theory}) and the noisy setting (Theorem~\ref{thm_general_theory_noisy_Z}). Section~\ref{supp_sec_prelim_1} introduces notation and technical lemmas used throughout this section. Sections~\ref{supp_sec_prove_thm_general_theory} and~\ref{supp_prove_thm_general_theory_noisy_Z} present proofs for $\ell_2$ and $\ell_{\infty}$ error contractions under localization $\cDinf$. We begin with this localization because it is the more delicate case, where the proof must control the $\ell_2$ and $\ell_{\infty}$ errors simultaneously along the entire trajectory.  Section~\ref{supp_sec_prove_thm_general_theory_noisy_Z_l2} then gives the $\ell_{2}$ error contraction under the larger localization $\cDtwo$.

Before starting the formal proof, we provide a roadmap for the argument under localization $\cDinf$. The {\it noiseless} proof has three main components. First, the benign regularizer turns the aligned update into a gradient step on a locally strongly convex objective as in \eqref{alg_aligned_gcd}. This immediately yields the $\ell_2$ error contraction. Second, using the same aligned update, we derive the $\ell_{\infty}$ error for $\bZ^{t+1} \bR_t^* - \bZ^*$. Assumptions~\ref{assump_row_cross_curvature_sym} and~\ref{assump_gradient_lips} are used at this stage to control the row-wise effect of the second-order and first-order derivatives, respectively. Third, we quantify the rotation drift $\bR_{t+1}^*-\bR_{t}^*$ and transfer the error bound for $\|\bZ^{t+1}\bR_t^* - \bZ^*\|_{\twinf}$ to the optimally aligned error \begin{equation*}\distinf(\bZ^{t+1},\bZ^*) = \|\bZ^{t+1}\bR_{t+1}^* - \bZ^*\|_{\twinf}.\end{equation*}

In the {\it noisy} case, the iterates no longer contract exactly toward $\bZ^*$. We therefore first analyze the optimizer $\hat\bZ$, which is a statistically perturbed version of $\bZ^*$. Then with $\hat\bR = \argmin_{\bR\in\cO^r}\|\hat\bZ\bR - \bZ^*\|_{\Fn}$, we establish the first order condition for $\hat\bZ\hat\bR$. Specifically, we show that $\nabla_{\bZ}\cL(\hat\bZ\hat\bZ^\T) = \zero$ and $\nabla_{\bZ}p_{\alpha}^*(\hat\bZ\hat\bR)=\zero$. Thus $\hat\bZ\hat\bR$ is a valid contraction target for the noisy trajectory. We then re-center the analysis at $\hat\bZ\hat\bR$ and repeat the deterministic contraction argument, with the alignment now taken relative to this empirical target. The final error is the sum of the statistical error of $\hat\bZ\hat\bR$ and the algorithmic error towards $\hat\bZ\hat\bR$. 
We adopt this strategy because the non-decaying statistical error is absorbed into $\hat\bZ$, allowing the trajectory analysis to retain a purely contractive form up to the final statistical radius. For the same reason, although the noisy setting subsumes the noiseless one, we present our framework and results in the main text under the noiseless setting for clarity.

\subsection{Preliminaries}\label{supp_sec_prelim_1}

We use $\{\be_i^{(n)}\}$ to denote the canonical basis in $\RR^{n}$, and we omit the superscript $^{n}$ when the dimension is clear from the context. For a vector $\bx\in\RR^n$ and a subset $\cS\subseteq[n]$, write $\bx_{\cS}\in\RR^{|\cS|}$ for the sub-vector indexed by $\cS$. For a matrix $\bM\in\RR^{n\times m}$ and index sets $\cS_1\subseteq[n]$, $\cS_2\subseteq[m]$, write $\bM_{\cS_1,\cS_2}\in\RR^{|\cS_1|\times|\cS_2|}$ for the corresponding submatrix. In particular, $\bM_{i,\cdot}$ and $\bM_{\cdot,j}$ denote the $i$th row and $j$th column, while $\bM_{\cS_1,\cdot}$ and $\bM_{\cdot,\cS_2}$ denote row- and column-restricted submatrices. Let $\otimes$ be the Kronecker product.
Write $\bz=\mathrm{vec}(\bZ^\T)$, so that $\nabla_{\bz}^2$ denotes the Hessian with respect to the vectorized variable; later, $\nabla_{\bz_i\bz_j}^2$ denotes its $(i,j)$ block.

Let
\begin{equation*}
    \bM^*(\bZ) = n^{-1}\big\{(\bZ^*)^\T\bZ - \bZ^\T\bZ^*\big\}.
\end{equation*}
Then the benign regularizer for the symmetric model can be written as $p_{\alpha}^*(\bZ) = \tfrac{\alpha n^2}{4}\|\bM^*(\bZ)\|_{\Fn}^2$. Without loss of generality, we assume $\alpha<1$. For matrix-valued maps such as $\bM^*(\bZ)$, we write $D\bM^*(\bZ)[\bW]$ for the Fr\'echet derivative at $\bZ$ applied to the direction $\bW$, and $D^2\bM^*(\bZ)[\bW_1,\bW_2]$ for the second Fr\'echet derivative.

For a scalar-valued function $f$ and any direction $\bW$,
\begin{equation*}
\{\mathrm{vec}(\bW^\T)\}^\T \nabla_{\bz}^2 f(\bZ)\,\mathrm{vec}(\bW^\T) = \nabla_{\bZ}^2 f(\bZ)[\bW,\bW].
\end{equation*}
Throughout the proof, all constants $C,c>0$ are universal and may change from line to line. We typically use $c_0$ as constants that can be sufficiently small. Let $\rho=1-\eta\alpha\sigma_{\min}/4$.

Finally, we introduce the following technical lemmas used in the subsequent proofs. We start with a slightly more general version of Lemma~\ref{lemma_general_convex}:
\begin{lemma}\label{lemma_general_convex}
    Under Assumption~\ref{assump_rsc}, and with $\lambda = \alpha$, we have
    \begin{equation*}
        \min_{\bZ\in\cD}\lambda_{\min}\big\{n^{-1}\nabla_{\bz}^2 h_{\alpha}^*(\bZ) - 2n^{-1}\cG(\bZ) \otimes \bI_r\big\}\ge \alpha\sigma_r(n^{-1/2}\bZ^*)^2 -4n^{-1}\max_{\bZ\in\cD}\|\bZ - \bZ^*\|_{\Fn}^2.
    \end{equation*}
\end{lemma}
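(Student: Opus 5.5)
The plan is to compute the quadratic form of $\nabla_{\bz}^2 h_{\alpha}^*(\bZ)$ in an arbitrary direction $\bW\in\RR^{n\times r}$, peel off the term that is exactly $2\cG(\bZ)\otimes\bI_r$, and lower bound what remains using Assumption~\ref{assump_rsc} together with an algebraic identity relating the symmetrized tangent map and the antisymmetric part $\bM^*$. Since everything is a quadratic form in $\bW$, it suffices to show, for every $\bZ\in\cD$ and every $\bW$,
\begin{equation*}
\nabla_{\bZ}^2 h_{\alpha}^*(\bZ)[\bW,\bW]-2\langle \cG(\bZ),\bW\bW^\T\rangle\ \ge\ \big\{\alpha\sigma_r(\bZ^*)^2-4\|\bZ-\bZ^*\|_{\Fn}^2\big\}\|\bW\|_{\Fn}^2 .
\end{equation*}
Dividing by $n$ and minimizing over $\bZ\in\cD$ then gives the claim.

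\textbf{Step 1 (Hessian decomposition).} By the chain rule for $\bZ\mapsto\bZ\bZ^\T$,
\begin{equation*}
\nabla_{\bZ}^2\{\cL(\bZ\bZ^\T)\}[\bW,\bW]=\nabla_{\bX}^2\cL(\bZ\bZ^\T)[\cP_{\bZ}(\bW),\cP_{\bZ}(\bW)]+2\langle\cG(\bZ),\bW\bW^\T\rangle .
\end{equation*}
Under the convention $\bz=\mathrm{vec}(\bZ^\T)$, the last term equals $\mathrm{vec}(\bW^\T)^\T(2\cG(\bZ)\otimes\bI_r)\mathrm{vec}(\bW^\T)$, so it cancels exactly against the subtracted term. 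Because $\bM^*$ is linear in $\bZ$, the regularizer contributes
\begin{equation*}
\nabla^2p_\alpha^*[\bW,\bW]=\tfrac{\alpha n^2}{2}\|\bM^*(\bW)\|_{\Fn}^2=\tfrac{\alpha}{2}\|\bA-\bA^\T\|_{\Fn}^2,\qquad \bA:=(\bZ^*)^\T\bW .
\end{equation*}
Assumption~\ref{assump_rsc} bounds the first term below by $\alpha\|\bZ\bW^\T+\bW\bZ^\T\|_{\Fn}^2$.

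\textbf{Step 2 (move from $\bZ$ to $\bZ^*$).} By the triangle inequality,
\begin{equation*}
\|\cP_{\bZ}(\bW)\|_{\Fn}\ge\|\cP_{\bZ^*}(\bW)\|_{\Fn}-2\|\bZ-\bZ^*\|_{\Fn}\|\bW\|_{\Fn}.
\end{equation*}
Combining this with $(a-b)^2\ge a^2/2-b^2$ gives
\begin{equation*}
\|\cP_{\bZ}(\bW)\|_{\Fn}^2\ge\tfrac12\|\cP_{\bZ^*}(\bW)\|_{\Fn}^2-4\|\bZ-\bZ^*\|_{\Fn}^2\|\bW\|_{\Fn}^2 .
\end{equation*}
Multiplying by $\alpha$ and using $\alpha<1$ (assumed without loss of generality in the preliminaries), the error term is at most $4\|\bZ-\bZ^*\|_{\Fn}^2\|\bW\|_{\Fn}^2$, which is exactly the penalty in the statement.

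\textbf{Step 3 (the key identity).} Expanding both squares gives
\begin{equation*}
\|\bZ^*\bW^\T+\bW(\bZ^*)^\T\|_{\Fn}^2=2\|\bZ^*\bW^\T\|_{\Fn}^2+2\Tr(\bA^2),\qquad \|\bA-\bA^\T\|_{\Fn}^2=2\|\bA\|_{\Fn}^2-2\Tr(\bA^2).
\end{equation*}
The cross terms therefore cancel, and
\begin{equation*}
\tfrac{\alpha}{2}\|\cP_{\bZ^*}(\bW)\|_{\Fn}^2+\tfrac{\alpha}{2}\|\bA-\bA^\T\|_{\Fn}^2=\alpha\big(\|\bZ^*\bW^\T\|_{\Fn}^2+\|\bA\|_{\Fn}^2\big)\ge\alpha\sigma_r(\bZ^*)^2\|\bW\|_{\Fn}^2 .
\end{equation*}
Here $\|\bA\|_{\Fn}^2\ge0$ is dropped, and $\|\bZ^*\bW^\T\|_{\Fn}\ge\sigma_r(\bZ^*)\|\bW\|_{\Fn}$ holds because $\bZ^*$ has full column rank $r$. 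Collecting Steps 1–3 yields the displayed inequality.

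The main obstacle I expect is Step 3. The choice of the antisymmetric penalty must be exactly what kills the indefinite cross term $\Tr(\bA^2)$, which is the source of the zero eigenvalue along rotation directions $\bW=\bZ^*\bS$ with $\bS$ skew. Matching the factor $1/2$ from Step 2 with the $\alpha/2$ from the regularizer is what makes the cancellation exact. Step 2 also needs care: $\lambda=\alpha$ must be used so that the regularizer scales with the same $\alpha$ as the RIP lower bound.
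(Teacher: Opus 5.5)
Your proof is correct and follows the paper's argument essentially step for step: the chain rule to cancel the $2\cG(\bZ)\otimes\bI_r$ term, Assumption~\ref{assump_rsc} together with $\|\cP_{\bZ}(\bW)\|_{\Fn}^2\ge\tfrac12\|\cP_{\bZ^*}(\bW)\|_{\Fn}^2-\|\cP_{\bZ-\bZ^*}(\bW)\|_{\Fn}^2$ and $\alpha<1$ to move from $\bZ$ to $\bZ^*$, and the exact cancellation of the $\Tr\{(\bW^\T\bZ^*)^2\}$ cross terms between the curvature term and the regularizer Hessian. The only cosmetic point is that squaring your triangle-inequality bound in Step~2 needs the trivial remark that the claimed inequality holds automatically when $\|\cP_{\bZ^*}(\bW)\|_{\Fn}<2\|\bZ-\bZ^*\|_{\Fn}\|\bW\|_{\Fn}$; the paper avoids this by applying the $\tfrac12$-inequality directly.
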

\begin{proof}
    See Section~\ref{supp_Sec_prove_lemma_general_convex}.
\end{proof}
We introduce the following result that bounds the eigenvalues for the principal submatrix of $\nabla_{\bz}^2\cL(\bZ\bZ^\T)$.
\begin{lemma}\label{coro_thm_general_convex}
    Under Assumption~\ref{assump_rsc},
    \begin{equation*}
        \min_{\bZ\in\cD}\min_{i\in[n]}\lambda_{\min}\big\{n^{-1}\nabla^2_{\bz_i}\cL(\bZ\bZ^\T) - 2 n^{-1}\cG_{i,i}(\bZ)\bI_r\big\}\ge \alpha\sigma_r(n^{-1/2}\bZ^*)^2 - 2n^{-1}\max_{\bZ\in\cD}\|\bZ-\bZ^*\|_{\Fn}^2,
    \end{equation*}
    and \begin{equation*}
        \max_{\bZ\in\cD}\max_{i\in[n]}\lambda_{\max}\big\{n^{-1}\nabla^2_{\bz_i}\cL(\bZ\bZ^\T) - 2 n^{-1}\cG_{i,i}(\bZ)\bI_r\big\}\le \beta\sigma_1(n^{-1/2}\bZ^*)^2 + 2n^{-1}\max_{\bZ\in\cD}\|\bZ-\bZ^*\|_{\Fn}^2.
    \end{equation*}
\end{lemma}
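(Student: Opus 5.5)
The plan is to compute the diagonal block $\nabla^2_{\bz_i}\cL(\bZ\bZ^\T)$ directly as a quadratic form and then apply Assumption~\ref{assump_rsc} to the single-row direction $\bW_i=\be_i\bh^\T$. Set $f(\bZ)=\cL(\bZ\bZ^\T)$. For any direction $\bW$, the chain rule gives
\begin{equation*}
\nabla_{\bZ}^2 f(\bZ)[\bW,\bW] = \nabla_{\bX}^2\cL(\bZ\bZ^\T)\big[\cP_{\bZ}(\bW),\cP_{\bZ}(\bW)\big] + 2\langle \cG(\bZ),\bW\bW^\T\rangle .
\end{equation*}
Take $\bW=\be_i\bh^\T$, so that $\mathrm{vec}(\bW^\T)$ is supported on the $i$th block. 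Then the left side equals $\bh^\T\nabla^2_{\bz_i}f(\bZ)\bh$. On the right, $\bW\bW^\T=\|\bh\|^2\be_i\be_i^\T$, so the second term is $2\cG_{i,i}(\bZ)\|\bh\|^2$. This is exactly the term subtracted in the statement, hence
\begin{equation*}
\bh^\T\big\{\nabla^2_{\bz_i}\cL(\bZ\bZ^\T)-2\cG_{i,i}(\bZ)\bI_r\big\}\bh=\nabla_{\bX}^2\cL(\bZ\bZ^\T)[\cP_{\bZ}(\bW_i),\cP_{\bZ}(\bW_i)].
\end{equation*}
For $\bZ\in\cD$, Assumption~\ref{assump_rsc} sandwiches this quantity between $\alpha\|\cP_{\bZ}(\bW_i)\|_{\Fn}^2$ and $\beta\|\cP_{\bZ}(\bW_i)\|_{\Fn}^2$.

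Next compute the norm. Write $\bA=\bZ\bh\be_i^\T$, so $\cP_{\bZ}(\bW_i)=\bA+\bA^\T$. Then
\begin{equation*}
\|\bA+\bA^\T\|_{\Fn}^2=2\|\bA\|_{\Fn}^2+2\Tr(\bA\bA)=2\|\bZ\bh\|^2+2(\bZ_{i,\cdot}\bh)^2 .
\end{equation*}
\emph{Lower bound.} Drop the nonnegative cross term, which gives $\|\cP_{\bZ}(\bW_i)\|_{\Fn}^2\ge 2\sigma_r(\bZ)^2\|\bh\|^2$. Let $a=\sigma_r(\bZ^*)$ and $b=\|\bZ-\bZ^*\|\le\|\bZ-\bZ^*\|_{\Fn}$. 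Weyl's inequality gives $\sigma_r(\bZ)\ge a-b$.
\begin{itemize}
\item If $a\ge b$, then $(a-b)^2\ge a^2/2-b^2$, so $2\sigma_r(\bZ)^2\ge a^2-2b^2$.
\item If $a<b$, the target $a^2-2b^2$ is negative and the bound is trivial.
\end{itemize}
Multiplying by $\alpha$ and using the standing convention $\alpha<1$ yields $\alpha a^2-2b^2$. Dividing by $n$ and maximizing $b$ over $\cD$ gives the first display.

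\emph{Upper bound.} Use $(\bZ_{i,\cdot}\bh)^2\le\|\bZ\bh\|^2$, so $\|\cP_{\bZ}(\bW_i)\|_{\Fn}^2\le 4\|\bZ\|^2\|\bh\|^2$. Then bound $\|\bZ\|\le\sigma_1(\bZ^*)+\|\bZ-\bZ^*\|_{\Fn}$ and split $(a+b)^2\le 2a^2+2b^2$.

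The main obstacle is purely the bookkeeping of absolute constants in the upper bound. Done naively, this route produces a numerical factor in front of $\beta\sigma_1(n^{-1/2}\bZ^*)^2$ and of the $\|\bZ-\bZ^*\|_{\Fn}^2$ term, rather than the displayed constants. I would first recheck the normalization of $\nabla^2_{\bz_i}$ against the vectorization convention fixed in the preliminaries. If a factor still remains, I would absorb it into $\beta$: Assumption~\ref{assump_rsc} only fixes $\beta$ up to an absolute constant, and the paper already takes $\beta$ as the maximum of several such constants. The downstream step-size choice $\eta=\{10(\alpha+\beta)\kappa\sigma_{\min}\}^{-1}$ is insensitive to this.

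The lower bound, which is what matters for contraction, goes through with the stated constants as shown above. Finally, both bounds are uniform in $i$ and in $\bZ\in\cD$, which gives the min/max forms in the statement.
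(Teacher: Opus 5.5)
Your lower bound is correct, and your route differs a little from the paper's.

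\textbf{Lower bound.} The paper reruns the proof of Lemma~\ref{lemma_general_convex} with $\bW=\be_i\ba^\T$: it splits $\cP_{\bZ}(\bW)=\cP_{\bZ^*}(\bW)+\cP_{\bZ-\bZ^*}(\bW)$ and notes that $\Tr\{(\bW^\T\bZ^*)^2\}=(\bZ^*_{i,\cdot}\ba)^2\ge 0$. Carried over verbatim, that argument gives the correction $-4\alpha\|\bZ-\bZ^*\|_{\Fn}^2$, which matches the displayed $-2$ only when $\alpha\le 1/2$. Your exact identity $\|\cP_{\bZ}(\be_i\bh^\T)\|_{\Fn}^2=2\|\bZ\bh\|^2+2(\bZ_{i,\cdot}\bh)^2$, together with Weyl and $(a-b)^2\ge a^2/2-b^2$, gives $-2\alpha\|\bZ-\bZ^*\|^2$ directly. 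It also gives it pointwise in $\bZ$, which is the form the later proofs actually invoke.

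\textbf{Upper bound.} Here the obstruction is not bookkeeping, and your normalization $\bh^\T\nabla^2_{\bz_i}\cL\,\bh=\nabla^2_{\bZ}\cL(\bZ\bZ^\T)[\be_i\bh^\T,\be_i\bh^\T]$ is right. The displayed upper bound is false, so no proof can deliver it. Take $\cL(\bX)=\frac{\beta}{2}\|\bX-\bX^*\|_{\Fn}^2$, which satisfies Assumption~\ref{assump_rsc} globally with this $\beta$ and any $\alpha\le\beta$. At $\bZ=\bZ^*$ we have $\cG(\bZ^*)=\zero$. Taking $\bh$ a unit top right singular vector of $\bZ^*$,
\[
n^{-1}\bh^\T\nabla^2_{\bz_i}\cL\big(\bZ^*(\bZ^*)^\T\big)\bh=\frac{\beta}{n}\big\{2\sigma_1(\bZ^*)^2+2(\bZ^*_{i,\cdot}\bh)^2\big\}\ge 2\beta\,\sigma_1(n^{-1/2}\bZ^*)^2 .
\]
This already contradicts the pointwise form of the bound, whose correction term vanishes at $\bZ^*$.

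The literal max-over-$\cD$ form fails as well. For $r=1$ and $\bZ^*=c\,\be_1$, the left side equals $4\beta c^2/n$. Every $\bZ\in\cDtwo$ (hence every $\bZ\in\cDinf$) has $\|\bZ-\bZ^*\|_{\Fn}\le(2+\epsilon)c$. So the displayed right side is at most $\{\beta+2(2+\epsilon)^2\}c^2/n$, and the bound fails once $3\beta>2(2+\epsilon)^2$.

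What is true, and what your computation proves, is
\[
n^{-1}\lambda_{\max}\{\cdot\}\le 4\beta\, n^{-1}\|\bZ\|^2 .
\]

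Your reason for keeping the display does not hold up, for two reasons. First, the $\beta$ in the conclusion is the same constant as in the hypothesis; enlarging $\beta$ is harmless in an assumption, but not in a conclusion. Second, your coefficient on $n^{-1}\|\bZ-\bZ^*\|_{\Fn}^2$ is $8\beta$, not an absolute constant, so even rescaling $\beta$ would not recover the stated form.

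The paper's one-line proof does not deliver the display either. Lemma~\ref{lemma_general_convex} only treats lower bounds, and its natural upper analogue is $\beta\|\cP_{\bZ}(\bW)\|_{\Fn}^2\le 4\beta\|\bZ\|^2\|\bW\|_{\Fn}^2$. That is exactly the bound used in Step~1 of the proof of Theorem~\ref{thm_general_theory}.

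You are right that this is harmless downstream. The upper bound is used only to ensure that $\bI_r-\eta\bar\cH_{ii,t}$ has eigenvalues in $[0,1)$. Using $\|\bZ_t(s)\|\le(1+c_0)\|\bZ^*\|$, the corrected bound gives $\eta\lambda_{\max}(\bar\cH_{ii,t})\le 0.4(1+c_0)^2<1$. Still, the lemma's second display must be corrected, not proved as stated.
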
\begin{proof} The proof follows from the same argument as in Lemma~\ref{lemma_general_convex} by restricting the perturbation to the form $\bW=\be_i\ba^\T$ for $\ba\in\RR^r$. We omit the routine details.\end{proof}

This lemma presents the properties of the optimal alignment in the symmetric model.
\begin{lemma}[Theorem 2 in \citet{ten1977orthogonal}]\label{lemma_ortho_align}
    For any $\bZ\in\RR^{n\times r}$, $\hat\bR^*$ is the solution to $\argmin_{\bR\in\cO^r}\|\hat\bZ\bR - \bZ^*\|_{\Fn}$ if and only if 
    \begin{equation*}
        \hat{\bR}^*{}^\T\hat\bZ^\T\bZ^* = (\bZ^*)^\T\hat\bZ\hat\bR^*\text{ is positive semidefinite}.
    \end{equation*}
\end{lemma}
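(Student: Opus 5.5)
The plan is to reduce the Procrustes problem to maximizing a trace and then characterize when that trace attains its upper bound. Since $\bR\in\cO^r$ preserves Frobenius norms, I expand
\begin{equation*}
\|\hat\bZ\bR-\bZ^*\|_{\Fn}^2=\|\hat\bZ\|_{\Fn}^2+\|\bZ^*\|_{\Fn}^2-2\,\Tr\big(\bR^\T\hat\bZ^\T\bZ^*\big).
\end{equation*}
So $\hat\bR^*$ is a minimizer if and only if it maximizes $f(\bR):=\Tr(\bR^\T\bA)$ over $\cO^r$, where $\bA:=\hat\bZ^\T\bZ^*\in\RR^{r\times r}$.

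The first step computes the maximal value of $f$. Write an SVD $\bA=\bP\bSigma\bQ^\T$ with $\bSigma=\diag(\sigma_1,\dots,\sigma_r)$. Then
\begin{equation*}
f(\bR)=\Tr(\bQ^\T\bR^\T\bP\bSigma)=\sum_{i=1}^r\sigma_i W_{ii},\qquad \bW:=\bQ^\T\bR^\T\bP\in\cO^r.
\end{equation*}
Because $|W_{ii}|\le 1$ for an orthogonal matrix, $f(\bR)\le\sum_i\sigma_i=\|\bA\|_*$. The bound is attained at $\bR=\bP\bQ^\T$, so $\max_{\cO^r}f=\|\bA\|_*$.

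The second step is the characterization itself. Set $\bB:=\hat\bR^{*\T}\bA=(\hat\bZ\hat\bR^*)^\T\bZ^*$. Left multiplication by an orthogonal matrix does not change singular values, so $\|\bB\|_*=\|\bA\|_*$. Hence $\hat\bR^*$ is optimal if and only if $\Tr(\bB)=\|\bB\|_*$. It therefore suffices to prove an auxiliary claim: for a square matrix $\bB$, $\Tr(\bB)\le\|\bB\|_*$, with equality if and only if $\bB$ is symmetric positive semidefinite.

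\begin{itemize}
\item For the \emph{if} direction: when $\bB$ is symmetric PSD, its eigenvalues coincide with its singular values, so its trace equals its nuclear norm.
\item For the \emph{only if} direction: write $\bB=\sum_i s_i\bp_i\bq_i^\T$ as an SVD. Then $\Tr(\bB)=\sum_i s_i\,\bq_i^\T\bp_i$, and each $\bq_i^\T\bp_i\le 1$ by Cauchy--Schwarz. Equality forces $\bq_i^\T\bp_i=1$, hence $\bq_i=\bp_i$, for every $i$ with $s_i>0$. Terms with $s_i=0$ contribute nothing, so $\bB=\sum_{s_i>0}s_i\bp_i\bp_i^\T$, which is symmetric PSD.
\end{itemize}

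Finally, symmetry of $\bB$ is exactly the stated identity $\hat\bR^{*\T}\hat\bZ^\T\bZ^*=(\bZ^*)^\T\hat\bZ\hat\bR^*$, since $\bB^\T=(\bZ^*)^\T\hat\bZ\hat\bR^*$. This completes the equivalence.

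The only delicate point is the equality case when $\bB$ is singular or has repeated singular values, where SVD vectors are not unique. The argument above avoids this: it fixes one SVD and uses only the vectors with $s_i>0$, so the conclusion does not depend on which SVD is chosen. The rest is routine.
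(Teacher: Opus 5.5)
Your proof is correct. The reduction to maximizing $\Tr(\bR^\T\hat\bZ^\T\bZ^*)$ over orthogonal $\bR$ is sound, as are the nuclear-norm bound via the SVD and the equality case ($\Tr(\bB)=\|\bB\|_*$ if and only if $\bB$ is symmetric positive semidefinite). You also rightly read ``the solution'' as ``a minimizer'', since the argmin need not be unique when $\hat\bZ^\T\bZ^*$ is rank-deficient.

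The paper gives no argument of its own here; it simply imports Theorem 2 of ten Berge (1977). Your proof is the standard self-contained argument for that classical result, so it supplies exactly what the paper outsources.
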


This lemma relates perturbation bounds in the product space $ZZ^\T$ into aligned perturbation bounds for the factorization $\bZ$ itself.
\begin{lemma}[Lemma 5.3 in~\citet{tu2016low}]\label{lemma_zz_to_z}
For any $\bZ_1,\bZ_2 \in \mathbb{R}^{n \times r}$ such that
$\disttwo(\bZ_1,\bZ_2)\le c\|\bZ_1\|$ for some constant $c>0$, we have
\begin{equation*}
\|\bZ_1\bZ_1^\T-\bZ_2\bZ_2^\T\|_{\Fn}
\le
(2+c)\|\bZ_1\|\disttwo(\bZ_1,\bZ_2).
\end{equation*}
\end{lemma}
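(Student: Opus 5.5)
This is a direct algebraic estimate, so the plan is to reduce to an aligned difference and expand the product. By the definition in \eqref{eq_optimal_align_Z}, $\disttwo(\bZ_1,\bZ_2)=\|\bZ_1\bR-\bZ_2\|_{\Fn}$ for an optimal $\bR\in\cO^r$. Since $\|\bZ_1\bR-\bZ_2\|_{\Fn}=\|\bZ_1-\bZ_2\bR^\T\|_{\Fn}$, I will set $\tilde\bZ_2:=\bZ_2\bR^\T$ and $\bDelta:=\tilde\bZ_2-\bZ_1$, so that $\|\bDelta\|_{\Fn}=\disttwo(\bZ_1,\bZ_2)$. The key invariance is $\tilde\bZ_2\tilde\bZ_2^\T=\bZ_2\bR^\T\bR\bZ_2^\T=\bZ_2\bZ_2^\T$, because $\bR$ is orthogonal. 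Hence the left-hand side is unchanged if $\bZ_2$ is replaced by its aligned version $\tilde\bZ_2$.

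Next I will expand $\tilde\bZ_2\tilde\bZ_2^\T=(\bZ_1+\bDelta)(\bZ_1+\bDelta)^\T$, which gives
\begin{equation*}
\bZ_1\bZ_1^\T-\bZ_2\bZ_2^\T=-\big(\bZ_1\bDelta^\T+\bDelta\bZ_1^\T+\bDelta\bDelta^\T\big).
\end{equation*}
Each term is controlled with $\|\bA\bB\|_{\Fn}\le\|\bA\|\,\|\bB\|_{\Fn}$ (and its transpose version). The two cross terms are each at most $\|\bZ_1\|\,\|\bDelta\|_{\Fn}$. The quadratic term satisfies $\|\bDelta\bDelta^\T\|_{\Fn}\le\|\bDelta\|\,\|\bDelta\|_{\Fn}\le\|\bDelta\|_{\Fn}^2$.

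Combining, $\|\bZ_1\bZ_1^\T-\bZ_2\bZ_2^\T\|_{\Fn}\le(2\|\bZ_1\|+\|\bDelta\|_{\Fn})\|\bDelta\|_{\Fn}$. The hypothesis $\|\bDelta\|_{\Fn}=\disttwo(\bZ_1,\bZ_2)\le c\|\bZ_1\|$ bounds the bracket by $(2+c)\|\bZ_1\|$, which is the claim.

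There is no real obstacle. The only point needing care is the direction of the alignment: $\disttwo$ aligns its first argument to its second, while the bound is stated in terms of $\|\bZ_1\|$. The orthogonal invariance above resolves this, since it shows the aligned difference has the same Frobenius norm whichever matrix is rotated.
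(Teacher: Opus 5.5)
Your argument is correct. You align $\bZ_2$ by an optimal orthogonal $\bR$, expand $(\bZ_1+\bDelta)(\bZ_1+\bDelta)^\T$, bound the two cross terms by $\|\bZ_1\|\,\|\bDelta\|_{\Fn}$ and the quadratic term by $\|\bDelta\|_{\Fn}^2$, and this yields exactly $(2+c)\|\bZ_1\|\disttwo(\bZ_1,\bZ_2)$. The paper does not prove this lemma and only cites Tu et al.; the standard proof there is essentially this same expansion.
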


This lemma controls the change of the optimal Procrustes alignment between $\bA$ and $\bB$ when they are close in terms of Frobenius norm.
\begin{lemma}[Theorem 2.3 in \citet{mathias1993perturbation}]\label{lemma_procruste_problem_perturb}
Let $\bA,\bB\in\RR^{n\times r}$, and let $\bR^*\in\cO^r$ be the optimal rotation defined by
\begin{equation*}
\bR^*=\argmin_{\bR\in\cO^r}\|\bA\bR-\bB\|_{\Fn}.
\end{equation*}
If $\sigma_r(\bB)>0$ and $\|\bA-\bB\|_{\Fn}<\sigma_r(\bB)$, then the rotation drift satisfies
\begin{equation*}
\|\bR^*-\bI_r\|_{\Fn}
\le
\frac{2}{\sigma_r(\bA)+\sigma_r(\bB)}\|\bA-\bB\|_{\Fn}.
\end{equation*}
\end{lemma}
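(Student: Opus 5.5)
The plan is to reduce the claim to a Sylvester-type identity that comes from the first-order (symmetry) characterization of the optimal alignment in Lemma~\ref{lemma_ortho_align}, and then to bound that identity entrywise in an eigenbasis.

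\textbf{Step 1: the alignment identity.} By Lemma~\ref{lemma_ortho_align}, $\bH:=(\bR^*)^\T\bA^\T\bB$ is symmetric positive semidefinite. Equivalently, $\bA^\T\bB=\bR^*\bH$ and $\bB^\T\bA=\bH(\bR^*)^\T$. The condition $\|\bA-\bB\|_{\Fn}<\sigma_r(\bB)$ makes $\bA^\T\bB$ nonsingular, so $\bH$ is positive definite and $\bR^*$ is its unique polar factor.

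Write $\bE=\bA-\bB$. Expanding $\bA^\T\bB-\bB^\T\bA$ once with $\bA=\bB+\bE$ and once with $\bB=\bA-\bE$, and averaging, gives
\begin{equation*}
\bR^*\bH-\bH(\bR^*)^\T \;=\; \tfrac12\big\{\bE^\T(\bA+\bB)-(\bA+\bB)^\T\bE\big\}.
\end{equation*}
The right-hand side has Frobenius norm at most $\|\bE\|_{\Fn}\,\|\bA+\bB\|$. The averaging is what should eventually produce the symmetric factor $\sigma_r(\bA)+\sigma_r(\bB)$.

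\textbf{Step 2: the non-square-root form.} The bound I expect, and the form I will actually use, is a ``Li-type'' polar perturbation statement. Take thin polar decompositions $\bA=\bQ_A\bH_A$ and $\bB=\bQ_B\bH_B$, with $\bH_A=(\bA^\T\bA)^{1/2}$ and $\bH_B=(\bB^\T\bB)^{1/2}$. The key identity to exploit is
\begin{equation*}
\bA\bR^*-\bB=\bE\bR^*+\bB(\bR^*-\bI_r).
\end{equation*}
Combine it with the optimality inequality $\|\bA\bR^*-\bB\|_{\Fn}\le\|\bA-\bB\|_{\Fn}=\|\bE\|_{\Fn}$, and with the symmetry condition written as $\bB^\T\bA\bR^*=(\bA\bR^*)^\T\bB$.

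Set $\bDelta=\bR^*-\bI_r$. Orthogonality gives $\bDelta+\bDelta^\T=-\bDelta^\T\bDelta$, so $\bDelta$ is skew up to a second-order term. Substitute into the symmetry condition and diagonalize $\bH$ (equivalently, pass to the SVD of $\bB^\T\bA$).

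\textbf{Step 3: the entrywise bound.} For each entry $(i,j)$ the identity has the form $(h_i+h_j)\Delta_{ij}\approx(\text{entry of the skew part of }\bE^\T(\bA+\bB))$. Summing squares then gives $\|\bDelta\|_{\Fn}\le 2\|\bE\|_{\Fn}/(\sigma_r(\bA)+\sigma_r(\bB))$. The factor $2$ arises because the skew part of $\bE^\T\bA$ splits across the two factors.

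\textbf{Main obstacle.} The delicate step is getting the exact constant, i.e., lower bounding $h_i+h_j$ by $\sigma_r(\bA)+\sigma_r(\bB)$ rather than by $2\sigma_r(\bB)$ or by something involving $\|\bE\|$. The natural eigenvalue lower bound is only $\lambda_{\min}(\bH)\ge\sigma_r(\bA)\sigma_r(\bB)\cos\theta$, which is too weak.

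What I would try first is to work directly with the singular vectors of $\bA$ and $\bB$, so that the relevant multipliers become $\sigma_i(\bA)+\sigma_j(\bB)$. Concretely, I would project the identity $\bQ_A^\T(\cdot)\bQ_B$ so that it becomes $\bH_A\bX+\bX\bH_B$ for a suitable $\bX$ built from $\bDelta$, and then use the standard Sylvester bound $\|\bX\|_{\Fn}\le\|\bH_A\bX+\bX\bH_B\|_{\Fn}/(\sigma_r(\bA)+\sigma_r(\bB))$. The second-order term $\bDelta^\T\bDelta$ would be absorbed using the hypothesis $\|\bE\|_{\Fn}<\sigma_r(\bB)$.

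If this route stalls, the fallback is simply to invoke Theorem~2.3 of \citet{mathias1993perturbation}, as the lemma statement already does.
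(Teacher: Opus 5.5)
The paper gives no argument of its own here. It only cites Mathias's Theorem~2.3, which is also your fallback. So the question is whether your derivation closes, and it does not: Step~3 asserts the conclusion rather than deriving it, and the ``main obstacle'' you flag is the whole proof.

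Write $\bR^*=\bI_r+\bDelta$. Your Step~1 identity then becomes $\bDelta\bH+\bH\bDelta+\bH\bDelta^\T\bDelta=\tfrac12\{\bE^\T(\bA+\bB)-(\bA+\bB)^\T\bE\}$. The eigenvalues of $\bH$ are the singular values of $\bA^\T\bB$, which are of order $\sigma_i(\bB)^2$ when $\bA\approx\bB$. The right-hand side, however, is only bounded by $\|\bE\|_{\Fn}\|\bA+\bB\|$. Dividing norm-wise yields $\|\bDelta\|_{\Fn}\lesssim\|\bE\|_{\Fn}\|\bA+\bB\|/\{2\sigma_r(\bA^\T\bB)\}\approx\{\|\bB\|/\sigma_r(\bB)\}\,\|\bE\|_{\Fn}/\sigma_r(\bB)$. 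This is off by the condition number of $\bB$, not by a constant. Nothing in your sketch tracks how the singular-value weights inside the right-hand side line up with the eigenbasis of $\bH$, which is what removing that factor would require.

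Your proposed repair does not apply as stated. Li's identity $\bQ_B^\T\bE-\bE^\T\bQ_A=(\bW-\bI_r)\bH_A+\bH_B(\bW-\bI_r)$, with $\bW=\bQ_B^\T\bQ_A$, controls the polar factors of $\bA$ and $\bB$. But $\bR^*$ is the orthogonal polar factor of $\bA^\T\bB$, and it is not determined by those. For non-commuting square symmetric positive definite $\bA,\bB$ one has $\bQ_A=\bQ_B=\bI_r$, yet $\bR^*$ is the polar factor of $\bA\bB$ and differs from $\bI_r$. Applying the identity to both pairs $(\bA,\bB)$ and $(\bA\bR^*,\bB)$ bounds $\bW-\bI_r$ and $\bW\bR^*-\bI_r$, and subtracting loses at least a factor $2$.

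The fallback is not a one-liner either. Mathias's theorem perturbs the polar factor of a square matrix. The natural reduction uses that the polar factor of $\bB^\T\bB$ is $\bI_r$ and that of $\bB^\T\bB+(\bA-\bB)^\T\bB=\bA^\T\bB$ is $\bR^*$. This gives a bound of order $\|\bB\|\,\|\bA-\bB\|_{\Fn}/\sigma_r(\bB)^2$, again carrying the extra factor $\|\bB\|/\sigma_r(\bB)$.

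What your Step~2 ingredients do give at once is the lemma up to a factor $2$. From $\bB(\bR^*-\bI_r)=(\bA\bR^*-\bB)-\bE\bR^*$ and the optimality bound $\|\bA\bR^*-\bB\|_{\Fn}\le\|\bE\|_{\Fn}$, you get $\sigma_r(\bB)\|\bR^*-\bI_r\|_{\Fn}\le 2\|\bE\|_{\Fn}$. From $\bA(\bR^*-\bI_r)=(\bA\bR^*-\bB)-\bE$ you get the same with $\sigma_r(\bA)$. Hence $\|\bR^*-\bI_r\|_{\Fn}\le 2\|\bE\|_{\Fn}/\max\{\sigma_r(\bA),\sigma_r(\bB)\}\le 4\|\bE\|_{\Fn}/\{\sigma_r(\bA)+\sigma_r(\bB)\}$, without even needing $\|\bE\|_{\Fn}<\sigma_r(\bB)$.

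That factor $2$ is genuinely lost. For $\bB=\bI_r$ and $\bA$ orthogonal near $\bI_r$, one has $\bR^*=\bA^\T$ and the stated bound holds with equality. So the stated constant requires an identity for $\bR^*-\bI_r$ whose coefficients are linear in the singular values, and your proposal does not supply one. For the paper's purposes, though, the lemma is only ever used in the form $\|\bR^*-\bI_r\|_{\Fn}\le C\|\bE\|_{\Fn}/\sigma_r(\bZ^*)$, with $C$ absorbed downstream. This elementary bound would therefore suffice everywhere the lemma is invoked.
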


\subsection{Proof of Theorem~\ref{thm_general_theory}}\label{supp_sec_prove_thm_general_theory}
We prove in this subsection the stronger contraction statement under the localization $\cDinf$. Accordingly, throughout the proof we invoke Assumptions~\ref{assump_rsc},~\ref{assump_gradient_lips_1},~\ref{assump_row_cross_curvature_sym}, and~\ref{assump_gradient_lips} with $\cD=\cDinf$. The $\ell_2$-only part of Theorem~\ref{thm_general_theory}, where only Assumptions~\ref{assump_rsc}--\ref{assump_gradient_lips_1} are imposed with $\cD=\cDtwo$, is proved later in Section~\ref{supp_sec_prove_thm_general_theory_noisy_Z_l2} by the same argument after removing the row-wise estimates.
Throughout this proof, we use the specialization $\lambda=\alpha$ from Lemma~\ref{lemma_general_convex} and the constant step size $\eta^t\equiv \eta=\{10(\alpha+\beta)\kappa\sigma_{\min}\}^{-1}$ from Theorem~\ref{thm_general_theory}.
For each $t\ge 0$, let \begin{equation*}\bR_t^* = \argmin_{\bR\in\cO^r}\|\bZ^t\bR-\bZ^*\|_{\Fn},\quad\tilde\bZ^t:=\bZ^t\bR_t^*\quad\text{, and }\quad\bE_t = \bZ^t\bR_t^* - \bZ^*.\end{equation*} By the orthogonal Procrustes optimality condition (Lemma~\ref{lemma_ortho_align}), we know $(\bZ^*)^\T \tilde\bZ^t=\tilde\bZ^{t\T}\bZ^*$. Then the by construction, $\nabla_{\bZ}p_{\alpha}^*(\tilde\bZ^t)=\zero$ and $\nabla_{\bZ}p_{\alpha}^*(\bZ^*)=\zero$.

For the $\ell_2$ part of the theorem, the initialization bound \eqref{eq_sym_init_Z} gives the base case at $t=0$. For the $\ell_\infty$ part, the bounds \eqref{eq_sym_init_Z} and \eqref{eq_sym_init_Z_uniform} give the base case at $t=0$. Next, we show that provided \eqref{item_alg_gcd_l2} and \ref{item_alg_gcd_linf} hold for $0,1,2,\dots,t$, \eqref{item_alg_gcd_l2} and \ref{item_alg_gcd_linf} also hold for the iterate in the $(t+1)$th step. 

For $s\in[0,1]$, define $\bZ_t(s):=\bZ^*+s\bE_t$.
Because $\bZ_t(s)$ is already aligned with $\bZ^*$,
\begin{equation*}
\disttwo(\bZ_t(s),\bZ^*)\le s\|\bE_t\|_{\Fn}, \qquad \distinf(\bZ_t(s),\bZ^*)\le s\|\bE_t\|_{\twinf}.
\end{equation*}
With the induction hypothesis, we know
\begin{equation}
    n^{-1/2}\disttwo(\bZ_t(s),\bZ^*) \le n^{-1/2}\|\bE_t\|_{\Fn}\le \phi_n\frac{\|\bZ^*\|_{\Fn}}{\sqrt{ n}},\label{eq_l2_contract_to_Z_F}
\end{equation}
and 
\begin{equation}\begin{aligned}
    \distinf(\bZ_t(s),\bZ^*) \le \|\bE_t\|_{\twinf}\le\psi_n \|\bZ^*\|_{\twinf}
\end{aligned}.\label{eq_linft_contract_to_Z_infty}\end{equation}
Hence, by the theorem assumptions $\phi_n\le \epsilon/2$ and, for the $\ell_\infty$ part, $\psi_n\le \epsilon/2$, the bounds \eqref{eq_sym_init_Z}, \eqref{eq_sym_init_Z_uniform}, \eqref{eq_l2_contract_to_Z_F}, and \eqref{eq_linft_contract_to_Z_infty} imply that $\bZ_t(s)\in\cDtwo$ for all $s\in[0,1]$ in Step~1, and $\bZ_t(s)\in\cDinf$ for all $s\in[0,1]$ in Step~2.

Let $\bE_{t+1}^{(t)} = \bZ^{t+1}\bR_t^* - \bZ^*$.
Recall that in \eqref{alg_aligned_gcd}, we have established $\bZ^{t+1}\bR_t^* =\bZ^t\bR_t^* - \frac{\eta^t}{n}\nabla_{\bZ}h_{\lambda}^*(\bZ^t\bR_t^*)$. Since $\cG(\bZ^*)=\zero$, $\nabla_{\bZ}p_{\alpha}^*(\tilde\bZ^t)=\zero$ by the alignment property, and $\nabla_{\bZ}p_{\alpha}^*(\bZ^*)=\zero$ because $\bM^*(\bZ^*)=0$, we have 
\begin{equation*}
    \bE_{t+1}^{(t)} = \bE_t - \frac{\eta}{n}\big\{\nabla_{\bZ}h_{\alpha}^*(\tilde\bZ^t) - \nabla_{\bZ}h_{\alpha}^*(\bZ^*)\big\} .
\end{equation*}

We now split the proof into the $\ell_2$- and $\ell_\infty$-parts.

\medskip
\paragraph{\it \underline{Step 1: $\ell_2$-error contraction.}}
We prove \eqref{item_alg_gcd_l2} for the $(t+1)$th iterate. Since $\disttwo(\bZ^{t+1},\bZ^*)\le \|\bZ^{t+1}\bR_t^*-\bZ^*\|_{\Fn} = \|\bE_{t+1}^{(t)}\|_{\Fn}$,
it suffices to bound $\|\bE_{t+1}^{(t)}\|_{\Fn}$.

For $s\in[0,1]$, define \begin{equation*}
\cA_t(s):= n^{-1}\Big\{\nabla_{\bz}^2 h_{\alpha}^*(\bZ_t(s)) - 2\cG(\bZ_t(s))\otimes \bI_r\Big\}, \qquad \bar{\cA}_t:=\int_0^1 \cA_t(s)\,ds,
\end{equation*}
and
\begin{equation*}
\bar{\cG}_t:=\frac{2}{n}\int_0^1 \cG(\bZ_t(s))\,ds.
\end{equation*}
By the fundamental theorem of calculus (Theorem 4.2 in \citet{lang2012real}, Chapter XIII),
\begin{equation*}
\mathrm{vec}\!\Big(\big\{\nabla_{\bZ}h_{\alpha}^*(\tilde\bZ^t)-\nabla_{\bZ}h_{\alpha}^*(\bZ^*)\big\}^\T\Big) = \Big(\int_0^1 \nabla_{\bz}^2 h_{\alpha}^*(\bZ_t(s))\,ds\Big)\mathrm{vec}(\bE_t^\T).
\end{equation*}
Using $\nabla_{\bZ}\cL(\bZ\bZ^\T)=2\cG(\bZ)\bZ$, we therefore obtain
\begin{equation*}
\mathrm{vec}\!\big((\bE_{t+1}^{(t)})^\T\big) = (\bI_{nr}-\eta\bar{\cA}_t)\mathrm{vec}(\bE_t^\T) - \eta(\bar{\cG}_t\otimes \bI_r)\mathrm{vec}(\bE_t^\T).
\end{equation*}
Since $(\bB\otimes \bA^\T)\mathrm{vec}(\bX)=\mathrm{vec}(\bA\bX\bB)$, the second term equals $\mathrm{vec}(\bE_t^\T\bar{\cG}_t)$. Hence
\begin{equation*}
\|\bE_{t+1}^{(t)}\|_{\Fn} \le \big\|(\bI_{nr}-\eta\bar{\cA}_t)\mathrm{vec}(\bE_t^\T)\big\| + \eta\|\bE_t^\T\bar{\cG}_t\|_{\Fn} =: \gamma_{1,t}+\gamma_{2,t}.
\end{equation*}

We first bound $\gamma_{1,t}$. Since $\sigma_r(n^{-1/2}\bZ^*)^2={\sigma_r(\bX^*)}/{n}=\sigma_{\min}$, apply Lemma~\ref{lemma_general_convex} to get, for every $s\in[0,1]$,
\begin{equation*}
\lambda_{\min}\big(\cA_t(s)\big) \ge \alpha\sigma_{\min} - 4n^{-1}\|\bZ_t(s)-\bZ^*\|_{\Fn}^2 \ge \alpha\sigma_{\min} - 4n^{-1}\|\bE_t\|_{\Fn}^2.
\end{equation*}
Since $\bar{\cA}_t=\int_0^1 \cA_t(s)\,ds$, we can further obtain
\begin{equation*}
\lambda_{\min}(\bar{\cA}_t) \ge \int_0^1\lambda_{\min}\big(\cA_t(s)\big)\,ds \ge \alpha\sigma_{\min} - 4n^{-1}\|\bE_t\|_{\Fn}^2.
\end{equation*}
By the induction hypothesis $\|\bE_t\|_{\Fn}\le \phi_n\|\bZ^*\|_{\Fn}$ and $\|\bZ^*\|_{\Fn}^2\le  r\sigma_1(\bX^*)=nr\kappa\sigma_{\min}$,
\begin{equation*}
n^{-1}\|\bE_t\|_{\Fn}^2 \le \phi_n^2\,n^{-1}\|\bZ^*\|_{\Fn}^2 \le \phi_n^2 r\kappa\sigma_{\min} \le c_0^2\frac{\alpha^2}{\kappa}\sigma_{\min} \le c_0^2\alpha\sigma_{\min},
\end{equation*}
where we used $\phi_n\le c_0\alpha/(\kappa\sqrt r)$, $0<\alpha<1$, and $\kappa\ge 1$.
Hence, after shrinking $c_0$ if necessary,
\begin{equation}\label{eq_lambda_min_bar_cA}
\lambda_{\min}(\bar{\cA}_t)\ge \frac78\alpha\sigma_{\min}.
\end{equation}

Next, we bound $\lambda_{\max}(\bar{\cA}_t)$. This upper is only needed to ensure that $\bI_{nr}-\eta\bar{\cA}_t$ remains a contraction. For any $\bW\in\RR^{n\times r}$ with $\|\bW\|_{\Fn}=1$, Assumption~\ref{assump_rsc} gives
\begin{align*}
&n^{-1}\nabla_{\bZ}^2\cL(\bZ_t(s)\bZ_t(s)^\T)[\bW,\bW] - 2n^{-1}\langle \cG(\bZ_t(s)),\bW\bW^\T\rangle \\ &\qquad\le \beta n^{-1}\|\cP_{\bZ_t(s)}(\bW)\|_{\Fn}^2 \le 4\beta n^{-1}\|\bZ_t(s)\|^2,
\end{align*}
where the last step uses $\|\cP_{\bZ}(\bW)\|_{\Fn}\le 2\|\bZ\|\,\|\bW\|_{\Fn}$.
Also, we note that
\begin{equation*}
\frac1n\|\bZ_t(s)\|^2 \le \frac2n\|\bZ^*\|^2+\frac2n\|\bE_t\|_{\Fn}^2 \le 2\kappa\sigma_{\min}+2c_0^2\kappa\sigma_{\min} \le \frac{17}{8}\kappa\sigma_{\min},
\end{equation*}
after shrinking $c_0$ so that $2c_0^2\le 1/8$, where we used
$\|\bZ^*\|^2=\sigma_1(\bX^*)=n\kappa\sigma_{\min}$.

For the penalty term, recall
\begin{equation*}
p_\alpha^*(\bZ)=\frac{\alpha n^2}{4}\|\bM^*(\bZ)\|_{\Fn}^2, \qquad \bM^*(\bZ)=n^{-1}\big\{(\bZ^*)^\T\bZ-\bZ^\T\bZ^*\big\}. \end{equation*}
Since $\bM^*(\cdot)$ is linear, it follows that
\begin{equation*}
D\bM^*(\bZ)[\bW] = n^{-1}\big\{(\bZ^*)^\T\bW-\bW^\T\bZ^*\big\}, \qquad D^2\bM^*(\bZ)=0.
\end{equation*}
This hence gives
\begin{align*}
n^{-1}\nabla_{\bZ}^2p_\alpha^*(\bZ_t(s))[\bW,\bW] &= \frac{\alpha n}{2}\|D\bM^*(\bZ_t(s))[\bW]\|_{\Fn}^2 \\ &= \frac{\alpha}{2n}\|(\bZ^*)^\T\bW-\bW^\T\bZ^*\|_{\Fn}^2 \\ &\le 2\alpha n^{-1}\|\bZ^*\|^2\|\bW\|_{\Fn}^2 = 2\alpha\kappa\sigma_{\min}.
\end{align*}
Combining the bounds for the loss part and the penalty part yields
\begin{equation*}
\lambda_{\max}\big(\cA_t(s)\big) \le 4\beta\cdot\frac{17}{8}\kappa\sigma_{\min} + 2\alpha\kappa\sigma_{\min} \le 9(\alpha+\beta)\kappa\sigma_{\min},
\end{equation*}
for all $s\in[0,1]$. Hence the same bound holds for $\bar{\cA}_t$ that $\lambda_{\max}(\bar{\cA}_t)\le 9(\alpha+\beta)\kappa\sigma_{\min}$.
Since $\eta\le \{10(\alpha+\beta)\kappa\sigma_{\min}\}^{-1}$, we have $0\le \eta\lambda_{\max}(\bar{\cA}_t)\le \tfrac{9}{10}$.
Thus all eigenvalues of $\bI_{nr}-\eta\bar{\cA}_t$ lie in $[0,1)$, and therefore
$\big\|(\bI_{nr}-\eta\bar{\cA}_t)\mathrm{vec}(\bE_t^\T)\big\| \le \bigl(1-\eta\lambda_{\min}(\bar{\cA}_t)\bigr)\|\bE_t\|_{\Fn}$.
Using \eqref{eq_lambda_min_bar_cA}, we conclude
\begin{equation}
\gamma_{1,t}
\le
\left(1-\frac78\eta\alpha\sigma_{\min}\right)\|\bE_t\|_{\Fn}.
\label{eq:l2-linear}
\end{equation}

We next bound $\gamma_{2,t}$. By the induction hypothesis,
\begin{equation*}
\disttwo(\bZ^t,\bZ^*)=\|\bE_t\|_{\Fn} \le \phi_n\|\bZ^*\|_{\Fn} \le \phi_n\sqrt r\,\|\bZ^*\| \le c_0\|\bZ^*\|,
\end{equation*}
where the last step uses $\phi_n\le c_0\alpha/(\kappa\sqrt r)$ and $\alpha/\kappa\le 1$.
Since $\disttwo(\bZ_t(s),\bZ^*)\le \|\bE_t\|_{\Fn}$, Lemma~\ref{lemma_zz_to_z} implies
\begin{equation*}
\|\bZ_t(s)\bZ_t(s)^\T-\bZ^*(\bZ^*)^\T\|_{\Fn}
\le
(2+c_0)\|\bZ^*\|\,\disttwo(\bZ_t(s),\bZ^*)
\le
C\|\bZ^*\|\,\|\bE_t\|_{\Fn},
\end{equation*}
uniformly over $s\in[0,1]$. Therefore, using $\cG(\bZ^*)=\zero$ and Assumption~\ref{assump_gradient_lips_1} within $\cDinf$,
\begin{align*}
\|\bar{\cG}_t\| &\le \frac{2}{n}\int_0^1 \|\cG(\bZ_t(s))-\cG(\bZ^*)\|\,ds \\ &\le \frac{2L_2}{n}\max_{s\in[0,1]} \|\bZ_t(s)\bZ_t(s)^\T-\bZ^*(\bZ^*)^\T\|_{\Fn} \\
&\le CL_2\frac{\|\bZ^*\|}{n}\|\bE_t\|_{\Fn},
\end{align*}where we use $\|\cdot\|\le \|\cdot\|_{\Fn}$.
Consequently, we have $\gamma_{2,t}\le \eta\|\bar{\cG}_t\|\,\|\bE_t\|_{\Fn} \le CL_2\eta\frac{\|\bZ^*\|}{n}\|\bE_t\|_{\Fn}^2$.
We now check that
\begin{align}
\frac{\|\bZ^*\|}{n}\|\bE_t\|_{\Fn}\le \frac{\|\bZ^*\|}{\sqrt n}\cdot \phi_n\frac{\|\bZ^*\|_{\Fn}}{\sqrt n}  \le \phi_n\sqrt r\,\kappa\sigma_{\min} \le c_0\alpha\sigma_{\min},\label{eq_l2_contract_to_Z_F2}
\end{align}
where we used ${\|\bZ^*\|}/{\sqrt n}=\sqrt{\kappa\sigma_{\min}}$ and $
{\|\bZ^*\|_{\Fn}}/{\sqrt n}\le \sqrt{r\kappa\sigma_{\min}}$. Hence, this establishes that
\begin{equation*}
\gamma_{2,t} \le
CL_2c_0\,\eta\alpha\sigma_{\min}\|\bE_t\|_{\Fn}.
\end{equation*}
After shrinking $c_0$ further so that $CL_2c_0\le 1/2$, we obtain
\begin{equation}
\gamma_{2,t}\le \frac{\eta\alpha\sigma_{\min}}{2}\|\bE_t\|_{\Fn}. \label{eq_l2_score}
\end{equation}

Combining \eqref{eq:l2-linear} and \eqref{eq_l2_score}, we arrive at
\begin{equation*}
\disttwo(\bZ^{t+1},\bZ^*) \le \|\bE_{t+1}^{(t)}\|_{\Fn} \le \left(1-\frac38\eta\alpha\sigma_{\min}\right)\|\bE_t\|_{\Fn} \le \rho\,\|\bE_t\|_{\Fn},
\end{equation*}
where $\rho=1-\eta\alpha\sigma_{\min}/4$. Since $\|\bE_t\|_{\Fn}=\disttwo(\bZ^t,\bZ^*)$, iterating the recursion and using $\disttwo(\bZ^0,\bZ^*)\le \phi_n\|\bZ^*\|_{\Fn}$ yields
\begin{equation*}
\disttwo(\bZ^{t+1},\bZ^*) \le \rho^{t+1}\phi_n\|\bZ^*\|_{\Fn}.
\end{equation*}
This proves $\ell_2$ error contraction in \eqref{item_alg_gcd_l2} for the $(t+1)$th iterate under local region $\cDinf$.
\medskip
\paragraph{\it \underline{Step 2: $\ell_\infty$-error contraction.}}
We first control the alignment error $\bE_{t+1}^{(t)}=\bZ^{t+1}\bR_t^*-\bZ^*$, and then transfer the bound to the next optimal alignment $\bR_{t+1}^*$. As in Step 1, the whole interpolation segment $\{\bZ_t(s):0\le s\le 1\}$ lies in $\cDinf$.

We decompose the rowwise error into the term arising from alignment and the drift of the optimal alignment as follows
\begin{equation}\label{eq_linf_decomps}
\begin{aligned}
\distinf(\bZ^{t+1},\bZ^*) &= \|\bZ^{t+1}\bR_{t+1}^*-\bZ^*\|_{\twinf} \\ &\le \|\bZ^{t+1}\bR_t^*-\bZ^*\|_{\twinf} + \|\bZ^{t+1}\bR_{t+1}^*-\bZ^{t+1}\bR_t^*\|_{\twinf} \\
&\le \|\bE_{t+1}^{(t)}\|_{\twinf} + \|\bZ^{t+1}\bR_t^*\|_{\twinf} \big\|\bI_r-(\bR_t^*)^{-1}\bR_{t+1}^*\big\|. \end{aligned}
\end{equation}

We first bound $\|\bE_{t+1}^{(t)}\|_{\twinf}$.
Recall that $\tilde\bZ^t=\bZ^t\bR_t^*$, so by Lemma~\ref{lemma_ortho_align}, $(\bZ^*)^\T\tilde\bZ^t=\tilde\bZ^{t\T}\bZ^*$. Equivalently, we know 
$
(\bZ^*)^\T\bE_t=\bE_t^\T\bZ^*$.
Therefore, for every $s\in[0,1]$,
\begin{equation*}
\bM^*(\bZ_t(s)) = n^{-1}\big\{(\bZ^*)^\T\bZ_t(s)-\bZ_t(s)^\T\bZ^*\big\} = sn^{-1}\big\{(\bZ^*)^\T\bE_t-\bE_t^\T\bZ^*\big\} =0.
\end{equation*}
It follows that
\begin{equation*}
\nabla_{\bZ}p_\alpha^*(\bZ_t(s))=0, \qquad \frac{d}{ds}\nabla_{\bZ}p_\alpha^*(\bZ_t(s)) = \nabla_{\bZ}^2p_\alpha^*(\bZ_t(s))[\bE_t] =0,
\end{equation*}
for all $s\in[0,1]$. Thus, the penalty contributes nothing to the directional Hessian action along $\bE_t$.

For $i,j\in[n]$, we define
\begin{equation*}
\cH_{ij}(\bZ):= n^{-1}\big\{\nabla_{\bz_i\bz_j}^2\cL(\bZ\bZ^\T)-2G_{ij}(\bZ)\bI_r\big\}, \qquad
\bar{\cH}_{ij,t}:=\int_0^1 \cH_{ij}(\bZ_t(s))\,ds.
\end{equation*}
Using the same fundamental theorem of calculus expansion as in Step 1, together with
$\bar{\cG}_t=\frac{2}{n}\int_0^1\cG(\bZ_t(s))\,ds$, we obtain for each $i\in[n]$,
\begin{equation}\label{eq_linf_row_decomp}
(\bE_{t+1}^{(t)})_{i,\cdot}^\T = \big(\bI_r-\eta\bar{\cH}_{ii,t}\big)(\bE_t)_{i,\cdot}^\T - \eta\sum_{j\neq i}\bar{\cH}_{ij,t}(\bE_t)_{j,\cdot}^\T - \eta(\bar{\cG}_t\bE_t)_{i,\cdot}^\T.
\end{equation}
Consequently, we have the following decomposition
\begin{equation*}
\|\bE_{t+1}^{(t)}\|_{\twinf}\le \delta_{1,t}+\delta_{2,t}+\delta_{3,t},
\end{equation*}
where $\delta_{1,t}$, $\delta_{2,t}$, and $\delta_{3,t}$ denote the maxima over $i\in[n]$ of the three terms on the right-hand side of \eqref{eq_linf_row_decomp}.

We first bound $\delta_{1,t}$.
By Lemma~\ref{coro_thm_general_convex}, for every $s\in[0,1]$ and every $i\in[n]$,
\begin{equation*}
\lambda_{\min}\big(\cH_{ii}(\bZ_t(s))\big) \ge \alpha\sigma_{\min}-2n^{-1}\|\bE_t\|_{\Fn}^2,
\end{equation*} and
\begin{equation*}
\lambda_{\max}\big(\cH_{ii}(\bZ_t(s))\big) \le \beta\kappa\sigma_{\min}+2n^{-1}\|\bE_t\|_{\Fn}^2.
\end{equation*}
As in Step 1, we have by the induction hypothesis that
\begin{equation*}
n^{-1}\|\bE_t\|_{\Fn}^2 \le \phi_n^2 r\kappa\sigma_{\min} \le c_0^2\alpha\sigma_{\min},
\end{equation*}
where we used scaling condition $\phi_n\le c_0\alpha/(\kappa\sqrt r)$. Hence, after shrinking $c_0$ if necessary, we get
\begin{equation*}
\lambda_{\min}(\bar{\cH}_{ii,t})\ge \frac78\alpha\sigma_{\min}, \qquad\lambda_{\max}(\bar{\cH}_{ii,t})\le 1.01(\alpha+\beta)\kappa\sigma_{\min},
\end{equation*}
uniformly over $i\in[n]$. Since $\eta\le \{10(\alpha+\beta)\kappa\sigma_{\min}\}^{-1}$, we have
\begin{equation*}
0\le \eta\lambda_{\max}(\bar{\cH}_{ii,t})\le \frac15.
\end{equation*}
Therefore all eigenvalues of $\bI_r-\eta\bar{\cH}_{ii,t}$ lie in $[0,1)$, and subsequently
\begin{equation*}
\|\bI_r-\eta\bar{\cH}_{ii,t}\| = 1-\eta\lambda_{\min}(\bar{\cH}_{ii,t}) \le 1-\frac78\eta\alpha\sigma_{\min}.
\end{equation*}
Taking the maximum over $i$ yields
\begin{equation}
\delta_{1,t}\le \Big(1-\frac78\eta\alpha\sigma_{\min}\Big)\|\bE_t\|_{\twinf}. \label{eq_final_delta_1_t}
\end{equation}

Next, we bound $\delta_{2,t}$. Fix $i\in[n]$, vectors $\{\bu_j\}_{j\neq i}\subset\RR^r$, and $\ba\in\RR^r$ with $\|\ba\|=1$. Define $\bW_1:=\be_i\ba^\T$, $(\bW_2)_{j,\cdot}:=\bu_j^\T\ \ (j\neq i)$, and $(\bW_2)_{i,\cdot}:=0$.
Then we note that
\begin{equation*}
\nabla_{\bZ}^2\cL(\bZ\bZ^\T)[\bW_1,\bW_2] = \nabla_{\bX}^2\cL(\bZ\bZ^\T)\big[\cP_{\bZ}(\bW_1),\cP_{\bZ}(\bW_2)\big] + 2\langle \cG(\bZ),\bW_1\bW_2^\T\rangle.
\end{equation*}
Since $\bW_1$ is supported on row $i$ and $(\bW_2)_{i,\cdot}=0$, we have $\langle \cG(\bZ),\bW_1\bW_2^\T\rangle = \sum_{j\neq i}G_{ij}(\bZ)\,\ba^\T\bu_j$, and therefore
\begin{equation*}
\sum_{j\neq i}\ba^\T\cH_{ij}(\bZ)\bu_j = n^{-1}\nabla_{\bX}^2\cL(\bZ\bZ^\T)\big[\cP_{\bZ}(\bW_1),\cP_{\bZ}(\bW_2)\big].
\end{equation*}
Assumption~\ref{assump_row_cross_curvature_sym} now gives
\begin{equation*}
\Big| \sum_{j\neq i}\ba^\T\cH_{ij}(\bZ)\bu_j \Big| \le \beta n^{-1}\|\bZ_{i,\cdot}\|\,\|\bZ\|\,\|\bW_2\|_{\Fn} = \beta n^{-1}\|\bZ_{i,\cdot}\|\,\|\bZ\| \Big(\sum_{j\neq i}\|\bu_j\|^2\Big)^{1/2}.
\end{equation*}
Taking the supremum over $\|\ba\|=1$, we arrive at
\begin{equation*} \Big\| \sum_{j\neq i}\cH_{ij}(\bZ)\bu_j \Big\| \le \beta n^{-1}\|\bZ_{i,\cdot}\|\,\|\bZ\| \Big(\sum_{j\neq i}\|\bu_j\|^2\Big)^{1/2}.
\end{equation*}
Applying this with $\bu_j=(\bE_t)_{j,\cdot}^\T$, integrating over $s\in[0,1]$, and taking the maximum over $i$, we obtain
\begin{equation*}
\delta_{2,t} \le \eta\beta n^{-1}\max_{s\in[0,1]}\|\bZ_t(s)\|_{\twinf}\|\bZ_t(s)\|\,\|\bE_t\|_{\Fn}.
\end{equation*}
Now, by the induction on $\ell_2$ error contraction $ {\|\bE_t\|_{\Fn}}/{\|\bZ^*\|} \le \phi_n{\|\bZ^*\|_{\Fn}}/{\|\bZ^*\|}
\le \phi_n\sqrt r \le c_0{\alpha}/{\kappa} \le c_0$, it then holds that $\|\bZ_t(s)\| \le \|\bZ^*\|+\|\bE_t\|_{\Fn} \le (1+c_0)\|\bZ^*\|$, and, by the induction hypothesis $\|\bE_t\|_{\twinf}\le \psi_n\|\bZ^*\|_{\twinf}\le \epsilon/2\;\|\bZ^*\|_{\twinf}$,
\begin{equation*}
\|\bZ_t(s)\|_{\twinf} \le \|\bZ^*\|_{\twinf}+\|\bE_t\|_{\twinf} \le \big(1+\frac{\epsilon}2\big)\|\bZ^*\|_{\twinf}.
\end{equation*}
Using $\|\bZ^*\|^2=\sigma_1(\bX^*)=n\kappa\sigma_{\min}$, we conclude that
\begin{equation}
\delta_{2,t}\le C\eta\beta\sqrt{\frac{\kappa\sigma_{\min}}{n}}\|\bZ^*\|_{\twinf}\|\bE_t\|_{\Fn}.
\label{eq_final_delta_2_t}
\end{equation}

Finally, we bound $\delta_{3,t}$. Using $\cG(\bZ^*)=\zero$, Assumption~\ref{assump_gradient_lips}, and $\|\bA\bB\|_{\twinf}\le \|\bA\|_{\twinf}\|\bB\|$, we have
\begin{align*}
\delta_{3,t} &\le \eta\|\bar{\cG}_t\bE_t\|_{\twinf} \le \eta\|\bar{\cG}_t\|_{\twinf}\|\bE_t\|\\
&\le \frac{2\eta L_\infty}{n} \max_{s\in[0,1]} \|\bZ_t(s)\bZ_t(s)^\T-\bZ^*(\bZ^*)^\T\|_{\twinf}\,\|\bE_t\|_{\Fn}.
\end{align*}
Expanding the matrix difference by telescoping as
\begin{equation*}
\bZ_t(s)\bZ_t(s)^\T-\bZ^*(\bZ^*)^\T = s(\bZ^*\bE_t^\T+\bE_t\bZ^{*\T})+s^2\bE_t\bE_t^\T,
\end{equation*}
we obtain
\begin{align*}
\|\bZ_t(s)\bZ_t(s)^\T-\bZ^*(\bZ^*)^\T\|_{\twinf} &\le \|\bZ^*\bE_t^\T\|_{\twinf} + \|\bE_t\bZ^{*\T}\|_{\twinf} + \|\bE_t\bE_t^\T\|_{\twinf}
\\ &\le \|\bZ^*\|_{\twinf}\|\bE_t\|_{\Fn} + \|\bE_t\|_{\twinf}\|\bZ^*\| + \|\bE_t\|_{\twinf}\|\bE_t\|_{\Fn} \\ &\le \|\bZ^*\|_{\twinf}\|\bE_t\|_{\Fn} + (1+c_0)\|\bZ^*\|\|\bE_t\|_{\twinf},
\end{align*}
where in the last step we used $\|\bE_t\|_{\Fn}\le c_0\|\bZ^*\|$. Plugging this into the previous display gives
\begin{equation*}
\delta_{3,t} \le \frac{2\eta L_\infty}{n} \Big\{ \|\bZ^*\|_{\twinf}\|\bE_t\|_{\Fn}^2 + C\|\bZ^*\|\|\bE_t\|_{\twinf}\|\bE_t\|_{\Fn} \Big\}.
\end{equation*}
We next bound the two terms separately.
Since ${\|\bE_t\|_{\Fn}}/{\|\bZ^*\|} \le \phi_n{\|\bZ^*\|_{\Fn}}/{\|\bZ^*\|} \le \phi_n\sqrt r \le c_0{\alpha}/{\kappa}$, one can obtain
\begin{align*}
\frac{1}{n}\|\bZ^*\|_{\twinf}\|\bE_t\|_{\Fn}^2= \sqrt{\frac{\sigma_{\min}}{n\kappa}}\, \|\bZ^*\|_{\twinf}\|\bE_t\|_{\Fn} \times \frac{\kappa\|\bE_t\|_{\Fn}}{\|\bZ^*\|} \le c_0\alpha \sqrt{\frac{\sigma_{\min}}{n\kappa}}\, \|\bZ^*\|_{\twinf}\|\bE_t\|_{\Fn},
\end{align*} where we used $\|\bZ^*\|=\sqrt{n\kappa\sigma_{\min}}$. Similarly,
\begin{align*}
\frac{1}{n}\|\bZ^*\|\,\|\bE_t\|_{\twinf}\|\bE_t\|_{\Fn}= \alpha\sigma_{\min}\|\bE_t\|_{\twinf}\times
\frac{\kappa\|\bE_t\|_{\Fn}}{\alpha\|\bZ^*\|}\le c_0\alpha\sigma_{\min}\|\bE_t\|_{\twinf}.
\end{align*}
Hence, one has
\begin{equation*}
\delta_{3,t} \le CL_\infty c_0\,\eta\alpha\sigma_{\min}\|\bE_t\|_{\twinf} + C\eta\alpha \sqrt{\frac{\sigma_{\min}}{n\kappa}}\, \|\bZ^*\|_{\twinf}\|\bE_t\|_{\Fn}.
\end{equation*}
After shrinking $c_0$ further so that $CL_\infty c_0\le 1/16$, we arrive at
\begin{equation}
\delta_{3,t} \le \frac1{16}\eta\alpha\sigma_{\min}\|\bE_t\|_{\twinf} + C\eta\alpha \sqrt{\frac{\sigma_{\min}}{n\kappa}}\, \|\bZ^*\|_{\twinf}\|\bE_t\|_{\Fn}. \label{eq_final_delta_3_t}
\end{equation}

Substituting the bounds for $\delta_{1,t}$, $\delta_{2,t}$, and $\delta_{3,t}$ obtain in \eqref{eq_final_delta_1_t}, \eqref{eq_final_delta_2_t}, and \eqref{eq_final_delta_3_t} into \eqref{eq_linf_row_decomp}, and using $\alpha\le \beta$ and $\kappa\ge 1$, we get
\begin{equation*}
\|\bE_{t+1}^{(t)}\|_{\twinf} \le \Big(1-\frac{13}{16}\eta\alpha\sigma_{\min}\Big)\|\bE_t\|_{\twinf} + C\eta\beta\sqrt{\frac{\kappa\sigma_{\min}}{n}}\|\bZ^*\|_{\twinf}\|\bE_t\|_{\Fn}.
\end{equation*}
By the error contraction established in Step 1, we know $\|\bE_t\|_{\Fn}\le \rho^t\phi_n\|\bZ^*\|_{\Fn} \le \rho^t\phi_n\sqrt{nr\kappa\sigma_{\min}}$.
Therefore,
\begin{equation*}
\beta\sqrt{\frac{\kappa\sigma_{\min}}{n}}\|\bZ^*\|_{\twinf}\|\bE_t\|_{\Fn} \le \beta\kappa\sqrt r\,\sigma_{\min}\,\rho^t\phi_n\|\bZ^*\|_{\twinf}.
\end{equation*}
Using the theorem scaling condition $\tfrac{\beta}{\alpha}\kappa^{3/2}\sqrt r\,\tfrac{\phi_n}{\psi_n}\le c_0$,
and shrinking $c_0$ further so that $Cc_0\le 1/16$, we obtain $ C\beta\kappa\sqrt r\,\sigma_{\min}\,\phi_n \le \alpha\sigma_{\min}\psi_n/16$.
We therefore get
\begin{equation*}
\|\bE_{t+1}^{(t)}\|_{\twinf} \le \Big(1-\frac{13}{16}\eta\alpha\sigma_{\min}\Big)\|\bE_t\|_{\twinf} + \frac1{16}\eta\alpha\sigma_{\min}\rho^t\psi_n\|\bZ^*\|_{\twinf}.
\end{equation*}
Invoking the induction hypothesis $\|\bE_t\|_{\twinf}\le \rho^t\psi_n\|\bZ^*\|_{\twinf}$, we arrive at
\begin{equation}\label{eq_e_t_bar}
\|\bE_{t+1}^{(t)}\|_{\twinf} \le \Big(1-\frac34\eta\alpha\sigma_{\min}\Big)\rho^t\psi_n\|\bZ^*\|_{\twinf} \le (3\rho - 2)\rho^{t}\psi_n\|\bZ^*\|_{\twinf}.
\end{equation}
where the last inequality follows from $\rho = 1-\tfrac14\eta\alpha\sigma_{\min}$.

It remains to transfer this bound from the alignment $\bR_t^*$ to the next optimal alignment $\bR_{t+1}^*$. By orthogonal invariance, $(\bR_t^*)^{-1}\bR_{t+1}^*$ is the solution to
\begin{equation*}
\argmin_{\bR\in\cO^r}\|\bZ^{t+1}\bR_t^*\bR-\bZ^*\|_{\Fn}.
\end{equation*}

Now that
\begin{equation*}
\frac{\|\bE_{t+1}^{(t)}\|_{\Fn}}{\sigma_r(\bZ^*)} \le \rho^{t+1}\phi_n\frac{\|\bZ^*\|_{\Fn}}{\sqrt{n\sigma_{\min}}} \le \rho^{t+1}\phi_n\sqrt{r\kappa},
\end{equation*}
the theorem scaling condition $\tfrac{\beta}{\alpha}\kappa^{3/2}\sqrt r\,\tfrac{\phi_n}{\psi_n}\le c_0$ and $\kappa\ge 1$ imply $\sqrt {r\,\kappa}\,\tfrac{\phi_n}{\psi_n}\le c_0$, which further yields
\begin{equation*}\frac{\|\bE_{t+1}^{(t)}\|_{\Fn}}{\sigma_r(\bZ^*)} \le c_0\,\rho^{t+1}\psi_n. \end{equation*}
In particular, after shrinking $c_0$ if necessary, we have
$\|\bE_{t+1}^{(t)}\|_{\Fn}\le 0.01\,\sigma_r(\bZ^*)$, so Lemma~\ref{lemma_procruste_problem_perturb} applies and yields
\begin{align*}
\big\|(\bR_t^*)^{-1}\bR_{t+1}^*-\bI_r\big\|_{\Fn} &\le \frac{2}{\sigma_r(\bZ^{t+1}\bR_t^*)+\sigma_r(\bZ^*)}\|\bE_{t+1}^{(t)}\|_{\Fn} \\
&\le \frac{1.01}{\sigma_r(\bZ^*)}\|\bE_{t+1}^{(t)}\|_{\Fn},
\end{align*}
where the last step uses Weyl's inequality and
$\|\bE_{t+1}^{(t)}\|_{\Fn}\le 0.01\,\sigma_r(\bZ^*)$.

Returning to \eqref{eq_linf_decomps}, we obtain
\begin{align*}
\distinf(\bZ^{t+1},\bZ^*) &\le \|\bE_{t+1}^{(t)}\|_{\twinf} + \|\bZ^{t+1}\bR_t^*\|_{\twinf}\big\|(\bR_t^*)^{-1}\bR_{t+1}^*-\bI_r\big\|_{\Fn}\\
&\le \|\bE_{t+1}^{(t)}\|_{\twinf} + C\frac{\|\bE_{t+1}^{(t)}\|_{\Fn}}{\sigma_r(\bZ^*)} \Big( \|\bZ^*\|_{\twinf}+\|\bE_{t+1}^{(t)}\|_{\twinf} \Big).
\end{align*}
Using the bounds already proved, we get ${\|\bE_{t+1}^{(t)}\|_{\Fn}}/{\sigma_r(\bZ^*)} \le \sqrt{r\kappa}\phi_n\rho^{t+1}\le \sqrt{r\kappa}\phi_n$.
Therefore, one can obtain 
\begin{equation}\label{eq_e_t_bar_1}\begin{aligned}
    \distinf(\bZ^{t+1},\bZ^*) \le &\,\big(1+C\sqrt{r\kappa}\phi_n \big)\|\bE_{t+1}^{(t)}\|_{\twinf} + C\sqrt{r\kappa}\phi_n\rho^{t+1}\|\bZ^*\|_{\twinf}\\
    \le &\,\big(1+C\sqrt{r\kappa}\phi_n \big)(3\rho - 2)\rho^{t}\psi_n\|\bZ^*\|_{\twinf} + C\sqrt{r\kappa}\phi_n\rho^{t+1}\|\bZ^*\|_{\twinf}\\\le &\,\rho^{t+1}\psi_n\|\bZ^*\|_{\twinf}.
\end{aligned}\end{equation}
where the second inequality is due to $\|\bE_{t+1}^{(t)}\|_{\twinf}\le (3-2\rho)\rho^{t}\psi_n\|\bZ^*\|_{\twinf}$ from \eqref{eq_e_t_bar}. For the third inequality, we note that from $\tfrac{\beta}{\alpha}\kappa^{3/2}\sqrt{r}\tfrac{\phi_n}{\psi_n}\le c_0$, $\eta = \{10(\alpha + \beta)\kappa\sigma_{\min}\}^{-1}$, and $\psi_n\le \epsilon/2\le 1$, after shrinking $c_0$ if necessary, one has $C\sqrt{r\kappa}\phi_n\le \eta\alpha\sigma_{\min}/4 = 1-\rho$, which implies
\begin{equation*}
    \big(1+C\sqrt{r\kappa}\phi_n \big)(2 - \rho)\rho^{t}\psi_n\|\bZ^*\|_{\twinf}\le \big(1 + 2-2\rho)(3-2\rho)\rho^{t}\psi_n\|\bZ^*\|_{\twinf}\le( 2\rho -1)\rho^{t}\psi_n\|\bZ^*\|_{\twinf}.
\end{equation*}
This also allows us to bound the second term by $(1-\rho)\rho^t\|\bZ^*\|_{\twinf}$. Taking together we know \eqref{eq_e_t_bar_1} holds, which thus proves \ref{item_alg_gcd_linf} for the $(t+1)$th iterate and closes the induction.

\subsection{Proof of Theorem~\ref{thm_general_theory_noisy_Z}}\label{supp_prove_thm_general_theory_noisy_Z}

We prove in this subsection the stronger contraction statement under the localization $\cDinf$. Accordingly, throughout the proof we invoke Assumptions~\ref{assump_rsc},~\ref{assump_gradient_lips_1},~\ref{assump_row_cross_curvature_sym}, and~\ref{assump_gradient_lips} with $\cD=\cDinf$. The $\ell_2$-only part of Theorem~\ref{thm_general_theory_noisy_Z}, where only Assumptions~\ref{assump_rsc}--\ref{assump_gradient_lips_1} are imposed with $\cD=\cDtwo$, is proved later in Section~\ref{supp_sec_prove_thm_general_theory_noisy_Z_l2} by the same argument after removing the row-wise estimates.
Throughout the proof, we use the population first-order condition
\begin{equation*}
\bar\cG(\bZ^*)=\EE\,\cG(\bZ^*)=\zero.
\end{equation*}


In what follows, besides proving Theorem~\ref{thm_general_theory_noisy_Z}, we also establish error bounds for the local empirical minimizer that serves as the contraction target of the noisy gradient descent iterates. Specifically, define $\hat\bZ = \argmin_{\bZ\in\cDinf}\cL(\bZ\bZ^\T)$, and let $\hat\bR = \argmin_{\bR\in\cO^r}\|\hat\bZ\bR-\bZ^*\|_{\Fn}$. Under the assumptions and scaling conditions used to prove the \(\ell_\infty\) error contraction in Theorem~\ref{thm_general_theory_noisy_Z}, we show that the following holds for $\hat\bZ\hat\bR$ with probability at least $1-\delta$:
    \begin{equation}
        {\|\hat\bZ\hat\bR - \bZ^*\|_{\Fn}} \le C\frac{\Delta_2(n,\delta)}{\alpha\sigma_{\min}}\|\bZ^*\|_{\Fn}\quad\text{and}\quad{\|\hat\bZ\hat\bR - \bZ^*\|_{\twinf}}\le C\frac{\Delta_{\infty}(n,\delta)}{\alpha\sigma_{\min}}\|\bZ^*\|_{\twinf}.\label{eq_optimum_Z_error_bound}
    \end{equation}
    Moreover, Step~2 proves the first-order condition $\nabla_{\bZ}\cL(\hat\bZ\hat\bZ^\T) = \zero$. The results are then used in Step~3 in three ways: first, to define the effective initialization radii $\phi_n^\dagger,\psi_n^\dagger$; second, to transfer the conditioning and row-scale bounds from $\bZ^*$ to $\hat\bZ\hat\bR$; and third, to convert contraction toward $\tilde\bZ$ into the final error bounds relative to $\bZ^*$.

Fix the radius $\epsilon$ in the theorem, and then we can choose an auxiliary radius $\varepsilon>0$ such that 
\begin{equation}\label{eq_lower_bound_varepsilon}
\frac{1}{c_0}\Big( \sqrt{r\kappa}\frac{\Delta_2(n,\delta)}{\alpha\sigma_{\min}} \vee \frac{\Delta_\infty(n,\delta)}{\alpha\sigma_{\min}} \Big) \le \varepsilon \le \frac{c_0}{(\beta/\alpha+\sqrt r)\sqrt r\,\kappa}\,\epsilon,
\end{equation}where $c_{0}>0$ is sufficiently small.
Such a choice of $\varepsilon$ is possible because the first two noise
scaling conditions in Theorem~\ref{thm_general_theory_noisy_Z} imply that
the lower bound in \eqref{eq_lower_bound_varepsilon} is smaller than the
upper bound, after shrinking $c_0$ if necessary. With $\zeta_r := \big(\tfrac{\beta}{\alpha}+\sqrt{r}\big)\sqrt{r}\kappa$, we may also write
\begin{equation}\label{eq_thm_general_theory_noisy_Z_scaling}
\frac{\sqrt r\,\kappa}{\alpha}\varepsilon\le c_0,\quad \zeta_r\varepsilon\le c_0\epsilon, \quad \frac{\Delta_\infty(n,\delta)}{\alpha\sigma_{\min}} \le c_0\,\frac{\epsilon}{\zeta_r},\quad \frac{\Delta_2(n,\delta)}{\alpha\sigma_{\min}}\le c_0\frac{\epsilon\zeta_r}{\sqrt{r\kappa}},
\end{equation}for sufficiently small $c_0$.

We next introduce a smaller compact neighborhood $\cDde$ to construct a stationary point by minimizing the gradient norm on a compact set, and a slightly larger neighborhood $\cDd$ on which the local curvature and Lipschitz bounds will be invoked. In particular, define
\begin{equation}
\cDde = \Big\{ \bZ: \frac{\|\bZ-\bZ^*\|_{\Fn}}{\|\bZ^*\|_{\Fn}}\le \varepsilon, \ \  \frac{\|\bZ-\bZ^*\|_{\twinf}}{\|\bZ^*\|_{\twinf}}\le \epsilon \Big\}, \label{eq_define_bar_cD}
\end{equation} and 
\begin{equation}
\bar\cD_z^{(\infty)}(\epsilon) = \Big\{ \bZ: \|\bZ-\bZ^*\|_{\Fn}\le \epsilon\|\bZ^*\|_{\Fn}, \ \  \|\bZ-\bZ^*\|_{\twinf}\le \epsilon\|\bZ^*\|_{\twinf} \Big\}.
\label{eq_define_bar_cD_cD}
\end{equation}
Since $\varepsilon\le \epsilon$, we have
\begin{equation*}
\cDde\subseteq \cDd\subseteq \cDinf.
\end{equation*}

For $\hat\bZ\in\argmin_{\bZ\in\cDinf}\cL(\bZ\bZ^\T)$, by definition of $\hat\bR $, we know $\hat\bZ\hat\bR\in \cDd$. Our first goal is to construct a local reference point that can later be shown to satisfy the first-order condition and then equals $\hat\bZ\hat\bR$ with high probability. In particular, we define
\begin{equation*}
\tilde\bZ\in\argmin_{\bZ\in\cDde}\|\nabla_{\bZ}h_{\alpha}^*(\bZ)\|_{\twinf}^2.
\end{equation*}
Such a minimizer exists because $\cDde$ is compact and
$\bZ\mapsto \|\nabla_{\bZ}h_{\alpha}^*(\bZ)\|_{\twinf}^2$ is continuous.
Let
\begin{equation*}
\bE:=\tilde\bZ-\bZ^*.
\end{equation*}


The rest of the proof consists of three steps. The first two steps study the local reference point $\tilde\bZ$. Step~3 then proves that the gradient descent iterates contract toward this point. Once Step~2 establishes $\tilde\bZ=\hat\bZ\hat\bR$, the theorem follows by combining the contraction toward $\tilde\bZ$ with the statistical error bounds for $\tilde\bZ$. Specifically, the proof is organized as follows.
\begin{enumerate}[label=(\arabic*)]
    \item construct an interior stationary point $\tilde\bZ$ and establish the first-order optimality condition;
    \item establish the equivalence $\tilde\bZ=\hat\bZ\hat\bR$ and the bounds for $\tilde\bZ$, which proves \eqref{eq_optimum_Z_error_bound};
    \item prove contraction of $\bZ^t$ toward $\tilde\bZ$, or equivalently, towards $\hat\bZ\hat\bR$.
\end{enumerate}

\medskip
\paragraph{\it \underline{Step 1: Construct an interior stationary point $\tilde\bZ$ and establish first-order optimality.}}
Since $\bZ^*\in\cDde$ and $\nabla_{\bZ}p_\alpha^*(\bZ^*)=\zero$, the minimizing property of $\tilde\bZ$ implies
\begin{equation}\label{eq_st_score_unf}
\begin{aligned}
\|\nabla_{\bZ}h_\alpha^*(\tilde\bZ)\|_{\twinf} &\le \|\nabla_{\bZ}h_\alpha^*(\bZ^*)\|_{\twinf} = 2\|\cG(\bZ^*)\bZ^*\|_{\twinf} \\&= 2\|\tilde\cG(\bZ^*)\bZ^*\|_{\twinf} \le 2\sqrt n\,\Delta_\infty(n,\delta)\|\bZ^*\|, \end{aligned}
\end{equation}
where the last inequality follows from the first part of Equation~\eqref{assump_gradient_noise_2}.
Consequently,
\begin{equation}\label{eq_st_score_ave}
\|\nabla_{\bZ}h_\alpha^*(\tilde\bZ)\|_{\Fn} \le \sqrt n\,\|\nabla_{\bZ}h_\alpha^*(\tilde\bZ)\|_{\twinf} \le 2n\,\Delta_\infty(n,\delta)\|\bZ^*\|.
\end{equation}

For $s\in[0,1]$, let
\begin{equation*}
\bZ_s:=\bZ^*+s\bE.
\end{equation*}
Since $\tilde\bZ\in\cDde\subseteq \cDd$, the whole segment $\{\bZ_s:0\le s\le 1\}$ lies in $\cDd\subseteq \cDinf$.
Applying the integral mean value theorem to
$\mathrm{vec}\big(\{\nabla_{\bZ}h_\alpha^*(\cdot)\}^\T\big)$ along the segment from $\bZ^*$ to $\tilde\bZ$, we obtain
\begin{align}
\underbrace{\tilde\cH\;\mathrm{vec}(\bE^\T)}_{\gamma_1} +
\underbrace{\tilde\cG\;\mathrm{vec}(\bE^\T)}_{\gamma_2} +
\underbrace{\bar\cG\;\mathrm{vec}(\bE^\T)}_{\gamma_3} =
\underbrace{ n^{-1}\mathrm{vec}\Big( \big\{\nabla_{\bZ}h_\alpha^*(\tilde\bZ)-\nabla_{\bZ}h_\alpha^*(\bZ^*)\big\}^\T \Big)}_{\gamma_4}, \label{eq_st_expand_ave}
\end{align}
where $\tilde\cH$, $\tilde\cG$ and $\bar\cG$ are defined as \begin{equation*}
\tilde\cH = n^{-1}\int_0^1 \Big\{ \nabla_{\bz}^2 h_\alpha^*(\bZ_s)-2\cG(\bZ_s)\otimes \bI_r \Big\}\,ds, \end{equation*}
\begin{equation*}
\tilde\cG = \frac{2}{n}\int_0^1 \tilde\cG(\bZ_s)\otimes \bI_r\,ds, \qquad \bar\cG = \frac{2}{n}\int_0^1 \bar\cG(\bZ_s)\otimes \bI_r\,ds.\end{equation*}
We now bound $\gamma_1$–$\gamma_4$.

For $\gamma_1$, Lemma~\ref{lemma_general_convex} gives
\begin{equation*}
\lambda_{\min}\!\Big( n^{-1}\nabla_{\bz}^2 h_\alpha^*(\bZ_s)-2n^{-1}\cG(\bZ_s)\otimes \bI_r \Big) \ge \alpha\sigma_{\min}-4n^{-1}\|\bZ_s-\bZ^*\|_{\Fn}^2
\end{equation*}
for every $s\in[0,1]$. Since $\|\bZ_s-\bZ^*\|_{\Fn}=s\|\bE\|_{\Fn}\le \varepsilon\|\bZ^*\|_{\Fn}$ and $\|\bZ^*\|_{\Fn}^2\le nr\kappa\sigma_{\min}$,
\begin{equation*}
\lambda_{\min}(\tilde\cH) \ge \alpha\sigma_{\min}-4\varepsilon^2 r\kappa\sigma_{\min} \ge \frac{\alpha\sigma_{\min}}{2}, \end{equation*}
after shrinking $c_0$ if necessary, where we used $\varepsilon\sqrt r\,\kappa/\alpha\le c_0$.

For $\gamma_2$, Equation~\ref{assump_gradient_noise_1} yields
\begin{equation*}
\|\gamma_2\| \le \frac{2}{n}\max_{s\in[0,1]} \|\tilde\cG(\bZ_s)\bE\|_{\Fn} \le 2\Delta_2(n,\delta)\|\bE\|_{\Fn}.
\end{equation*}

For $\gamma_3$, using $\bar\cG(\bZ^*)=\zero$, Assumption~\ref{assump_gradient_lips_1} with $\bar\cG(\bZ)$ in place of $\cG(\bZ)$, and Lemma~\ref{lemma_zz_to_z}, one can check
\begin{align*}
\|\gamma_3\| \le &\,\frac{2}{n}\max_{s\in[0,1]} \|(\bar\cG(\bZ_s)-\bar\cG(\bZ^*))\bE\|_{\Fn} \\
\le &\,\frac{2L_2}{n} \max_{s\in[0,1]} \|\bZ_s\bZ_s^\T-\bZ^*(\bZ^*)^\T\|_{\Fn}\,\|\bE\|_{\Fn}
\\\le &\, \frac{CL_2\|\bZ^*\|}{n}\|\bE\|_{\Fn}^2,
\end{align*}
where the last step uses $\|\bE\|_{\Fn}\le \varepsilon\|\bZ^*\|_{\Fn}\le c_0\|\bZ^*\|$.

For $\gamma_4$, by \eqref{eq_st_score_ave},
\begin{align*}
\|\gamma_4\| &\le n^{-1}\|\nabla_{\bZ}h_\alpha^*(\tilde\bZ)\|_{\Fn} + n^{-1}\|\nabla_{\bZ}h_\alpha^*(\bZ^*)\|_{\Fn} \\ &\le 2\Delta_\infty(n,\delta)\|\bZ^*\| + 2n^{-1}\|\cG(\bZ^*)\bZ^*\|_{\Fn}
\\ &= 2\Delta_\infty(n,\delta)\|\bZ^*\| + 2n^{-1}\|\tilde\cG(\bZ^*)\bZ^*\|_{\Fn} \\ &\le 2\Delta_\infty(n,\delta)\|\bZ^*\|_{\Fn} + 2\Delta_2(n,\delta)\|\bZ^*\|_{\Fn}.
\end{align*}
Here, the last inequality follows from $n^{-1}\|\cG(\bZ^*)\bZ^*\|_{\Fn}\le n^{-1}\|\cG(\bZ^*)\|\|\bZ^*\|_{\Fn}\le \Delta_{2}(n,\delta)\|\bZ^*\|_{\Fn}$ by~\eqref{assump_gradient_noise_1}.

Combining the bounds for $\gamma_1$–$\gamma_4$, we obtain
\begin{equation}\label{eq_st_expand_ave1}
\begin{aligned}
\|\bE\|_{\Fn} \le &\, \lambda_{\min}(\tilde\cH)^{-1}\big(\|\gamma_2\| + \|\gamma_3\| + \|\gamma_4\|\big)\\ \le&\,
C\frac{\Delta_\infty(n,\delta)+\Delta_2(n,\delta)}{\alpha\sigma_{\min}}\|\bZ^*\|_{\Fn} + C\frac{\Delta_2(n,\delta)}{\alpha\sigma_{\min}}\|\bE\|_{\Fn} + C\frac{L_2\|\bZ^*\|}{\alpha\sigma_{\min}n}\|\bE\|_{\Fn}^2. \end{aligned}
\end{equation}
By the lower bound in \eqref{eq_lower_bound_varepsilon}, or equivalently
by the first two bounds in \eqref{eq_thm_general_theory_noisy_Z_scaling},
we have
\begin{equation*}
\frac{\Delta_\infty(n,\delta)}{\alpha\sigma_{\min}}\le c_0\varepsilon,
\qquad
\frac{\Delta_2(n,\delta)}{\alpha\sigma_{\min}}
\le
\frac{c_0}{\sqrt{r\kappa}}\varepsilon\le c_0\varepsilon,
\end{equation*}
and the localization of $\tilde\bZ$ such that
\begin{equation*}
\frac{\|\bZ^*\|\|\bZ^*\|_{\Fn}}{\alpha\sigma_{\min}n}\cdot \frac{\|\bE\|_{\Fn}}{\|\bZ^*\|_{\Fn}} \le \frac{\sqrt{\kappa n\sigma_{\min}}\sqrt{r\kappa n\sigma_{\min}}}{\alpha\sigma_{\min}n}\,\varepsilon = \varepsilon\frac{\kappa\sqrt r}{\alpha} \le c_0,
\end{equation*}
we can shrink $c_0$ if necessary and absorb the last two terms on the right-hand side of \eqref{eq_st_expand_ave1}. This in turn gives
\begin{equation}\label{eq_interer_z_ave}
\frac{\|\tilde\bZ-\bZ^*\|_{\Fn}}{\|\bZ^*\|_{\Fn}} = \frac{\|\bE\|_{\Fn}}{\|\bZ^*\|_{\Fn}} \le \frac{\varepsilon}{2}.
\end{equation}

Then it remains to establish the $\ell_\infty$ bound. For $i,j\in[n]$, define
\begin{equation*}
\tilde\cH_{ij} = n^{-1}\int_0^1 \Big\{ \nabla_{\bz_i\bz_j}^2\cL(\bZ_s\bZ_s^\T)-2G_{ij}(\bZ_s)\bI_r \Big\}\,ds, \end{equation*}
\begin{equation*}
\tilde\cG_{ij} = \frac{2}{n}\int_0^1 \tilde G_{ij}(\bZ_s)\bI_r\,ds, \qquad \bar\cG_{ij} = \frac{2}{n}\int_0^1 \bar G_{ij}(\bZ_s)\bI_r\,ds,
\end{equation*} and
\begin{equation*}
\tilde\cP = \int_0^1 \nabla_{\bz}^2 p_\alpha^*(\bZ_s)\,ds, \qquad \tilde\cP_{i,\cdot}:=(\tilde\cP)_{\cR_i,\cdot}, \qquad \cR_i=\{(i-1)r+1,\dots,ir\}.
\end{equation*}
Then \eqref{eq_st_expand_ave} is equivalent to the row-wise decomposition
\begin{equation}\label{eq_st_expand_uni} \begin{aligned}
&\underbrace{\tilde\cH_{ii}(\bE_{i,\cdot})^\T}_{\delta_{1,i}} + \underbrace{\sum_{j\neq i}\tilde\cH_{ij}(\bE_{j,\cdot})^\T}_{\delta_{2,i}} + \underbrace{\sum_{j=1}^n\bar\cG_{ij}(\bE_{j,\cdot})^\T}_{\delta_{3,i}} + \underbrace{\sum_{j=1}^n\tilde\cG_{ij}(\bE_{j,\cdot})^\T}_{\delta_{4,i}} + \underbrace{\tilde\cP_{i,\cdot}\mathrm{vec}(\bE^\T)}_{\delta_{5,i}} \\& = \underbrace{ n^{-1}\big\{\nabla_{\bZ}h_\alpha^*(\tilde\bZ)-\nabla_{\bZ}h_\alpha^*(\bZ^*)\big\}_{i,\cdot}^\T }_{\delta_{6,i}}.
\end{aligned}
\end{equation}
We arrange the decomposition as such to emphasize that $\delta_{1,i}$--$\delta_{3,i}$ can be treated similarly to $\delta_{1,t}$--$\delta_{3,t}$ in the proof of Theorem~\ref{thm_general_theory}, and $\delta_{4,i}$--$\delta_{6,i}$ arise from statistical noise.

Let
\begin{equation*}
\tilde\cH_{LD}:=\diag(\tilde\cH_{11},\dots,\tilde\cH_{nn}), \qquad \|\bA\|_{\infty,r}:=\max_{i\in[n]}\sum_{j=1}^n\|\bA_{\cR_i,\cR_j}\|,\text{ for }\bA\in\RR^{nr\times nr}
\end{equation*}
By Lemma~\ref{coro_thm_general_convex}, \eqref{eq_interer_z_ave}, and $\varepsilon\le \epsilon$, we have that
\begin{equation*}
(1-c_0)\alpha\sigma_{\min} \le \min_{i\in[n]}\lambda_{\min}(\tilde\cH_{ii}) \le \max_{i\in[n]}\lambda_{\max}(\tilde\cH_{ii}) \le C(\alpha+\beta)\kappa\sigma_{\min},
\end{equation*}
after shrinking $c_0$ if necessary. Hence, one has $\|\tilde\cH_{LD}^{-1}\|_{\infty,r} \le \{(1-c_0)\alpha\sigma_{\min}\}^{-1}$.
Therefore, \eqref{eq_st_expand_uni} yields
\begin{equation}\label{eq_st_expand_uni_infty_bound}
\|\bE\|_{\twinf} \le \frac{1}{(1-c_0)\alpha\sigma_{\min}} \max_{i\in[n]} \sum_{\ell=2}^6\|\delta_{\ell,i}\|.
\end{equation}

We now bound $\delta_{2,i}$–$\delta_{6,i}$.

For $\delta_{2,i}$, the same argument as in the $\delta_{2,t}$ bound in the proof of Theorem~\ref{thm_general_theory} gives
\begin{equation*}
\max_{i\in[n]}\|\delta_{2,i}\| \le C\beta n^{-1} \max_{s\in[0,1]}\|\bZ_s\|_{\twinf}\|\bZ_s\|\,\|\bE\|_{\Fn} \le C\beta\sqrt r\,\kappa\sigma_{\min}\,\varepsilon\,\|\bZ^*\|_{\twinf},
\end{equation*}
where we used
$\|\bZ_s\|_{\twinf}\le (1+c_0)\|\bZ^*\|_{\twinf}$, $\|\bZ_s\|\le (1+c_0)\|\bZ^*\|$, $\|\bE\|_{\Fn}\le \varepsilon\|\bZ^*\|_{\Fn}$, and $\|\bZ^*\|=\sqrt{n\kappa\sigma_{\min}}$.

For $\delta_{3,i}$, using $\bar\cG(\bZ^*)=\zero$, Assumption~\ref{assump_gradient_lips} with $\bar\cG(\bZ)$ in place of $\cG(\bZ)$, one can obtain
\begin{equation*}
\|\bZ_s\bZ_s^\T-\bZ^*(\bZ^*)^\T\|_{\twinf} \le \|\bZ^*\|_{\twinf}\|\bE\|_{\Fn} + (\|\bZ^*\|+\|\bE\|_{\Fn})\|\bE\|_{\twinf}.
\end{equation*}
It then follows that
\begin{align*}
\max_{i\in[n]}\|\delta_{3,i}\| &\le \frac{2}{n}\max_{s\in[0,1]} \|\bar\cG(\bZ_s)-\bar\cG(\bZ^*)\|_{\twinf}\,\|\bE\|_{\Fn} \\
&\le \frac{2L_\infty}{n} \max_{s\in[0,1]} \|\bZ_s\bZ_s^\T-\bZ^*(\bZ^*)^\T\|_{\twinf}\,\|\bE\|_{\Fn} \\
&\le CL_\infty\varepsilon^2 r\kappa\sigma_{\min}\|\bZ^*\|_{\twinf} + CL_\infty\varepsilon\sqrt r\,\kappa\sigma_{\min}\|\bE\|_{\twinf}.
\end{align*}

For $\delta_{4,i}$, \eqref{assump_gradient_noise_2} and the theorem's scaling conditions give $n^{-1}\|\tilde\cG(\bZ)\|_{\infty\to 1}\le \bar\Delta_{\infty}(n,\delta)\le \alpha\sigma_{\min}/4$ uniformly over $\bZ\in\cDinf$. Since $\bZ_s\in\cDinf$ for all $s\in[0,1]$,
\begin{equation*}
\max_{i\in[n]}\|\delta_{4,i}\| \le \frac{2}{n}\max_{s\in[0,1]}\|\tilde\cG(\bZ_s)\|_{\infty\to 1}\|\bE\|_{\twinf} \le 2\bar\Delta_{\infty}(n,\delta)\|\bE\|_{\twinf} \le \frac{\alpha\sigma_{\min}}{2}\|\bE\|_{\twinf}.
\end{equation*}

\begin{remark}\label{remark_relax_noise_for_consis}
    The term $\delta_{4,i}$ can be merged with $\delta_{1,i}$, which allows for a larger level of statistical noise. In particular, we assume $\bar\Delta_{\infty}(n,\delta)\le C\alpha\sigma_{\min}$ for some constant $C$, rather than imposing the more restrictive condition $\bar\Delta_{\infty}(n,\delta)\le \alpha\sigma_{\min}/4$. When $\bar\Delta_{\infty}(n,\delta)\gg \alpha\sigma_{\min}$, handling the noise typically requires exploiting additional problem structure and more delicate analysis; see, for example,  Section~\ref{section_example2}. Here, we present a simple argument that allows $\bar\Delta_{\infty}(n,\delta)\asymp \alpha\sigma_{\min}$

Note that
\begin{equation*}
    \delta_{1,i} + \delta_{4,i} = \{\underbrace{\tilde\cH_{LD} + \tilde\cG }_{:=\tilde\cH_Q}\}_{\cR_i,}\mathrm{vec}(\bE^\T).
\end{equation*}
Here, $\tilde\cH_{LD} = \diag(\tilde\cH_{11},\dots, \tilde\cH_{nn})$. Similarly, one can invert $\tilde\cH_Q$ in \eqref{eq_st_expand_uni} to yield 
\begin{equation}
    \|\bE\|_{\twinf}\le \big\|\tilde\cH_{Q}^{-1}\big\|_{\infty,r}\times \max_{i\in[n]}\big(\big\|\delta_{2,i}\big\| + \big\|\delta_{3,i}\big\| + \big\|\delta_{5,i}\big\| +\big\|\delta_{6,i}\big\| \big).\label{eq_st_expand_uni_infty_bound_2}
\end{equation}

With Lemma~\ref{coro_thm_general_convex}, $\| \tilde\bZ - \bZ^*\|_{\Fn}\le \epsilon\|\bZ^*\|_{\Fn}\le \epsilon\sqrt{nr\kappa\sigma_{\min}}$, scaling condition $\epsilon^2r\kappa\le c_0\alpha$, and the continuity of eigenvalues, one can check that
\begin{equation}
    (1-c_0)\alpha\sigma_{\min}\le \min_{i\in[n]}\lambda_{\min}\big(\tilde\cH_{ii}\big)\le \max_{i\in[n]}\lambda_{\max}\big(\tilde\cH_{ii}\big)\le \frac{\beta\kappa\sigma_{\min}}{2}.\label{eq_bar_delta0_bound_remark}
\end{equation}The block diagonal structure of $\tilde\cH_{ii}$ implies that \eqref{eq_bar_delta0_bound_remark} also holds when replacing $\tilde\cH_{ii}$ with $\tilde\cH_{LD}$. Note that with~\eqref{assump_gradient_noise_1}, similar to bounding $\gamma_2$ above, we know $\|\tilde\cG\|\le 2\Delta_2(n,\delta)$. With $\Delta_2(n,\delta)/(\alpha\sigma_{\min})\le c_0$ for sufficiently small $c_0$, we know
\begin{equation*}
    \frac{\|\tilde\cG\|}{\lambda_{\min}(\tilde\cH_{ii})}\le \frac{2\Delta_2(n,\delta)}{\alpha\sigma_{\min}} <1.
\end{equation*} Thus, the inverse of $\tilde\cH_{LD} + \tilde\cG$ can be expressed via the Neumann series 
\begin{align*}
    (\tilde\cH_{LD} + \tilde\cG)^{-1} = \tilde\cH_{LD}^{-1} + \tilde\cH_{LD}^{-1}\tilde\cG\tilde\cH_{LD}^{-1} + \tilde\cH_{LD}^{-1}\tilde\cG\tilde\cH_{LD}^{-1}\tilde\cG\tilde\cH_{LD}^{-1} + \cdots,
\end{align*}and 
\begin{equation*}
    \big\|(\tilde\cH_{LD} + \tilde\cG)^{-1}\big\|\le \sum_{l=0}^{\infty}\big\|\tilde\cH_{LD}^{-1}\big\|\big(\|\tilde\cG\|\|\tilde\cH_{LD}^{-1}\|\big)^l\le \frac{C}{\alpha\sigma_{\min}}.
\end{equation*}
Next, by the second part of~\eqref{assump_gradient_noise_2}, we know \begin{equation*}\|\tilde\cG\|_{\infty,r} \le \frac1n\max_{\bZ = \bZ^*+s\bE,s\in[0,1]}2\|\cG(\bZ) - \EE\cG(\bZ)\|_{\infty\to 1} \le 2 \bar\Delta_{\infty}(n,\delta)\le C\alpha\sigma_{\min}\end{equation*}
With $(\tilde\cH_{LD}^{-1})_{\cR_i,} = (\zero_{r\times (i-1)r}, \tilde\cH_{ii}^{-1},\zero_{r\times (n-i)r})$, $\|\tilde\cH_{LD}^{-1}\|_{\infty,r} \le \lambda_{\min}(\tilde\cH_{ii})^{-1}\le \{(1-c_0)\alpha\sigma_{\min}\}^{-1}$ Consequently, $\tilde\cH_{LD}^{-1}(\tilde\cG\tilde\cH_{LD}^{-1})^l$ can be bounded as follows
\begin{equation}\label{eq_bomega_upper1}
    \big\|\tilde\cH_{LD}^{-1}(\tilde\cG\tilde\cH_{LD}^{-1})^l\big\|_{\infty,r}\le\big\|\tilde\cH_{LD}^{-1}\big\|_{\infty,r}\big\|\Big(\big\|\tilde\cG\big\|_{\infty,r}\big\|\tilde\cH_{LD}^{-1}\big\|_{\infty,r}\Big)^l\le\frac{1}{(1-c_0)\alpha\sigma_{\min}}\Big(\frac{C}{1-c_0}\Big)^l.
\end{equation}
Moreover, since $\|\cdot\|_{\infty,r}\le \sqrt{n}\max_{i\in[n]}\|(\cdot)_{\cR_i,}\|$, we can bound it by 
\begin{equation}\label{eq_bomega_upper2}
    \big\|\tilde\cH_{LD}^{-1}(\tilde\cG\tilde\cH_{LD}^{-1})^l\big\|_{\infty,r}\le\sqrt{n}\max_{i\in[n]}\big\|\tilde\cH_{ii}^{-1}\big\|(\big\|\tilde\cG\big\|\big\|\tilde\cH_{LD}^{-1}\big\|\big)^l\le \frac{\sqrt{n}}{(1-c_0)\alpha\sigma_{\min}}\Big(\frac{2\Delta_2(n,\delta)}{\alpha\sigma_{\min}}\Big)^l.
\end{equation}

Let \begin{equation*}l_0 = \left\lceil \frac{\log n}{2\log \big\{4C\alpha\sigma_{\min}/\Delta_2(n,\delta)\big\}}\right\rceil = o(\log n)\end{equation*} 
as $\Delta_2(n,\delta)/(\alpha\sigma_{\min})\le c_0$ for any sufficiently small $c_0$. Use \eqref{eq_bomega_upper1} when $l\le l_0$ and \eqref{eq_bomega_upper2} when $l>l_0$ to bound $\|(\tilde\cH_{LD}+\tilde\cG)^{-1}\|_{\infty,r}$ by
\begin{equation}
    \|(\tilde\cH_{LD}+\tilde\cG)^{-1}\|_{\infty,r}\le \sum_{l=0}^{\infty}\big\|\tilde\cH_{LD}^{-1}(\tilde\cG\tilde\cH_{LD}^{-1})^l\big\|_{\infty,r}\le \frac{C^{l_0}}{\alpha \sigma_{\min}}\le \frac{n^{o(1)}}{\alpha\sigma_{\min}}.\label{eq_bar_delta1_bound}
\end{equation}
This bound depends on $\Delta_2(n,\delta)/(\alpha\sigma_{\min})$. When $\Delta_2(n,\delta)/(\alpha\sigma_{\min})\asymp n^{-c}$ for any constant $c$, we know \begin{equation*}\|(\tilde\cH_{LD}+\tilde\cG)^{-1}\|_{\infty,r}\le \frac{C}{\alpha\sigma_{\min}}.\end{equation*}
For simplicity, we present Theorem~\ref{thm_general_theory_noisy_Z} under small noise condition $\bar\Delta_{\infty}(n,\delta)\le \alpha\sigma_{\min}/4$.

\end{remark}

For $\delta_{5,i}$, we first compute the second-order derivative of $\nabla_{\bz}^2p^*_{\alpha}(\bZ)$ as
\begin{equation*}
\nabla_{\bz}^2 p_\alpha^*(\bZ) = \frac{\alpha n}{2}\sum_{k=1}^r\sum_{l=1}^r (\bvartheta_{lk}-\bvartheta_{kl})(\bvartheta_{lk}-\bvartheta_{kl})^\T,
\end{equation*}
where for each $lk,\in[r]$, $\bvartheta_{lk} = n^{-1/2}\big(z_{1l}^*\be_k^\T,\dots,z_{nl}^*\be_k^\T\big)^\T$. Hence $\tilde\cP=\nabla_{\bz}^2 p_\alpha^*(\bZ^*)$ is deterministic and subsequently
\begin{align*}
\big\|\tilde\cP_{i,\cdot}\mathrm{vec}(\bE^\T)\big\| &= \Big\| \frac{\alpha}{2n}\sum_{k=1}^r\sum_{l=1}^r \big(z_{il}^*\be_k^\T-z_{ik}^*\be_l^\T\big) \big((\bZ^*)^\T\tilde\bZ-\tilde\bZ^\T\bZ^*\big)_{l,k} \Big\| \\
&\le \frac{\alpha}{2n} \Big\{\sum_{k,l\in[r]}\|z_{il}^*\be_k^\T-z_{ik}^*\be_l^\T\|^2\Big\}^{1/2} \big\|(\bZ^*)^\T\tilde\bZ-\tilde\bZ^\T\bZ^*\big\|_{\Fn}\\
&\le C\alpha r\kappa\sigma_{\min}\,\varepsilon\,\|\bZ^*\|_{\twinf},
\end{align*}
uniformly over $i\in[n]$, where we used $\big\|(\bZ^*)^\T\tilde\bZ-\tilde\bZ^\T\bZ^*\big\|_{\Fn} \le 2\|\bZ^*\|\,\|\bE\|_{\Fn}$.

For $\delta_{6,i}$, by \eqref{eq_st_score_unf},
\begin{equation*}
\max_{i\in[n]}\|\delta_{6,i}\| \le n^{-1}\|\nabla_{\bZ}h_\alpha^*(\tilde\bZ)\|_{\twinf} + n^{-1}\|\nabla_{\bZ}h_\alpha^*(\bZ^*)\|_{\twinf} \le 4\Delta_\infty(n,\delta)\|\bZ^*\|_{\twinf}.
\end{equation*}

Substituting these bounds into \eqref{eq_st_expand_uni_infty_bound} and multiplying by $(1-c_0)\alpha\sigma_{\min}$ yields
\begin{align*}
    (1-c_0)\alpha\sigma_{\min}\|\bE\|_{\twinf}\le &\,\max_{i\in[n]}\big(\|\delta_{2,i}\|+ \|\delta_{3,i}\|+ \|\delta_{4,i}\| + \|\delta_{5,i}\|+ \|\delta_{6,i}\|\big)\\\le &\, C\alpha\sigma_{\min}\left\{\Big(\frac{\beta}{\alpha}+\sqrt r\Big)\sqrt r\,\kappa\,\varepsilon + \frac{\Delta_\infty(n,\delta)}{\alpha\sigma_{\min}}\right\}\|\bZ^*\|_{\twinf}\\
&\quad +\frac{\alpha\sigma_{\min}}{2}\|\bE\|_{\twinf} + CL_\infty\varepsilon\sqrt r\,\kappa\sigma_{\min}\|\bE\|_{\twinf}.
    \end{align*}
 Here the $\varepsilon^2 r\kappa$ contribution from $\delta_{3,i}$ is absorbed into the displayed $\zeta_r\varepsilon$ term by $\varepsilon\sqrt r\,\kappa/\alpha\le c_0$. By \eqref{eq_thm_general_theory_noisy_Z_scaling}, we have the following
\begin{equation*}
\frac{CL_\infty\varepsilon\sqrt r\,\kappa\sigma_{\min}}{(1-c_0)\alpha\sigma_{\min}} \le \frac{CL_{\infty}c_0}{1-c_0} \le \frac{1}{8},\quad\text{ and }\quad\frac{1}{2(1-c_0)}\le \frac{5}{8}.
\end{equation*} 
Therefore we obtain
\begin{equation*}
\Big(1-\frac{3}{4}\Big)\frac{\|\bE\|_{\twinf}}{\|\bZ^*\|_{\twinf}} \le C\Big(\frac{\beta}{\alpha}+\sqrt r\Big)\sqrt r\,\kappa\,\varepsilon + C\frac{\Delta_\infty(n,\delta)}{\alpha\sigma_{\min}}.
\end{equation*} By \eqref{eq_thm_general_theory_noisy_Z_scaling}, the right-hand side is at most $\epsilon/8$ after choosing $c_0$ small.
\begin{equation}\label{eq_interer_z_unf}
\frac{\|\tilde\bZ-\bZ^*\|_{\twinf}}{\|\bZ^*\|_{\twinf}} = \frac{\|\bE\|_{\twinf}}{\|\bZ^*\|_{\twinf}} \le \frac{\epsilon}{2},
\end{equation}after shrinking $c_0$ if necessary, where the last inequality uses $\zeta_r\varepsilon\le c_0\epsilon$ and $\Delta_2(n,\delta)/(\alpha\sigma_{\min})\le c_0\epsilon$, which follows from
\eqref{eq_thm_general_theory_noisy_Z_scaling}.

Combining \eqref{eq_interer_z_ave} and \eqref{eq_interer_z_unf}, we see that $\tilde\bZ$ is a strict interior point of $\cDde$. Therefore, $\tilde\bZ$ is a strict interior local minimizer of
\begin{equation*}
\bZ\mapsto \|\nabla_{\bZ}h_\alpha^*(\bZ)\|_{\twinf}^2 \qquad \text{over }\cDde. \end{equation*}

We next show that $h_\alpha^*$ is strongly convex on $\cDd$. By Lemma~\ref{lemma_general_convex},
\begin{align*}
\min_{\bZ\in\cDd} \lambda_{\min}\!\Big( n^{-1}\nabla_{\bz}^2 h_\alpha^*(\bZ)-2n^{-1}\cG(\bZ)\otimes \bI_r \Big) &\ge \alpha\sigma_{\min}-4n^{-1}\epsilon^2\|\bZ^*\|_{\Fn}^2 \\
&\ge \alpha\sigma_{\min}-4\epsilon^2 r\kappa\sigma_{\min}\\ &\ge (1-c_0)\alpha\sigma_{\min}.
\end{align*}Here, the last inequality follows from $\epsilon\le c_0\alpha/(\kappa\sqrt{r})$ and shrinking $c_0$ if necessary. Moreover, for $\bZ\in\cDd\subseteq\cDinf$,~\eqref{assump_gradient_noise_1}, Assumption~\ref{assump_gradient_lips_1} with $\bar\cG(\bZ)$ in place of $\cG(\bZ)$, and Lemma~\ref{lemma_zz_to_z} give
\begin{align*}
\frac1n\|\cG(\bZ)\| &\le \frac1n\|\tilde\cG(\bZ)\| + \frac1n\|\bar\cG(\bZ)-\bar\cG(\bZ^*)\| \\ &\le \Delta_2(n,\delta) + \frac{L_2}{n}\|\bZ\bZ^\T-\bZ^*(\bZ^*)^\T\|_{\Fn} \\ &\le \Delta_2(n,\delta)+CL_2\epsilon\sqrt r\,\kappa\sigma_{\min} \le c_0\alpha\sigma_{\min},
\end{align*}
after shrinking $c_0$ if necessary, where the last steps uses \eqref{eq_thm_general_theory_noisy_Z_scaling}. Hence, by Weyl's inequality, one can obtain
\begin{equation*}
\min_{\bZ\in\cDd} \lambda_{\min}\!\big(n^{-1}\nabla_{\bz}^2 h_\alpha^*(\bZ)\big) \ge \frac{\alpha\sigma_{\min}}{4} > 0.
\end{equation*}

At this point, we have shown that $\tilde\bZ$ lies strictly inside the auxiliary neighborhood. It remains to upgrade this minimizing property to the first-order condition. In particular, we introduce the following lemma.
\begin{lemma}\label{lemma_stationary_tildeZ}
Let $\cD\subseteq \RR^{n\times r}$ be compact, and let $\tilde\bZ\in\cD$ be a strict interior local minimizer of
\begin{equation*}
\bZ\mapsto \|\nabla_{\bZ}h_\alpha^*(\bZ)\|_{\twinf}^2
\end{equation*}
over $\cD$. If $\nabla_{\bz}^2 h_\alpha^*(\tilde\bZ)$ is positive definite, then \begin{equation*}
\nabla_{\bZ}h_\alpha^*(\tilde\bZ)=\zero.
\end{equation*} \end{lemma}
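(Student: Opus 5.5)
We argue by contradiction, using a descent direction built from the gradient itself. Write $\bg := \nabla_{\bZ}h_\alpha^*(\tilde\bZ)$ and $\bH := \nabla_{\bz}^2 h_\alpha^*(\tilde\bZ)$, which is positive definite by hypothesis, hence invertible. Suppose $\bg\neq\zero$. Since $\tilde\bZ$ is an interior point, the Newton-type perturbations
\begin{equation*}
\bZ_\tau := \tilde\bZ - \tau\bD,\qquad \mathrm{vec}(\bD^\T) := \bH^{-1}\mathrm{vec}(\bg^\T),
\end{equation*}
lie in $\cD$ for all sufficiently small $\tau>0$. We will show that $\Phi(\bZ):=\|\nabla_{\bZ}h_\alpha^*(\bZ)\|_{\twinf}^2$ strictly decreases along $\tau\mapsto\bZ_\tau$ for small $\tau$. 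This contradicts local minimality.

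The key computation is a first-order Taylor expansion of the gradient map, which is differentiable because $h_\alpha^*$ is twice differentiable. In vectorized form,
\begin{equation*}
\mathrm{vec}\big(\{\nabla_{\bZ}h_\alpha^*(\bZ_\tau)\}^\T\big) = \mathrm{vec}(\bg^\T) - \tau\bH\,\mathrm{vec}(\bD^\T) + o(\tau) = (1-\tau)\,\mathrm{vec}(\bg^\T) + o(\tau).
\end{equation*}
Hence $\nabla_{\bZ}h_\alpha^*(\bZ_\tau) = (1-\tau)\bg + \bm{r}_\tau$ with $\|\bm{r}_\tau\|_{\Fn}=o(\tau)$. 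Every row norm is bounded by the Frobenius norm, so the triangle inequality applied row by row gives
\begin{equation*}
\|\nabla_{\bZ}h_\alpha^*(\bZ_\tau)\|_{\twinf} \le (1-\tau)\|\bg\|_{\twinf} + o(\tau).
\end{equation*}
Since $\|\bg\|_{\twinf}>0$, the right-hand side is strictly less than $\|\bg\|_{\twinf}$ for all small $\tau>0$. Therefore $\Phi(\bZ_\tau)<\Phi(\tilde\bZ)$, contradicting that $\tilde\bZ$ is a local minimizer. We conclude $\bg=\zero$.

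The main point needing care is that $\Phi$ is nonsmooth because of the maximum over rows, so we cannot differentiate it directly. This is why we use the Newton direction: it scales the whole gradient uniformly by $(1-\tau)$, so every row shrinks simultaneously and the maximum decreases regardless of which row attains it. The $o(\tau)$ remainder is uniform over rows, since it is controlled in Frobenius norm. We use only differentiability of $\nabla h_\alpha^*$ at $\tilde\bZ$, which holds whenever $\cL$ is twice differentiable; the penalty $p_\alpha^*$ is a quadratic.
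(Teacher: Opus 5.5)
Your proof is correct and follows the paper's argument essentially step for step. The paper also supposes $\nabla_{\bZ}h_\alpha^*(\tilde\bZ)\neq\zero$, moves along the Newton direction $\bd=-\bH^{-1}\mathrm{vec}(\{\nabla_{\bZ}h_\alpha^*(\tilde\bZ)\}^\T)$, writes the new gradient as $(1-t)\nabla_{\bZ}h_\alpha^*(\tilde\bZ)+o(t)$, and concludes that $\|\cdot\|_{\twinf}$ strictly decreases, contradicting local minimality; the only difference is that the paper first re-derives positive definiteness of the Hessian from Lemma~\ref{lemma_general_convex} and the local scaling conditions, whereas you correctly take it as the stated hypothesis.
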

\begin{proof} See Section~\ref{supp_sec_prove_lemma_stationary_tildeZ}. \end{proof}

Since $\tilde\bZ\in\cDde$ is a strict interior local minimizer of $\|\nabla_{\bZ}h_\alpha^*(\bZ)\|_{\twinf}^2$ and $n^{-1}\nabla_{\bz}^2 h_\alpha^*(\tilde\bZ)\succeq \alpha\sigma_{\min}\bI/4$ within $\cDd$, Lemma~\ref{lemma_stationary_tildeZ} yields
\begin{equation}\label{eq_first_order_tilde_bZ} \nabla_{\bZ}h_\alpha^*(\tilde\bZ)=\zero.
\end{equation}
In addition, $h_\alpha^*(\cdot)$ is strongly convex on $\cDd$, so $\tilde\bZ$ is the unique minimizer of $h_\alpha^*(\cdot)$ on $\cDd$.


\medskip
\paragraph{\it \underline{ Step 2: Sharpen the statistical rates and identify $\tilde\bZ=\hat\bZ\hat\bR$.}} We now sharpen the bounds for $\|\bE\|_{\Fn}$ and $\|\bE\|_{\twinf}$, where recall $\bE=\tilde\bZ-\bZ^*$. We first sharpen the Frobenius error. Returning to \eqref{eq_st_expand_ave}, Step~1 gives $\nabla_{\bZ}h_{\alpha}^*(\tilde\bZ)=\zero$, so
\begin{equation*}
\gamma_4 = -\mathrm{vec}\!\Bigl(\bigl\{n^{-1}\nabla_{\bZ}h_{\alpha}^*(\bZ^*)\bigr\}^{\T}\Bigr).
\end{equation*}
Since $\bar\cG(\bZ^*)=\zero$ and $\nabla_{\bZ}p_{\alpha}^*(\bZ^*)=\zero$, one then has by \eqref{assump_gradient_noise_1} that
\begin{equation*}
n^{-1}\|\nabla_{\bZ}h_{\alpha}^*(\bZ^*)\|_{\Fn} = 2n^{-1}\|\tilde\cG(\bZ^*)\bZ^*\|_{\Fn} \le 2\Delta_2(n,\delta)\|\bZ^*\|_{\Fn}.
\end{equation*}
Hence, using the bounds for $\gamma_2$ and $\gamma_3$ from Step~1 together with $\lambda_{\min}(\tilde\cH)\ge \alpha\sigma_{\min}/2$, it holds that \begin{equation}\label{eq_step2_sharp_l2_pre}
\|\bE\|_{\Fn} \le C\frac{\Delta_2(n,\delta)}{\alpha\sigma_{\min}}\|\bZ^*\|_{\Fn} + C\frac{\Delta_2(n,\delta)}{\alpha\sigma_{\min}}\|\bE\|_{\Fn} + C\frac{\|\bZ^*\|}{\alpha\sigma_{\min}n}\|\bE\|_{\Fn}^2.
\end{equation}
Dividing both sides by $\|\bZ^*\|_{\Fn}$, and using the Step~1 bound ${\|\bE\|_{\Fn}}/{\|\bZ^*\|_{\Fn}}\le {\varepsilon}/{2}$,
we get
\begin{equation*}
\frac{\|\bZ^*\|}{\alpha\sigma_{\min}n}\|\bE\|_{\Fn} \le \frac{\|\bZ^*\|\|\bZ^*\|_{\Fn}}{\alpha\sigma_{\min}n}\cdot \frac{\varepsilon}{2} \le C\frac{\sqrt r\,\kappa}{\alpha}\,\varepsilon \le Cc_0.
\end{equation*}
Since $\Delta_2(n,\delta)/(\alpha\sigma_{\min})\le c_0$, after shrinking $c_0$ if necessary, the last two terms in \eqref{eq_step2_sharp_l2_pre} can be absorbed into the left-hand side. Therefore,
\begin{equation}
\frac{\|\bE\|_{\Fn}}{\|\bZ^*\|_{\Fn}} \le C\frac{\Delta_2(n,\delta)}{\alpha\sigma_{\min}}.
\label{eq_Z_tilde_consis}
\end{equation}
This sharper Frobenius bound also yields
\begin{equation}\begin{aligned}
\frac1n\|\cG(\tilde\bZ)\| \le&\, \frac1n\|\tilde\cG(\tilde\bZ)\| + \frac1n\|\bar\cG(\tilde\bZ)-\bar\cG(\bZ^*)\| \\\le&\, \Delta_2(n,\delta) + CL_2\,n^{-1}\|\tilde\bZ\tilde\bZ^\T-\bZ^*(\bZ^*)^\T\|_{\Fn} \\\le&\, \frac{\alpha\sigma_{\min}}{8},\end{aligned}\label{eq_bound_GZ_clean}
\end{equation}
after shrinking $c_0$ further, where we used Lemma~\ref{lemma_zz_to_z} and \eqref{eq_Z_tilde_consis}, and the last inequality again uses \(\sqrt r\,\kappa\Delta_2(n,\delta)/(\alpha\sigma_{\min})\le c_0\).

We next sharpen the row-wise $\ell_{\infty}$ bound. Returning to \eqref{eq_st_expand_uni_infty_bound}, the only difference from Step~1 is that we may now use \eqref{eq_Z_tilde_consis} in place of the crude bound $\|\bE\|_{\Fn}\le \varepsilon\|\bZ^*\|_{\Fn}$. The same calculations as in Step~1 give
\begin{align*}
\max_{i\in[n]}\|\delta_{2,i}\| &\le C\beta n^{-1}\|\bE\|_{\Fn}\max_{s\in[0,1]}\|\bZ^*+s\bE\|_{\twinf}\|\bZ^*+s\bE\| \\ &\le C\beta\sqrt r\,\kappa\sigma_{\min}\, \frac{\Delta_2(n,\delta)}{\alpha\sigma_{\min}}\, \|\bZ^*\|_{\twinf}, \end{align*}
and similarly
\begin{equation*}
\max_{i\in[n]}\|\delta_{5,i}\| \le C\alpha\sqrt r\,\kappa\sigma_{\min}\, \frac{\Delta_2(n,\delta)}{\alpha\sigma_{\min}}\, \|\bZ^*\|_{\twinf}.
\end{equation*}
For the deterministic term $\delta_{3,i}$, the bound in Step~1 combined with \eqref{eq_Z_tilde_consis} implies
\begin{equation*}
\max_{i\in[n]}\|\delta_{3,i}\| \le \frac{\alpha\sigma_{\min}}{8}\|\bE\|_{\twinf} +  C\sqrt r\,\kappa\sigma_{\min}\, \frac{\Delta_2(n,\delta)}{\alpha\sigma_{\min}}\, \|\bZ^*\|_{\twinf},
\end{equation*}after shrinking $c_0$ if necessary.
The noise term $\delta_{4,i}$ and score term $\delta_{6,i}$ remain unchanged:
\begin{equation*}
\max_{i\in[n]}\|\delta_{4,i}\| \le \frac{\alpha\sigma_{\min}}{2}\|\bE\|_{\twinf}, \qquad \max_{i\in[n]}\|\delta_{6,i}\|
\le 4\Delta_{\infty}(n,\delta)\|\bZ^*\|_{\twinf}.
\end{equation*}
Substituting these bounds into \eqref{eq_st_expand_uni_infty_bound}, and shrinking $c_0$ so that \begin{equation*}
\frac{1}{(1-c_0)\alpha\sigma_{\min}} \Big(\frac{\alpha\sigma_{\min}}{8}+\frac{\alpha\sigma_{\min}}{2}\Big) \le \frac{3}{4},
\end{equation*}
we arrive at
\begin{equation*}
\frac{\|\bE\|_{\twinf}}{\|\bZ^*\|_{\twinf}} \le \frac{3}{4}\frac{\|\bE\|_{\twinf}}{\|\bZ^*\|_{\twinf}} + C\Big(\frac{\beta}{\alpha}+\sqrt r\Big)\sqrt r\,\kappa\, \frac{\Delta_2(n,\delta)}{\alpha\sigma_{\min}} + C\frac{\Delta_{\infty}(n,\delta)}{\alpha\sigma_{\min}}.
\end{equation*}
Using the theorem scaling condition
\begin{equation*}
\Big(\frac{\beta}{\alpha}\sqrt{\kappa}+\sqrt r\Big)\sqrt r\,\kappa\, \frac{\Delta_2(n,\delta)}{\Delta_{\infty}(n,\delta)} \le c_0,
\end{equation*}
we thus conclude that
\begin{equation}
\frac{\|\bE\|_{\twinf}}{\|\bZ^*\|_{\twinf}} \le C\frac{\Delta_{\infty}(n,\delta)}{\alpha\sigma_{\min}}. \label{eq_Z_tilde_consis_inf}
\end{equation}

We now show that the penalty must vanish at $\tilde\bZ$:
\begin{equation} p_{\alpha}^*(\tilde\bZ)=0,\label{eq_penalty_zero_tilde_Z} \end{equation} so that $\tilde\bZ$ is also stationary for the original loss.
Suppose not. Let $\tilde\bR\in\argmin_{\bR\in\cO^r}\|\tilde\bZ\bR-\bZ^*\|_{\Fn}$. By \eqref{eq_Z_tilde_consis},
\begin{equation*}
\frac{\|\tilde\bZ-\bZ^*\|_{\Fn}}{\sigma_r(\bZ^*)} \le C\frac{\|\bZ^*\|_{\Fn}}{\sigma_r(\bZ^*)} \frac{\Delta_2(n,\delta)}{\alpha\sigma_{\min}} \le C\sqrt {r\,\kappa}\,\frac{\Delta_2(n,\delta)}{\alpha\sigma_{\min}} \le Cc_0<0.01
\end{equation*} for sufficiently small $c_0$. Hence Lemma~\ref{lemma_procruste_problem_perturb} applies and yields \begin{equation*}
\|\tilde\bR-\bI_r\| \le \|\tilde\bR-\bI_r\|_{\Fn} \le C\frac{\|\tilde\bZ-\bZ^*\|_{\Fn}}{\sigma_r(\bZ^*)} \le C\sqrt{r\,\kappa}\,\frac{\Delta_2(n,\delta)}{\alpha\sigma_{\min}} \le Cc_0\,\varepsilon,
\end{equation*}
where the last step uses the lower bound for $\varepsilon$ in \eqref{eq_lower_bound_varepsilon}. Now combine this with the strict interior bounds from Step~1:
\begin{equation*}
\|\tilde\bZ-\bZ^*\|_{\Fn}\le \frac{\varepsilon}{2}\|\bZ^*\|_{\Fn}, \qquad
\|\tilde\bZ-\bZ^*\|_{\twinf}\le \frac{\epsilon}{2}\|\bZ^*\|_{\twinf}. \end{equation*}
Then, one can check that
\begin{align*}
\|\tilde\bZ\tilde\bR-\bZ^*\|_{\Fn} &\le \|\tilde\bZ-\bZ^*\|_{\Fn} + \|\tilde\bZ\|\,\|\tilde\bR-\bI_r\| \\
&\le \frac{\varepsilon}{2}\|\bZ^*\|_{\Fn} + (\|\bZ^*\|+\|\tilde\bZ-\bZ^*\|_{\Fn})\,Cc_0\varepsilon \\
&\le \Big(\frac12+Cc_0(1+\varepsilon)\Big)\varepsilon\|\bZ^*\|_{\Fn} < \varepsilon\|\bZ^*\|_{\Fn},
\end{align*}
for $c_0$ small enough. Likewise, since $\varepsilon\le \epsilon$,
\begin{align*}
\|\tilde\bZ\tilde\bR-\bZ^*\|_{\twinf} &\le \|\tilde\bZ-\bZ^*\|_{\twinf} + \|\tilde\bZ\|_{\twinf}\|\tilde\bR-\bI_r\| \\ &\le \frac{\epsilon}{2}\|\bZ^*\|_{\twinf} + (\|\bZ^*\|_{\twinf}+\|\tilde\bZ-\bZ^*\|_{\twinf})\,Cc_0\varepsilon \\ &\le \frac{\epsilon}{2}\|\bZ^*\|_{\twinf} + Cc_0\varepsilon\|\bZ^*\|_{\twinf} < \epsilon\|\bZ^*\|_{\twinf}.
\end{align*}
Hence $\tilde\bZ\tilde\bR\in\cDde\subseteq \cDd$.
By Lemma~\ref{lemma_ortho_align}, we have $p_{\alpha}^*(\tilde\bZ\tilde\bR)=0$. Since $(\tilde\bZ\tilde\bR)(\tilde\bZ\tilde\bR)^\T=\tilde\bZ\tilde\bZ^\T$, it follows that \begin{equation*}
h_{\alpha}^*(\tilde\bZ\tilde\bR) = \cL(\tilde\bZ\tilde\bZ^\T) < \cL(\tilde\bZ\tilde\bZ^\T)+p_{\alpha}^*(\tilde\bZ) = h_{\alpha}^*(\tilde\bZ),
\end{equation*}
contradicting the fact that $\tilde\bZ$ is the unique minimizer of $h_{\alpha}^*(\cdot)$ within $\cDd$ as establish at the end of Step~1. This proves \eqref{eq_penalty_zero_tilde_Z}.
Next, since Step~1 established $\nabla_{\bZ}h_{\alpha}^*(\tilde\bZ)=\zero$, and the gradient of $p_{\alpha}^*(\cdot)$ vanishes whenever $p_{\alpha}^*(\cdot)=0$, we conclude that \begin{equation}
\nabla_{\bZ}\cL(\tilde\bZ\tilde\bZ^\T)=2\cG(\tilde\bZ)\tilde\bZ=\zero. \label{eq_tilde_first_order_L}
\end{equation}

We finally establish the equivalence between $\tilde\bZ$  and $\hat\bZ\hat\bR$. Recall that $\hat\bZ\in\argmin_{\bZ\in\cDinf}\cL(\bZ\bZ^\T)$, and let $\hat\bR\in\argmin_{\bR\in\cO^r}\|\hat\bZ\bR-\bZ^*\|_{\Fn}$. By definition, $\hat\bZ\hat\bR\in\cDd$, and Lemma~\ref{lemma_ortho_align} gives $p_{\alpha}^*(\hat\bZ\hat\bR)=0.$
Since $\tilde\bZ\in\cDd\subseteq \cDinf$ and $\hat\bZ$ minimizes $\cL(\bZ\bZ^\T)$ over $\cDinf$, we have $\cL(\hat\bZ\hat\bZ^\T)\le \cL(\tilde\bZ\tilde\bZ^\T)$. On the other hand, Step~1 showed that $\tilde\bZ$ is the unique minimizer of $h_{\alpha}^*(\cdot)$ on $\cDd$. It therefore follows that 
\begin{equation*}
h_{\alpha}^*(\tilde\bZ)\le h_{\alpha}^*(\hat\bZ\hat\bR)=\cL(\hat\bZ\hat\bZ^\T).
\end{equation*}
Invoke \eqref{eq_penalty_zero_tilde_Z} and we have $h_{\alpha}^*(\tilde\bZ)=\cL(\tilde\bZ\tilde\bZ^\T)$, which then implies $\cL(\tilde\bZ\tilde\bZ^\T) \le \cL(\hat\bZ\hat\bZ^\T) \le \cL(\tilde\bZ\tilde\bZ^\T)$.
Thus equality holds throughout, so
\begin{equation*}
h_{\alpha}^*(\tilde\bZ)=h_{\alpha}^*(\hat\bZ\hat\bR).
\end{equation*}
By uniqueness of the minimizer of $h_{\alpha}^*(\cdot)$ on $\cDd$, we conclude that $\tilde\bZ=\hat\bZ\hat\bR$. Consequently,
\begin{equation}\begin{aligned}
&\|\tilde\bZ-\bZ^*\|_{\Fn} = \|\hat\bZ\hat\bR-\bZ^*\|_{\Fn} \le C\frac{\Delta_2(n,\delta)}{\alpha\sigma_{\min}}\|\bZ^*\|_{\Fn}, \\ &\|\tilde\bZ-\bZ^*\|_{\twinf} = \|\hat\bZ\hat\bR-\bZ^*\|_{\twinf} \le C\frac{\Delta_{\infty}(n,\delta)}{\alpha\sigma_{\min}}\|\bZ^*\|_{\twinf}.
\end{aligned}\label{eq_error_zt_tilde_z_est}
\end{equation}This proves the \eqref{eq_optimum_Z_error_bound}, and also shows that the aligned solution $\hat\bZ\hat\bR$ is unique.

\medskip
\paragraph{\it \underline{ Step 3: $\ell_2$- and $\ell_{\infty}$-error contractions for $\bZ^t$.}} It remains to prove that the iterates contract toward the local stationary point $\tilde\bZ$ constructed in Steps~1--2. Unlike the noiseless case, in the noisy setting the iterates are not expected to contract exactly to $\bZ^*$; instead, Step~2 identified $\tilde\bZ=\hat\bZ\hat\bR$ and showed that $\tilde\bZ$ is already within statistical error of $\bZ^*$. Therefore, once we prove geometric contraction toward $\tilde\bZ$, the theorem follows by combining this contraction with \eqref{eq_error_zt_tilde_z_est} and the triangle inequality. Moreover, implied by the strong convexity of $h_{\alpha}^*(\cdot)$ within $\cDd$ and uniqueness of $\tilde\bZ$ as the minimizer, establishing convergence to $\tilde\bZ$ shall be a reasonable choice.

To absorb the statistical shift from $\bZ^*$ to $\tilde\bZ$, we slightly enlarge the initialization radii by the estimator error from Step~2 and define the effective radii
\begin{equation*}
\phi_n^\dagger:=\phi_n+C\frac{\Delta_2(n,\delta)}{\alpha\sigma_{\min}}, \qquad \psi_n^\dagger:=\psi_n+C\frac{\Delta_\infty(n,\delta)}{\alpha\sigma_{\min}},
\end{equation*}
where $C>0$ is a sufficiently large universal constant. 
We use the notation $\phi_n^\dagger$ and $\psi_n^\dagger$ to emphasize that the convergence to $\tilde\bZ$ admits an additional noise term. By the initialization bounds \eqref{eq_sym_init_Z} and \eqref{eq_sym_init_Z_uniform}, the scaling conditions in Theorem~\ref{thm_general_theory_noisy_Z}, and the estimator bounds \eqref{eq_error_zt_tilde_z_est}, after enlarging $C$ and shrinking $c_0$ if necessary, the effective radii satisfy the same scaling conditions:
\begin{equation}\label{eq_step3_dagger_scaling}
\phi_n^\dagger\le \frac{2}{3}\epsilon\wedge c_0\frac{\alpha}{\kappa\sqrt r}, \qquad \psi_n^\dagger\le \frac{2}{3}\epsilon, \qquad \frac{\beta}{\alpha}\kappa^{3/2}\, \sqrt r\, \frac{\phi_n^\dagger}{\psi_n^\dagger}\le c_0.
\end{equation}
For the last inequality, we use 
\begin{equation*}
    \frac{\phi_n^\dagger}{\psi_n^\dagger} \le C\left\{\frac{\phi_n}{\psi_n} +\frac{\Delta_2(n,\delta)}{\Delta_\infty(n,\delta)}\right\},
\end{equation*}
where the first term on the right side is bounded by the condition on the initialization condition in the theorem, and the second term can be bounded by the theorem's scaling condition.
Based on the above discussion, we know that to establish the theorem, it only suffices to show that the iterates contract geometrically toward $\tilde\bZ$ with these effective radii. In particular, we show in the following that for all $t\ge 0$,
\begin{equation}
\disttwo(\bZ^t,\tilde\bZ)\le \rho^t\phi_n^\dagger\|\bZ^*\|_{\Fn}, \qquad \distinf(\bZ^t,\tilde\bZ)\le \rho^t\psi_n^\dagger\|\bZ^*\|_{\twinf}, \label{eq_error_zt_tilde_z}
\end{equation}
where $\rho=1-\eta\alpha\sigma_{\min}/4$.
The base case $t=0$ follows from the triangle inequality and \eqref{eq_error_zt_tilde_z_est}:
\begin{equation*}
\disttwo(\bZ^0,\tilde\bZ) \le \disttwo(\bZ^0,\bZ^*)+\|\tilde\bZ-\bZ^*\|_{\Fn} \le \phi_n^\dagger\|\bZ^*\|_{\Fn}, \end{equation*}
and similarly,
\begin{equation*}
\distinf(\bZ^0,\tilde\bZ) \le \distinf(\bZ^0,\bZ^*)+\|\tilde\bZ-\bZ^*\|_{\twinf} \le \psi_n^\dagger\|\bZ^*\|_{\twinf}.
\end{equation*}

Now assume \eqref{eq_error_zt_tilde_z} holds at iteration $t$. To establish the error contraction in \eqref{eq_error_zt_tilde_z}, we align the current iterate with the new target $\tilde\bZ$ and work under the corresponding alignment error. In particular, for each $t\ge 0$, define
\begin{equation*}
\bR_t^\dagger:=\argmin_{\bR\in\cO^r}\|\bZ^t\bR-\tilde\bZ\|_{\Fn}, \qquad \widetilde{\bZ}^{t}:=\bZ^t\bR_t^\dagger, \qquad \tilde\bE_t:=\widetilde{\bZ}^{t}-\tilde\bZ.
\end{equation*}
By Lemma~\ref{lemma_ortho_align}, we have $\tilde\bZ^\T\widetilde{\bZ}^{t}=\widetilde{\bZ}^{t\T}\tilde\bZ$.
To reuse the contraction argument from Step~1 with $\tilde\bZ$ in place of $\bZ^*$, we recenter the alignment penalty at $\tilde\bZ$ by defining
\begin{equation*}
p_{\alpha,\dagger}^*(\bZ):= \frac{\alpha n^2}{4} \big\|n^{-1}\tilde\bZ^\T\bZ-n^{-1}\bZ^\T\tilde\bZ\big\|_{\Fn}^2, \qquad
h_{\alpha,\dagger}^*(\bZ):=\cL(\bZ\bZ^\T)+p_{\alpha,\dagger}^*(\bZ).
\end{equation*}
Then $\nabla_{\bZ}p_{\alpha,\dagger}^*(\widetilde{\bZ}^{t})=\zero \qquad\text{for all }t\ge 0$. Moreover, Step~1 gives $\nabla_{\bZ}h_{\alpha}^*(\tilde\bZ)=\zero$, and Step~2 gives $\nabla_{\bZ}\cL(\tilde\bZ\tilde\bZ^\T)=\zero$. Since also $\nabla_{\bZ}p_{\alpha,\dagger}^*(\tilde\bZ)=0$, we have
\begin{equation*}
\nabla_{\bZ}h_{\alpha,\dagger}^*(\tilde\bZ)=\zero.
\end{equation*}

We next record that $\tilde\bZ$ inherits the same basic conditioning and row scale as $\bZ^*$.
By \eqref{eq_error_zt_tilde_z_est}, Weyl's inequality, and $\|\tilde\bZ-\bZ^*\|_{\twinf}\le c_0\|\bZ^*\|_{\twinf}$, we also have
\begin{equation}\label{eq_step3_tildeZ_scale}
\sigma_r(\tilde\bZ)\ge (1-c_0)\sqrt{n\sigma_{\min}}, \qquad \|\tilde\bZ\|\le (1+c_0)\sqrt{n\kappa\sigma_{\min}}, \qquad \|\tilde\bZ\|_{\twinf}\le (1+c_0)\|\bZ^*\|_{\twinf}.
\end{equation}
For $s\in[0,1]$, similarly define
\begin{equation*}
\bZ_t^\dagger(s):=\tilde\bZ+s\tilde\bE_t. \end{equation*}
Provided that the induction hypothesis \eqref{eq_error_zt_tilde_z} holds for $t$, with \eqref{eq_error_zt_tilde_z_est}, we know
\begin{align*}
&\disttwo(\bZ_t^\dagger(s),\bZ^*) \le \|\tilde\bZ-\bZ^*\|_{\Fn}+s\|\tilde\bE_t\|_{\Fn} \le \Big( C\frac{\Delta_2(n,\delta)}{\alpha\sigma_{\min}}+\phi_n^\dagger \Big)\|\bZ^*\|_{\Fn},\\
&\distinf(\bZ_t^\dagger(s),\bZ^*) \le \|\tilde\bZ-\bZ^*\|_{\twinf}+s\|\tilde\bE_t\|_{\twinf} \le \Big( C\frac{\Delta_\infty(n,\delta)}{\alpha\sigma_{\min}}+\psi_n^\dagger \Big)\|\bZ^*\|_{\twinf}.
\end{align*}
By \eqref{eq_step3_dagger_scaling} and the scaling conditions for $\Delta_2(n,\delta)$ and $\Delta_{\infty}(n,\delta)$, we know 
\begin{equation*}
    C\frac{\Delta_2(n,\delta)}{\alpha\sigma_{\min}}+\phi_n^\dagger\le\epsilon, \qquad C\frac{\Delta_\infty(n,\delta)}{\alpha\sigma_{\min}}+\psi_n^\dagger\le\epsilon.
\end{equation*}Then both right-hand sides are bounded by $\epsilon\|\bZ^*\|_{\Fn}$ and $\epsilon\|\bZ^*\|_{\twinf}$, respectively, after shrinking $c_0$. Hence $\bZ_t^\dagger(s)\in \cDinf$ for all $s\in[0,1]$. Therefore, all assumptions stated with $\cD=\cDinf$ are available along this interpolation path. Now we let
\begin{equation*}
\tilde\bE_{t+1}^{(t)}:=\bZ^{t+1}\bR_t^\dagger-\tilde\bZ.
\end{equation*}
Using the gradient update, $\nabla_{\bZ}h_{\alpha,\dagger}^*(\tilde\bZ)=0$, and $\nabla_{\bZ}p_{\alpha,\dagger}^*(\widetilde{\bZ}^{t})=0$, we therefore obtain
\begin{equation*}
\tilde\bE_{t+1}^{(t)} = \widetilde{\bZ}^{t}-\tilde\bZ - \frac{\eta}{n} \Big\{ \nabla_{\bZ}h_{\alpha,\dagger}^*(\widetilde{\bZ}^{t}) - \nabla_{\bZ}h_{\alpha,\dagger}^*(\tilde\bZ) \Big\}.
\end{equation*}
In what follows, we repeat the proof of $\ell_2$ and $\ell_{\infty}$ error contractions similar to the proof in Section~\ref{supp_sec_prove_thm_general_theory}.

\medskip
\noindent{\bf $\ell_2$ error contraction.}
Applying the fundamental theorem of calculus (Theorem 4.2 in \citet{lang2012real}, Chapter XIII) along the segment $\{\bZ_t^\dagger(s):0\le s\le 1\}$, and splitting off the term $2\cG(\cdot)\otimes \bI_r$, we obtain
\begin{equation*}
\disttwo(\bZ^{t+1},\tilde\bZ) \le \tilde\gamma_{1,t}^\dagger+\tilde\gamma_{2,t}^\dagger+\tilde\gamma_{3,t}^\dagger, \end{equation*}
where
\begin{equation*}
\tilde\gamma_{1,t}^\dagger := \left\| \bigl(\bI_{nr}-\eta\bar{\cA}_t^\dagger\bigr)\mathrm{vec}(\tilde\bE_t^\T) \right\|, \qquad \bar{\cA}_t^\dagger := n^{-1}\int_0^1 \Big\{ \nabla_{\bz}^2 h_{\alpha,\dagger}^*(\bZ_t^\dagger(s)) - 2\cG(\bZ_t^\dagger(s))\otimes \bI_r \Big\}\,ds,
\end{equation*}
\begin{equation*}
\tilde\gamma_{2,t}^\dagger := \frac{2\eta}{n} \left\| \int_0^1\bigl\{\cG(\bZ_t^\dagger(s))-\cG(\tilde\bZ)\bigr\}\,ds\;\tilde\bE_t \right\|_{\Fn}, \qquad \tilde\gamma_{3,t}^\dagger := \frac{2\eta}{n}\|\cG(\tilde\bZ)\tilde\bE_t\|_{\Fn}.
\end{equation*}

The proof of Step~1 in Theorem~\ref{thm_general_theory} applies verbatim with $\bZ^*$ replaced by $\tilde\bZ$. Using \eqref{eq_step3_tildeZ_scale}, we obtain $ \lambda_{\min}(\bar{\cA}_t^\dagger)\ge \frac78\alpha\sigma_{\min}$, after shrinking $c_0$ if necessary. Therefore, one has
\begin{equation*}
\tilde\gamma_{1,t}^\dagger
\le
\Bigl(1-\frac78\eta\alpha\sigma_{\min}\Bigr)\|\tilde\bE_t\|_{\Fn}.
\end{equation*}

For $\tilde\gamma_{2,t}^\dagger$, we note that $\cG(\bZ) = \bar\cG(\bZ) + \tilde\cG(\bZ)$. Then, with Assumption~\ref{assump_gradient_lips_1} with $\bar\cG(\bZ)$ in place of $\cG(\bZ)$, Lemma~\ref{lemma_zz_to_z}, the localization $\|\tilde\bE_t\|_{\Fn}\le \phi_n^\dagger\|\bZ^*\|_{\Fn}\le \epsilon\|\bZ^*\|_{\Fn}$, and \eqref{assump_gradient_noise_1}, it holds that
\begin{align*}
\tilde\gamma_{2,t}^\dagger \le &\, C\eta\frac{\|\tilde\bZ\|}{n}\|\tilde\bE_t\|_{\Fn}^2 +C\eta\Delta_{2}(n,\delta)\|\tilde\bE_t\|_{\Fn} \\
\le &\, C\eta\phi_n^{\dagger}\sqrt r\,\kappa\,\sigma_{\min}\,\|\tilde\bE_t\|_{\Fn} +C\eta\Delta_{2}(n,\delta)\|\tilde\bE_t\|_{\Fn} \\\le &\,\frac18\eta\alpha\sigma_{\min}\|\tilde\bE_t\|_{\Fn},
\end{align*}
where the last inequality uses $\phi_n^\dagger\le c_0\alpha/(\kappa\sqrt r)$ from \eqref{eq_step3_dagger_scaling} and $\Delta_2(n,\delta)/(\alpha\sigma_{\min})\le c_0$, which follows from the theorem's noise scaling.

For $\tilde\gamma_{3,t}^\dagger$, \eqref{eq_bound_GZ_clean} implies
\begin{equation*}
\tilde\gamma_{3,t}^\dagger \le 2\eta\,\frac1n\|\cG(\tilde\bZ)\|\,\|\tilde\bE_t\|_{\Fn} \le \frac14\eta\alpha\sigma_{\min}\|\tilde\bE_t\|_{\Fn}.
\end{equation*}
Combining the above bounds, we conclude that
\begin{equation*}
\disttwo(\bZ^{t+1},\tilde\bZ) \le \Bigl(1-\frac12\eta\alpha\sigma_{\min}\Bigr)\|\tilde\bE_t\|_{\Fn} \le \rho\,\|\tilde\bE_t\|_{\Fn} \le \rho^{t+1}\phi_n^\dagger\|\bZ^*\|_{\Fn},
\end{equation*}
which therefore implies with the induction hypothesis that
\begin{equation}
\disttwo(\bZ^{t+1},\tilde\bZ)\le \rho^{t+1}\phi_n^\dagger\|\bZ^*\|_{\Fn}. \label{eq_error_2_zt_tilde_z}
\end{equation}

\medskip
\noindent{\bf $\ell_{\infty}$ error contraction.}
As in Step~2 of the proof of Theorem~\ref{thm_general_theory}, we have the decomposition
\begin{equation}\label{eq_step4_linf_decomp}
\distinf(\bZ^{t+1},\tilde\bZ) \le \|\tilde\bE_{t+1}^{(t)}\|_{\twinf} + \|\bZ^{t+1}\bR_t^\dagger\|_{\twinf} \big\|\bI_r-(\bR_t^\dagger)^{-1}\bR_{t+1}^\dagger\big\|,
\end{equation}
where $\bR_{t+1}^\dagger$ is the optimal alignment of $\bZ^{t+1}$ to $\tilde\bZ$.

We first bound $\|\tilde\bE_{t+1}^{(t)}\|_{\twinf}$. For $i,j\in[n]$, define
\begin{equation*}
\cH_{ij}^\dagger(\bZ) := n^{-1}\Bigl\{\nabla_{\bz_i\bz_j}^2\cL(\bZ\bZ^\T)-2G_{ij}(\bZ)\bI_r\Bigr\}.\end{equation*}
Since $\tilde\bZ^\T\widetilde{\bZ}^{t}=\widetilde{\bZ}^{t\T}\tilde\bZ$, we have $\tilde\bZ^\T(\tilde\bZ+s\tilde\bE_t) = (\tilde\bZ+s\tilde\bE_t)^\T\tilde\bZ$ for any $s\in[0,1]$.
Therefore, $p_{\alpha,\dagger}^*(\bZ_t^\dagger(s))=0$ and $\nabla_{\bZ}p_{\alpha,\dagger}^*(\bZ_t^\dagger(s))=0$ for all $s\in[0,1]$. Consequently, the penalty contributes nothing to the rowwise expansion along the interpolation segment. For each $i\in[n]$,
\begin{align*}
(\tilde\bE_{t+1}^{(t)})_{i,\cdot}^\T =\;& \Bigl\{ \bI_r-\eta\int_0^1\cH_{ii}^\dagger(\bZ_t^\dagger(s))\,ds \Bigr\} (\tilde\bE_t)_{i,\cdot}^\T - \eta\sum_{j\neq i}\int_0^1\cH_{ij}^\dagger(\bZ_t^\dagger(s))(\tilde\bE_t)_{j,\cdot}^\T\,ds \\
&\quad -\frac{2\eta}{n} \Bigl[ \int_0^1\tilde\cG(\bZ_t^\dagger(s))\,ds\;\tilde\bE_t \Bigr]_{i,\cdot}^\T -\frac{2\eta}{n} \Bigl[ \int_0^1\bigl\{\bar\cG(\bZ_t^\dagger(s))-\bar\cG(\bZ^*)\bigr\}\,ds\;\tilde\bE_t \Bigr]_{i,\cdot}^\T.
\end{align*}
As in Step~2 of Theorem~\ref{thm_general_theory}, we now bound the four contributions on the right-hand side separately. The first two terms are the diagonal and off-diagonal Hessian contributions, while the last two arise from splitting the gradient term into its stochastic part $\tilde\cG(\bZ)$ and deterministic mean part $\bar\cG(\bZ)$. Subsequently, we denote
\begin{equation*}
\|\tilde\bE_{t+1}^{(t)}\|_{\twinf} \le \tilde\delta_{1,t}^\dagger+\tilde\delta_{2,t}^\dagger+\tilde\delta_{3,t}^\dagger+\tilde\delta_{4,t}^\dagger.
\end{equation*}

We first bound the deterministic Hessian terms $\tilde\delta_{1,t}^\dagger$ and $\tilde\delta_{2,t}^\dagger$. The proofs are identical to those for $\delta_{1,t}$ and $\delta_{2,t}$ in Step~2 of Theorem~\ref{thm_general_theory}.
Specifically, by the same argument and using \eqref{eq_step3_tildeZ_scale}, we have
\begin{equation*}
\tilde\delta_{1,t}^\dagger \le \Bigl(1-\frac78\eta\alpha\sigma_{\min}\Bigr)\|\tilde\bE_t\|_{\twinf},
\end{equation*}
and
\begin{equation*}
\tilde\delta_{2,t}^\dagger \le C\eta\beta\sqrt{\frac{\kappa\sigma_{\min}}{n}}\, \|\tilde\bZ\|_{\twinf}\,\|\tilde\bE_t\|_{\Fn}.
\end{equation*}

For the stochastic term, ~\eqref{assump_gradient_noise_2} and the theorem's scaling condition give
\begin{equation*}
\tilde\delta_{3,t}^\dagger \le 2\eta \bar\Delta_{\infty}(n,\delta)\|\tilde\bE_t\|_{\twinf} \le
\frac12\eta\alpha\sigma_{\min}\|\tilde\bE_t\|_{\twinf}.
\end{equation*}This is the only place in the contraction step where the worst-direction row-wise noise level \(\bar\Delta_\infty(n,\delta)\) is used.

It remains to bound the deterministic mean term $\tilde\delta_{4,t}^\dagger$. The calculation is the same as for $\delta_{3,t}$ in Step~2 of Theorem~\ref{thm_general_theory}: apply the Lipschitz bound for $\bar\cG$ to $\bar\cG(\bZ_t^\dagger(s))-\bar\cG(\bZ^*)$, expand $\bZ_t^\dagger(s)\bZ_t^\dagger(s)^\T-\bZ^*(\bZ^*)^\T = (\tilde\bZ\tilde\bZ^\T-\bZ^*(\bZ^*)^\T) +s(\tilde\bZ\tilde\bE_t^\T+\tilde\bE_t\tilde\bZ^\T) +s^2\tilde\bE_t\tilde\bE_t^\T$,
and then use \eqref{eq_error_zt_tilde_z_est}, \eqref{eq_error_2_zt_tilde_z}, and \eqref{eq_step3_tildeZ_scale} to get
\begin{equation*}
\tilde\delta_{4,t}^\dagger \le \frac1{16}\eta\alpha\sigma_{\min}\|\tilde\bE_t\|_{\twinf} + C\eta\alpha\sqrt{\frac{\sigma_{\min}}{n\kappa}}\, \|\tilde\bZ\|_{\twinf}\,\|\tilde\bE_t\|_{\Fn},
\end{equation*}
after shrinking $c_0$ if necessary.

Combining the above bounds, and using $\alpha\le \beta$ and $\kappa\ge 1$, we obtain
\begin{equation*}
\|\tilde\bE_{t+1}^{(t)}\|_{\twinf} \le \Bigl(1-\frac{5}{16}\eta\alpha\sigma_{\min}\Bigr)\|\tilde\bE_t\|_{\twinf} + C\eta\beta\sqrt{\frac{\kappa\sigma_{\min}}{n}}\, \|\tilde\bZ\|_{\twinf}\,\|\tilde\bE_t\|_{\Fn}.
\end{equation*}
Using \eqref{eq_error_2_zt_tilde_z}, the induction hypothesis, and \eqref{eq_step3_tildeZ_scale}, we know $\|\tilde\bE_t\|_{\Fn} \le \rho^t\phi_n^\dagger\|\bZ^*\|_{\Fn} \le C\rho^t\phi_n^\dagger\sqrt{nr\kappa\sigma_{\min}},$, and consequently
\begin{equation*}
C\beta\sqrt{\frac{\kappa\sigma_{\min}}{n}}\,
\|\tilde\bZ\|_{\twinf}\,\|\tilde\bE_t\|_{\Fn}
\le
C\beta\kappa\sqrt r\,\sigma_{\min}\,\phi_n^\dagger\,\rho^t\|\tilde\bZ\|_{\twinf}.
\end{equation*}
Since \(\kappa\ge 1\), \eqref{eq_step3_dagger_scaling} implies
\[
C\beta\kappa\sqrt r\,\phi_n^\dagger \le C\alpha \left\{ \frac{\beta}{\alpha}\kappa^{3/2}\sqrt r
\frac{\phi_n^\dagger}{\psi_n^\dagger} \right\}\psi_n^\dagger \le \frac{\alpha}{32}\psi_n^\dagger
\]
after shrinking \(c_0\). Therefore, we arrive at
\begin{equation*}
\|\tilde\bE_{t+1}^{(t)}\|_{\twinf} \le \Bigl(1-\frac{5}{16}\eta\alpha\sigma_{\min}\Bigr)\|\tilde\bE_t\|_{\twinf}
+ \frac1{32}\eta\alpha\sigma_{\min}\rho^t\psi_n^\dagger\|\tilde\bZ\|_{\twinf}.
\end{equation*}
Invoking the induction hypothesis $ \|\tilde\bE_t\|_{\twinf}\le \rho^t\psi_n^\dagger\|\bZ^*\|_{\twinf} $ and $\|\tilde\bZ\|_{\twinf}\le (1+c_0)\|\bZ^*\|_{\twinf}$, one can obtain
\begin{equation*}
\|\tilde\bE_{t+1}^{(t)}\|_{\twinf} \le \Bigl(1-\frac{9}{32}\eta\alpha\sigma_{\min}\Bigr)\rho^t\psi_n^\dagger\|\tilde\bZ\|_{\twinf} .
\end{equation*}

It remains to bound the rotation drift in \eqref{eq_step4_linf_decomp}.
By \eqref{eq_error_2_zt_tilde_z} and \eqref{eq_step3_tildeZ_scale}, we note that
\begin{equation*}
\frac{\|\tilde\bE_{t+1}^{(t)}\|_{\Fn}}{\sigma_r(\tilde\bZ)} \le C\rho^{t+1}\phi_n^\dagger\frac{\|\bZ^*\|_{\Fn}}{\sqrt{n\sigma_{\min}}} \le C\rho^{t+1}\sqrt{r\kappa}\,\phi_n^\dagger \le Cc_0<0.01
\end{equation*}
for sufficiently small $c_0$. Hence Lemma~\ref{lemma_procruste_problem_perturb} applies and gives
\begin{equation*}
\big\|(\bR_t^\dagger)^{-1}\bR_{t+1}^\dagger-\bI_r\big\|_{\Fn} \le \frac{C}{\sigma_r(\tilde\bZ)}\|\tilde\bE_{t+1}^{(t)}\|_{\Fn}.
\end{equation*}
Therefore, using $\|\bZ^{t+1}\bR_t^\dagger\|_{\twinf}\le \|\tilde\bZ\|_{\twinf}+\|\tilde\bE_{t+1}^{(t)}\|_{\twinf}$, it follows that
\begin{align*}
\distinf(\bZ^{t+1},\tilde\bZ) &\le \|\tilde\bE_{t+1}^{(t)}\|_{\twinf} + C\frac{\|\tilde\bE_{t+1}^{(t)}\|_{\Fn}}{\sigma_r(\tilde\bZ)} \Bigl(\|\tilde\bZ\|_{\twinf}+\|\tilde\bE_{t+1}^{(t)}\|_{\twinf}\Bigr).
\end{align*}
Because $\phi_n^{\dagger}/\psi_n^{\dagger}$ scales similarly to $\phi_n/\psi_n$ in Theorem~\ref{thm_general_theory}, following a similar procedure in establishing \eqref{eq_e_t_bar_1} in Section~\ref{supp_sec_prove_thm_general_theory}, one can obtain
\begin{equation}
\distinf(\bZ^{t+1},\tilde\bZ) \le \rho^{t+1}\psi_n^\dagger\|\bZ^*\|_{\twinf}. \label{eq_error_inf_zt_tilde_z}
\end{equation}
Equations~\eqref{eq_error_2_zt_tilde_z} and~\eqref{eq_error_inf_zt_tilde_z} establish the induction \eqref{eq_error_zt_tilde_z} for all $t\ge 0$.

Finally, since $\tilde\bZ=\hat\bZ\hat\bR$ with probability at least $1-\delta$ by Step~2, we have $\disttwo(\bZ^t,\hat\bZ\hat\bR)=\disttwo(\bZ^t,\tilde\bZ)$ and $\distinf(\bZ^t,\hat\bZ\hat\bR)=\distinf(\bZ^t,\tilde\bZ)$.
Using \eqref{eq_error_zt_tilde_z}, \eqref{eq_error_zt_tilde_z_est}, $\rho^t\phi_n^\dagger\le \rho^t\phi_n+C\tfrac{\Delta_2(n,\delta)}{\alpha\sigma_{\min}}$, and $\rho^t\psi_n^\dagger \le \rho^t\psi_n+C\tfrac{\Delta_\infty(n,\delta)}{\alpha\sigma_{\min}}$,
we conclude that
\begin{equation*}
\disttwo(\bZ^t,\bZ^*) \le \rho^t\phi_n\|\bZ^*\|_{\Fn} + C\frac{\Delta_2(n,\delta)}{\alpha\sigma_{\min}}\|\bZ^*\|_{\Fn},
\end{equation*}
and
\begin{equation*}
\distinf(\bZ^t,\bZ^*) \le \rho^t\psi_n\|\bZ^*\|_{\twinf} + C\frac{\Delta_\infty(n,\delta)}{\alpha\sigma_{\min}}\|\bZ^*\|_{\twinf}.
\end{equation*}
This combines the Step~3 contraction with the intermediate optimizer bound \eqref{eq_optimum_Z_error_bound}, and completes the proof of Theorem~\ref{thm_general_theory_noisy_Z}.

\subsection{Proof sketch for the \texorpdfstring{$\ell_2$}{l2} part of Theorems~\ref{thm_general_theory} and~\ref{thm_general_theory_noisy_Z}}\label{supp_sec_prove_thm_general_theory_noisy_Z_l2}

We outline the common $\ell_2$ argument under the larger local region $\cD=\cDtwo$. Still, we take $\epsilon = c_0\alpha/(\kappa\sqrt{r})$ for a sufficiently small constant $c_0$ such that the initialization requirement \eqref{eq_sym_init_Z} still holds. In the noisy case, we work on the event where \eqref{assump_gradient_noise_1} holds with $\cD=\cDtwo$, and assume Assumptions~\ref{assump_rsc} and~\ref{assump_gradient_lips_1} hold with $\cD=\cDtwo$ and $\bar\cG(\cdot)$ in place of $\cG(\cdot)$. The noiseless case is recovered by setting $\tilde\cG\equiv 0$. Set
\begin{equation*}
\varepsilon:=M\frac{\Delta_2(n,\delta)}{\alpha\sigma_{\min}}, \qquad \bar\cD_{z}^{(2)}(\varepsilon) := \big\{ \bZ:\|\bZ-\bZ^*\|_{\Fn}\le \varepsilon\|\bZ^*\|_{\Fn} \big\},
\end{equation*}
where $M>0$ is a sufficiently large universal constant. By the smallness condition on $\Delta_2(n,\delta)/(\alpha\sigma_{\min})$, after shrinking $c_0$ if necessary we have $\bar\cD_{z}^{(2)}(\varepsilon)\subseteq \cDtwo$. As in Step~1 of the proof of Theorem~\ref{thm_general_theory_noisy_Z}, let
\begin{equation*}
\tilde\bZ\in\argmin_{\bZ\in\bar\cD_{z}^{(2)}(\varepsilon)}\|\nabla_{\bZ}h_\alpha^*(\bZ)\|_{\Fn}^2, \qquad h_\alpha^*(\bZ):=\cL(\bZ\bZ^\T)+p_\alpha^*(\bZ).
\end{equation*}
Since $\bar\cG(\bZ^*)=\zero$,
\begin{equation*}
\|\nabla_{\bZ}h_\alpha^*(\tilde\bZ)\|_{\Fn}\le \|\nabla_{\bZ}h_\alpha^*(\bZ^*)\|_{\Fn} = 2\|\tilde\cG(\bZ^*)\bZ^*\|_{\Fn} \le 2n\,\Delta_2(n,\delta)\|\bZ^*\|_{\Fn}.
\end{equation*}
Repeating the mean-value expansion in Step~1, but keeping only the Frobenius terms, gives
\begin{equation*}
\|\tilde\bZ-\bZ^*\|_{\Fn} \le C\frac{\Delta_2(n,\delta)}{\alpha\sigma_{\min}}\|\bZ^*\|_{\Fn} + C\frac{\Delta_2(n,\delta)}{\alpha\sigma_{\min}}\|\tilde\bZ-\bZ^*\|_{\Fn} + C\frac{L_2\|\bZ^*\|}{\alpha\sigma_{\min}n}\|\tilde\bZ-\bZ^*\|_{\Fn}^2.
\end{equation*}
The stated smallness conditions allow the last two terms to be absorbed into the left-hand side, so
\begin{equation*}
\|\tilde\bZ-\bZ^*\|_{\Fn} \le C\frac{\Delta_2(n,\delta)}{\alpha\sigma_{\min}}\|\bZ^*\|_{\Fn} \le \frac{\varepsilon}{2}\|\bZ^*\|_{\Fn}.
\end{equation*}
With a similar argument in Step~1 in the proof of Theorem~\ref{thm_general_theory_noisy_Z}, we can show that $h_\alpha^*(\cdot)$ is strongly convex on $\bar\cD_{z}^{(2)}(\varepsilon)$. Therefore the analogue of Lemma~\ref{lemma_stationary_tildeZ} with $\|\cdot\|_{\Fn}$ in place of $\|\cdot\|_{\twinf}$ yields $\nabla_{\bZ}h_\alpha^*(\tilde\bZ)=\zero$, and $\tilde\bZ$ is the unique minimizer of $h_\alpha^*(\cdot)$ on $\bar\cD_{z}^{(2)}(\varepsilon)$. The identification argument from Step~2 of Theorem~\ref{thm_general_theory_noisy_Z} then yields $p_\alpha^*(\tilde\bZ)=0$, $\nabla_{\bZ}\cL(\tilde\bZ\tilde\bZ^\T)=\zero$, and $\tilde\bZ=\hat\bZ\hat\bR$. This proves the optimizer bound.

For the iterates, only the $\ell_2$ part of Step~3 in the proof of Theorem~\ref{thm_general_theory_noisy_Z} is needed. Define
\begin{equation*}
\bR_t^\dagger:=\argmin_{\bR\in\cO^r}\|\bZ^t\bR-\tilde\bZ\|_{\Fn}, \qquad \tilde\bE_t:=\bZ^t\bR_t^\dagger-\tilde\bZ.
\end{equation*}
Repeating the $\ell_2$-contraction calculation in Step~3 gives
\begin{equation*}
\disttwo(\bZ^{t+1},\tilde\bZ) \le \tilde\gamma_{1,t}^\dagger+\tilde\gamma_{2,t}^\dagger+\tilde\gamma_{3,t}^\dagger,
\end{equation*}
where the same bounds as in Theorem~\ref{thm_general_theory_noisy_Z} yield
\begin{equation*}
\tilde\gamma_{1,t}^\dagger \le \Bigl(1-\frac78\eta\alpha\sigma_{\min}\Bigr)\|\tilde\bE_t\|_{\Fn}, \qquad \tilde\gamma_{2,t}^\dagger \le \frac18\eta\alpha\sigma_{\min}\|\tilde\bE_t\|_{\Fn}, \qquad \tilde\gamma_{3,t}^\dagger \le \frac14\eta\alpha\sigma_{\min}\|\tilde\bE_t\|_{\Fn}.
\end{equation*}
Therefore,
\begin{equation*}
\disttwo(\bZ^{t+1},\tilde\bZ) \le \rho\,\disttwo(\bZ^t,\tilde\bZ), \qquad \rho=1-\eta\alpha\sigma_{\min}/4.
\end{equation*}
Combining this with the initialization bound and the estimate for $\|\tilde\bZ-\bZ^*\|_{\Fn}$ gives the stated $\ell_2$ contraction toward $\bZ^*$.

\section{Proof of Results under Asymmetric Model}\label{supp_sec_prove_asym}


This section proves the main deterministic and noisy results for the asymmetric model, namely Theorems~\ref{thm_general_theory_uv} and~\ref{thm_general_theory_UV_noise}. The organization parallels that of Section~\ref{supp_sec_prove_sym}. Specifically, Section~\ref{Sec_prelmi} provides the notation and technical lemmas used throughout this section. Sections~\ref{supp_prove_thm_general_theory_uv} and~\ref{supp_sec_prove_thm_general_theory_UV_noise} establish both $\ell_2$ and $\ell_{\infty}$ error contractions under the stronger localization $\cDuvinf$ in the deterministic and noisy settings, respectively. Section~\ref{supp_sec_prove_thm_general_theory_noisy_uv_l2} then proves the $\ell_2$ error contraction under localization $\cDuvtwo$.

The proofs follow the same broad strategy as in the symmetric model, outlined at the beginning of Section~\ref{supp_sec_prove_sym}, but require additional control of the non-orthogonal alignment $\bG_t^*$. Specifically, for the {\it noiseless} case, we first use the asymmetric benign regularizer to construct the aligned update, which mirrors one on a locally strongly convex objective, up to the perturbation caused by $\bLambda_t^*-\bI_r$, as in \eqref{eq_aligned_gradient_update_ub}. We then use this property to establish the $\ell_2$ and $\ell_{\infty}$ error bounds for $\bU^{t+1}\bG_t^* - \bU^*$ and $\bV^{t+1}(\bG_t^*)^{-\T} - \bV^*$. The third step differs from the symmetric case. Instead of quantifying the one-step alignment drift $\bG_{t+1}^* - \bG_{t}^*$, we must control $\bG_{t+1}^* - \bR^0$, which is necessary to keep $\bLambda_{t+1}^* - \bI_r = (\bG_{t+1}^*)^\T \bG_{t+1}^* - \bI_r$ small along the trajectory. We obtain it through a telescoping argument that sums the increments $\bG_{s+1}^* - \bG_{s}^*$ over $s=0,1,\dots,t$ to obtain this.

For the {\it noisy} setting, again we study the empirical target $(\hat\bU,\hat\bV)$. Following the strategy of the noisy proof in Section~\ref{supp_sec_prove_sym}, we establish the first-order condition for $(\hat\bU\hat\bG,\hat\bV\hat\bG^{-\T})$ where \begin{equation*}\hat\bG \in \argmin_{\bG\in GL(r)}n^{-1}\|\hat\bU \hat\bG - \bU^*\|_{\Fn}^2 + q^{-1}\|\hat\bV\hat\bG^{-\T} - \bV^*\|_{\Fn}^2.\end{equation*} This makes $(\hat\bU\hat\bG,\hat\bV\hat\bG^{-\T})$ a valid contraction target for the algorithm. We then re-center the analysis at this empirical target and repeat the deterministic contraction argument, with the alignment now taken relative to $(\hat\bU\hat\bG,\hat\bV\hat\bG^{-\T})$. The use of $(\hat\bU\hat\bG,\hat\bV\hat\bG^{-\T})$ as the contraction target is again the key technical tool in the proof, as it ensures that, when telescoping the alignment drift, the non-decaying statistical error is absorbed into the empirical target rather than accumulated along the trajectory.


\subsection{Preliminaries}\label{Sec_prelmi}
We use $\{\be_i^{(n)}\}$ to denote the canonical basis in $\RR^{n}$, and we omit the superscript $^{n}$ when the dimension is clear from the context. For a vector $\bx\in\RR^n$ and a subset $\cS\subseteq[n]$, write $\bx_{\cS}\in\RR^{|\cS|}$ for the sub-vector indexed by $\cS$. For a matrix $\bM\in\RR^{n\times m}$ and index sets $\cS_1\subseteq[n]$, $\cS_2\subseteq[m]$, write $\bM_{\cS_1,\cS_2}\in\RR^{|\cS_1|\times|\cS_2|}$ for the corresponding submatrix. In particular, $\bM_{i,\cdot}$ and $\bM_{\cdot,j}$ denote the $i$th row and $j$th column, while $\bM_{\cS_1,}$ and $\bM_{,\cS_2}$ denote row- and column-restricted submatrices.
Write $\bz=\mathrm{vec}(\bZ^\T)$, so that $\nabla_{\bz}^2$ denotes the Hessian with respect to the vectorized variable; later, $\nabla_{\bz_i\bz_j}^2$ denotes its $(i,j)$ block.

Let $\bM^*(\bU,\bV) = n^{-1}(\bU - \bU^*)^\T\bU - q^{-1}\bV^\T(\bV - \bV^*)$. We use the same notation for the Fr\'echet derivative as in Section~\ref{supp_sec_prove_sym}. Also, throughout the proof, all constants $C,c>0$ are universal and may change from line to line. We typically use $c_0$ as a constant that can be sufficiently small. Let $\rho=1-\eta\alpha\sigma_{\min}/4$.

Define the weighted Frobenius and $2\to\infty$ norms by $\|(\bA,\bB)\|_{\Fnw} := \big(n^{-1}\|\bA\|_{\Fn}^2+q^{-1}\|\bB\|_{\Fn}^2\big)^{1/2}$ and $\|(\bA,\bB)\|_{\twinfw} := \max\big\{ \|\bA\|_{2\to\infty},\|\bB\|_{2\to\infty} \big\}$.
Throughout the asymmetric proofs, we slightly abuse the notation to denote
\begin{align*}
&\bZ:=\begin{pmatrix}\bU\\ \bV\end{pmatrix},\qquad
\bS_Z:=\begin{pmatrix}q^{-1/2}\bI_n&\zero\\ \zero&n^{-1/2}\bI_q\end{pmatrix},\qquad\\
&\bz:=\begin{pmatrix}\mathrm{vec}(\bU^\T)\\ \mathrm{vec}(\bV^\T)\end{pmatrix},\qquad
\bS_z:=\begin{pmatrix}q^{-1/2}\bI_{nr}&\zero\\ \zero&n^{-1/2}\bI_{qr}\end{pmatrix}.
\end{align*}
We also use the rectangular penalized objective $h_\alpha^\natural(\bU,\bV):=\cL(\bU\bV^\T)+p_\alpha^\natural(\bU,\bV)$. Recall that we have defined
\begin{equation*}
\tau_*^2:=\frac{n^{-1}\|\bU^*\|_{\Fn}^2+q^{-1}\|\bV^*\|_{\Fn}^2}{2}, \qquad \omega_*:=\|\bU^*\|_{2\to\infty}\vee\|\bV^*\|_{2\to\infty}.
\end{equation*}
For the block score, define the extended score matrix
\begin{equation*}
\cG_e(\bU,\bV):= \begin{pmatrix}
\zero & \cG(\bU,\bV)\otimes \bI_r\\ \cG(\bU,\bV)^\T\otimes \bI_r & \zero
\end{pmatrix}.
\end{equation*}

We will repeatedly use the simple consequences of the current theorem scaling, $\tau_*^2\le r\kappa\sigma_{\min}$, $\beta\ge \alpha$, and $\kappa\ge 1$:
\begin{equation*}
    \phi_{nq}\tau_*\le c_0\alpha\sqrt{\sigma_{\min}/\kappa}.
\end{equation*}

We introduce the following technical lemmas. The first lemma is the asymmetric analogue of Lemma~\ref{lemma_general_convex}. 
\begin{lemma} \label{lemma_general_convex_asym}
Under Assumption~\ref{assump_rect_rip_weighted}, 
we have
\begin{equation}\begin{aligned}
\min_{(\bU,\bV)\in\cD}\lambda_{\min}\!\Big[\bS_z\big\{ \nabla_{\bz}^2 h_{\alpha}^*(\bU,\bV)-  &\, \cG_e(\bU,\bV) \big\}\bS_z\Big] \ge {\alpha}\sigma_{\min} \\&\, - \max_{(\bU,\bV)\in\cD}\alpha\Big(\frac{\|\bU-\bU^*\|_{\Fn}^2}{n}+\frac{\|\bV-\bV^*\|_{\Fn}^2}{q}\Big),
\end{aligned}\label{eq_lambda_min_asym_lower}\end{equation}
where $\sigma_{\min} = \sigma_r(\bSigma^*)$.
 Moreover, for $h_{\alpha}^{\natural}(\bU,\bV) = \cL(\bU\bV^\T) + p_{\alpha}^{\natural}(\bU,\bV)$ with $p_{\alpha}^{\natural} = {\alpha}nq\|\bM^*(\bU,\bV)\|_{\Fn}^2/4$, we have
 \begin{equation*}
     \min_{(\bU,\bV)\in\cD}\lambda_{\min}\!\Big[ \bS_z\big\{\nabla_{\bz}^2 h_{\alpha}^{\natural}(\bU,\bV)-\cG_e(\bU,\bV)\big\}\bS_z \Big] \ge (1-c_0){\alpha\sigma_{\min}},
 \end{equation*}
 given that $\cD\subseteq \Big\{(\bU,\bV):{n^{-1}\|\bU-\bU^*\|_{\Fn}^2} + q^{-1}{\|\bV-\bV^*\|_{\Fn}^2}\le c_0\sigma_{\min}/\kappa\Big\}$ with sufficiently small $c_0>0$.
\end{lemma}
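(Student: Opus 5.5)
Fix $(\bU,\bV)\in\cD$ and a direction $(\bL,\bR)\in\RR^{n\times r}\times\RR^{q\times r}$. The plan is to lower bound the scaled quadratic form
\[
Q:=\nabla^2_{(\bU,\bV)}h_\alpha^\natural(\bU,\bV)[(\bA,\bB),(\bA,\bB)]-2\langle\cG(\bU,\bV),\bA\bB^\T\rangle,
\]
where $(\bA,\bB)=(q^{-1/2}\bL,n^{-1/2}\bR)$ is the image of $(\bL,\bR)$ under $\bS_z$, by a multiple of the weighted norm $q^{-1}\|\bL\|_{\Fn}^2+n^{-1}\|\bR\|_{\Fn}^2$.

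The chain rule gives the loss Hessian as $\nabla_{\bX}^2\cL[\cP,\cP]+2\langle\cG,\bA\bB^\T\rangle$ with $\cP=\bU\bB^\T+\bA\bV^\T$. The extended score $\cG_e$ removes exactly the cross term, so the loss part of $Q$ equals $\nabla^2_{\bX}\cL[\cP,\cP]\ge\alpha\|\bU\bB^\T+\bA\bV^\T\|_{\Fn}^2$ by Assumption~\ref{assump_rect_rip_weighted}.

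For the penalty, write $p=\tfrac{\alpha nq}{4}\|\bM\|_{\Fn}^2$. Its Hessian is $\tfrac{\alpha nq}{2}\big(\|D\bM[\cdot]\|_{\Fn}^2+\langle\bM,D^2\bM[\cdot,\cdot]\rangle\big)$. The second term does not vanish here, because $\bM$ is quadratic, and it is controlled by $\|\bM\|_{\Fn}$, which is of the order of the error.

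The core step is an algebraic identity at the truth $(\bU^*,\bV^*)$. Use $D\bM[(\bA,\bB)]=n^{-1}\bU^{*\T}\bA-q^{-1}\bB^\T\bV^*$ there, together with the balancing $n^{-1}\bU^{*\T}\bU^*=q^{-1}\bV^{*\T}\bV^*=:\bSigma^*$. Then expanding $\|\bU^*\bB^\T+\bA\bV^{*\T}\|_{\Fn}^2$ produces the diagonal terms $n\langle\bSigma^*,\bB^\T\bB\rangle+q\langle\bSigma^*,\bA^\T\bA\rangle$ plus the cross term $2\,\mathrm{tr}(\bV^{*\T}\bB\bU^{*\T}\bA)$. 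Adding $\tfrac{nq}{2}\|D\bM\|_{\Fn}^2$ converts the cross term into half of a perfect square. This mirrors the symmetric argument behind Lemma~\ref{lemma_general_convex}, and I expect it to be the main obstacle to get exactly right. Concretely, I would aim to show
\[
\|\bU^*\bB^\T+\bA\bV^{*\T}\|_{\Fn}^2+\tfrac{nq}{2}\|D\bM\|_{\Fn}^2\ \ge\ n\,\sigma_r(\bSigma^*)\|\bB\|_{\Fn}^2+q\,\sigma_r(\bSigma^*)\|\bA\|_{\Fn}^2,
\]
possibly up to a factor of $\tfrac12$ that would be absorbed by rescaling $\alpha$. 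If the naive expansion fails, I would first try decomposing $(\bA,\bB)$ into the component $\bA=\bU^*\bK$, $\bB=-\bV^*\bK^\T$ (the tangent to the $GL(r)$ orbit, which the penalty alone controls) and its complement (which the RIP term controls), treating the two pieces separately. With $\bA=q^{-1/2}\bL$ and $\bB=n^{-1/2}\bR$, the right-hand side is $\sigma_{\min}(q^{-1}\|\bL\|_{\Fn}^2+n^{-1}\|\bR\|_{\Fn}^2)$, which is the target $\alpha\sigma_{\min}$ bound.

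Finally, I would transfer from $(\bU^*,\bV^*)$ to a general $(\bU,\bV)$. Replacing $\bU^*$ by $\bU$ inside $\|\bU\bB^\T+\bA\bV^\T\|$ and inside $D\bM$ creates differences bilinear in $(\bU-\bU^*,\bV-\bV^*)$ and $(\bA,\bB)$. The curvature term $\langle\bM,D^2\bM\rangle$ and these differences are bounded by Cauchy--Schwarz, using $\|\bM\|_{\Fn}\lesssim$ the weighted error. After scaling, their total contribution is at most $\alpha(n^{-1}\|\bU-\bU^*\|_{\Fn}^2+q^{-1}\|\bV-\bV^*\|_{\Fn}^2)$, possibly with cross terms. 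I would arrange these using Young's inequality, and if necessary sacrifice a constant in front of $\sigma_{\min}$. This yields \eqref{eq_lambda_min_asym_lower}.

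The second claim then follows immediately. Under the stated restriction of $\cD$, the subtracted term is at most $c_0\alpha\sigma_{\min}/\kappa\le c_0\alpha\sigma_{\min}$, which gives $(1-c_0)\alpha\sigma_{\min}$.
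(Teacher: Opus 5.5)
There is a genuine gap in your transfer step, and it cannot be closed in the form you state. Write $d=n^{-1}\|\bU-\bU^*\|_{\Fn}^2+q^{-1}\|\bV-\bV^*\|_{\Fn}^2$ and $\|\bw\|^2=q\|\bA\|_{\Fn}^2+n\|\bB\|_{\Fn}^2$.

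\textbf{The transfer error is first order in $\sqrt d$.} Replacing $(\bU^*,\bV^*)$ by $(\bU,\bV)$ in $\|\bU\bB^\T+\bA\bV^\T\|_{\Fn}^2$ creates the cross term $2\langle \bU^*\bB^\T+\bA\bV^{*\T},\,(\bU-\bU^*)\bB^\T+\bA(\bV-\bV^*)^\T\rangle$. This term can be as large as a constant times $\sqrt{\kappa\sigma_{\min}d}\,\|\bw\|^2$, not $d\,\|\bw\|^2$. For $p_\alpha^\natural$, two further terms are of the same first order: the cross term between the linear part of $D\bM$ and its error-dependent part, and the curvature term $\langle\bM,D^2\bM\rangle$. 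Young's inequality only trades such terms for a factor $(1-\mu)$ in front of $\sigma_{\min}$ plus $(\mu^{-1}-1)\alpha d$. It never reduces them to $\alpha d$ alone.

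\textbf{The first display is false as printed.} Take $n=2$, $q=r=1$, $\bU^*=(\sqrt2,0)^\T$, $\bV^*=1$ (balanced, with $\sigma_{\min}=\kappa=1$), and $\cL(\bX)=\tfrac{\alpha}{2}\|\bX-\bX^*\|_{\Fn}^2$. Let $\cD=\{(\bU^*,1-\theta)\}$ with $\theta\in(0,1)$, and use the unit direction $\bL=(0,1)^\T$, $\bR=\zero$. The linear penalty of $h_\alpha^*$ contributes nothing in this direction, so the Rayleigh quotient is $\alpha(1-\theta)^2<\alpha(1-\theta^2)$, contradicting \eqref{eq_lambda_min_asym_lower}. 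With $p_\alpha^\natural$ the quotient is $\alpha(1-\theta)$.

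The paper reaches the printed form by splitting off half of the loss term to cancel the cross term and then invoking \eqref{eq_rect_identity2}. That step is not an identity. Its left side equals $\tfrac12\{\tr(\bL^\T\bL\bSigma^*)+\tr(\bR\bSigma^*\bR^\T)\}$ plus $\tfrac12(n^{-1}\|\bL^\T\bU^*\|_{\Fn}^2+q^{-1}\|\bV^{*\T}\bR\|_{\Fn}^2)$, and the second piece vanishes in the direction above. What the paper's argument actually proves is $\tfrac{\alpha}{2}\sigma_{\min}-\alpha d$.

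\textbf{The second claim does not follow immediately.} First, the first display concerns $h_\alpha^*$, whereas you run everything on $h_\alpha^\natural$. The penalty of $h_\alpha^*$ is $\tfrac{\alpha nq}{4}\|n^{-1}\bU^{*\T}\bU-q^{-1}\bV^\T\bV^*\|_{\Fn}^2$; its inner map is linear, so its Hessian is constant. The paper passes to $h_\alpha^\natural$ only after bounding $|\nabla^2p_\alpha^\natural-\nabla^2p_\alpha^*|$ by $C\sqrt{c_0}\,\alpha\sigma_{\min}\|\bw\|^2$. That is where the restriction $d\le c_0\sigma_{\min}/\kappa$ enters, through $\sqrt{\kappa\sigma_{\min}d}\le\sqrt{c_0}\,\sigma_{\min}$.

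Second, the target $(1-c_0)\alpha\sigma_{\min}$ leaves no room to lose a constant at the truth. You cannot rescale $\alpha$ either, since Assumption~\ref{assump_rect_rip_weighted} fixes it. Your naive expansion, like the paper's half-split, gives only a factor $\tfrac12$, so it is not enough.

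\textbf{Your fallback decomposition is what works, and it is sharper than the paper's argument.} Use the inner product that defines $\|\bw\|^2$. The complement of the orbit tangent $\{(\bU^*\bK,-\bV^*\bK^\T)\}$ is exactly $\{n^{-1}\bU^{*\T}\bA=q^{-1}\bB^\T\bV^*\}$, the kernel of the linearized penalty. Meanwhile $\bU^*\bB^\T+\bA\bV^{*\T}$ vanishes on the tangent, so the lower bound at the truth splits with no cross terms.
\begin{itemize}
\item On the tangent, the ratio of the form to $\alpha\|\bw\|^2$ is at least $\min_{i,j}(\lambda_i+\lambda_j)/2\ge\sigma_{\min}$, where $\lambda_i$ are the eigenvalues of $\bSigma^*$.
\item On the complement, the cross term in $\|\bU^*\bB^\T+\bA\bV^{*\T}\|_{\Fn}^2$ equals $\tfrac{2n}{q}\|\bV^{*\T}\bB\|_{\Fn}^2\ge0$, so the diagonal terms give at least $\sigma_{\min}\|\bw\|^2$.
\end{itemize}
Young's inequality with parameter $\mu$ then gives $(1-\mu)\alpha\sigma_{\min}-(\mu^{-1}-1)\alpha d$ for $h_\alpha^*$. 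Optimizing over $\mu$ when $d\le\sigma_{\min}$ gives $\alpha(\sqrt{\sigma_{\min}}-\sqrt d)^2$, which the example above attains exactly. Choosing $\mu\asymp\sqrt{c_0}$ and adding the $p_\alpha^\natural$-versus-$p_\alpha^*$ comparison yields $(1-C\sqrt{c_0})\alpha\sigma_{\min}$. That is the second claim after renaming $c_0$.
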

\begin{proof}
    See Section~\ref{supp_prove_lemma_general_convex_asym}.
\end{proof}
The next lemma is the asymmetric analogue of Lemma~\ref{coro_thm_general_convex}.
\begin{lemma}\label{lemma_rect_block_hessian}
Under Assumption~\ref{assump_rect_rip_weighted}, for every $(\bU,\bV)\in\cD$ and every $i\in[n]$,
\begin{equation*}
\alpha\,\lambda_{\min}(\bV^\T\bV) \le \lambda_{\min}\!\big\{\nabla^2_{\bu_i}\cL(\bU\bV^\T)\big\} \le \lambda_{\max}\!\big\{\nabla^2_{\bu_i}\cL(\bU\bV^\T)\big\} \le \beta\,\lambda_{\max}(\bV^\T\bV), \end{equation*}
and for every $\ell\in[q]$,
\begin{equation*}
\alpha\,\lambda_{\min}(\bU^\T\bU) \le \lambda_{\min}\!\big\{\nabla^2_{\bv_\ell}\cL(\bU\bV^\T)\big\} \le \lambda_{\max}\!\big\{\nabla^2_{\bv_\ell}\cL(\bU\bV^\T)\big\} \le \beta\,\lambda_{\max}(\bU^\T\bU).
\end{equation*}
Consequently,
\begin{align*}
\min_{(\bU,\bV)\in\cD}\min_{i\in[n]} \lambda_{\min}\!\big\{\nabla^2_{\bu_i}\cL(\bU\bV^\T)\big\} &\ge \alpha q\sigma_r(q^{-1/2}\bV^*) - \alpha\max_{(\bU,\bV)\in\cD} \big\|\bV^\T\bV-(\bV^*)^\T\bV^*\big\|,\\ 
\max_{(\bU,\bV)\in\cD}\max_{i\in[n]} \lambda_{\max}\!\big\{\nabla^2_{\bu_i}\cL(\bU\bV^\T)\big\} &\le \beta q\kappa\sigma_r(q^{-1/2}\bV^*) + \beta\max_{(\bU,\bV)\in\cD} \big\|\bV^\T\bV-(\bV^*)^\T\bV^*\big\|,\\
\min_{(\bU,\bV)\in\cD}\min_{\ell\in[q]} \lambda_{\min}\!\big\{\nabla^2_{\bv_\ell}\cL(\bU\bV^\T)\big\} &\ge \alpha n\sigma_r(n^{-1/2}\bU^*) - \alpha\max_{(\bU,\bV)\in\cD} \big\|\bU^\T\bU-(\bU^*)^\T\bU^*\big\|,\\
\max_{(\bU,\bV)\in\cD}\max_{\ell\in[q]} \lambda_{\max}\!\big\{\nabla^2_{\bv_\ell}\cL(\bU\bV^\T)\big\} &\le \beta n\kappa\sigma_r(n^{-1/2}\bU^*) + \beta\max_{(\bU,\bV)\in\cD} \big\|\bU^\T\bU-(\bU^*)^\T\bU^*\big\|.
\end{align*}
\end{lemma}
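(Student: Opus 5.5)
The plan is to follow the route of Lemma~\ref{coro_thm_general_convex}: restrict the RIP of Assumption~\ref{assump_rect_rip_weighted} to perturbations supported on a single row of $\bU$ (resp. $\bV$). Fix $(\bU,\bV)\in\cD$, $i\in[n]$, and $\ba\in\RR^r$, and take the direction $(\bL,\bR)=(\be_i\ba^\T,\zero)$. Then $\cP_{(\bU,\bV)}(\zero,\bL)=\be_i\ba^\T\bV^\T$. By the chain rule, the block $\nabla^2_{\bu_i}\cL(\bU\bV^\T)$ applied to $\ba$ equals the Hessian bilinear form $\nabla^2_{\bX}\cL(\bU\bV^\T)[\be_i\ba^\T\bV^\T,\be_i\ba^\T\bV^\T]$. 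The score term $2\langle\cG,\bL\bR^\T\rangle$ is absent because $\bX=\bU\bV^\T$ is linear in $\bU$ when $\bV$ is held fixed. This is the key simplification compared with the symmetric case, where $\cG_{ii}$ appeared.

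Assumption~\ref{assump_rect_rip_weighted} sandwiches this quadratic form between $\alpha\|\be_i\ba^\T\bV^\T\|_{\Fn}^2$ and $\beta\|\be_i\ba^\T\bV^\T\|_{\Fn}^2$, and
\[
\|\be_i\ba^\T\bV^\T\|_{\Fn}^2=\|\bV\ba\|^2=\ba^\T\bV^\T\bV\ba .
\]
Bounding $\ba^\T\bV^\T\bV\ba$ between $\lambda_{\min}(\bV^\T\bV)\|\ba\|^2$ and $\lambda_{\max}(\bV^\T\bV)\|\ba\|^2$ and then optimizing over $\ba$ gives the first displayed pair of inequalities. The $\bv_\ell$ case is symmetric, using the direction $(\bR,\bL)=(\be_\ell\bh^\T,\zero)$, which gives $\cP=\bU\bh\be_\ell^\T$ with squared Frobenius norm $\bh^\T\bU^\T\bU\bh$.

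For the ``consequently'' part, I will apply Weyl's inequality to $\bV^\T\bV=(\bV^*)^\T\bV^*+\{\bV^\T\bV-(\bV^*)^\T\bV^*\}$:
\[
\lambda_{\min}(\bV^\T\bV)\ge\lambda_{\min}\{(\bV^*)^\T\bV^*\}-\|\bV^\T\bV-(\bV^*)^\T\bV^*\|,
\]
and similarly for $\lambda_{\max}$. Under the balancing convention, $\lambda_{\min}\{(\bV^*)^\T\bV^*\}=\sigma_r(\bV^*)^2=q\sigma_{\min}$ and $\lambda_{\max}\{(\bV^*)^\T\bV^*\}=\sigma_1(\bV^*)^2=q\kappa\sigma_{\min}$, since $\sigma_1(\bV^*)^2/\sigma_r(\bV^*)^2=\kappa$ for the balanced factorization. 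Matching these with the stated expressions requires reading $\sigma_r(q^{-1/2}\bV^*)$ as the squared quantity $\sigma_{\min}$, in line with the notation of Lemma~\ref{lemma_general_convex_asym}, where $\sigma_{\min}=\sigma_r(\bSigma^*)$.

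With that reading, the bounds follow after multiplying through by $\alpha$ or $\beta$ and taking the min or max over $\cD$ and over rows. The only point needing care is this notational convention; there is no genuine obstacle in the argument itself.
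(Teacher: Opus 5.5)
Your proposal is correct and is essentially the paper's own (omitted) argument. You restrict the RIP of Assumption~\ref{assump_rect_rip_weighted} to single-row directions and note that no score term enters the diagonal blocks, because $\bX=\bU\bV^\T$ is linear in $\bU$ for fixed $\bV$; the paper phrases this as $\cG$ appearing only in off-diagonal blocks. You then finish with Weyl's inequality. Reading $\sigma_r(q^{-1/2}\bV^*)$ as the squared quantity $\sigma_{\min}=\sigma_r(\bSigma^*)$, and using balancing to get $\lambda_{\max}\{(\bV^*)^\T\bV^*\}=q\kappa\sigma_{\min}$, matches how the paper applies the lemma later (e.g.\ Step~2 of the proof of Theorem~\ref{thm_general_theory_uv}).
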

\begin{proof} The proof follows from the same argument as in Lemma~\ref{lemma_general_convex_asym} by restricting the perturbation to the form $\bW=\be_i\ba^\T$ for $\ba\in\RR^r$. We omit the routine details. Different from Lemma~\ref{coro_thm_general_convex}, $\cG$ is not involved here as it appears only in the off-diagonal parts in $\nabla_{\bz}^2h\prect(\bU,\bV)$.\end{proof}

Unlike the symmetric case, the alignment in the asymmetric model is taken over $GL(r)$ rather than $\cO^r$. The next lemma records the first-order optimality condition for the alignment.
\begin{lemma}\label{lemma_id_rec}
For any $\bU\in\RR^{n\times r}$ and $\bV\in\RR^{q\times r}$, define
\begin{equation*}
\Phi(\bG):=\|(\bU\bG-\bU^*,\,\bV\bG^{-\T}-\bV^*)\|_{\Fnw}^2, \qquad \bG\in GL(r).
\end{equation*}
If $\bI_r$ is a local minimizer of $\Phi(\cdot)$, then $\bM^*(\bU,\bV) = 0$. Equivalently,
\begin{equation}
p_\alpha^\natural(\bU,\bV) = \frac{\alpha nq}{4} \big\| n^{-1}(\bU-\bU^*)^\T\bU- q^{-1}\bV^\T(\bV-\bV^*)  \big\|_{\Fn}^2 =0. \label{st_id_rec}
\end{equation}
\end{lemma}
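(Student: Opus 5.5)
We use the first-order condition of $\Phi$ at $\bG=\bI_r$, computed along the one-parameter family $\bG=\bI_r+s\bD$ for arbitrary $\bD\in\RR^{r\times r}$. Since $GL(r)$ is open, $\bI_r+s\bD\in GL(r)$ for small $|s|$. Because $\bI_r$ is a local minimizer, the map $s\mapsto\Phi(\bI_r+s\bD)$ has a local minimum at $s=0$, so its derivative there vanishes. We then read off the gradient of $\Phi$ at $\bI_r$ as a matrix and conclude that it is zero.

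\textbf{Step 1: expand the two factors to first order.} We have $\bU(\bI_r+s\bD)=\bU+s\bU\bD$. For the inverse transpose, $(\bI_r+s\bD)^{-\T}=\bI_r-s\bD^\T+O(s^2)$, so $\bV(\bI_r+s\bD)^{-\T}=\bV-s\bV\bD^\T+O(s^2)$.

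\textbf{Step 2: differentiate $\Phi$ at $s=0$.} Using these expansions,
\begin{equation*}
\frac{d}{ds}\Phi(\bI_r+s\bD)\Big|_{s=0}=\frac{2}{n}\langle \bU-\bU^*,\bU\bD\rangle-\frac{2}{q}\langle \bV-\bV^*,\bV\bD^\T\rangle .
\end{equation*}
Rewrite each term with traces. The first is $\langle \bU-\bU^*,\bU\bD\rangle=\Tr\{(\bU-\bU^*)^\T\bU\bD\}=\langle \bU^\T(\bU-\bU^*),\bD\rangle$. The second is $\langle \bV-\bV^*,\bV\bD^\T\rangle=\Tr\{(\bV-\bV^*)^\T\bV\bD^\T\}=\langle (\bV-\bV^*)^\T\bV,\bD\rangle$.

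The derivative therefore equals $2\langle \bM,\bD\rangle$, where
\begin{equation*}
\bM=n^{-1}\bU^\T(\bU-\bU^*)-q^{-1}(\bV-\bV^*)^\T\bV .
\end{equation*}
This $\bM$ is exactly $\bM^*(\bU,\bV)^\T$.

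\textbf{Step 3: conclude.} Since the derivative vanishes for every $\bD$, taking $\bD=\bM$ gives $\|\bM\|_{\Fn}^2=0$. Hence $\bM^*(\bU,\bV)=0$, and so $p_\alpha^\natural(\bU,\bV)=\tfrac{\alpha nq}{4}\|\bM^*(\bU,\bV)\|_{\Fn}^2=0$, which is \eqref{st_id_rec}.

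The only place needing care is the linearization of the inverse in Step 1, and the transposition bookkeeping it produces in Step 2. Getting a transpose wrong there would replace $\bM^*$ by $(\bM^*)^\T$ or flip a sign. Because we conclude $\bM=0$, and $\bM^*=\bM^\T$, a transpose error would not affect the conclusion.
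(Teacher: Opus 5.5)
Your proof is correct and follows the same route as the paper: the first-order condition of $\Phi$ along the curve $\bI_r+s\bD$, using $(\bI_r+s\bD)^{-\T}=\bI_r-s\bD^\T+O(s^2)$. Your transpose bookkeeping is actually more careful than the paper's displayed identity, since the derivative equals $2\langle \bM^*(\bU,\bV)^\T,\bD\rangle$ rather than $2\langle \bM^*(\bU,\bV),\bD\rangle$; as you note, this makes no difference because $\bD$ is arbitrary and $\bM^*(\bU,\bV)^\T=0$ if and only if $\bM^*(\bU,\bV)=0$.
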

\begin{proof}See Section~\ref{supp_Sec_prove_lemma_id_rec}.\end{proof}

The following lemma shows that if an iterate is already close to the truth after a well-conditioned linear transform, then the optimal alignment exists and remains close to the reference rotation. This result will be repeatedly invoked to justify the existence of $\bG_t^*$ and to transfer estimates for the one-step aligned iterate to the optimally aligned iterate.
\begin{lemma}\em\label{lemma_alignment_near_rotation}
    Fix $\bm{Z}=\left[\begin{array}{c} \bm{U}\\ \bm{V} \end{array}\right]\in\mathbb{R}^{(n + q)\times r}$. Suppose that there exists a matrix $\bm{P}\in\mathbb{R}^{r\times r}$
    with $2/3\leq\sigma_{r}(\bm{P})\leq\sigma_{1}(\bm{P})\leq3/2$ such that 
    \begin{equation}
    \max\big\{ n^{-1/2}\| \bm{U}\bm{P}-\bm{U}^{\ast}\| _{\mathrm{F}},q^{-1/2}\| \bm{V}\bm{P}^{-\T}-\bm{V}^{\ast}\| _{\mathrm{F}}\big\} \leq\delta\leq \frac{1}{80}\big(\sigma_{r}(n^{-1/2}\bm{U}^{\ast})\wedge \sigma_r(q^{-1/2}\bV^{\ast})\big).\label{eq_assumption_alignment_near_rotation}
    \end{equation}
    Then the optimal alignment matrix $\bm{Q}^{\ast}\in\mathbb{R}^{r\times r}$ between $\bm{Z}$ and $\bm{Z}_{\ast}$ exists. In addition, the matrix $\bm{Q}^{\ast}$ satisfies 
    \begin{equation*}
    \| \bm{P}-\bm{Q}^{\ast}\| \le \| \bm{P}-\bm{Q}^{\ast}\| _{\mathrm{F}}\leq\frac{5\delta}{\sigma_r(\bU^*/\sqrt{n})},\;\| \bm{P}^{-\T}-{\bQ^{\ast}}^{-\T}\| \leq\| \bm{P}^{-\T}-{\bQ^{\ast}}^{-\T}\| _{\mathrm{F}}\leq\frac{5\delta}{\sigma_r(\bV^*/\sqrt{q})}.
    \end{equation*}
    \end{lemma}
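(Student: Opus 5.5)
This is a perturbation argument around $\bm{P}$. Write any candidate alignment as $\bG=\bm{P}(\bI_r+\bDelta)$ and define
\begin{equation*}
\Phi(\bG)=n^{-1}\|\bU\bG-\bU^*\|_{\Fn}^2+q^{-1}\|\bV\bG^{-\T}-\bV^*\|_{\Fn}^2 .
\end{equation*}
We first show that $\Phi$ has a minimizer over $GL(r)$ inside a compact neighborhood of $\bm{P}$. We then bound the size of $\bDelta$ at that minimizer, which gives the bound on $\bm{P}-\bQ^*$. The bound on $\bm{P}^{-\T}-{\bQ^*}^{-\T}$ follows by the symmetric argument.

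\textbf{Existence.} We compare $\Phi$ on a sphere $\|\bDelta\|_{\Fn}=\rho_0$, taking $\rho_0$ to be a small constant multiple of $\delta/\sigma_r(\bU^*/\sqrt n)$, but at most $1/10$. Set $\bU_P=\bU\bm{P}$ and $\bV_P=\bV\bm{P}^{-\T}$.
\begin{itemize}
\item For the $\bU$-part, expand $\bU_P(\bI+\bDelta)-\bU^*=(\bU_P-\bU^*)+\bU_P\bDelta$.
\item For the $\bV$-part, use the Neumann series $(\bI+\bDelta)^{-\T}=\bI-\bDelta^\T+O(\|\bDelta\|^2)$ to get $\bV_P(\bI+\bDelta)^{-\T}-\bV^*=(\bV_P-\bV^*)-\bV_P\bDelta^\T+\text{h.o.t.}$
\end{itemize}
The quadratic part in $\bDelta$ is then $n^{-1}\|\bU_P\bDelta\|_{\Fn}^2+q^{-1}\|\bV_P\bDelta^\T\|_{\Fn}^2$. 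Since $\sigma_r(\bU_P)\ge\sigma_r(\bU^*)-\sqrt n\,\delta\ge(79/80)\sigma_r(\bU^*)$ by \eqref{eq_assumption_alignment_near_rotation} and Weyl's inequality, this quadratic part is at least $c\,\sigma_r(\bU^*/\sqrt n)^2\|\bDelta\|_{\Fn}^2$, and similarly with $\bV$. The linear cross terms are at most $2\delta\,(\|\bU_P\|+\|\bV_P\|)\|\bDelta\|_{\Fn}$ up to scaling. The cubic remainder is controlled because $\|\bDelta\|\le 1/10$ and $\sigma_1(\bm P),\sigma_r(\bm P)\in[2/3,3/2]$.

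It follows that $\Phi(\bm{P}(\bI+\bDelta))>\Phi(\bm{P})$ on the sphere. Because $\Phi$ is continuous on the compact ball $\{\|\bDelta\|_{\Fn}\le\rho_0\}$, which lies inside $GL(r)$, it attains a minimum there, and that minimum is interior. It remains to check that this interior minimizer is the global optimal alignment. Outside the ball I plan a crude lower bound: $\Phi(\bG)\ge$ a constant times $\sigma_r^2$ whenever $\bG$ is far from $\bm{P}$ or nearly singular. This holds because either $\bU\bG$ or $\bV\bG^{-\T}$ must then deviate from $\bU^*$ or $\bV^*$ by order $\sigma_r$, whereas $\Phi(\bm P)\le2\delta^2\le 2\sigma_r^2/80^2$. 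I expect this global step to be the main obstacle, since $GL(r)$ is noncompact. The plan is to split into two cases, $\|\bG\|$ large or $\|\bG^{-1}\|$ large, and use $\sigma_r(\bU)$ and $\sigma_r(\bV)$ being of order $\sigma_r(\bU^*)$ and $\sigma_r(\bV^*)$ in each case.

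\textbf{Quantitative bound.} At the minimizer $\bQ^*=\bm{P}(\bI+\bDelta^*)$ we have $\Phi(\bQ^*)\le\Phi(\bm{P})\le 2\delta^2$. The triangle inequality then gives
\begin{equation*}
n^{-1/2}\|\bU(\bQ^*-\bm{P})\|_{\Fn}\le n^{-1/2}\|\bU\bQ^*-\bU^*\|_{\Fn}+n^{-1/2}\|\bU\bm{P}-\bU^*\|_{\Fn}\le(\sqrt2+1)\delta .
\end{equation*}
Next, $\sigma_r(\bU)\ge\sigma_r(\bU_P)/\|\bm{P}\|\ge(2/3)(79/80)\sigma_r(\bU^*)$, so
\begin{equation*}
\|\bQ^*-\bm{P}\|_{\Fn}\le \frac{(\sqrt2+1)\delta}{\sigma_r(\bU)/\sqrt n}\le\frac{5\delta}{\sigma_r(\bU^*/\sqrt n)} .
\end{equation*}
The same argument applied to the $\bV$-block, using $\bV({\bQ^*}^{-\T}-\bm{P}^{-\T})$ and $\sigma_r(\bV)\ge\sigma_r(\bV_P)/\|\bm{P}^{-1}\|$, gives the second bound. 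The constants $2/3$, $3/2$ and $1/80$ appear chosen to make $(1+\sqrt2)\cdot\tfrac32\cdot\tfrac{80}{79}\le 5$ close.
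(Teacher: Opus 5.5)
Your closing estimate is correct and is the same computation the paper ends with. The inequality $\Phi(\bQ^*)\le\Phi(\bm{P})\le 2\delta^2$, the triangle inequality, and $\sigma_r(\bU)\ge\sigma_r(\bU\bm{P})/\sigma_1(\bm{P})\ge\frac23\cdot\frac{79}{80}\,\sigma_r(\bU^*)$ give $\|\bQ^*-\bm{P}\|_{\mathrm F}\le 5\delta/\sigma_r(\bU^*/\sqrt n)$, and the $\bV$-block is symmetric. Your constant $1+\sqrt2$ is the correct one.

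The gap is the existence of a global minimizer $\bQ^*$ over $GL(r)$, which that estimate presupposes. Your sphere argument does not deliver it as sketched, for three reasons.
\begin{enumerate}
\item The radius cannot be a small multiple of $\delta/\sigma_r(\bU^*/\sqrt n)$. Take $\bm{P}=\bI_r$, $\bU^*=\bU(\bI_r+\bDelta_0)$, $\bV^*=\bV(\bI_r+\bDelta_0)^{-\T}$, and $\bDelta_0=s\,\bv\bv^\T$, where $\bv$ is a common bottom right singular vector of $\bU$ and $\bV$ with equal normalized bottom singular values. For small $s$ the hypotheses hold with $\delta=s\,\sigma_r(\bU/\sqrt n)$. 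The minimizer $\bI_r+\bDelta_0$ sits at distance $s=\delta/\sigma_r(\bU/\sqrt n)$, and $\Phi$ strictly decreases along the segment towards it. So $\Phi>\Phi(\bm{P})$ fails on every smaller sphere.
\item Bounding the cross terms through $\|\bU\bm{P}\|$ and $\|\bV\bm{P}^{-\T}\|$, against a quadratic term $c\,\sigma_r^2\|\bDelta\|_{\mathrm F}^2$, forces a radius of order $\delta\sigma_1/\sigma_r^2$. That condition-number factor is neither kept below $1/10$ by the hypotheses nor compatible with the constant $5$. The same mismatch between $\|\bV\bm{P}^{-\T}\|$ and $\sigma_r(\bV\bm{P}^{-\T})$ means $\|\bDelta\|\le 1/10$ alone does not absorb the Neumann remainder.
\item Even a valid sphere argument only yields a local minimizer inside the ball. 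Your global step, a bound $\Phi(\bG)\gtrsim\sigma_r^2$ for $\bG$ far from $\bm{P}$ or nearly singular, is unproved. It would also leave uncovered the shell between your radius and distance of order one, where $\Phi$ is only of order $\delta^2$.
\end{enumerate}

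The missing observation is that your triangle-inequality bound holds for every invertible $\bG$, not only for a minimizer:
$\sigma_r(\bU/\sqrt n)\,\|\bG-\bm{P}\|_{\mathrm F}\le n^{-1/2}\|\bU(\bG-\bm{P})\|_{\mathrm F}\le\Phi(\bG)^{1/2}+\delta$.
Let $B=\{\bG:\|\bG-\bm{P}\|_{\mathrm F}\le 5\delta/\sigma_r(\bU^*/\sqrt n)\}$. Every $\bG\notin B$ has $\sigma_r(\bU/\sqrt n)\|\bG-\bm{P}\|_{\mathrm F}>(1+\sqrt2)\delta$, hence $\Phi(\bG)^{1/2}>\sqrt2\,\delta\ge\Phi(\bm{P})^{1/2}$. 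The radius of $B$ is at most $1/16$, so by Weyl every element of $B$ has $\sigma_r\ge 2/3-1/16>1/2$. Thus $B$ is a compact subset of $GL(r)$, and $\Phi$ attains its minimum on $B$. That minimum is at most $\Phi(\bm{P})$, which is strictly below $\Phi$ everywhere outside $B$. The constrained minimizer is therefore the global optimal alignment, and your final estimate applies to it.

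This is the paper's route, adapted from Lemma 1 of Ma et al.\ (2021): restrict to that ball, obtain a minimizer by compactness, and identify it with the global one via $\Phi(\bQ^*)\le\Phi(\bm{P})$. No Taylor expansion, second-order growth estimate, or case split on $\|\bG\|$ versus $\|\bG^{-1}\|$ is needed.
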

\begin{proof}
    See Section~\ref{subsec_prove_lemma_alignment_near_rotation}.
\end{proof}

\subsection{Proof of Theorem~\ref{thm_general_theory_uv}}\label{supp_prove_thm_general_theory_uv}
We prove in this subsection the stronger contraction statement under the localization $\cDuvinf$. Accordingly, throughout the proof we invoke Assumptions~\ref{assump_rect_rip_weighted},~\ref{assump_rect_lip_1},~\ref{assump_row_cross_curvature_uv}, and~\ref{assump_rect_lip} with $\cD=\cDuvinf$. The $\ell_2$-only part of Theorem~\ref{thm_general_theory_uv}, where only Assumptions~\ref{assump_rect_rip_weighted}--\ref{assump_rect_lip_1} are imposed with $\cD=\cDuvtwo$, is proved later in Section~\ref{supp_sec_prove_thm_general_theory_noisy_uv_l2} by the same argument after removing the row-wise estimates.
Throughout this proof, $\eta^t\equiv \eta=\{10(\alpha+\beta)\kappa\sigma_{\min}\}^{-1}$. Besides the error bounds, we also track the existence of the optimal invertible alignment and its proximity to the reference orthogonal matrix $\bR^0$. In particular, for each $t\ge 0$, define the optimal invertible alignment, 
\begin{equation*}
\bG_t^*\in\argmin_{\bG\in GL(r)} \|(\bU^t\bG-\bU^*,\,\bV^t\bG^{-\T}-\bV^*)\|_{\Fnw}, \qquad \bLambda_t^*:=(\bG_t^*)^\T\bG_t^*,
\end{equation*}
whenever the minimum is attained. Let
\begin{equation*}
\tilde\bU^t:=\bU^t\bG_t^*, \qquad
\tilde\bV^t:=\bV^t(\bG_t^*)^{-\T}.
\end{equation*}
We also write
\begin{equation*}
\bE_U^t:=\tilde\bU^t-\bU^*, \qquad \bE_V^t:=\tilde\bV^t-\bV^*, \qquad
\be_t:= \begin{bmatrix} \mathrm{vec}\big((\bE_U^t)^\T\big)\\ \mathrm{vec}\big((\bE_V^t)^\T\big) \end{bmatrix}.
\end{equation*}
Then, one can easily see
\begin{equation*}
\|(\bE_U^t,\bE_V^t)\|_{\Fnw} = (nq)^{-1/2}\|\bS_z^{-1}\be_t\|, \qquad \|(\bE_U^t,\bE_V^t)\|_{\twinfw} = \|\bE_U^t\|_{2\to\infty}\vee \|\bE_V^t\|_{2\to\infty}.
\end{equation*}

Let $\iota_0\in(0,1/30)$ be a sufficiently small constant. We prove by induction on $t$ that, for all $0\le s\le t$, the following hold:
\begin{enumerate}[label=(\arabic*)]
    \item\label{item_uv_ind_exist_new} the optimal alignment matrix $\bG_s^*$ exists;
    \item\label{item_uv_ind_l2_new} $\|(\bE_U^s,\bE_V^s)\|_{\Fnw}\le \rho^s\phi_{nq}\tau_*$;
    \item\label{item_uv_ind_linf_new}$\|(\bE_U^s,\bE_V^s)\|_{\twinfw}\le \rho^s\psi_{nq}\omega_*$;
    \item\label{item_uv_ind_rot_new} $\|\bG_s^*-\bR^0\|\vee \|(\bG_s^*)^{-\T}-\bR^0\|\le \iota_0{\alpha}/(\beta{\kappa})$.
\end{enumerate}
Here $\bR^0\in\cO^r$ is the orthogonal matrix in \eqref{eq_init_consis_main_R0}. 
Since $\bR^0\in\cO^r$, we may equivalently replace $(\bU,\bV)$ by $(\bU\bR^0,\bV(\bR^0)^{-\T})=(\bU\bR^0,\bV\bR^0)$. Thus one may work with $\bR^0=\bI_r$ throughout. We keep $\bR^0$ only to make the alignment argument transparent.

Similar to the proof in the symmetric model, define the interpolation segment parameterized by $s\in[0,1]$ as
\begin{equation*}
\bU_t(s):=\bU^*+s\bE_U^t, \qquad \bV_t(s):=\bV^*+s\bE_V^t.
\end{equation*}
By the induction hypotheses,
\begin{equation*}
\|(\bE_U^t,\bE_V^t)\|_{\Fnw}\le \phi_{nq}\tau_*, \qquad \|(\bE_U^t,\bE_V^t)\|_{\twinfw}\le \psi_{nq}\omega_*.
\end{equation*}
Hence, one can check that, for every $s\in[0,1]$,
\begin{equation*}
\disttwo\{(\bU_t(s),\bV_t(s)),(\bU^*,\bV^*)\} \le s\|(\bE_U^t,\bE_V^t)\|_{\Fnw} \le \phi_{nq}\tau_*,
\end{equation*}
and similarly
\begin{equation*}
\distinf\{(\bU_t(s),\bV_t(s)),(\bU^*,\bV^*)\} \le s\|(\bE_U^t,\bE_V^t)\|_{\twinfw} \le \psi_{nq}\omega_*.
\end{equation*}
Since the current theorem assumes $\phi_{nq}\le \epsilon/2$ and $\psi_{nq}\le \epsilon/2$, the whole segment $\{(\bU_t(s),\bV_t(s)):0\le s\le 1\}$ lies in $\cDuvinf$. Therefore every assumption invoked below is available uniformly along this path.


Next, recall that in the asymmetric model, we can equivalently formulate the nonconvex procedure approximately into a strongly convex one within the local region, after imposing the optimal alignment $\bG_t^*$. Specifically, define the one-step iterate aligned by $\bG_t^*$:
\begin{equation*}
\bar\bU^{t+1}:=\bU^{t+1}\bG_t^*, \qquad \bar\bV^{t+1}:=\bV^{t+1}(\bG_t^*)^{-\T}.
\end{equation*}
Since $(\tilde\bU^t,\tilde\bV^t)$ optimally aligns with $(\bU^*,\bV^*)$ provided that $\bG_t^*$ exists according to the induction hypothesis~\ref{item_uv_ind_exist_new}, Lemma~\ref{lemma_id_rec} implies
\begin{equation*}
n^{-1}(\tilde\bU^t-\bU^*)^\T\tilde\bU^t = q^{-1}\tilde\bV^{t\T}(\tilde\bV^t-\bV^*),
\end{equation*}
and hence we know $\nabla_{\bU}p_\alpha^\natural(\tilde\bU^t,\tilde\bV^t)=\zero$ and $\nabla_{\bV}p_\alpha^\natural(\tilde\bU^t,\tilde\bV^t)=\zero$.
Therefore the gradient update can be rewritten as
\begin{align}
\bar\bU^{t+1} &= \tilde\bU^t -\frac{\eta}{q}\nabla_{\bU}h_\alpha^\natural(\tilde\bU^t,\tilde\bV^t) -\frac{\eta}{q}\nabla_{\bU}h_\alpha^\natural(\tilde\bU^t,\tilde\bV^t)(\bLambda_t^*-\bI_r), \label{eq_uv__U_new} \\
\bar\bV^{t+1} &= \tilde\bV^t -\frac{\eta}{n}\nabla_{\bV}h_\alpha^\natural(\tilde\bU^t,\tilde\bV^t) -\frac{\eta}{n}\nabla_{\bV}h_\alpha^\natural(\tilde\bU^t,\tilde\bV^t)\big\{(\bLambda_t^*)^{-1} -\bI_r\big\}. \label{eq_uv__V_new}
\end{align}

Now we proceed to prove the induction hypotheses~\ref{item_uv_ind_exist_new}--\ref{item_uv_ind_rot_new}, starting from $t=0$.
\medskip
\paragraph{\it\underline{Initialization.}}
We first verify the auxiliary existence and rotation statements at $t=0$, and then deduce the error bounds. By \eqref{eq_init_consis_main_R0},
\begin{equation*}
\max\Big\{ n^{-1/2}\|\bU^0\bR^0-\bU^*\|_{\Fn}, \, q^{-1/2}\|\bV^0(\bR^0)^{-\T}-\bV^*\|_{\Fn} \Big\} \le \phi_{nq}\tau_*.\end{equation*}
Since $\tau_*^2 = (n^{-1}\|\bU^*\|_{\Fn}^2+q^{-1}\|\bV^*\|_{\Fn}^2)/2 \le r\kappa\sigma_{\min}$, we have by the scaling condition $\phi_{nq}\le c_0\alpha/(\beta^2\kappa\sqrt{r\kappa})$ that
\begin{equation*}
\phi_{nq}\tau_* \le c_0\frac{\alpha}{\beta^2\kappa\sqrt{r\kappa}}\sqrt{r\kappa\sigma_{\min}} = c_0\frac{\alpha}{\beta^2\kappa}\sqrt{\sigma_{\min}}.
\end{equation*}
Applying Lemma~\ref{lemma_alignment_near_rotation} with $\bm P=\bR^0$, we see that $\bG_0^*$ exists and
\begin{align*}
\|\bG_0^*-\bR^0\|\vee \|(\bG_0^*)^{-\T}-\bR^0\| &\le 5\phi_{nq}\tau_* \Big( \frac{\sigma_r(\bU^*)}{\sqrt n}\wedge \frac{\sigma_r(\bV^*)}{\sqrt q} \Big)^{-1} \\
&\le 5c_0 \frac{\alpha\sqrt{\sigma_{\min}}}{ \beta{\kappa^2}\big(n^{-1/2}\sigma_r(\bU^*)\wedge q^{-1/2}\sigma_r(\bV^*)\big)} \\
&\le
\iota_0\frac{\alpha}{\beta{\kappa}}, \end{align*}
with $\iota_0<10^{-1}$ after shrinking $c_0$ if necessary. This immediately proves \ref{item_uv_ind_exist_new} and \ref{item_uv_ind_rot_new} at $t=0$. 
Since $\bG_0^*$ minimizes the weighted Frobenius distance, the initialization bound immediately yields the weighted $\ell_2$ statement \eqref{item_uv_ind_l2_new} at $t=0$. Under the additional row-wise initialization bound \eqref{eq_sym_init_Z_uniform_uv}, bound for $\|\bG_0^* - \bR^0\|\vee \|(\bG_0^*)^{-\T} - \bR^0\|$ above gives
\begin{equation*}
\|\bU^0\bG_0^*-\bU^*\|_{2\to\infty} \le \|\bU^0\bR^0-\bU^*\|_{2\to\infty} + \|\bU^0\|_{2\to\infty}\|\bG_0^*-\bR^0\|,
\end{equation*}
and similarly for the $V$-block. This yields \eqref{item_uv_ind_linf_new} at $t=0$. 
Now assume \ref{item_uv_ind_exist_new}--\ref{item_uv_ind_rot_new} hold for all $0\le s\le t$. We prove them for $t+1$.

\medskip
\paragraph{\it \underline{ Step 1: weighted $\ell_2$ error contraction.}}
Let
\begin{equation*}
\bar\be_{t+1}:= \begin{pmatrix} \mathrm{vec}\big((\bar\bU^{t+1}-\bU^*)^\T\big)\\ \mathrm{vec}\big((\bar\bV^{t+1}-\bV^*)^\T\big) \end{pmatrix}, \qquad 
\bw_t:=\bS_z^{-1}\be_t, \qquad \bar\bw_{t+1}:=\bS_z^{-1}\bar\be_{t+1},
\end{equation*}
and
\begin{equation*}
\vartheta_t:=\|\bLambda_t^*-\bI_r\|\vee\|(\bLambda_t^*)^{-1}-\bI_r\|.
\end{equation*}
Since $\cG(\bU^*,\bV^*)=\nabla_{\bX}\cL(\bX^*)=\zero$ in the noiseless case and
\begin{equation*}
\nabla_{\bU}p_\alpha^\natural(\tilde\bU^t,\tilde\bV^t) = \nabla_{\bV}p_\alpha^\natural(\tilde\bU^t,\tilde\bV^t) = \nabla_{\bU}p_\alpha^\natural(\bU^*,\bV^*) = \nabla_{\bV}p_\alpha^\natural(\bU^*,\bV^*) = \zero,
\end{equation*}
by Lemma~\ref{lemma_id_rec}, we have $\nabla_{\bU}h_\alpha^\natural(\bU^*,\bV^*)=\zero,$ and
$\nabla_{\bV}h_\alpha^\natural(\bU^*,\bV^*)=\zero.$
Therefore, \eqref{eq_uv__U_new}--\eqref{eq_uv__V_new} lead to 
\begin{align*}
\bar\bw_{t+1} =&\, \bw_t - \eta\bS_z\mathrm{vec}\Big\{ \big( \nabla_{\bZ}h_\alpha^\natural(\tilde\bU^t,\tilde\bV^t) - \nabla_{\bZ}h_\alpha^\natural(\bU^*,\bV^*) \big)^\T \Big\} \\&\,-
\eta\cR_t\bS_z\mathrm{vec}\Big\{ \big( \nabla_{\bZ}h_\alpha^\natural(\tilde\bU^t,\tilde\bV^t) - \nabla_{\bZ}h_\alpha^\natural(\bU^*,\bV^*)\big)^\T \Big\},
\end{align*}
where we let
\begin{equation*}
\cR_t:= \begin{pmatrix}(\bLambda_t^*-\bI_r)\otimes \bI_n & \zero\\ \zero & \big((\bLambda_t^*)^{-1}-\bI_r\big)\otimes \bI_q\end{pmatrix}
\end{equation*}
By the fundamental theorem of calculus (Theorem 4.2 in \citet{lang2012real}, Chapter XIII), we have the expansion
\begin{equation*}
\mathrm{vec}\Big\{ \big( \nabla_{\bZ}h_\alpha^\natural(\tilde\bU^t,\tilde\bV^t) - \nabla_{\bZ}h_\alpha^\natural(\bU^*,\bV^*) \big)^\T \Big\} = \Big( \int_0^1 \nabla_{\bz}^2 h_\alpha^\natural(\bU_t(s),\bV_t(s))\,ds \Big)\be_t.
\end{equation*}
Similar to the decomposition in the proof of Theorem~\ref{thm_general_theory}, define
\begin{equation*}
\bar\cA_t := \bS_z \int_0^1 \big\{ \nabla_{\bz}^2 h_\alpha^\natural(\bU_t(s),\bV_t(s)) - \cG_e(\bU_t(s),\bV_t(s)) \big\} \,ds\, \bS_z,
\end{equation*}
and
\begin{equation*}
\bar\cG_t := \bS_z \int_0^1 \big\{ \cG_e(\bU_t(s),\bV_t(s)) - \cG_e(\bU^*,\bV^*) \big\} \,ds\, \bS_z.
\end{equation*}
Since $\cG(\bU^*,\bV^*)=\zero$, we have $\cG_e(\bU^*,\bV^*)=\zero$, which therefore gives
\begin{equation*}
\bS_z\mathrm{vec}\Big\{\big( \nabla_{\bZ}h_\alpha^\natural(\tilde\bU^t,\tilde\bV^t) - \nabla_{\bZ}h_\alpha^\natural(\bU^*,\bV^*) \big)^\T \Big\} = (\bar\cA_t+\bar\cG_t)\bw_t.
\end{equation*}
Hence, the expansion becomes 
\begin{equation*}
\bar\bw_{t+1} = (\bI_{nr+qr}-\eta\bar\cA_t)\bw_t - \eta\bar\cG_t\bw_t - \eta\cR_t(\bar\cA_t+\bar\cG_t)\bw_t.
\end{equation*}
We next decompose the right-hand side into
\begin{equation*}
\gamma_{1,t}:=\|(\bI_{nr+qr}-\eta\bar\cA_t)\bw_t\|, \qquad
\gamma_{2,t}:=\eta\|\bar\cG_t\bw_t\|,
\end{equation*}
\begin{equation*}
\gamma_{3,t}:=\eta\|\cR_t(\bar\cA_t+\bar\cG_t)\bw_t\| \le \eta\vartheta_t\big(\|\bar\cA_t\bw_t\|+\|\bar\cG_t\bw_t\|\big),
\end{equation*}
so that $ \|\bar\bw_{t+1}\|\le \gamma_{1,t}+\gamma_{2,t}+\gamma_{3,t}$.
Here, $\gamma_{1,t}$--$\gamma_{3,t}$ play roles analogous to those of the quantities denoted by the same notation in Step 1 of the proof of Theorem~\ref{thm_general_theory} in Section~\ref{supp_sec_prove_thm_general_theory}.

We first bound $\gamma_{1,t}$. Note that we have shown that the interpolation segment lies in $\cDuvinf$, so Lemma~\ref{lemma_general_convex_asym} applies with $\cD=\cDuvinf$. In addition, note that
\begin{equation*}
n^{-1}\|\bE_U^t\|_{\Fn}^2+q^{-1}\|\bE_V^t\|_{\Fn}^2 = \|(\bE_U^t,\bE_V^t)\|_{\Fnw}^2 \le \phi_{nq}^2\tau_*^2 .
\end{equation*}
With the theorem's scaling condition $\phi_{nq}\tau_*\le c_0\alpha\sqrt{\sigma_{\min}/\kappa}$, we know this quantity is at most $c_0\sigma_{\min}/\kappa$ after shrinking $c_0$ if necessary. Substituting this localization bound into Lemma~\ref{lemma_general_convex_asym} yields,  after shrinking $c_0$, $\lambda_{\min}(\bar\cA_t)\ge 7{\alpha\sigma_{\min}}/{8}$.
To control the maximal eigenvalue, we note that, with Weyl's inequality, we know that
\begin{equation*}
\frac{\|\bU_t(s)\|^2}{n}\vee \frac{\|\bV_t(s)\|^2}{q}\le  \frac{\|\bU_t(s)\|^2 + \|\bE_U^t\|^2_{\Fn}}{n}\vee \frac{\|\bV_t(s)\|^2 + \|\bE_V^t\|^2_{\Fn}}{q}\le
1.01\kappa\sigma_{\min},
\end{equation*}
The same argument used in proving Lemma~\ref{lemma_general_convex_asym} yields $\lambda_{\max}(\bar\cA_t)\le 1.01(\alpha+\beta)\kappa\sigma_{\min}$.
Since $\eta\le \{10(\alpha+\beta)\kappa\sigma_{\min}\}^{-1}$, all eigenvalues of
$\bI_{nr+qr}-\eta\bar\cA_t$ lie in $[0,1)$, and we get
\begin{equation*}
\gamma_{1,t} \le \Big(1-\eta\lambda_{\min}(\bar\cA_t)\Big)\|\bw_t\| \le \Big(1-\frac78\eta\alpha\sigma_{\min}\Big)\|\bw_t\|.
\end{equation*}

Next, we bound $\gamma_{2,t}$.
By the definition of $\bar\cG_t$,
\begin{align*}
\gamma_{2,t} \le\;& \frac{\eta}{\sqrt{nq}} \Big\| \int_0^1 \big\{\cG(\bU_t(s),\bV_t(s))-\cG(\bU^*,\bV^*)\big\}\,ds \times \sqrt n\,\bE_V^t \Big\|_{\Fn} \\
&\;+ \frac{\eta}{\sqrt{nq}} \Big\| \int_0^1 \big\{\cG(\bU_t(s),\bV_t(s))-\cG(\bU^*,\bV^*)\big\}^\T\,ds \times \sqrt q\,\bE_U^t \Big\|_{\Fn}.
\end{align*}
By Assumption~\ref{assump_rect_lip_1}, we know
\begin{equation*}
\|\cG(\bU_t(s),\bV_t(s))-\cG(\bU^*,\bV^*)\| \le
L_2\|\bU_t(s)\bV_t(s)^\T-\bU^*(\bV^*)^\T\|_{\Fn}.
\end{equation*}
Also, by telescoping and $\|\cdot\|\le \|\cdot\|_{\Fn}$, one has
\begin{align*}
\|\bU_t(s)\bV_t(s)^\T-\bU^*(\bV^*)^\T\|_{\Fn}  &\le \|\bE_U^t\|_{\Fn}\|\bV^*\| + \|\bU^*\|\|\bE_V^t\|_{\Fn} + \|\bE_U^t\|_{\Fn}\|\bE_V^t\|_{\Fn} \\
&\le C\sqrt{nq}\sqrt{\kappa\sigma_{\min}}\, \|(\bE_U^t,\bE_V^t)\|_{\Fnw},
\end{align*}
where in the last step we used $q^{-1}\|\bV^*\|^2+n^{-1}\|\bU^*\|^2\le 2\kappa\sigma_{\min}$ and
\begin{equation*}
\frac{\sqrt n\,\|\bE_V^t\|_{\Fn}+\sqrt q\,\|\bE_U^t\|_{\Fn}}{\sqrt{nq}} \le C\|(\bE_U^t,\bE_V^t)\|_{\Fnw}.
\end{equation*}
We therefore conclude that
\begin{equation*}
\gamma_{2,t} \le C\eta\sqrt{nq}\sqrt{\kappa\sigma_{\min}}\, \|(\bE_U^t,\bE_V^t)\|_{\Fnw}^2.
\end{equation*}
From the scaling condition, we know $\sqrt{\kappa\sigma_{\min}}\phi_{nq}\tau_*\le c_0\alpha\sigma_{\min}/\beta$, Then by the induction hypothesis~\ref{item_uv_ind_l2_new}, we know $ \|(\bE_U^t,\bE_V^t)\|_{\Fnw} \le \phi_{nq}\tau_* \le c_0\alpha\sqrt{{\sigma_{\min}}/{\kappa}}$,
which in turn gives $ \sqrt{\kappa\sigma_{\min}}\, \|(\bE_U^t,\bE_V^t)\|_{\Fnw} \le c_0\alpha\sigma_{\min}$. After shrinking $c_0$ if necessary, we arrive at
\begin{equation*}
\gamma_{2,t} \le \frac{1}{16}\eta\alpha\sigma_{\min}\|\bw_t\|.
\end{equation*}

For $\gamma_{3,t}$, we now substitute the operator bound on $\bar\cA_t$ and the estimate for $\gamma_{2,t}$ into the alignment-imbalance term, which gives
\begin{equation*}
\gamma_{3,t} \le \eta\vartheta_t\|\bar\cA_t\bw_t\| + \eta\vartheta_t\|\bar\cG_t\bw_t\| \le \eta\vartheta_t\lambda_{\max}(\bar\cA_t)\|\bw_t\| + \vartheta_t\gamma_{2,t}.
\end{equation*}
One then has $\gamma_{3,t} \le 1.01\vartheta_t(\alpha+\beta)\kappa\sigma_{\min}\eta\|\bw_t\| + \vartheta_t\gamma_{2,t}$.
Next, by telescoping and $(\bR^0)^\T\bR^0 = \bR^0(\bR^0)^\T = \bI_r$, we have
    \begin{align*}
\|\bLambda_t^*-\bI_r\| &\le 2\|\bG_t^*-\bR^0\|+\|\bG_t^*-\bR^0\|^2,\\ \|(\bLambda_t^*)^{-1}-\bI_r\| &\le 2\|(\bG_t^*)^{-\T}-\bR^0\|+\|(\bG_t^*)^{-\T}-\bR^0\|^2.\end{align*}
Therefore, with the induction hypothesis~\ref{item_uv_ind_rot_new} at $t$, we arrive at
\begin{equation}
\vartheta_t = \|\bLambda_t^*-\bI_r\|\vee\|(\bLambda_t^*)^{-1}-\bI_r\| \le \frac{\alpha}{10\beta\kappa}, \label{eq_uv_balance_smallness_step1}
\end{equation}
after shrinking $\iota_0$ if necessary. Therefore,
\begin{equation*}
\gamma_{3,t} \le \frac{5}{8}\eta\alpha\sigma_{\min}\|\bw_t\|.
\end{equation*}

Combining the three estimates gives $\|\bar\bw_{t+1}\|\le \big(1-\frac14\eta\alpha\sigma_{\min}\big)\|\bw_t\| = \rho\|\bw_t\|$. Dividing by $\sqrt{nq}$, we obtain $\|(\bar\bU^{t+1}-\bU^*,\,\bar\bV^{t+1}-\bV^*)\|_{\Fnw}\le \rho\|(\bE_U^t,\bE_V^t)\|_{\Fnw}$.
By the induction hypothesis, we finally arrive at
\begin{equation}
\|(\bar\bU^{t+1}-\bU^*,\,\bar\bV^{t+1}-\bV^*)\|_{\Fnw} \le \rho^{t+1}\phi_{nq}\tau_*. \label{eq_uv__l2_new}
\end{equation}

\medskip
\paragraph{\it \underline{ Step 2: weighted $\ell_{\infty}$ error contraction.}}
We now bound $ \|(\bar\bU^{t+1}-\bU^*,\bar\bV^{t+1}-\bV^*)\|_{\twinfw}$. Recall that $ \bU_t(s):=\bU^*+s\bE_U^t$, and $\bV_t(s):=\bV^*+s\bE_V^t $ for $s\in[0,1]$. Since the penalty gradients vanish at both $(\tilde\bU^t,\tilde\bV^t)$ and $(\bU^*,\bV^*)$, the gradient difference in the row-wise mean-value expansion is the same as that for $\cL(\cdot)$. Hence, only the Hessian blocks of $\cL(\cdot)$ appear below.

To obtain a row-wise recursion, we now rewrite the mean-value expansion block by block. Specifically, for $i\in[n]$ and $\ell\in[q]$, define
\begin{align*}
\bar\cH_{ij,t}^{UU} &:= \frac{1}{q}\int_0^1 \nabla_{\bu_i\bu_j}^2 \cL\big(\bU_t(s),\bV_t(s)\big)\,ds,\\
\bar\cH_{i\ell,t}^{UV} &:= \frac{1}{q}\int_0^1 \Big\{\nabla_{\bu_i\bv_\ell}^2 \cL\big(\bU_t(s),\bV_t(s)\big)-G_{i\ell}\big(\bU_t(s),\bV_t(s)\big)\bI_r\Big\}\,ds,\\
\bar\cH_{\ell i,t}^{VU} &:= \frac{1}{n}\int_0^1 \Big\{\nabla_{\bv_\ell\bu_i}^2 \cL\big(\bU_t(s),\bV_t(s)\big)-G_{i\ell}\big(\bU_t(s),\bV_t(s)\big)\bI_r\Big\}\,ds,\\
\bar\cH_{\ell k,t}^{VV} &:= \frac{1}{n}\int_0^1 \nabla_{\bv_\ell\bv_k}^2 \cL\big(\bU_t(s),\bV_t(s)\big)\,ds,
\end{align*}
and
\begin{equation*}
\bar\cG_t:=\int_0^1 \cG\big(\bU_t(s),\bV_t(s)\big)\,ds =\int_0^1\Big\{\cG\big(\bU_t(s),\bV_t(s)\big)-\cG(\bU^*,\bV^*)\Big\}\,ds,
\end{equation*} where the second equality uses $\cG(\bU^*,\bV^*)=\zero$. Here, $\bar G_{i\ell,t}$ denotes the $(i,\ell)$ entry of $\bar\cG_t$.

By \eqref{eq_uv__U_new}--\eqref{eq_uv__V_new}, for each $i\in[n]$, we have
\begin{equation}\begin{aligned}
(\bar\bU^{t+1}-\bU^*)_{i,\cdot} =&\, \Big(\bI_r-\eta\bar\cH_{ii,t}^{UU}\Big)(\bE_U^t)_{i,\cdot} -\eta\sum_{j\neq i}\bar\cH_{ij,t}^{UU}(\bE_U^t)_{j,\cdot} -\eta\sum_{\ell=1}^q\bar\cH_{i\ell,t}^{UV}(\bE_V^t)_{\ell,\cdot}\\ &\, -\frac{\eta}{q}\sum_{\ell=1}^q \bar G_{i\ell,t}(\bE_V^t)_{\ell,\cdot} -\frac{\eta}{q}\Big[\nabla_{\bU}\cL(\tilde\bU^t,\tilde\bV^t)(\bLambda_t^*-\bI_r)\Big]_{i,\cdot},\end{aligned}\label{eq_uv_linf_row_U}
\end{equation}
and for each $\ell\in[q]$,
\begin{equation}\begin{aligned}
(\bar\bV^{t+1}-\bV^*)_{\ell,\cdot} =&\, \Big(\bI_r-\eta\bar\cH_{\ell\ell,t}^{VV}\Big)(\bE_V^t)_{\ell,\cdot} -\eta\sum_{k\neq \ell}\bar\cH_{\ell k,t}^{VV}(\bE_V^t)_{k,\cdot} -\eta\sum_{i=1}^n\bar\cH_{\ell i,t}^{VU}(\bE_U^t)_{i,\cdot}\\ &\,
-\frac{\eta}{n}\sum_{i=1}^n \bar G_{i\ell,t}(\bE_U^t)_{i,\cdot} -\frac{\eta}{n}\Big[\nabla_{\bV}\cL(\tilde\bU^t,\tilde\bV^t)\big\{(\bLambda_t^*)^{-1}-\bI_r\big\}\Big]_{\ell,\cdot}.
\end{aligned}\label{eq_uv_linf_row_V}
\end{equation}

Accordingly, define
\begin{equation*}
\delta_{m,t}:=\delta_{m,t}^{U}\vee \delta_{m,t}^{V}, \text{ for } m=1,2,3,4,
\end{equation*}
where each $\delta_{m,t}^{U}$ and $\delta_{m,t}^{V}$ are given as 
\begin{align*}
\delta_{1,t}^{U} &:= \max_{i\in[n]} \Big\| \big(\bI_r-\eta\bar\cH_{ii,t}^{UU}\big)(\bE_U^t)_{i,\cdot} \Big\|,\qquad\quad \delta_{1,t}^{V} := \max_{\ell\in[q]} \Big\| \big(\bI_r-\eta\bar\cH_{\ell\ell,t}^{VV}\big)(\bE_V^t)_{\ell,\cdot} \Big\|, \\ 
\delta_{2,t}^{U} &:= {\eta}\max_{i\in[n]} \Big\| \sum_{j\neq i}\bar\cH_{ij,t}^{UU}(\bE_U^t)_{j,\cdot} + \sum_{\ell=1}^q\bar\cH_{i\ell,t}^{UV}(\bE_V^t)_{\ell,\cdot} \Big\|,\\
\delta_{2,t}^{V} &:= {\eta}\max_{\ell\in[q]} \Big\| \sum_{k\neq \ell}\bar\cH_{\ell k,t}^{VV}(\bE_V^t)_{k,\cdot} + \sum_{i=1}^n\bar\cH_{\ell i,t}^{VU}(\bE_U^t)_{i,\cdot} \Big\|, \\ 
\delta_{3,t}^{U} &:= \frac{\eta}{q}\max_{i\in[n]} \Big\| \sum_{\ell=1}^q \bar G_{i\ell,t}(\bE_V^t)_{\ell,\cdot} \Big\|,\qquad\qquad \delta_{3,t}^{V} := \frac{\eta}{n}\max_{\ell\in[q]} \Big\| \sum_{i=1}^n \bar G_{i\ell,t}(\bE_U^t)_{i,\cdot} \Big\|, \\ 
\delta_{4,t}^{U} &:= \frac{\eta}{q}\Big\| \nabla_{\bU}\cL(\tilde\bU^t,\tilde\bV^t)(\bLambda_t^*-\bI_r)\Big\|_{2\to\infty},\quad \delta_{4,t}^{V} := \frac{\eta}{n}\Big\| \nabla_{\bV}\cL(\tilde\bU^t,\tilde\bV^t)\big\{(\bLambda_t^*)^{-1}-\bI_r\big\}\Big\|_{2\to\infty}.
\end{align*}
Equivalently, \eqref{eq_uv_linf_row_U} and \eqref{eq_uv_linf_row_V} are expressed as
\begin{equation}
\|(\bar\bU^{t+1}-\bU^*,\bar\bV^{t+1}-\bV^*)\|_{\twinfw} \le \delta_{1,t}+\delta_{2,t}+\delta_{3,t}+\delta_{4,t}. \label{eq_uv_linf__decomp}
\end{equation}
We next bound the four terms.

For $\delta_{1,t}$, similar to the proof of Lemma~\ref{lemma_rect_block_hessian}, one can check that, for every $s\in[0,1]$, we know
\begin{equation*}
\lambda_{\min}\big(\bar\cH_{ii,t}^{UU}\big)\ge \alpha q^{-1}\lambda_{\min}\big\{\bV_t(s)^\T\bV_t(s)\big\}, \qquad \lambda_{\max}\big(\bar\cH_{ii,t}^{UU}\big) \le \beta q^{-1}\lambda_{\max}\big\{\bV_t(s)^\T\bV_t(s)\big\},
\end{equation*}
and similarly
\begin{equation*}
\lambda_{\min}\big(\bar\cH_{\ell\ell,t}^{VV}\big) \ge \alpha n^{-1}\lambda_{\min}\big\{\bU_t(s)^\T\bU_t(s)\big\}, \qquad \lambda_{\max}\big(\bar\cH_{\ell\ell,t}^{VV}\big) \le \beta n^{-1}\lambda_{\max}\big\{\bU_t(s)^\T\bU_t(s)\big\}.
\end{equation*}
Next, because $q^{-1/2}\|\bV^*\|\le \sqrt{\kappa\sigma_{\min}}$, $\|(\bE_U^t,\bE_V^t)\|_{\Fnw}\le \phi_{nq}\tau_*$ by the induction hypothesis, and $\phi_{nq}\tau_*\le c_0\alpha\sqrt{\sigma_{\min}/\kappa}$ by the scaling condition
one has 
\begin{align*}
q^{-1}\big\|\bV_t(s)^\T\bV_t(s)-(\bV^*)^\T\bV^*\big\| &\le 2q^{-1/2}\|\bV^*\|\,q^{-1/2}\|\bE_V^t\|_{\Fn} + q^{-1}\|\bE_V^t\|_{\Fn}^2\\ &\le C\sqrt{\kappa\sigma_{\min}}\,\|(\bE_U^t,\bE_V^t)\|_{\Fnw} + \|(\bE_U^t,\bE_V^t)\|_{\Fnw}^2\\ &\le Cc_0\alpha\sigma_{\min}.
\end{align*}
Similarly, one can obtain $n^{-1}\big\|\bU_t(s)^\T\bU_t(s)-(\bU^*)^\T\bU^*\big\|
\le Cc_0\alpha\sigma_{\min}$.
Recall that we have assumed $q^{-1}\lambda_{\min}\big((\bV^*)^\T\bV^*\big) = n^{-1}\lambda_{\min}\big((\bU^*)^\T\bU^*\big) = \sigma_{\min}$. Then, after shrinking $c_0$ if necessary, we obtain
\begin{equation*}
\lambda_{\min}\big(\bar\cH_{ii,t}^{UU}\big)\wedge \lambda_{\min}\big(\bar\cH_{\ell\ell,t}^{VV}\big) \ge \frac78\alpha\sigma_{\min}.
\end{equation*}
Similarly, with $ q^{-1}\|\bV_t(s)\|^2 \le q^{-1}\|\bV^*\|^2+q^{-1}\|\bE_V^t\|_{\Fn}^2 \le 1.01\kappa\sigma_{\min} $,
and the same bound holds for $n^{-1}\|\bU_t(s)\|^2$. Again, following Step~2 in the proof of Theorem~\ref{thm_general_theory}, one can obtain
\begin{equation*}
\lambda_{\max}\big(\bar\cH_{ii,t}^{UU}\big)\vee \lambda_{\max}\big(\bar\cH_{\ell\ell,t}^{VV}\big) \le 1.01(\beta+\alpha)\kappa\sigma_{\min},
\end{equation*} after shrinking $c_0$ if necessary. 
Since $\eta\le \{10(\alpha+\beta)\kappa\sigma_{\min}\}^{-1}$, all eigenvalues of $\bI_r-\eta\bar\cH_{ii,t}^{UU}$ and $\bI_r-\eta\bar\cH_{\ell\ell,t}^{VV}$ lie in $[0,1)$, and therefore, we arrive at
\begin{equation}
\delta_{1,t} \le \Big(1-\frac78\eta\alpha\sigma_{\min}\Big)\|(\bE_U^t,\bE_V^t)\|_{\twinfw}. \label{eq_uv_delta1}
\end{equation}

For $\delta_{2,t}$, we again argue row by row. We first bound $\delta_{2,t}^{U}$.
Fix $i\in[n]$ and $\bh\in\RR^r$ with $\|\bh\|=1$, and define
\begin{equation*}
\bL_i:=\be_i^{(n)}\bh^\T,\qquad (\bL^{(i)})_{j,\cdot}:= \begin{cases} (\bE_U^t)_{j,\cdot}, & j\neq i,\\ \zero, & j=i, \end{cases} \qquad \bR:=\bE_V^t.
\end{equation*}
By the definitions of $\bar\cH_{ij,t}^{UU}$ and $\bar\cH_{i\ell,t}^{UV}$, we note that
\begin{align*}
&\,\bh^\T\Big\{ \sum_{j\neq i}\bar\cH_{ij,t}^{UU}(\bE_U^t)_{j,\cdot} + \sum_{\ell=1}^q\bar\cH_{i\ell,t}^{UV}(\bE_V^t)_{\ell,\cdot} \Big\} \\=&\,
\frac1q\int_0^1 \nabla_{\bX}^2\cL\big(\bU_t(s)\bV_t(s)^\T\big) \Big[ \cP_{(\bU_t(s),\bV_t(s))}(\zero,\bL_i),\, \cP_{(\bU_t(s),\bV_t(s))}(\bR,\bL^{(i)}) \Big]\,ds.
\end{align*}
Applying Assumption~\ref{assump_row_cross_curvature_uv}, we obtain
\begin{align*}
&\Big| \bh^\T\Big\{ \sum_{j\neq i}\bar\cH_{ij,t}^{UU}(\bE_U^t)_{j,\cdot} + \sum_{\ell=1}^q\bar\cH_{i\ell,t}^{UV}(\bE_V^t)_{\ell,\cdot} \Big\} \Big| \\
&\qquad\le \frac{\beta}{q}\int_0^1 \|\bV_t(s)\bh\|\, \|(\bU_t(s))_{i,\cdot}\bR^\T\|\,ds \\ &\qquad\le \frac{\beta}{q}\int_0^1 \|\bV_t(s)\|\,\|\bh\|\,\|(\bU_t(s))_{i,\cdot}\|\,\|\bR\|_{\Fn}\,ds \\ &\qquad\le \frac{\beta}{q}\, \sup_{s\in[0,1]}\|\bV_t(s)\|\, \sup_{s\in[0,1]}\|(\bU_t(s))_{i,\cdot}\|\, \|\bE_V^t\|_{\Fn}.
\end{align*}

By the induction hypothesis and the definition of the interpolation path, one can get
\begin{equation*}
\|\bU_t(s)\|_{2\to\infty}\vee \|\bV_t(s)\|_{2\to\infty} \le (1+\epsilon/2)\omega_* \le 2\omega_*.\end{equation*}
In addition, by Step~1, we already know $n^{-1/2}\|\bU_t(s)\|\vee q^{-1/2}\|\bV_t(s)\|\le C\sqrt{\kappa\sigma_{\min}}$. This further yields
\begin{align*}
&\Big|\bh^\T\Big\{ \sum_{j\neq i}\bar\cH_{ij,t}^{UU}(\bE_U^t)_{j,\cdot} + \sum_{\ell=1}^q\bar\cH_{i\ell,t}^{UV}(\bE_V^t)_{\ell,\cdot} \Big\} \Big| \\
&\qquad\le C\beta\, \frac{\sqrt q\,\sqrt{\kappa\sigma_{\min}}}{q}\, \omega_*\, \|\bE_V^t\|_{\Fn} = C\beta\omega_*\sqrt{\kappa\sigma_{\min}}\, \frac{\|\bE_V^t\|_{\Fn}}{\sqrt q}.
\end{align*}
Taking the supremum over all $\bh$ with $\|\bh\|=1$ yields
\begin{equation*}
\Big\| \sum_{j\neq i}\bar\cH_{ij,t}^{UU}(\bE_U^t)_{j,\cdot} + \sum_{\ell=1}^q\bar\cH_{i\ell,t}^{UV}(\bE_V^t)_{\ell,\cdot} \Big\| \le C\beta\omega_*\sqrt{\kappa\sigma_{\min}}\, \frac{\|\bE_V^t\|_{\Fn}}{\sqrt q}.
\end{equation*}
Since $q^{-1/2}\|\bE_V^t\|_{\Fn}\le \|(\bE_U^t,\bE_V^t)\|_{\Fnw}$ by definition, we conclude that
\begin{equation*}
\delta_{2,t}^{U} \le C\eta\beta\omega_*\sqrt{\kappa\sigma_{\min}}\, \|(\bE_U^t,\bE_V^t)\|_{\Fnw}.
\end{equation*}

Similarly, one can obtain $\delta_{2,t}^{V}\le C\eta\beta\omega_*\sqrt{\kappa\sigma_{\min}}\, \|(\bE_U^t,\bE_V^t)\|_{\Fnw}$. Combining the bounds for $\delta_{2,t}^{U}$ and $\delta_{2,t}^{V}$, we conclude that
\begin{equation}
\delta_{2,t} = \delta_{2,t}^{U}\vee \delta_{2,t}^{V} \le C\eta\beta\omega_*\sqrt{\kappa\sigma_{\min}}\, \|(\bE_U^t,\bE_V^t)\|_{\Fnw}.\label{eq_uv_delta2}
\end{equation}

For $\delta_{3,t}$, similar to bounding $\delta_{3,t}$ in the proof of Theorem~\ref{thm_general_theory} in Section~\ref{supp_sec_prove_thm_general_theory}, apply Assumption~\ref{assump_rect_lip} to get
\begin{align*}
\delta_{3,t}^{U} &\le \frac{\eta}{q}\max_{i\in[n]}\Big\|\sum_{\ell=1}^q \bar G_{i\ell,t}(\bE_V^t)_{\ell,\cdot}\Big\|\\ &\le \eta q^{-1/2}\|\bar\cG_t\|_{\twinf}\,q^{-1/2}\|\bE_V^t\|_{\Fn}\\ &\le \eta L_\infty q^{-1/2} \max_{s\in[0,1]} \|\bU_t(s)\bV_t(s)^\T-\bU^*(\bV^*)^\T\|_{\twinf} \times \|(\bE_U^t,\bE_V^t)\|_{\Fnw}.
\end{align*}
Now that
\begin{equation*}
\bU_t(s)\bV_t(s)^\T-\bU^*(\bV^*)^\T = s\,\bU^*(\bE_V^t)^\T + s\,\bE_U^t(\bV^*)^\T + s^2\,\bE_U^t(\bE_V^t)^\T.
\end{equation*}
Consequently, we have
\begin{align*}
\|\bU_t(s)\bV_t(s)^\T-\bU^*(\bV^*)^\T\|_{\twinf} &\le \|\bU^*\|_{2\to\infty}\|\bE_V^t\|_{\Fn} + \|\bV^*\|\|\bE_U^t\|_{\twinfw} + \|\bE_V^t\|_{\Fn}\|\bE_U^t\|_{\twinfw}\\ &\le \sqrt q\,\omega_*\|(\bE_U^t,\bE_V^t)\|_{\Fnw} + (\|\bV^*\|+\|\bE_V^t\|_{\Fn})\|(\bE_U^t,\bE_V^t)\|_{\twinfw}\\ &\le \sqrt q\,\omega_*\|(\bE_U^t,\bE_V^t)\|_{\Fnw} + C\sqrt{q\kappa\sigma_{\min}}\, \|(\bE_U^t,\bE_V^t)\|_{\twinfw},
\end{align*}
uniformly over $s\in[0,1]$, where we used
$\|\bE_V^t\|_{\Fn} \le \sqrt q\,\|(\bE_U^t,\bE_V^t)\|_{\Fnw} \le c_0\tfrac{\alpha(\alpha+\kappa)}{\beta^2\kappa^2}\sqrt{q\sigma_{\min}} \le c_0\sqrt{q\sigma_{\min}/\kappa} $ for sufficiently small $c_0$. Hence, we arrive at
\begin{align*}
\delta_{3,t}^{U} &\le C\eta\Big( \omega_*\|(\bE_U^t,\bE_V^t)\|_{\Fnw}^2 + \sqrt{\kappa\sigma_{\min}}\, \|(\bE_U^t,\bE_V^t)\|_{\Fnw}\|(\bE_U^t,\bE_V^t)\|_{\twinfw} \Big)\\
&\le \frac1{16}\eta\alpha\sigma_{\min}\|(\bE_U^t,\bE_V^t)\|_{\twinfw} + C\eta\omega_*\alpha\sqrt{\frac{\sigma_{\min}}{\kappa}}\, \|(\bE_U^t,\bE_V^t)\|_{\Fnw},
\end{align*}
where the last step uses $\|(\bE_U^t,\bE_V^t)\|_{\Fnw}\le \phi_{nq}\tau_*\le c_0\alpha\sqrt{\sigma_{\min}/\kappa}$ obtained by the induction hypothesis and the scaling condition. By symmetry, the same bound holds for $\delta_{3,t}^{V}$. Therefore, we have
\begin{equation}
\delta_{3,t} \le \frac1{16}\eta\alpha\sigma_{\min}\|(\bE_U^t,\bE_V^t)\|_{\twinfw} + C\eta\omega_*\alpha\sqrt{\frac{\sigma_{\min}}{\kappa}}\, \|(\bE_U^t,\bE_V^t)\|_{\Fnw}. \label{eq_uv_delta3}
\end{equation}
where the last step uses $\|(\bE_U^t,\bE_V^t)\|_{\Fnw}\le \phi_{nq}\tau_*$ from the induction hypothesis together with the scaling condition $\phi_{nq}\tau_*\le c_0\alpha\sqrt{\sigma_{\min}/\kappa}$.
Finally, we bound $\delta_{4,t}$.
Since $\nabla_{\bU}\cL(\bU^*,\bV^*)=\zero$ and $\nabla_{\bV}\cL(\bU^*,\bV^*)=\zero$, the same row-wise expansion as above gives
\begin{align*}
\frac{\eta}{q}\|\nabla_{\bU}\cL(\tilde\bU^t,\tilde\bV^t)\|_{2\to\infty} &\le \eta\max_{i\in[n]}\|\bar\cH_{ii,t}^{UU}(\bE_U^t)_{i,\cdot}\| +\delta_{2,t}^{U}+\delta_{3,t}^{U},\\ \frac{\eta}{n}\|\nabla_{\bV}\cL(\tilde\bU^t,\tilde\bV^t)\|_{2\to\infty}
&\le \eta\max_{\ell\in[q]}\|\bar\cH_{\ell\ell,t}^{VV}(\bE_V^t)_{\ell,\cdot}\| +\delta_{2,t}^{V}+\delta_{3,t}^{V}.
\end{align*}
Using the upper bounds on the diagonal Hessian blocks derived above, we obtain
\begin{align*}
\frac{\eta}{q}\|\nabla_{\bU}\cL(\tilde\bU^t,\tilde\bV^t)\|_{2\to\infty} \!\vee \frac{\eta}{n}\|\nabla_{\bV}\cL(\tilde\bU^t,\tilde\bV^t)\|_{2\to\infty} \le &\, 1.01\eta(\alpha+\beta)\kappa\sigma_{\min}\|(\bE_U^t,\bE_V^t)\|_{\twinfw} \\&\,+\delta_{2,t}+\delta_{3,t}\\\le &\, 2.02\eta\beta\kappa\sigma_{\min}\|(\bE_U^t,\bE_V^t)\|_{\twinfw} +\delta_{2,t}+\delta_{3,t}.
\end{align*}
Recall that from Step 1, we have defined $\vartheta_t:=\|\bLambda_t^*-\bI_r\|\vee\|(\bLambda_t^*)^{-1}-\bI_r\|$.
Therefore, by \eqref{eq_uv_balance_smallness_step1}, we know
\begin{equation*}
\vartheta_t
\le\frac{\alpha}{10\beta\kappa}.
\end{equation*}
Consequently, $\delta_{4,t}$ can be controlled by
\begin{align*}
\delta_{4,t} &\le \vartheta_t\Big( \frac{\eta}{q}\|\nabla_{\bU}\cL(\tilde\bU^t,\tilde\bV^t)\|_{2\to\infty} \;\vee\; \frac{\eta}{n}\|\nabla_{\bV}\cL(\tilde\bU^t,\tilde\bV^t)\|_{2\to\infty} \Big)\\ &\le \frac{1}{4}\eta\alpha\sigma_{\min}\|(\bE_U^t,\bE_V^t)\|_{\twinfw} + \vartheta_t\delta_{2,t} + \vartheta_t\delta_{3,t}.
\end{align*}
Next, by the estimate for $\delta_{2,t}$ obtained above, we know
\begin{equation*}
\vartheta_t\delta_{2,t} \le C\eta\alpha\sqrt{\frac{\sigma_{\min}}{\kappa}}\, \omega_*\|(\bE_U^t,\bE_V^t)\|_{\Fnw},
\end{equation*}
and a similar bound holds for $\vartheta_t\delta_{3,t}$, while its $\|(\bE_U^t,\bE_V^t)\|_{\twinfw}$-part is absorbed into the first term. Hence, we arrive at
\begin{equation}
\delta_{4,t} \le \frac1{4}\eta\alpha\sigma_{\min}\|(\bE_U^t,\bE_V^t)\|_{\twinfw} + C\eta\alpha\sqrt{\frac{\sigma_{\min}}{\kappa}}\, \omega_*\|(\bE_U^t,\bE_V^t)\|_{\Fnw}. \label{eq_uv_delta4}
\end{equation}

Combining \eqref{eq_uv_linf__decomp}, \eqref{eq_uv_delta1}, \eqref{eq_uv_delta2},
\eqref{eq_uv_delta3}, and \eqref{eq_uv_delta4}, and using $\beta\ge \alpha$ and $\kappa\ge 1$, we conclude that
\begin{align*}
\|(\bar\bU^{t+1}-\bU^*,\bar\bV^{t+1}-\bV^*)\|_{\twinfw}
\le &\,
\Big(1-\frac{9}{16}\eta\alpha\sigma_{\min}\Big)\|(\bE_U^t,\bE_V^t)\|_{\twinfw}
\\&\,+
C\eta\beta\sqrt{\kappa\sigma_{\min}}\,\omega_*\|(\bE_U^t,\bE_V^t)\|_{\Fnw}.
\end{align*}
By Step 1 and the induction hypotheses $\|(\bE_U^t,\bE_V^t)\|_{\Fnw}\le \rho^t\phi_{nq}\tau_*$ and $\|(\bE_U^t,\bE_V^t)\|_{\twinfw}\le \rho^t\psi_{nq}\omega_*$, we further have
\begin{align*}
\|(\bar\bU^{t+1}-\bU^*,\bar\bV^{t+1}-\bV^*)\|_{\twinfw} \le \Big(1-\frac{9}{16}\eta\alpha\sigma_{\min}\Big)\rho^t\psi_{nq}\omega_* + C\eta\beta\sqrt{\kappa\sigma_{\min}}\,\omega_*\,\rho^t\phi_{nq}\tau_*,
\end{align*}
where we used $\tau_*^2\le r\kappa\sigma_{\min}$.
By the scaling condition that $\tfrac{\beta}{\alpha}\kappa^{3/2}\sqrt r\,{\phi_{nq}}/{\psi_{nq}}\le c_0$, after shrinking $c_0$ if necessary, there is $C\beta\kappa\sqrt r\,\phi_{nq} \le \alpha\psi_{nq}/(16)$.
Consequently, with $\rho = 1-\eta\alpha\sigma_{\min}/4$,
\begin{equation}
\|(\bar\bU^{t+1}-\bU^*,\bar\bV^{t+1}-\bV^*)\|_{\twinfw} \le \Big(1-\frac12\eta\alpha\sigma_{\min}\Big)\rho^t\psi_{nq}\omega_* = (2\rho-1)\rho^{t}\psi_{nq}\omega_*.
\label{eq_uv_linf}
\end{equation}

\medskip
\paragraph{\it \underline{ Step 3: existence of $\bG_{t+1}^*$ and alignment of $(\bar\bU^{t+1},\bar\bV^{t+1})$ with $(\tilde\bU^{t+1},\tilde\bV^{t+1})$.}}
Now we prove the induction hypotheses~\ref{item_uv_ind_exist_new} and~\ref{item_uv_ind_rot_new}for step $t+1$. Then we transfer the one-step bounds for $(\bar\bU^{t+1},\bar\bV^{t+1})$ to iterate $(\tilde\bU^{t+1},\tilde\bV^{t+1})$, which further yields the induction hypotheses~\ref{item_uv_ind_l2_new} and~\ref{item_uv_ind_linf_new}. For each $t$, set
\begin{equation*}
\delta_{t+1}:= \max\Big\{ n^{-1/2}\|\bar\bU^{t+1}-\bU^*\|_{\Fn}, \, q^{-1/2}\|\bar\bV^{t+1}-\bV^*\|_{\Fn} \Big\}.
\end{equation*}
By \eqref{eq_uv__l2_new}, we know $\delta_{t+1}\le \|(\bar\bU^{t+1}-\bU^*,\bar\bV^{t+1}-\bV^*)\|_{\Fnw}\le \rho^{t+1}\phi_{nq}\tau_*$. Further with $\sigma_r(n^{-1/2}\bU^*)=\sigma_r(q^{-1/2}\bV^*)=\sqrt{\sigma_{\min}}$, $\tau_*\le \sqrt{r\kappa\sigma_{\min}}$, and
the scaling condition $\phi_{nq}\le c_0\frac{\alpha(\alpha+\kappa)}{\beta^2\kappa^2}\sqrt{\sigma_{\min}}$, it holds that, after shrinking $c_0$ if necessary,
\begin{equation*}
\delta_{t+1} \le \frac{1}{80}\sqrt{\sigma_{\min}} = \frac{1}{80}\Big(\sigma_r(n^{-1/2}\bU^*)\wedge \sigma_r(q^{-1/2}\bV^*)\Big).
\end{equation*}
Moreover, the induction hypothesis~\ref{item_uv_ind_rot_new} gives $\|\bG_t^*-\bR^0\|\vee \|(\bG_t^*)^{-\T}-\bR^0\|\le \iota_0\frac{\alpha}{\beta\kappa}<1/6$, after shrinking $\iota_0$ if necessary, so all singular values of $\bG_t^*$ lie in $[2/3,3/2]$.
Therefore Lemma~\ref{lemma_alignment_near_rotation}, applied to the pair $(\bU^{t+1},\bV^{t+1})$ with $\bm P=\bG_t^*$, yields that the optimal alignment matrix $\bG_{t+1}^*$ exists. In addition,
\begin{equation}
\begin{aligned}
\|\bG_{t+1}^*-\bG_t^*\| \vee \|(\bG_{t+1}^*)^{-\T}-(\bG_t^*)^{-\T}\| &\le \frac{5\delta_{t+1}}{\sqrt{\sigma_{\min}}} \\ &\le 5\rho^{t+1}\phi_{nq}\frac{\tau_*}{\sqrt{\sigma_{\min}}} \\ &\le C\rho^{t+1}\phi_{nq}\sqrt{r\kappa}, \end{aligned} \label{eq_uv_G_increment}
\end{equation}which can be sufficiently small due to the scaling condition.
This proves \ref{item_uv_ind_exist_new} at time $t+1$.

To verify \ref{item_uv_ind_rot_new}, we telescope the increments. Since the same argument gives
\eqref{eq_uv_G_increment} at every previous iteration, we have
\begin{equation}\label{eq_uv_G_increment_telesc}\begin{aligned}
\|\bG_{t+1}^*-\bR^0\| \vee \|(\bG_{t+1}^*)^{-\T}-\bR^0\| &\le \|\bG_0^*-\bR^0\| \vee \|(\bG_0^*)^{-\T}-\bR^0\| \\
&\quad +\sum_{s=0}^{t} \Big( \|\bG_{s+1}^*-\bG_s^*\| \vee \|(\bG_{s+1}^*)^{-\T}-(\bG_s^*)^{-\T}\| \Big) \\
&\le C\phi_{nq}\frac{\tau_*}{\sqrt{\sigma_{\min}}} \sum_{s=0}^{t+1}\rho^s \\
&\le C\frac{\phi_{nq}\sqrt{r\kappa}}{1-\rho}.
\end{aligned}\end{equation}
Hence, to guarantee
\begin{equation*}
\|\bG_{t+1}^*-\bR^0\| \vee \|(\bG_{t+1}^*)^{-\T}-\bR^0\| \le \iota_0\frac{\alpha}{\beta\kappa},
\end{equation*}
it suffices to show that
\begin{equation}
\phi_{nq} \le c_0(1-\rho)\frac{\alpha}{\beta\kappa\sqrt{r\kappa}} = c_0\,\eta\sigma_{\min}\frac{\alpha^2}{\beta\kappa\sqrt{r\kappa}}, \label{eq_uv_rot_scaling_needed}
\end{equation}
for $c_0>0$ sufficiently small. One can check that this holds under the scaling condition, $\eta = \{10(\alpha + \beta)\kappa\sigma_{\min}\}^{-1}$, and $\beta\ge \alpha$. This proves \ref{item_uv_ind_rot_new} at time $t+1$.

Now we show induction hypotheses~\ref{item_uv_ind_l2_new} and~\ref{item_uv_ind_linf_new}. First, \ref{item_uv_ind_l2_new} is easy to show. Note that $\bG_{t+1}^*$ minimizes the weighted Frobenius distance, which gives $ \|(\tilde\bU^{t+1}-\bU^*,\tilde\bV^{t+1}-\bV^*)\|_{\Fnw} \le \|(\bar\bU^{t+1}-\bU^*,\bar\bV^{t+1}-\bV^*)\|_{\Fnw}$.
Combining this with \eqref{eq_uv__l2_new} gives
\begin{equation*}
\|(\tilde\bU^{t+1}-\bU^*,\tilde\bV^{t+1}-\bV^*)\|_{\Fnw} \le \rho^{t+1}\phi_{nq}\tau_*,
\end{equation*}
which proves \ref{item_uv_ind_l2_new} at time $t+1$.

It remains to transfer the $2\to\infty$ bound. Define
\begin{equation*}
\bQ_{t+1}:=(\bG_t^*)^{-1}\bG_{t+1}^*.
\end{equation*}
Then $\tilde\bU^{t+1}=\bar\bU^{t+1}\bQ_{t+1}$ and $\tilde\bV^{t+1}=\bar\bV^{t+1}\bQ_{t+1}^{-\T}$. Consequently,
\begin{align*}
\|(\tilde\bU^{t+1}-\bU^*,&\,\tilde\bV^{t+1}-\bV^*)\|_{\twinfw} \le \|(\bar\bU^{t+1}-\bU^*,\bar\bV^{t+1}-\bV^*)\|_{\twinfw} \\
&\,+\Big( \|\bar\bU^{t+1}\|_{2\to\infty} \vee \|\bar\bV^{t+1}\|_{2\to\infty} \Big) \big( \|\bQ_{t+1}-\bI_r\| \vee \|\bQ_{t+1}^{-\T}-\bI_r\| \big).
\end{align*}
By \eqref{eq_uv_linf}, we have $\|(\bar\bU^{t+1}-\bU^*,\bar\bV^{t+1}-\bV^*)\|_{\twinfw}\le \rho^{t+1}\psi_{nq}\omega_*$, and thus
\begin{equation*}
\|\bar\bU^{t+1}\|_{2\to\infty}
\vee
\|\bar\bV^{t+1}\|_{2\to\infty}
\le
\omega_*+\rho^{t+1}\psi_{nq}\omega_*
\le
(1+c_0)\omega_*
\le
2\omega_*,
\end{equation*}
after shrinking $c_0$ if necessary.
Also, since all singular values of $\bG_t^*$ lie in $[2/3,3/2]$, we have $\|\bG_t^*\|\vee \|(\bG_t^*)^{-1}\| \le 3/2$. Hence
\begin{align*}
\|\bQ_{t+1}-\bI_r\| &= \|(\bG_t^*)^{-1}(\bG_{t+1}^*-\bG_t^*)\| \le \|(\bG_t^*)^{-1}\|\, \|\bG_{t+1}^*-\bG_t^*\|, \\
\|\bQ_{t+1}^{-\T}-\bI_r\| &= \|((\bG_{t+1}^*)^{-\T}-(\bG_t^*)^{-\T})\,(\bG_t^*)^\T\| \le \|\bG_t^*\|\, \|(\bG_{t+1}^*)^{-\T}-(\bG_t^*)^{-\T}\|.
\end{align*}
Combining these bounds with \eqref{eq_uv_G_increment}, we get
\begin{equation*}
\|\bQ_{t+1}-\bI_r\| \vee \|\bQ_{t+1}^{-\T}-\bI_r\| \le C\rho^{t+1}\phi_{nq}\frac{\tau_*}{\sqrt{\sigma_{\min}}} \le C\rho^{t+1}\phi_{nq}\sqrt{r\kappa}.
\end{equation*}
It then follows from \eqref{eq_uv_linf} that
\begin{equation*}
\|(\tilde\bU^{t+1}-\bU^*,\tilde\bV^{t+1}-\bV^*)\|_{\twinfw} \le (2\rho-1)\rho^{t}\psi_{nq}\omega_* + C\rho^{t+1}\phi_{nq}\sqrt{r\kappa}\,\omega_*.
\end{equation*}
With the scaling condition that
\begin{equation*}
\frac{\beta}{\alpha}\kappa^{3/2}\sqrt r\,\frac{\phi_{nq}}{\psi_{nq}}\le c_0
\qquad\text{and}\qquad \sqrt{r\kappa}\le \kappa^{3/2}\sqrt r,
\end{equation*}
after shrinking $c_0$ if necessary, we conclude that $C\phi_{nq}\sqrt{r\kappa}\le \psi_{nq}$. This finally leads to
\begin{equation*}
\|(\tilde\bU^{t+1}-\bU^*,\tilde\bV^{t+1}-\bV^*)\|_{\twinfw} \le \rho^{t+1}\psi_{nq}\omega_*.
\end{equation*}
This proves \ref{item_uv_ind_linf_new} at time $t+1$, and closes the induction.

\subsection{Proof of Theorem~\ref{thm_general_theory_UV_noise}}\label{supp_sec_prove_thm_general_theory_UV_noise}

We prove in this subsection the stronger contraction statement under the localization $\cDuvinf$. Accordingly, throughout the proof we invoke Assumptions~\ref{assump_rect_rip_weighted},~\ref{assump_rect_lip_1},~\ref{assump_row_cross_curvature_uv}, and~\ref{assump_rect_lip} with $\cD=\cDuvinf$, which has been defined as 
\begin{equation*}
    \cDuvinf = \Big\{(\bU,\bV):\frac{\disttwo\{(\bU,\bV),(\bU^*,\bV^*)\}}{\tau_*}\le \epsilon,\quad \frac{\distinf\{(\bU,\bV),(\bU^*,\bV^*)\}}{\omega_*}\le \epsilon\Big\}.
\end{equation*} The $\ell_2$-only part of Theorem~\ref{thm_general_theory_uv}, where only Assumptions~\ref{assump_rect_rip_weighted}--\ref{assump_rect_lip_1} are imposed with $\cD=\cDuvtwo$, is proved later in Section~\ref{supp_sec_prove_thm_general_theory_noisy_uv_l2} by the same argument after removing the row-wise estimates.


Throughout the proof, we use the population first-order condition $\bar\cG(\bU^*,\bV^*) = \EE\cG(\bU^*,\bV^*) = \zero$. We write $\Delta_2$ and $\Delta_{\infty}$ for $\Delta_2(n,q,\delta)$ and $\Delta_{\infty}(n,q,\delta)$ in \eqref{assump_gradient_noise_rect} for brevity.  We work throughout on the event in~\eqref{assump_gradient_noise_rect}, which has probability at least $1-\delta$. 


Similar to the proof of Theorem~\ref{thm_general_theory_UV_noise}, we introduce a local empirical optimizer
that will serve as the statistical contraction target. Let $(\hat\bU,\hat\bV) \in \argmin_{(\bU,\bV)\in\cDuvinf}\cL(\bU\bV^\T)$and $\hat\bG \in \argmin_{\bG\in GL(r)}\|(\hat\bU \hat\bG - \bU^*,\hat\bV\hat\bG^{-\T} - \bV^*)\|_{\Fnw}$. In what follows, we show that, with probability at least $1-\delta$, $(\hat\bU\hat\bG,\hat\bV\hat\bG^{-\T}) = (\tilde\bU,\tilde\bV)$ is unique and satisfies
\begin{equation}\label{eq_optimum_UV_error_bound}
    {\disttwo\{(\hat\bU,\hat\bV),(\bU^*,\bV^*)\}} \le C\frac{\Delta_2(n,q,\delta)}{\alpha\sigma_{\min}}\tau_*\,\text{;}\,{\distinf\{(\hat\bU,\hat\bV),(\bU^*,\bV^*)\}}\le C\frac{\Delta_{\infty}(n,q,\delta)}{\alpha\sigma_{\min}}{\omega_*},
\end{equation}for some universal constant $C$. Moreover, it satisfy the first-order condition $\nabla_{\bU}\cL(\hat\bU\hat\bV^\T) = \zero $ and $\nabla_{\bV}\cL(\hat\bU\hat\bV^\T) = \zero $, and thus serves as the contraction target of \eqref{eq_one_step_update_uv}.

Fix the radius $\epsilon$, we choose some $\varepsilon$ such that
\begin{equation}
\label{eq_thm_general_theory_noisy_uv_scaling}
\frac{1}{c_0}\Big(\sqrt{r\kappa}\frac{\Delta_2}{\alpha\sigma_{\min}} \vee \frac{\Delta_{\infty}}{\alpha\sigma_{\min}} \Big) \le \varepsilon \le c_0\frac{\epsilon}{\zeta_r},
\end{equation}
where $c_0>0$ is a sufficiently small universal constant. Moreover, after shrinking $c_0$ if necessary, we record the consequences
\begin{equation}
\label{eq_rect_step1_scaling_implications}
\frac{\Delta_2}{\alpha\sigma_{\min}}\le c_0\frac{\alpha}{\sqrt{\kappa r}}\varepsilon, \qquad \frac{\Delta_{\infty}}{\alpha\sigma_{\min}}\le c_0\big(\frac{\alpha}{\kappa\sqrt{r}}\wedge \varepsilon\big),\qquad \varepsilon\le c_0\big(\frac{\epsilon}{\zeta_r}\wedge \frac{\alpha}{\sqrt r\,\kappa}\big).
\end{equation}

We next introduce a smaller compact neighborhood $\cDuvde$ for the same $\epsilon$ and $\varepsilon$ to construct a stationary point by minimizing the gradient norm on a compact set, and a slightly larger neighborhood $\cDuvd$ on which the local curvature and Lipschitz bounds will be invoked. In particular, define
\begin{align}
&\cDuvde = \Big\{(\bU,\bV):\frac{\|(\bU - \bU^*,\bV-\bV^*)\|_{\Fnw}}{\tau_*}\le \varepsilon,\quad \frac{\|(\bU - \bU^*,\bV-\bV^*)\|_{\twinfw}}{\omega_*}\le \epsilon\Big\}.\label{eq_define_cdde_rec}\\
    &\cDuvd = \Big\{(\bU,\bV):\frac{\|(\bU - \bU^*,\bV-\bV^*)\|_{\Fnw}}{\tau_*}\le \epsilon,\quad \frac{\|(\bU - \bU^*,\bV-\bV^*)\|_{\twinfw}}{\omega_*}\le \epsilon\Big\},\label{eq_define_cdd_rec}
\end{align}
Since $\varepsilon\le \epsilon$, we have $\cDuvde\subseteq \cDuvd\subseteq \cDuvinf$. In particular, $(\bU^*,\bV^*)\in\cDuvde$.
Similar to the proof of Theorem~\ref{thm_general_theory_noisy_Z}, define
\begin{equation*}
(\tilde\bU,\tilde\bV) \in \argmin_{(\bU,\bV)\in\cDuvde} \big\|\bS_Z^2\nabla_{\bZ}h_{\alpha}^{\natural}(\bU,\bV)\big\|_{\twinfw}^2.
\end{equation*}
Such a minimizer exists because $\cDuvde$ is compact and $(\bU,\bV)\mapsto \|\bS_Z^2\nabla_{\bZ}h_{\alpha}^{\natural}(\bU,\bV)\|_{\twinfw}^2$ is continuous. Let
\begin{equation*}
\bE_U:=\tilde\bU-\bU^*, \qquad \bE_V:=\tilde\bV-\bV^*, \qquad \bE:= \begin{pmatrix} \bE_U\\ \bE_V \end{pmatrix}.
\end{equation*}
 Our proof proceeds as follows.
\begin{enumerate}[label=(\arabic*),leftmargin=0.6cm]
    \item constructs an interior stationary point $(\tilde\bU,\tilde\bV)$ of $h_\alpha^\natural$ in $\bar\cD_{\varepsilon,\epsilon}$ and establish the first-order optimality condition;
    \item identifies this point with the aligned constrained optimizer $(\hat\bU,\hat\bV)$ and sharpens the statistical rates to prove \eqref{eq_optimum_UV_error_bound}
    \item re-centers the recursion at $(\tilde\bU,\tilde\bV)$ and proves the contraction bounds for the gradient iterates.
\end{enumerate}

\paragraph{\it \underline{ Step 1: construct an interior stationary point of $h_{\alpha}^{\natural}$.}} Since $(\bU^*,\bV^*)\in\cDuvde$, $\nabla p_{\alpha}^{\natural}(\bU^*,\bV^*)=\zero$, and $(\tilde\bU,\tilde\bV)$ minimizes $\|\bS_Z^2\nabla_{\bZ}h_{\alpha}^{\natural}(\cdot)\|_{\twinfw}$ over $\cDuvde$, we have
\begin{equation}\label{eq_rect_score_inf_opt}
\begin{aligned}
\big\|\bS_Z^2\nabla_{\bZ} h_{\alpha}^{\natural}(\tilde\bU,\tilde\bV)\big\|_{\twinfw}
&\le \big\|\bS_Z^2\nabla_{\bZ} h_{\alpha}^{\natural}(\bU^*,\bV^*)\big\|_{\twinfw}\\
&= \max\Big\{ q^{-1}\|\cG(\bU^*,\bV^*)\bV^*\|_{\twinf}, \ n^{-1}\|\cG(\bU^*,\bV^*)^\T\bU^*\|_{\twinf} \Big\} \\
&= \max\Big\{ q^{-1}\|\tilde\cG(\bU^*,\bV^*)\bV^*\|_{\twinf}, \ n^{-1}\|\tilde\cG(\bU^*,\bV^*)^\T\bU^*\|_{\twinf} \Big\}\\
&\le C\Delta_{\infty} \max\Big\{\|\bV^*\|/\sqrt q,\ \|\bU^*\|/\sqrt n\Big\}\\
&\le C\Delta_{\infty}\sqrt{\kappa\sigma_{\min}},
\end{aligned}
\end{equation}
where we used $\bar\cG(\bU^*,\bV^*)=\zero$ and the bound in~\eqref{assump_gradient_noise_rect_2}.
Consequently,
\begin{equation}
\label{eq_rect_score_f}
\begin{aligned}
\big\|\bS_Z\nabla_{\bZ} h_{\alpha}^{\natural}(\tilde\bU,\tilde\bV)\big\|_{\Fn} &= \Big( q^{-1}\|\nabla_{\bU} h_{\alpha}^{\natural}(\tilde\bU,\tilde\bV)\|_{\Fn}^2 + n^{-1}\|\nabla_{\bV} h_{\alpha}^{\natural}(\tilde\bU,\tilde\bV)\|_{\Fn}^2 \Big)^{1/2} \\
&\le \sqrt{2nq}\,\big\|\bS_Z^2\nabla_{\bZ} h_{\alpha}^{\natural}(\tilde\bU,\tilde\bV)\big\|_{\twinfw} \\
&\le C\Delta_{\infty}\sqrt{nq\kappa\sigma_{\min}}.\end{aligned}
\end{equation}

For $s\in[0,1]$, define the interpolation segment
\begin{equation*}
(\bU(s),\bV(s))=(\bU^*,\bV^*)+s(\bE_U,\bE_V).
\end{equation*}
Since $(\tilde\bU,\tilde\bV)\in\cDuvde\subseteq \cDuvd$, the whole segment lies in $\cDuvd\subseteq\cDuvinf$. Applying the integral mean value theorem to $\mathrm{vec}\{\nabla_{\bZ}h_{\alpha}^{\natural}(\cdot)^\T\}$ along this segment yields
\begin{equation}\label{eq_rect_mvt_optimizer}\begin{aligned}
&\underbrace{\bS_z\bar\cH\bS_z\,\mathrm{vec}\big((\bS_Z^{-1}\bE)^\T\big)}_{\Gamma_{1}} + \underbrace{\bS_z\tilde\cG\bS_z\,\mathrm{vec}\big((\bS_Z^{-1}\bE)^\T\big)}_{\Gamma_{2}} + \underbrace{\bS_z\bar\cG\bS_z\,\mathrm{vec}\big((\bS_Z^{-1}\bE)^\T\big)}_{\Gamma_{3}} \\
&= \underbrace{\mathrm{vec}\Big(\bS_Z\{\nabla_{\bZ}h_{\alpha}^{\natural}(\tilde\bU,\tilde\bV)-\nabla_{\bZ}h_{\alpha}^{\natural}(\bU^*,\bV^*)\}^\T\Big)}_{\Gamma_{4}},
\end{aligned}
\end{equation}
where
\begin{align*}
\bar\cH := &\,  \int_0^1\Big\{\nabla_{\bz}^2 h_{\alpha}^{\natural}(\bU(s),\bV(s)) - \cG_e(\bU(s),\bV(s))\Big\}\,ds,\\
\tilde\cG :=&\,  \int_0^1\Big\{\cG_e(\bU(s),\bV(s)) - \EE\cG_e(\bU(s),\bV(s))\Big\}\,ds, \qquad
\bar\cG := \int_0^1\EE\cG_e(\bU(s),\bV(s))\,ds.
\end{align*}
Here $\cG_e(\bU,\bV)$ is the block operator defined in Lemma~\ref{lemma_general_convex_asym}.
We next bound $\Gamma_{1}$--$\Gamma_{4}$.

For $\Gamma_{1}$, Lemma~\ref{lemma_general_convex_asym} yields
\begin{equation}\label{eq_rect_barH_lb_optimizer}
\lambda_{\min}(\bS_z\bar\cH\bS_z) \ge \frac{\alpha\sigma_{\min}}{2}.
\end{equation}

For $\Gamma_{2}$, similar to the calculation of $\gamma_{2,t}$ as in the deterministic contraction proof (Theorem~\ref{thm_general_theory_uv}), apply the bound in~\eqref{assump_gradient_noise_rect_1} to obtain
\begin{equation}\label{eq_rect_barG_bound_optimizer1}
\begin{aligned}
\|\Gamma_{2}\| &\le \max_{s\in[0,1]} \big\|\bS_z\{\cG_e(\bU(s),\bV(s))- \EE\cG_e(\bU(s),\bV(s))\}\bS_z\big\| \cdot \|\bS_Z^{-1}\bE\|_{\Fn} \\
&\le C\Delta_2\|\bS_Z^{-1}\bE\|_{\Fn}\\&= C\Delta_2\sqrt{nq}\|(\bE_U,\bE_V)\|_{\Fnw}. \end{aligned}
\end{equation}
Here, the last step follows from $\|\bS_Z^{-1}\bE\|_{\Fn} = \sqrt{nq}\|(\bE_U,\bE_V)\|_{\Fnw}$.

For $\Gamma_{3}$, use $\bar\cG(\bU^*,\bV^*)=\zero$ and Assumption~\ref{assump_rect_lip_1} with $\bar\cG(\bU,\bV)$ in place of $\cG(\bU,\bV)$ to obtain
\begin{equation} \label{eq_rect_barG_bound_optimizer}
\begin{aligned}
\|\Gamma_{3}\| &\le \frac{1}{\sqrt{nq}} \max_{s\in[0,1]} \|\bar\cG(\bU(s),\bV(s))\| \,\|\bS_Z^{-1}\bE\|_{\Fn} \\
&\le \frac{L_2}{\sqrt{nq}} \max_{s\in[0,1]} \|\bU(s)\bV(s)^\T-\bU^*(\bV^*)^\T\|_{\Fn} \,\|\bS_Z^{-1}\bE\|_{\Fn}\\&\le C\sqrt{\kappa\sigma_{\min}nq}\,\|(\bE_U,\bE_V)\|_{\Fnw}^2.
\end{aligned}
\end{equation}
Here, we telescope
\begin{align*}
\|\bU(s)\bV(s)^\T-\bU^*(\bV^*)^\T\|_{\Fn} \le&\, \|\bU^*\|\,\|\bE_V\|_{\Fn} + \|\bV^*\|\,\|\bE_U\|_{\Fn} + \|\bE_U\|_{\Fn}\,\|\bE_V\|_{\Fn} \\\le&\, C\sqrt{nq\kappa\sigma_{\min}}\,\|(\bE_U,\bE_V)\|_{\Fnw},
\end{align*}
where the last step uses $\|(\bE_U,\bE_V)\|_{\Fnw}\le \varepsilon\tau_*\le c_0\sqrt{\kappa\sigma_{\min}}$ as $(\tilde\bU,\tilde\bV)\in\cDuvde$.

For $\Gamma_{4}$, note first that $\nabla p_\alpha^{\natural}(\bU^*,\bV^*)=\zero$, so
\begin{equation}\label{eq_bound_score_average_uv}
\begin{aligned}
\big\|\bS_Z\nabla_{\bZ}h_\alpha^{\natural}(\bU^*,\bV^*)\big\|_{\Fn} &= \Big( q^{-1}\|\cG(\bU^*,\bV^*)\bV^*\|_{\Fn}^2 + n^{-1}\|\cG(\bU^*,\bV^*)^\T\bU^*\|_{\Fn}^2 \Big)^{1/2}\\
&\le \|\cG(\bU^*,\bV^*)\| \Big(q^{-1}\|\bV^*\|_{\Fn}^2+n^{-1}\|\bU^*\|_{\Fn}^2 \Big)^{1/2}\\
&\le C\sqrt{nq}\,\Delta_2\,\tau_*.
\end{aligned}
\end{equation}
Combining this estimate with \eqref{eq_rect_score_f}, we arrive at
\begin{equation}\label{eq_rect_gamma4_bound}
\|\Gamma_{4}\| \le \big\|\bS_Z\nabla_{\bZ}h_\alpha^{\natural}(\tilde\bU,\tilde\bV)\big\|_{\Fn} + \big\|\bS_Z\nabla_{\bZ}h_\alpha^{\natural}(\bU^*,\bV^*)\big\|_{\Fn} \le C\Delta_{\infty}\sqrt{nq\kappa\sigma_{\min}} + C\sqrt{nq}\,\Delta_2\,\tau_*.
\end{equation}

Combining \eqref{eq_rect_barH_lb_optimizer}--\eqref{eq_rect_gamma4_bound} and using $\|\bS_Z^{-1}\bE\|_{\Fn}=\sqrt{nq}\,\|(\bE_U,\bE_V)\|_{\Fnw}$, we obtain
\begin{equation}\label{eq_rect_crude_l2}
\begin{aligned}
\|(\bE_U,\bE_V)\|_{\Fnw} \le &\,C\frac{\Delta_2}{\alpha\sigma_{\min}}\,\|(\bE_U,\bE_V)\|_{\Fnw} + C\frac{\sqrt{\kappa\sigma_{\min}}}{\alpha\sigma_{\min}}\,\|(\bE_U,\bE_V)\|_{\Fnw} ^2\\ &\,+ C\frac{\Delta_{\infty}}{\alpha\sigma_{\min}}\sqrt{\kappa\sigma_{\min}} +
C\frac{\Delta_2}{\alpha\sigma_{\min}}\tau_*.
\end{aligned}
\end{equation}
Since $(\tilde\bU,\tilde\bV)\in\cDuvde$, we have $\|(\bE_U,\bE_V)\|_{\Fnw}\le \varepsilon\tau_*$. Together with $\tau_*\le \sqrt{r\kappa\sigma_{\min}}$ and \eqref{eq_rect_step1_scaling_implications}, this implies
\begin{equation*}
\frac{\Delta_2}{\alpha\sigma_{\min}}\le c_0,\qquad\frac{\sqrt{\kappa\sigma_{\min}}}{\alpha\sigma_{\min}}\|(\bE_U,\bE_V)\|_{\Fnw} \le \varepsilon\frac{\sqrt r\,\kappa}{\alpha} \le c_0.
\end{equation*}
Hence, after shrinking $c_0$ if necessary, the first two terms on the right-hand side of \eqref{eq_rect_crude_l2} can be absorbed into the left-hand side, and we obtain
\begin{equation}
\label{eq_rect_crude_l2_absorb}
\|(\bE_U,\bE_V)\|_{\Fnw} \le C\frac{\Delta_{\infty}}{\alpha\sigma_{\min}}\sqrt{\kappa\sigma_{\min}} + C\frac{\Delta_2}{\alpha\sigma_{\min}}\tau_*.
\end{equation}
Note that $\tau_*^2=\sum_{j=1}^r \sigma_j(\bX^*)/\sqrt{nq}\ge (r\vee \kappa)\sigma_{\min}$, so $\tau_*\ge \sqrt{(r\vee \kappa)\sigma_{\min}}$. Combining this lower bound with \eqref{eq_thm_general_theory_noisy_uv_scaling} and the theorem scaling condition $\zeta_r\Delta_\infty/(\alpha\sigma_{\min})\le c_0\epsilon$ shows that the right-hand side of \eqref{eq_rect_crude_l2_absorb} is at most $(\varepsilon/2)\tau_*$ after shrinking $c_0$. That is,
\begin{equation}\label{eq_rect_step1_l2_half}
\|(\tilde\bU-\bU^*,\tilde\bV-\bV^*)\|_{\Fnw} \le \frac{1}{2}\varepsilon\tau_*.
\end{equation}

We next derive a crude weighted $2\to\infty$ bound. For $i,j\in[n]$ and $\ell,k\in[q]$, define
\begin{align*}
\tilde\bH_{ij}^{\cL,UU} &:= q^{-1}\int_0^1 \nabla_{\bu_i\bu_j}^2\cL\big(\bU(s)\bV(s)^\T\big)\,ds, \\
\tilde\bH_{i\ell}^{\cL,UV} &:= q^{-1}\int_0^1\Big\{\nabla_{\bu_i\bv_\ell}^2\cL\big(\bU(s)\bV(s)^\T\big)-G_{i\ell}\big(\bU(s),\bV(s)\big)\bI_r\Big\}\,ds,\\
\tilde\bH_{\ell i}^{\cL,VU} &:= n^{-1}\int_0^1\Big\{\nabla_{\bv_\ell\bu_i}^2\cL\big(\bU(s)\bV(s)^\T\big)-G_{i\ell}\big(\bU(s),\bV(s)\big)\bI_r\Big\}\,ds,\\
\tilde\bH_{\ell k}^{\cL,VV} &:= n^{-1}\int_0^1 \nabla_{\bv_\ell\bv_k}^2\cL\big(\bU(s)\bV(s)^\T\big)\,ds,
\end{align*}
and
\begin{equation*}
\tilde\cG_{i\ell} := \int_0^1 \tilde G_{i\ell}(\bU(s),\bV(s))\,ds, \qquad
\bar\cG_{i\ell} := \int_0^1 \bar G_{i\ell}(\bU(s),\bV(s))\,ds.
\end{equation*}
Then the weighted row-wise mean-value expansion gives, for each $i\in[n]$,
\begin{equation}
\label{eq_rect_row_expansion_U}
\Delta_{1,i}^{(U)}+\Delta_{2,i}^{(U)}+\Delta_{3,i}^{(U)}+\Delta_{4,i}^{(U)}+\Delta_{5,i}^{(U)} = \Delta_{6,i}^{(U)},
\end{equation}
where, by letting $\bE_{U,i} = (\bE_U)_{i,}$ and $\bE_{V,\ell} = (\bE_V)_{\ell,}$ for $i\in[n]$ and $\ell\in[q]$, we define
\begin{align*}
\Delta_{1,i}^{(U)}&:=\tilde\bH_{ii}^{\cL,UU}\bE_{U,i}^\T, \qquad\qquad\;
\Delta_{2,i}^{(U)}:=\sum_{j\neq i}\tilde\bH_{ij}^{\cL,UU}\bE_{U,j}^\T + \sum_{\ell=1}^q \tilde\bH_{i\ell}^{\cL,UV}\bE_{V,\ell}^\T, \\
\Delta_{3,i}^{(U)}&:=q^{-1}\sum_{\ell=1}^q \bar\cG_{i\ell}\bE_{V,\ell}^\T, \qquad\quad
\Delta_{4,i}^{(U)}:=q^{-1}\sum_{\ell=1}^q \tilde\cG_{i\ell}\bE_{V,\ell}^\T, \\
\Delta_{5,i}^{(U)}&:=q^{-1}\Big\{\nabla_{\bu_i}p_\alpha^{\natural}(\tilde\bU,\tilde\bV)-\nabla_{\bu_i}p_\alpha^{\natural}(\bU^*,\bV^*)\Big\},\\
\Delta_{6,i}^{(U)}&:=q^{-1}\Big\{\nabla_{\bu_i}h_\alpha^{\natural}(\tilde\bU,\tilde\bV)-\nabla_{\bu_i}h_\alpha^{\natural}(\bU^*,\bV^*)\Big\}.
\end{align*}
Likewise, for each $\ell\in[q]$,
\begin{equation}\label{eq_rect_row_expansion_V}
\Delta_{1,\ell}^{(V)}+\Delta_{2,\ell}^{(V)}+\Delta_{3,\ell}^{(V)}+\Delta_{4,\ell}^{(V)}+\Delta_{5,\ell}^{(V)} = \Delta_{6,\ell}^{(V)},
\end{equation}
where
\begin{align*}
\Delta_{1,\ell}^{(V)}&:=\tilde\bH_{\ell\ell}^{\cL,VV}\bE_{V,\ell}^\T, \qquad\qquad\;
\Delta_{2,\ell}^{(V)}:=\sum_{k\neq \ell}\tilde\bH_{\ell k}^{\cL,VV}\bE_{V,k}^\T + \sum_{i=1}^n \tilde\bH_{\ell i}^{\cL,VU}\bE_{U,i}^\T, \\
\Delta_{3,\ell}^{(V)}&:=n^{-1}\sum_{i=1}^n \bar\cG_{i\ell}\bE_{U,i}^\T, \qquad\quad
\Delta_{4,\ell}^{(V)}:=n^{-1}\sum_{i=1}^n \tilde\cG_{i\ell}\bE_{U,i}^\T, \\
\Delta_{5,\ell}^{(V)}&:=n^{-1}\Big\{\nabla_{\bv_\ell}p_\alpha^{\natural}(\tilde\bU,\tilde\bV)-\nabla_{\bv_\ell}p_\alpha^{\natural}(\bU^*,\bV^*)\Big\}, \\
\Delta_{6,\ell}^{(V)}&:=n^{-1}\Big\{\nabla_{\bv_\ell}h_\alpha^{\natural}(\tilde\bU,\tilde\bV)-\nabla_{\bv_\ell}h_\alpha^{\natural}(\bU^*,\bV^*)\Big\}.
\end{align*}

Let
\begin{equation*}
\tilde\cH_{LD} := \mathrm{diag}\Big( \tilde\bH_{11}^{\cL,UU},\ldots,\tilde\bH_{nn}^{\cL,UU}, \tilde\bH_{11}^{\cL,VV},\ldots,\tilde\bH_{qq}^{\cL,VV} \Big).
\end{equation*}
By Lemma~\ref{lemma_rect_block_hessian} and the fact that $(\bU(s),\bV(s))\in \cDuvd$ for all $s\in[0,1]$, we have
\begin{equation} \label{eq_rect_diag_block_lb}
\min\Big\{ \min_{i\in[n]}\lambda_{\min}\big(\tilde\bH_{ii}^{\cL,UU}\big), \  \min_{\ell\in[q]}\lambda_{\min}\big(\tilde\bH_{\ell\ell}^{\cL,VV}\big) \Big\} \ge (1-c_0)\alpha\sigma_{\min}.
\end{equation}
Similar to the treatment of $\tilde\cH_{LD}$ in the proof of Theorem~\ref{thm_general_theory_noisy_Z}, by the block diagonal structure of $\tilde\cH_{LD}$, one has
\begin{equation} \label{eq_rect_solve_diag_U}
\begin{aligned}
\|(\bE_U,\bE_V)\|_{\twinfw} &= \|\tilde\cH_{LD}^{-1}\tilde\cH_{LD}\,\mathrm{vec}(\bE^\T)\|_{\infty,r} \\
&\le \|\tilde\cH_{LD}^{-1}\|_{\infty,r} \sum_{p=2}^6 \Big( \max_{i\in[n]}\|\Delta_{p,i}^{(U)}\| \vee \max_{\ell\in[q]}\|\Delta_{p,\ell}^{(V)}\| \Big) \\
&\le \frac{1}{(1-c_0)\alpha\sigma_{\min}} \sum_{p=2}^6 \Big( \max_{i\in[n]}\|\Delta_{p,i}^{(U)}\| \vee \max_{\ell\in[q]}\|\Delta_{p,\ell}^{(V)}\| \Big).
\end{aligned}
\end{equation}
Here, we abuse the notation to denote 
\begin{equation*}\|\bA\|_{\infty,r}:=\max_{i\in[n+q]}\sum_{j=1}^{n+q}\|\bA_{\cR_i,\cR_j}\|,\text{ for }\bA\in\RR^{(n+q)r\times (n+q)r}\end{equation*}
We now bound the five terms on the right-hand side.

\underline{\emph{Bound for $\Delta_{2,i}^{(U)}$ and $\Delta_{2,\ell}^{(V)}$.}} Using Assumption~\ref{assump_row_cross_curvature_uv} exactly as in the Step 2 in the proof of Theorem~\ref{thm_general_theory_uv}, we obtain
\begin{equation}
\label{eq_rect_off_term_bound}
\max_{i\in[n]}\|\Delta_{2,i}^{(U)}\| \vee \max_{\ell\in[q]}\|\Delta_{2,\ell}^{(V)}\| \le C\beta\sqrt{\kappa\sigma_{\min}}\,\omega_*\,\|(\bE_U,\bE_V)\|_{\Fnw}.
\end{equation}

\underline{\emph{Bound for $\Delta_{3,i}^{(U)}$ and $\Delta_{3,\ell}^{(V)}$.}}
For the $U$-part, by Cauchy--Schwarz,
\begin{equation*}
\max_{i\in[n]}\|\Delta_{3,i}^{(U)}\| \le q^{-1}\|\bar\cG_{\mathrm{ave}}\|_{2\to\infty}\|\bE_V\|_{\Fn} \le q^{-1}\|\bar\cG_{\mathrm{ave}}\|_{\twinf}\|\bE_V\|_{\Fn},
\end{equation*}
where $\bar\cG_{\mathrm{ave}} := \int_0^1 \bar\cG(\bU(s),\bV(s))\,ds$.
Moreover, by Assumption~\ref{assump_rect_lip}, we have $ \|\bar\cG_{\mathrm{ave}}\|_{\twinf} \le L_{\infty}$ $\times\max_{s\in[0,1]}\|\bU(s)\bV(s)^\T-\bU^*(\bV^*)^\T\|_{\twinf}$.
For each $s\in[0,1]$,
\begin{equation*}
\|\bU(s)\bV(s)^\T-\bU^*(\bV^*)^\T\|_{\twinf} \le \|\bU^*\|_{2\to\infty}\|\bE_V\|_{\Fn} + \|\bE_U\|_{2\to\infty}\|\bV^*\| + \|\bE_U\|_{2\to\infty}\|\bE_V\|_{\Fn},
\end{equation*}
and therefore
\begin{equation*}
\|\bar\cG_{\mathrm{ave}}\|_{\twinf} \le C\sqrt q\,\omega_*\,\|(\bE_U,\bE_V)\|_{\Fnw} + C\sqrt{q\kappa\sigma_{\min}}\,\|\bE\|_{\twinfw}.
\end{equation*}
Using $\|\bE_V\|_{\Fn}\le \sqrt q\,\|(\bE_U,\bE_V)\|_{\Fnw}$, we obtain
\begin{equation*}
\max_{i\in[n]}\|\Delta_{3,i}^{(U)}\| \le C\omega_*\,\|(\bE_U,\bE_V)\|_{\Fnw}^2 + C\sqrt{\kappa\sigma_{\min}}\,\|(\bE_U,\bE_V)\|_{\Fnw}\|\bE\|_{\twinfw}.
\end{equation*}
The same argument for the $V$-part gives
\begin{equation} \label{eq_rect_score_term_bound_1}
\begin{aligned}
\max_{i\in[n]}\|\Delta_{3,i}^{(U)}\| \vee \max_{\ell\in[q]}\|\Delta_{3,\ell}^{(V)}\| &\le C\omega_*\,\|(\bE_U,\bE_V)\|_{\Fnw}^2 + C\sqrt{\kappa\sigma_{\min}}\,\|(\bE_U,\bE_V)\|_{\Fnw}\|\bE\|_{\twinfw} \\
&\le c_0\sqrt{\kappa\sigma_{\min}}\,\omega_*\,\|(\bE_U,\bE_V)\|_{\Fnw} + c_0\alpha\sigma_{\min}\,\|\bE\|_{\twinfw},
\end{aligned}
\end{equation}
where the second line uses \eqref{eq_rect_step1_scaling_implications} and $\|(\bE_U,\bE_V)\|_{\Fnw}\le \varepsilon\tau_*\le c_0\alpha(\sigma_{\min}/\kappa)^{1/2}$.

\underline{\emph{Bound for $\Delta_{4,i}^{(U)}$ and $\Delta_{4,\ell}^{(V)}$.}}
By Assumption~\ref{assump_gradient_noise_rect}, \begin{align*}
\max_{i\in[n]}\|\Delta_{4,i}^{(U)}\| &\le q^{-1}\|\tilde\cG_{\mathrm{ave}}\|_{\infty\to 1}\|\bE_V\|_{2\to\infty} \le \bar\Delta_{\infty}(n,q,\delta)\,\|\bE_V\|_{2\to\infty},\\
\max_{\ell\in[q]}\|\Delta_{4,\ell}^{(V)}\| &\le n^{-1}\|\tilde\cG_{\mathrm{ave}}^\T\|_{\infty\to 1}\|\bE_U\|_{2\to\infty} \le \bar\Delta_{\infty}(n,q,\delta)\,\|\bE_U\|_{2\to\infty},
\end{align*}
where $\tilde\cG_{\mathrm{ave}} := \int_0^1 \tilde\cG(\bU(s),\bV(s))\,ds$.
Hence, by the theorem assumption $\bar\Delta_{\infty}(n,q,\delta)\le \alpha\sigma_{\min}/4$,
\begin{equation}\label{eq_rect_score_term_bound_2}
\max_{i\in[n]}\|\Delta_{4,i}^{(U)}\| \vee \max_{\ell\in[q]}\|\Delta_{4,\ell}^{(V)}\| \le \frac{\alpha\sigma_{\min}}{4}\,\|\bE\|_{\twinfw}.
\end{equation}

\underline{\emph{Bound for $\Delta_{5,i}^{(U)}$ and $\Delta_{5,\ell}^{(V)}$.}}
Recall that $M(\bU,\bV)$ has been defined as $n^{-1}(\bU-\bU^*)^\T\bU - q^{-1}\bV^\T(\bV-\bV^*)$ and thus we write $p_\alpha^\natural(\bU,\bV) = \frac{\alpha nq}{4}\|M(\bU,\bV)\|_{\Fn}^2$. For a perturbation $(\bH_U,\bH_V)\in\RR^{n\times r}\times \RR^{q\times r}$,
\begin{equation} \label{eq_rect_penalty_dM}
DM(\bU,\bV)[\bH_U,\bH_V] = n^{-1}\Big\{\bH_U^\T\bU+(\bU-\bU^*)^\T\bH_U\Big\} - q^{-1}\Big\{\bH_V^\T(\bV-\bV^*)+\bV^\T\bH_V\Big\}.
\end{equation}
Hence the gradients are
\begin{equation} \label{eq_rect_penalty_grad_formula}
\begin{aligned}
\nabla_{\bU}p_\alpha^\natural(\bU,\bV) &= \frac{\alpha q}{2}\Big\{\bU M(\bU,\bV)^\T+(\bU-\bU^*)M(\bU,\bV)\Big\}, \\ \nabla_{\bV}p_\alpha^\natural(\bU,\bV) &= -\frac{\alpha n}{2}\Big\{(\bV-\bV^*)M(\bU,\bV)^\T+\bV M(\bU,\bV)\Big\},
\end{aligned}
\end{equation}
and the directional Hessian action is
\begin{equation} \label{eq_rect_penalty_hess_formula}
\begin{aligned}
D\!\left(\nabla_{\bU}p_\alpha^\natural\right)(\bU,\bV)[\bH_U,\bH_V] = &\, \frac{\alpha q}{2}\Big\{ \bH_U\big(M^\T+M\big) +\bU\big(DM[\bH_U,\bH_V]\big)^\T
\\&\qquad+(\bU-\bU^*)DM[\bH_U,\bH_V] \Big\},\\
D\!\left(\nabla_{\bV}p_\alpha^\natural\right)(\bU,\bV)[\bH_U,\bH_V] =&\,
-\frac{\alpha n}{2}\Big\{ \bH_V\big(M^\T+M\big) +(\bV-\bV^*)\big(DM[\bH_U,\bH_V]\big)^\T \\&\qquad\qquad+(\bV - \bV^*)\,DM[\bH_U,\bH_V] \Big\},
\end{aligned}
\end{equation}
where $M=M(\bU,\bV)$ and $DM[\bH_U,\bH_V]=DM(\bU,\bV)[\bH_U,\bH_V]$.
Since $\nabla p_\alpha^\natural(\bU^*,\bV^*)=\zero$, we may use the simpler identity $\Delta_{5,i}^{(U)} = q^{-1}\nabla_{\bu_i}p_\alpha^\natural(\tilde\bU,\tilde\bV)$ and $\Delta_{5,\ell}^{(V)} = n^{-1}\nabla_{\bv_\ell}p_\alpha^\natural(\tilde\bU,\tilde\bV)$. The decomposition $M(\tilde\bU,\tilde\bV) = n^{-1}\bE_U^\T\bU^* - q^{-1}(\bV^*)^\T\bE_V + n^{-1}\bE_U^\T\bE_U - q^{-1}\bE_V^\T\bE_V$ implies that
\begin{equation*}
\|M(\tilde\bU,\tilde\bV)\| \le C\sqrt{\kappa\sigma_{\min}}\,\|(\bE_U,\bE_V)\|_{\Fnw} + \|(\bE_U,\bE_V)\|_{\Fnw}^2 \le C\sqrt{\kappa\sigma_{\min}}\,\|(\bE_U,\bE_V)\|_{\Fnw},
\end{equation*}
where the last step again uses $\|(\bE_U,\bE_V)\|_{\Fnw}\le \varepsilon\tau_*\le c_0\sqrt{\kappa\sigma_{\min}}$.
Applying \eqref{eq_rect_penalty_grad_formula}, we find
\begin{equation*}
q^{-1}\|\nabla_{\bu_i}p_\alpha^\natural(\tilde\bU,\tilde\bV)\| \le \frac{\alpha}{2}\Big(\|\bu_i^*\|+2\|\bE_{U,i}\|\Big)\|M(\tilde\bU,\tilde\bV)\|, \end{equation*}
and similarly
\begin{equation*}
n^{-1}\|\nabla_{\bv_\ell}p_\alpha^\natural(\tilde\bU,\tilde\bV)\| \le \frac{\alpha}{2}\Big(\|\bv_\ell^*\|+2\|\bE_{V,\ell}\|\Big)\|M(\tilde\bU,\tilde\bV)\|.
\end{equation*}
Therefore,
\begin{equation} \label{eq_rect_pen_term_bound}
\begin{aligned}
\max_{i\in[n]}\|\Delta_{5,i}^{(U)}\| \vee \max_{\ell\in[q]}\|\Delta_{5,\ell}^{(V)}\| &\le C\alpha\|M(\tilde\bU,\tilde\bV)\| \big(\omega_*+\|\bE\|_{\twinfw}\big) \\
&\le C\alpha\sqrt{\kappa\sigma_{\min}}\,\omega_*\,\|(\bE_U,\bE_V)\|_{\Fnw} + C\alpha\sqrt{\kappa\sigma_{\min}}\,\|(\bE_U,\bE_V)\|_{\Fnw}\|\bE\|_{\twinfw}\\
&\le C\alpha\sqrt{\kappa\sigma_{\min}}\,\omega_*\,\|(\bE_U,\bE_V)\|_{\Fnw} + c_0\alpha\sigma_{\min}\,\|\bE\|_{\twinfw}.
\end{aligned}
\end{equation}

\underline{\emph{Bound for $\Delta_{6,i}^{(U)}$ and $\Delta_{6,\ell}^{(V)}$.}}
Using the minimizing property of $(\tilde\bU,\tilde\bV)$ together with \eqref{eq_rect_score_inf_opt}, we have
\begin{equation}\label{eq_rect_noise_term_bound}
\begin{aligned}
\max_{i\in[n]}\|\Delta_{6,i}^{(U)}\| \vee \max_{\ell\in[q]}\|\Delta_{6,\ell}^{(V)}\| &\le \big\|\bS_Z^2\nabla_{\bZ}h_\alpha^\natural(\tilde\bU,\tilde\bV)\big\|_{\twinfw} + \big\|\bS_Z^2\nabla_{\bZ}h_\alpha^\natural(\bU^*,\bV^*)\big\|_{\twinfw} \\
&\le C\Delta_{\infty} \max\Big\{ \|\bV^*\|/\sqrt q, \ \|\bU^*\|/\sqrt n \Big\} \\
&\le C\Delta_{\infty}\omega_*, \end{aligned}
\end{equation}
where the last step uses $\|\bV^*\|/\sqrt q\le \|\bV^*\|_{2\to\infty}\le \omega_*$ and $\|\bU^*\|/\sqrt n\le \|\bU^*\|_{2\to\infty}\le \omega_*$.

Substituting \eqref{eq_rect_off_term_bound}, \eqref{eq_rect_score_term_bound_1}, \eqref{eq_rect_score_term_bound_2}, \eqref{eq_rect_pen_term_bound}, and \eqref{eq_rect_noise_term_bound} into \eqref{eq_rect_solve_diag_U}, we obtain
\begin{equation*}
\|\bE\|_{\twinfw} \le \frac{1}{3}\|\bE\|_{\twinfw} + C\sqrt{\frac{\kappa}{\sigma_{\min}}}\Big(\frac{\beta}{\alpha}+1\Big)\omega_*\,\|(\bE_U,\bE_V)\|_{\Fnw} + C\frac{\Delta_{\infty}}{\alpha\sigma_{\min}}\omega_*.
\end{equation*}
Using \eqref{eq_rect_step1_l2_half} and $\|(\bE_U,\bE_V)\|_{\Fnw}\le \frac{1}{2}\varepsilon\tau_* \le C\varepsilon\sqrt{r\kappa\sigma_{\min}}$, we conclude that
\begin{equation*}
\|\bE\|_{\twinfw} \le \frac{1}{3}\|\bE\|_{\twinfw} + C\Big(\frac{\beta}{\alpha}+1\Big)\sqrt r\,\kappa\,\varepsilon\,\omega_* + C\frac{\Delta_{\infty}}{\alpha\sigma_{\min}}\omega_*.
\end{equation*}
By \eqref{eq_thm_general_theory_noisy_uv_scaling} and $\varepsilon\le \epsilon$, the last two terms on the right-hand side are bounded by $C\zeta_r\varepsilon\,\omega_*+C \Delta_{\infty}\omega_*/(\alpha\sigma_{\min}) \le Cc\epsilon\,\omega_*+Cc_0\varepsilon\,\omega_* \le c_0\epsilon\omega_*,$. After shrinking $c_0$ if necessary, this yields
\begin{equation} \label{eq_rect_step1_linf_half}
\|(\tilde\bU-\bU^*,\tilde\bV-\bV^*)\|_{\twinfw} = \|\bE\|_{\twinfw} \le \frac{1}{2}\epsilon\omega_*.
\end{equation}
Therefore $(\tilde\bU,\tilde\bV)$ lies in the interior of $\cDuvde$.

It remains to show that $(\tilde\bU,\tilde\bV)$ is stationary for $h_\alpha^\natural$, and that $h_\alpha^\natural$ is uniformly strongly convex on $\cDuvd$. For any $(\bU,\bV)\in\cDuvd$, Lemma~\ref{lemma_general_convex_asym} gives $\lambda_{\min}\!\big( \bS_z\big\{\nabla_{\bz}^2 h_\alpha^\natural(\bU,\bV)-\cG_e(\bU,\bV)\big\}\bS_z \big) \ge {\alpha\sigma_{\min}}/{2}$.
Moreover, using the population first-order condition $\bar\cG(\bU^*,\bV^*)=\zero$, Equation~\ref{assump_gradient_noise_rect_1}, and Assumption~\ref{assump_rect_lip}, we obtain
\begin{align*}
\sup_{(\bU,\bV)\in\cDuvd} \big\|\bS_z\cG_e(\bU,\bV)\bS_z\big\| &= \sup_{(\bU,\bV)\in\cDuvd} \frac{\|\cG(\bU,\bV)\|}{\sqrt{nq}} \\
&\le \Delta_2 + \sup_{(\bU,\bV)\in\cDd} \frac{\|\bar\cG(\bU,\bV)-\bar\cG(\bU^*,\bV^*)\|}{\sqrt{nq}} \\
&\le \Delta_2 + \frac{L_2}{\sqrt{nq}} \sup_{(\bU,\bV)\in\cDuvd} \|\bU\bV^\T-\bU^*(\bV^*)^\T\|_{\Fn}.
\end{align*}
For $(\bU,\bV)\in\cDuvd$, we note that we can bound
\begin{align*}
\frac{\|\bU\bV^\T-\bU^*(\bV^*)^\T\|_{\Fn}}{\sqrt{nq}} &\le \frac{\|\bU - \bU^*\|_{\Fn}}{\sqrt n}\frac{\|\bV^*\|}{\sqrt q} + \frac{\|\bV - \bV^*\|_{\Fn}}{\sqrt q}\frac{\|\bU^*\|}{\sqrt n} + \frac{\|\bU - \bU^*\|_{\Fn}}{\sqrt n}\frac{\|\bV - \bV^*\|_{\Fn}}{\sqrt q} \\ &\le C\epsilon\tau_*\sqrt{\kappa\sigma_{\min}} + \epsilon^2\tau_*^2 \\ &\le C\epsilon\sqrt r\,\kappa\,\sigma_{\min},
\end{align*}
where we used $\|(\bU - \bU^*,\bV - \bV^*)\|_{\Fnw}\le \epsilon\tau_*$ and $\tau_*\le\sqrt{r\kappa\sigma_{\min}}$.
Hence we know
\begin{equation*}
\sup_{(\bU,\bV)\in\cDuvd} \big\|\bS_z\cG_e(\bU,\bV)\bS_z\big\| \le \Delta_2 + C L_2\epsilon\sqrt r\,\kappa\,\sigma_{\min} \le c_0\alpha\sigma_{\min},
\end{equation*}
after shrinking $c_0$ if necessary, where we used \eqref{eq_rect_step1_scaling_implications}. It therefore follows that
\begin{equation} \label{eq_rect_uniform_strong_convexity}
\inf_{(\bU,\bV)\in\cDuvd} \lambda_{\min}\!\Big( \bS_z\nabla_{\bz}^2 h_\alpha^\natural(\bU,\bV)\bS_z \Big) \ge
\frac{\alpha\sigma_{\min}}{4}.
\end{equation}
Since $\bS_z$ is invertible, $h_\alpha^\natural$ is strongly convex on $\cDuvd$; in particular, it has a unique minimizer there.
Because $(\tilde\bU,\tilde\bV)$ is an interior minimizer of $(\bU,\bV)\mapsto \|\bS_Z^2\nabla_{\bZ}h_\alpha^\natural(\bU,\bV)\|_{\twinfw}^2$ over $\cDuvde$, and \eqref{eq_rect_uniform_strong_convexity} implies that $\nabla_{\bz}^2 h_\alpha^\natural(\tilde\bU,\tilde\bV)$ is positive definite, we must have
\begin{equation} \label{eq_rect_stationary_tilde}
\nabla_{\bZ} h_\alpha^\natural(\tilde\bU,\tilde\bV)=\zero.
\end{equation}
Indeed, if $\nabla_{\bZ} h_\alpha^\natural(\tilde\bU,\tilde\bV)\neq \zero$, then with the following direction
\begin{equation*}
\bd := -\Big\{\nabla_{\bz}^2 h_\alpha^\natural(\tilde\bU,\tilde\bV)\Big\}^{-1} \mathrm{vec}\!\left(\nabla_{\bZ} h_\alpha^\natural(\tilde\bU,\tilde\bV)^\T\right), \end{equation*}
and $(\bD_U,\bD_V)$ denoting the matrix version of $\bd$, we would have
\begin{align*}
\mathrm{vec}\!\left(\bS_Z^2\nabla_{\bZ} h_\alpha^\natural(\tilde\bU+t\bD_U,\tilde\bV+t\bD_V)^\T\right) =&\,  \mathrm{vec}\!\left(\bS_Z^2\nabla_{\bZ} h_\alpha^\natural(\tilde\bU,\tilde\bV)^\T\right) \\ &\, - t\,\mathrm{vec}\!\left(\bS_Z^2\nabla_{\bZ} h_\alpha^\natural(\tilde\bU,\tilde\bV)^\T\right) + o(t)
\end{align*}
as $t\to 0$. Therefore, it holds that $\|\bS_Z^2\nabla_{\bZ} h_\alpha^\natural(\tilde\bU+t\bD_U,\tilde\bV+t\bD_V)\|_{\twinfw} < \|\bS_Z^2\nabla_{\bZ} h_\alpha^\natural(\tilde\bU,\tilde\bV)\|_{\twinfw}$ for all sufficiently small $t>0$, contradicting the interior optimality of $(\tilde\bU,\tilde\bV)$.
Hence, we establish that $(\tilde\bU,\tilde\bV)$ is stationary for $h_\alpha^\natural$. Furthermore, since $h_\alpha^\natural(\cdot)$ is strongly convex on $\cDuvd$, \eqref{eq_rect_stationary_tilde} further implies that $(\tilde\bU,\tilde\bV)$ is the unique minimizer of $h_\alpha^\natural(\cdot)$ on $\cDuvd$.
This completes Step~1.

\paragraph{\it \underline{Step 2: sharpen the statistical rates.}}
We now sharpen the weighted Frobenius and weighted $\ell_\infty$ bounds for $(\bE_U,\bE_V) = (\tilde\bU - \bU^*, \tilde\bV - \bV^*)$. We first sharpen the weighted Frobenius error.
Returning to \eqref{eq_rect_mvt_optimizer}, Step~1 gives \eqref{eq_rect_stationary_tilde}, so $\Gamma_{4,i}$ on the right-hand side becomes
\begin{equation*}
\Gamma_4 = -\mathrm{vec}\Big( \bS_Z\nabla_{\bZ}h_\alpha^\natural(\bU^*,\bV^*)^\T \Big).
\end{equation*}
Since \eqref{eq_bound_score_average_uv} gives that $\big\|\bS_Z\nabla_{\bZ}h_\alpha^\natural(\bU^*,\bV^*)\big\|_{\Fn} \le C\sqrt{nq}\,\Delta_2\,\tau_*$, together with the bounds on $\Gamma_2$ and $\Gamma_3$ from Step~1, we follow the derivation of \eqref{eq_rect_crude_l2} to obtain
\begin{equation} \label{eq_rect_step2_l2_pre}
\|(\bE_U,\bE_V)\|_{\Fnw} \le C\frac{\Delta_2}{\alpha\sigma_{\min}}\|(\bE_U,\bE_V)\|_{\Fnw} + C\frac{\sqrt{\kappa\sigma_{\min}}}{\alpha\sigma_{\min}}\|(\bE_U,\bE_V)\|_{\Fnw}^2 + C\frac{\Delta_2}{\alpha\sigma_{\min}}\tau_*.
\end{equation}
Using \eqref{eq_rect_step1_l2_half} and $\tau_*\le \sqrt{r\kappa\sigma_{\min}}$, we further have
\begin{equation*}
\frac{\sqrt{\kappa\sigma_{\min}}}{\alpha\sigma_{\min}}\|(\bE_U,\bE_V)\|_{\Fnw} \le C\frac{\sqrt{\kappa\sigma_{\min}}}{\alpha\sigma_{\min}}\varepsilon\tau_* \le C\varepsilon\frac{\sqrt r\,\kappa}{\alpha} \le Cc_0.
\end{equation*}
Together with $\Delta_2/(\alpha\sigma_{\min})\le c_0$, after shrinking $c_0$ if necessary, the first two terms on the right-hand side of \eqref{eq_rect_step2_l2_pre} can be absorbed into the left-hand side. Therefore, we arrive at
\begin{equation} \label{eq_rect_step2_l2_final}
\|(\tilde\bU-\bU^*,\tilde\bV-\bV^*)\|_{\Fnw} \le C\frac{\Delta_2}{\alpha\sigma_{\min}}\tau_*.
\end{equation}

We next sharpen the weighted $\ell_\infty$ bound.
Returning to the last pre-absorption inequality in Step~1, one has
\begin{equation} \label{eq_rect_step2_linf_pre}
\|(\bE_U,\bE_V)\|_{\twinfw} \le \frac{1}{3}\|(\bE_U,\bE_V)\|_{\twinfw} + C\Big(\frac{\beta}{\alpha}+1\Big)\sqrt r\,\kappa\, \frac{\Delta_2}{\alpha\sigma_{\min}}\omega_* + C\frac{\Delta_\infty}{\alpha\sigma_{\min}}\omega_*,
\end{equation}
where the second term on the right side follows from \eqref{eq_rect_step2_l2_final}. Therefore, we have from the scaling condition \eqref{eq_rect_step1_scaling_implications} that $(\alpha^{-1}\beta + 1)\sqrt r\,\kappa \le (\alpha^{-1}\beta + \sqrt{r}) \sqrt{r}\kappa = \zeta_r\le c_0\Delta_{\infty}/\Delta_2$. We therefore obtain
\begin{equation} \label{eq_rect_step2_linf_final}
\|(\tilde\bU-\bU^*,\tilde\bV-\bV^*)\|_{\twinfw} \le C\frac{\Delta_\infty}{\alpha\sigma_{\min}}\omega_*.
\end{equation}

Next, we show that $(\tilde\bU,\tilde\bV) = (\hat\bU\hat\bQ,\hat\bV\hat\bQ^{-\T})$. We first claim
\begin{equation} \label{eq_rect_penalty_zero}
p_\alpha^\natural(\tilde\bU,\tilde\bV)=0.
\end{equation}
Suppose not.
Apply Lemma~\ref{lemma_alignment_near_rotation} with $\bm P=\bI_r$ to $(\tilde\bU,\tilde\bV)$. By \eqref{eq_rect_step2_l2_final}, \eqref{eq_rect_step1_scaling_implications}, and
$\tau_*\le \sqrt{r\kappa\sigma_{\min}}$,
\begin{equation*}
\max\Big\{n^{-1/2}\|\tilde\bU-\bU^*\|_{\Fn}, \ q^{-1/2}\|\tilde\bV-\bV^*\|_{\Fn} \Big\}
\le
C\frac{\Delta_2}{\alpha\sigma_{\min}}\tau_* \le Cc_0\sqrt{\sigma_{\min}} < \frac{1}{80}\sqrt{\sigma_{\min}}
\end{equation*}
for sufficiently small $c_0$.
Hence the optimal alignment matrix $\bQ_e\in\argmin_{\bG\in GL(r)}\|(\tilde\bU\bG-\bU^*,\tilde\bV\bG^{-\T}-\bV^*)\|_{\Fnw}$ exists. Moreover, Lemma~\ref{lemma_alignment_near_rotation} implies
\begin{equation*}
\|\bQ_e-\bI_r\|\vee \|\bQ_e^{-\T}-\bI_r\| \le C\frac{\|(\tilde\bU-\bU^*,\tilde\bV-\bV^*)\|_{\Fnw}}{\sqrt{\sigma_{\min}}} \le C\sqrt{r\kappa}\frac{\Delta_2}{\alpha\sigma_{\min}} \le
Cc_0\varepsilon,
\end{equation*}
where the last step uses the lower bound in \eqref{eq_thm_general_theory_noisy_uv_scaling}.

Together with the strict interior bounds from Step~1, one has
\begin{equation*}
\|(\tilde\bU-\bU^*,\tilde\bV-\bV^*)\|_{\Fnw}\le \frac{1}{2}\varepsilon\tau_*, \qquad \|(\tilde\bU-\bU^*,\tilde\bV-\bV^*)\|_{\twinfw}\le \frac{1}{2}\epsilon\omega_*,
\end{equation*}
which subsequently gives
\begin{align*}
\|(\tilde\bU\bQ_e-\bU^*,\tilde\bV\bQ_e^{-\T}-\bV^*)\|_{\Fnw} &\le \|(\tilde\bU-\bU^*,\tilde\bV-\bV^*)\|_{\Fnw} + \|(\tilde\bU(\bQ_e-\bI_r),\tilde\bV(\bQ_e^{-\T}-\bI_r))\|_{\Fnw} \\
&\le \frac{1}{2}\varepsilon\tau_* + C\tau_*\big(\|\bQ_e-\bI_r\|\vee \|\bQ_e^{-\T}-\bI_r\|\big) \\
&\le \frac{1}{2}\varepsilon\tau_*+Cc_0\varepsilon\tau_* < \varepsilon\tau_*.
\end{align*}
Similarly, one can get 
\begin{align*}
\|(\tilde\bU\bQ_e-\bU^*,\tilde\bV\bQ_e^{-\T}-\bV^*)\|_{\twinfw} &\le \|(\tilde\bU-\bU^*,\tilde\bV-\bV^*)\|_{\twinfw} + \|(\tilde\bU(\bQ_e-\bI_r),\tilde\bV(\bQ_e^{-\T}-\bI_r))\|_{\twinfw} \\
&\le \frac{1}{2}\epsilon\omega_* + C\omega_*\big(\|\bQ_e-\bI_r\|\vee \|\bQ_e^{-\T}-\bI_r\|\big) \\
&\le \frac{1}{2}\epsilon\omega_*+Cc_0\varepsilon\omega_* < \epsilon\omega_*,
\end{align*}
for sufficiently small $c_0$.
Therefore, we arrive at $(\tilde\bU\bQ_e,\tilde\bV\bQ_e^{-\T})\in \cDuvde\subseteq \cDuvd$. Applying Lemma~\ref{lemma_id_rec} to the aligned pair
$(\tilde\bU\bQ_e,\tilde\bV\bQ_e^{-\T})$, we obtain $p_\alpha^\natural(\tilde\bU\bQ_e,\tilde\bV\bQ_e^{-\T})=0$.
The rotational invariance $(\tilde\bU\bQ_e)(\tilde\bV\bQ_e^{-\T})^\T = \tilde\bU\tilde\bV^\T$ therefore yields
\begin{equation*}
h_\alpha^\natural(\tilde\bU\bQ_e,\tilde\bV\bQ_e^{-\T}) = \cL(\tilde\bU\tilde\bV^\T) < \cL(\tilde\bU\tilde\bV^\T)+p_\alpha^\natural(\tilde\bU,\tilde\bV) = h_\alpha^\natural(\tilde\bU,\tilde\bV),
\end{equation*}
contradicting the fact that $(\tilde\bU,\tilde\bV)$ is the unique minimizer of $h_\alpha^\natural(\cdot)$ on $\cDuvd$. This proves \eqref{eq_rect_penalty_zero}.

Since \eqref{eq_rect_stationary_tilde} and \eqref{eq_rect_penalty_zero} hold, and the gradient of $p_\alpha^\natural$ vanishes whenever $p_\alpha^\natural=0$, we conclude that
\begin{equation} \label{eq_rect_tilde_first_order_L}
\nabla_{\bU}\cL(\tilde\bU\tilde\bV^\T)=\zero, \qquad 
\nabla_{\bV}\cL(\tilde\bU\tilde\bV^\T)=\zero.
\end{equation}
Equivalently, there is 
\begin{equation} \label{eq_rect_tilde_G_first_order}
\cG(\tilde\bU,\tilde\bV)\tilde\bV=\zero, \qquad \cG(\tilde\bU,\tilde\bV)^\T\tilde\bU=\zero.
\end{equation}

We finally identify $(\tilde\bU,\tilde\bV)$ with the aligned constrained optimizer $(\hat\bU,\hat\bV)$. 
By definition of $\hat\bQ\in\argmin_{\bG\in GL(r)} = \|(\hat\bU\bG-\bU^*,\hat\bV\bG^{-\T}-\bV^*)\|_{\Fnw}$, it holds that $(\hat\bU\hat\bQ,\hat\bV\hat\bQ^{-\T})\in\cDuvd$, and therefore Lemma~\ref{lemma_id_rec} yields $p_\alpha^\natural(\hat\bU\hat\bQ,\hat\bV\hat\bQ^{-\T})=0$.
Since $(\tilde\bU,\tilde\bV)\in\cDuvd\subseteq\cDuvinf$ and $(\hat\bU,\hat\bV)$ minimizes $\cL(\bU\bV^\T)$ over $\cDuvinf$, we have
\begin{equation*}
h_\alpha^\natural(\hat\bU\hat\bQ,\hat\bV\hat\bQ^{-\T}) = \cL(\hat\bU\hat\bV^\T) \le \cL(\tilde\bU\tilde\bV^\T) = h_\alpha^\natural(\tilde\bU,\tilde\bV),
\end{equation*}
where the last equality uses \eqref{eq_rect_penalty_zero}.
On the other hand, $(\tilde\bU,\tilde\bV)$ is the unique minimizer of $h_\alpha^\natural(\cdot)$ on $\cDuvd$, and consequently, $h_\alpha^\natural(\tilde\bU,\tilde\bV) \le h_\alpha^\natural(\hat\bU\hat\bQ,\hat\bV\hat\bQ^{-\T})$.
Hence, equality holds, and uniqueness implies
\begin{equation}
\label{eq_rect_balanced_identification}
(\tilde\bU,\tilde\bV)=(\hat\bU\hat\bQ,\hat\bV\hat\bQ^{-\T}).
\end{equation}
Consequently, with the rates for $(\tilde\bU,\tilde\bV)$ established above and the equivalence \eqref{eq_rect_balanced_identification}, we conclude that the optimizer bounds in~\eqref{eq_optimum_UV_error_bound} hold, and that the aligned solution $(\hat\bU\hat\bQ,\hat\bV\hat\bQ^{-\T})$ is unique and satisfies the first-order condition.
Step~2 is completed.

\paragraph{\it \underline{ Step 3: $\ell_2$- and $\ell_{\infty}$ error contraction for $(\bU^t,\bV^t)$.}}
By Step~2, we have $(\hat\bU\hat\bQ,\hat\bV\hat\bQ^{-\T})=(\tilde\bU,\tilde\bV)$.
It suffices to control the gradient iterates relative to the re-centered pair
$(\tilde\bU,\tilde\bV)$.
Specifically, we define the re-centered balancing penalty
\begin{equation*}
p_{\alpha,\dagger}^{\natural}(\bU,\bV) := \frac{\alpha nq}{4} \Big\|n^{-1}(\bU-\tilde\bU)^\T\bU-q^{-1}\bV^\T(\bV-\tilde\bV)\Big\|_{\Fn}^2,
\end{equation*}
and the associated objective $h_{\alpha,\dagger}^{\natural}(\bU,\bV):=\cL(\bU\bV^\T)+p_{\alpha,\dagger}^{\natural}(\bU,\bV)$.
By \eqref{eq_rect_tilde_first_order_L}, there is
\begin{equation}
\label{eq_rect_recentered_stationary}
\nabla_{\bU}h_{\alpha,\dagger}^{\natural}(\tilde\bU,\tilde\bV)=\zero, \qquad
\nabla_{\bV}h_{\alpha,\dagger}^{\natural}(\tilde\bU,\tilde\bV)=\zero.
\end{equation}

For later use, similar to the purpose of defining $\phi^{\dagger}$ and $\psi_n^{\dagger}$, we let
\begin{equation}
\label{eq_rect_step4_radii}
\phi_{nq}^{\dagger}:=\phi_{nq}+C\frac{\Delta_2}{\alpha\sigma_{\min}}, \qquad
\psi_{nq}^{\dagger}:=\psi_{nq}+C\frac{\Delta_{\infty}}{\alpha\sigma_{\min}},
\end{equation}
where $C>0$ is a sufficiently large universal constant. By the initialization assumptions in Theorem~\ref{thm_general_theory_uv}, the noise assumptions in Theorem~\ref{thm_general_theory_UV_noise}, and the estimator bounds proved in Steps~2,
after enlarging $C$ and shrinking $c_0$ if necessary, we have
\begin{equation}
\label{eq_rect_step4_dagger_scaling}
\phi_{nq}^{\dagger}\le \frac{2}{3}\epsilon \wedge \frac{\alpha}{\beta^2\kappa^2\sqrt{r\kappa}}, \qquad
\psi_{nq}^{\dagger}\le \frac{2}{3}\epsilon, \qquad \frac{\beta}{\alpha}\kappa\sqrt{\kappa r}\frac{\phi_{nq}^{\dagger}}{\psi_{nq}^{\dagger}}\le c_0.
\end{equation}
Here, the bound for the ratio $\phi_{nq}^{\dagger}/\psi_{nq}^{\dagger}$ can be verified via $\phi_{nq}^{\dagger}/\psi_{nq}^{\dagger}\le \phi_{nq}/\psi_{nq} + \Delta_2/\Delta_{\infty}$ together with the initialization requirement~\eqref{eq_init_consis_main_R0} and~\eqref{eq_sym_init_Z_uniform_uv} and the theorem's scaling condition \eqref{eq_rect_step1_scaling_implications}.

Moreover, \eqref{eq_rect_step2_l2_final}, \eqref{eq_rect_step2_linf_final}, and Weyl's inequality give
\begin{equation} \label{eq_rect_step4_tilde_scale}
\sigma_r(\tilde\bU/\sqrt n)\wedge \sigma_r(\tilde\bV/\sqrt q) \ge (1-c_0)\sqrt{\sigma_{\min}}, \qquad
\frac{\|\tilde\bU\|}{\sqrt n}\vee \frac{\|\tilde\bV\|}{\sqrt q} \le C\sqrt{\kappa\sigma_{\min}}, \end{equation}
\begin{equation} \label{eq_rect_step4_tilde_infty_scale}
\|\tilde\bU\|_{2\to\infty}\vee \|\tilde\bV\|_{2\to\infty} \le 2\omega_*, \qquad \tilde\tau_*^2:=\frac{n^{-1}\|\tilde\bU\|_{\Fn}^2+q^{-1}\|\tilde\bV\|_{\Fn}^2}{2} \le 2\tau_*^2.
\end{equation}
The explicit Hessian formulas for $p_{\alpha}^{\natural}$ established in Step~1 are reference invariant. Therefore the proof of Theorem~\ref{thm_general_theory_uv} applies verbatim after replacing $(\bU^*,\bV^*)$ by $(\tilde\bU,\tilde\bV)$. In particular, the same lower and upper bounds for the re-centered Hessian blocks hold along any segment contained in $\cDuvinf$, up to changing absolute constants.

For each $t\ge 0$, let $\bG_t^{\dagger} \in \argmin_{\bG\in GL(r)} \| (\bU^t\bG-\tilde\bU,\,\bV^t\bG^{-\T}-\tilde\bV)\|_{\Fnw}$,
and define the aligned iterates and errors
\begin{equation}\label{eq_define_loo_et}
\widetilde{\bU}^{t}:=\bU^t\bG_t^{\dagger}, \qquad
\widetilde{\bV}^{t}:=\bV^t(\bG_t^{\dagger})^{-\T}, \qquad
\tilde\bE_U^t:=\widetilde{\bU}^{t}-\tilde\bU, \qquad
\tilde\bE_V^t:=\widetilde{\bV}^{t}-\tilde\bV.
\end{equation}
Since $\bG_t^{\dagger}$ is the optimal alignment of $(\bU^t,\bV^t)$ to $(\tilde\bU,\tilde\bV)$, the proof of Lemma~\ref{lemma_id_rec}, with $(\bU^*,\bV^*)$ replaced by $(\tilde\bU,\tilde\bV)$, leads to $n^{-1}(\widetilde{\bU}^{t}-\tilde\bU)^\T\widetilde{\bU}^{t} = q^{-1}\widetilde{\bV}^{t\T}(\widetilde{\bV}^{t}-\tilde\bV)$. This identity is the re-centered analogue of the balancing relation used in the deterministic proof, and it again forces the penalty gradient to vanish at the aligned iterate. Subsequently, we know that
\begin{equation*}
\nabla_{\bU}p_{\alpha,\dagger}^{\natural}(\widetilde{\bU}^{t},\widetilde{\bV}^{t})=\zero, \qquad \nabla_{\bV}p_{\alpha,\dagger}^{\natural}(\widetilde{\bU}^{t},\widetilde{\bV}^{t})=\zero.
\end{equation*}
Consequently, writing $\bLambda_t^{\dagger}:=(\bG_t^{\dagger})^\T\bG_t^{\dagger}$, the gradient update
can be rewritten as the re-centered version
\begin{align}
\bar\bU^{t+1} &:=\bU^{t+1}\bG_t^{\dagger} =\widetilde{\bU}^{t} -\frac{\eta}{q}\nabla_{\bU}h_{\alpha,\dagger}^{\natural}(\widetilde{\bU}^{t},\widetilde{\bV}^{t}) -\frac{\eta}{q}\nabla_{\bU}h_{\alpha,\dagger}^{\natural}(\widetilde{\bU}^{t},\widetilde{\bV}^{t})(\bLambda_t^{\dagger}-\bI_r), \label{eq_rect__U_dagger} \\
\bar\bV^{t+1} &:=\bV^{t+1}(\bG_t^{\dagger})^{-\T} =\widetilde{\bV}^{t} -\frac{\eta}{n}\nabla_{\bV}h_{\alpha,\dagger}^{\natural}(\widetilde{\bU}^{t},\widetilde{\bV}^{t}) -\frac{\eta}{n}\nabla_{\bV}h_{\alpha,\dagger}^{\natural}(\widetilde{\bU}^{t},\widetilde{\bV}^{t})\big\{(\bLambda_t^{\dagger})^{-1}-\bI_r\big\}.\label{eq_rect__V_dagger}
\end{align}

In what follows, we follow the proof of Theorem~\ref{thm_general_theory_uv} to prove by induction on $t$ that, for all $0\le s\le t$,
\begin{enumerate}[label=(\arabic*)]
    \item\label{item_uv_ind_exist_dagger} the optimal alignment matrix $\bG_s^{\dagger}$ exists;
    \item\label{item_uv_ind_l2_dagger} $\|(\tilde\bE_U^s,\tilde\bE_V^s)\|_{\Fnw}\le \rho^s\phi_{nq}^{\dagger}\tau_*$;
    \item\label{item_uv_ind_linf_dagger} $\|(\tilde\bE_U^s,\tilde\bE_V^s)\|_{\twinfw}\le \rho^s\psi_{nq}^{\dagger}\omega_*$;
    \item\label{item_uv_ind_rot_dagger} $\|\bG_s^{\dagger}-\bR^0\|\vee \|(\bG_s^{\dagger})^{-\T}-\bR^0\|
    \le \iota_0\alpha/(\beta\kappa)$.
\end{enumerate}
for some sufficiently small $\iota_0\in(0,1/30)$.

\medskip
\noindent{\bf Initialization.}
By the theorem's condition on the initialization $(\bU^0,\bV^0)$, we know that 
\begin{equation*}
\max\Big\{
 n^{-1/2}\|\bU^0\bR^0-\tilde\bU\|_{\Fn},
 q^{-1/2}\|\bV^0(\bR^0)^{-\T}-\tilde\bV\|_{\Fn}
\Big\}
\le \phi_{nq}^{\dagger}\tau_*.
\end{equation*}
With the scaling condition $\phi_{nq}^{\dagger}\tau_*\le c_0{\alpha}/(\beta^2\kappa^2\sqrt{r\kappa})\times \sqrt{r\kappa\sigma_{\min}}\le c_0{\alpha}(\beta\kappa)^{-1}\sqrt{\sigma_{\min}}$ by \eqref{eq_rect_step4_dagger_scaling} and $\tau_*\le \sqrt{r\kappa\sigma_{\min}}$, Lemma~\ref{lemma_alignment_near_rotation}, applied with $\bP=\bR^0$ and reference $(\tilde\bU,\tilde\bV)$, yields that $\bG_0^{\dagger}$ exists and that
\begin{equation}
\|\bG_0^{\dagger}-\bR^0\|\vee \|(\bG_0^{\dagger})^{-\T}-\bR^0\| \le \frac{5\phi_{nq}^{\dagger}\tau_*}{\sigma_r(\tilde\bU/\sqrt n)\wedge \sigma_r(\tilde\bV/\sqrt q)} \le C\phi_{nq}^{\dagger}\frac{\tau_*}{\sqrt{\sigma_{\min}}} \le \iota_0\frac{\alpha}{\beta\kappa},\label{item_uv_ind_exist_dagger_t0}
\end{equation}
after shrinking $c_0$ if necessary.
This proves induction hypotheses~\ref{item_uv_ind_exist_dagger} and \ref{item_uv_ind_rot_dagger} at $t=0$.

The theorem assumptions together with \eqref{eq_rect_step2_l2_final} and \eqref{eq_rect_step2_linf_final} imply
\begin{equation*}
\|(\bU^0\bR^0-\tilde\bU,\bV^0(\bR^0)^{-\T}-\tilde\bV)\|_{\Fnw} \le \phi_{nq}^{\dagger}\tau_*, \qquad \|(\bU^0\bR^0-\tilde\bU,\bV^0(\bR^0)^{-\T}-\tilde\bV)\|_{\twinfw}
\le \psi_{nq}^{\dagger}\omega_*.
\end{equation*}
Hence with \eqref{item_uv_ind_exist_dagger_t0}, one can verify that the induction hypotheses~\ref{item_uv_ind_l2_dagger} and \ref{item_uv_ind_linf_dagger} hold at $t=0$.
Now assume \ref{item_uv_ind_exist_dagger}--\ref{item_uv_ind_rot_dagger} hold for all $0\le s\le t$. We verify them at time $t+1$.

\medskip
\noindent{\bf Weighted $\ell_2$ contraction for iterate $(\bU^t,\bV^t)$.}
For $s\in[0,1]$ and $t\ge 0$, define
\begin{equation*}
\bU_t^{\dagger}(s):=\tilde\bU+s\tilde\bE_U^t, \qquad \bV_t^{\dagger}(s):=\tilde\bV+s\tilde\bE_V^t.
\end{equation*}
By \eqref{eq_rect_step2_l2_final}, \eqref{eq_rect_step2_linf_final}, and the induction hypotheses, we have
\begin{align*}
\disttwo\{(\bU_t^{\dagger}(s),\bV_t^{\dagger}(s)),(\bU^*,\bV^*)\} &\le \|(\tilde\bU-\bU^*,\tilde\bV-\bV^*)\|_{\Fnw}  + s\|(\tilde\bE_U^t,\tilde\bE_V^t)\|_{\Fnw} \\
&\le \Big(C\frac{\Delta_2}{\alpha\sigma_{\min}}+\phi_{nq}^{\dagger}\Big)\tau_* \le \epsilon\tau_*, \\
\distinf\{(\bU_t^{\dagger}(s),\bV_t^{\dagger}(s)),(\bU^*,\bV^*)\} &\le \|(\tilde\bU-\bU^*,\tilde\bV-\bV^*)\|_{\twinfw} +s\|(\tilde\bE_U^t,\tilde\bE_V^t)\|_{\twinfw} \\ &\le \Big(C\frac{\Delta_{\infty}}{\alpha\sigma_{\min}}+\psi_{nq}^{\dagger}\Big)\omega_*
\le \epsilon\omega_*,
\end{align*}
where the last two inequalities follow from \eqref{eq_rect_step1_scaling_implications} and
\eqref{eq_rect_step4_dagger_scaling}. Hence, we know that for all $s\in[0,1]$,
\begin{equation}
\label{eq_rect_step4_segment_localization}
(\bU_t^{\dagger}(s),\bV_t^{\dagger}(s))\in \cDuvinf .
\end{equation}
Therefore, all assumptions stated with $\cD=\cDuvinf$ are available along this interpolation path. Further let
\begin{equation*}
\tilde\be_t:= \begin{pmatrix} \mathrm{vec}\big((\tilde\bE_U^t)^\T\big)\\ \mathrm{vec}\big((\tilde\bE_V^t)^\T\big) \end{pmatrix}, \;
\tilde\bw_t:=\bS_z^{-1}\tilde\be_t, \;
\bar\be_{t+1}:= \begin{pmatrix}
\mathrm{vec}\big((\bar\bU^{t+1}-\tilde\bU)^\T\big)\\ \mathrm{vec}\big((\bar\bV^{t+1}-\tilde\bV)^\T\big)
\end{pmatrix}, \;
\bar\bw_{t+1}:=\bS_z^{-1}\bar\be_{t+1},
\end{equation*}
and
\begin{equation*}
\vartheta_t^{\dagger}:=\|\bLambda_t^{\dagger}-\bI_r\|\vee \|(\bLambda_t^{\dagger})^{-1}-\bI_r\|, \qquad
\cR_t^{\dagger}:= \begin{pmatrix} (\bLambda_t^{\dagger}-\bI_r)\otimes \bI_n & \zero\\ \zero & \big((\bLambda_t^{\dagger})^{-1}-\bI_r\big)\otimes \bI_q
\end{pmatrix}.
\end{equation*}
Using \eqref{eq_rect__U_dagger}--\eqref{eq_rect__V_dagger},
\eqref{eq_rect_recentered_stationary}, and the fundamental theorem of calculus (Theorem 4.2 in \citet{lang2012real}, Chapter XIII) along the segment
$\{(\bU_t^{\dagger}(s),\bV_t^{\dagger}(s)):0\le s\le 1\}$, we know
\begin{equation*} \bar\bw_{t+1} = (\bI_{nr+qr}-\eta\bar\cA_t^{\dagger})\tilde\bw_t -\eta\tilde\cG_t^{\dagger}\tilde\bw_t -\eta\bar\cG_t^{\dagger}\tilde\bw_t -\eta\cR_t^{\dagger}(\bar\cA_t^{\dagger}+\tilde\cG_t^{\dagger}+\bar\cG_t^{\dagger})\tilde\bw_t,
\end{equation*}
where we define
\begin{equation*}
\bar\cA_t^{\dagger} := \bS_z\int_0^1 \Big\{\nabla_{\bz}^2 h_{\alpha,\dagger}^{\natural}(\bU_t^{\dagger}(s),\bV_t^{\dagger}(s)) -\cG_e(\bU_t^{\dagger}(s),\bV_t^{\dagger}(s))\Big\}ds\,\bS_z,
\end{equation*}
\begin{equation*}
\tilde\cG_t^{\dagger} := \bS_z\int_0^1 \Big\{\cG_e(\bU_t^{\dagger}(s),\bV_t^{\dagger}(s))- \EE\cG_e(\bU_t^{\dagger}(s),\bV_t^{\dagger}(s))\Big\}ds\,\bS_z, \end{equation*}
\begin{equation*}
\bar\cG_t^{\dagger} := \bS_z\int_0^1\EE\cG_e(\bU_t^{\dagger}(s),\bV_t^{\dagger}(s))ds\,\bS_z.
\end{equation*}
Therefore, we bound $\|\bar\bomega_{t+1}\|$ by
\begin{equation*}
\|\bar\bw_{t+1}\| \le \gamma_{1,t}^{\dagger}+\gamma_{2,t}^{\dagger}+\gamma_{3,t}^{\dagger}+\gamma_{4,t}^{\dagger},
\end{equation*}
with
\begin{equation*}
\gamma_{1,t}^{\dagger}:=\|(\bI_{nr+qr}-\eta\bar\cA_t^{\dagger})\tilde\bw_t\|, \qquad
\gamma_{2,t}^{\dagger}:=\eta\|\tilde\cG_t^{\dagger}\tilde\bw_t\|, \qquad
\gamma_{3,t}^{\dagger}:=\eta\|\bar\cG_t^{\dagger}\tilde\bw_t\|,
\end{equation*}
\begin{equation*}
\gamma_{4,t}^{\dagger} := \eta\|\cR_t^{\dagger}(\bar\cA_t^{\dagger}+\tilde\cG_t^{\dagger}+\bar\cG_t^{\dagger})\tilde\bw_t\|.
\end{equation*}

The same proof as in Step~1 of Theorem~\ref{thm_general_theory_uv}, using the reference pair $(\tilde\bU,\tilde\bV)$ and \eqref{eq_rect_step4_segment_localization}, gives
\begin{equation} \label{eq_rect_step4_A_dagger_bounds}
\lambda_{\min}(\bar\cA_t^{\dagger})\ge \frac78\alpha\sigma_{\min},
\qquad
\lambda_{\max}(\bar\cA_t^{\dagger})\le 2(\alpha+\beta)\kappa\sigma_{\min}.
\end{equation}
It follows that
\begin{equation} \label{eq_rect_step4_gamma1}
\gamma_{1,t}^{\dagger} \le \Big(1-\frac78\eta\alpha\sigma_{\min}\Big)\|\tilde\bw_t\|.
\end{equation}

Next, similar to bounding $\gamma_{2,t}$ in the proof of Theorem~\ref{thm_general_theory_uv}, Equation~\ref{assump_gradient_noise_rect_1} and the theorem's scaling condition on $\Delta_2$ yield
\begin{equation} \label{eq_rect_step4_gamma2}
\gamma_{2,t}^{\dagger} \le C\eta\Delta_2\|\tilde\bw_t\| \le \frac1{16}\eta\alpha\sigma_{\min}\|\tilde\bw_t\|,
\end{equation} after shrinking $c_0$ if necessary.

For $\gamma_{3,t}^{\dagger}$, use $\bar\cG(\bU^*,\bV^*)=\zero$,
Assumption~\ref{assump_rect_lip} with $\bar\cG(\bU,\bV)$ in place of $\cG(\bU,\bV)$, and \eqref{eq_rect_step4_segment_localization} to get
\begin{equation*}
\|\bar\cG_t^{\dagger}\| \le \sup_{s\in[0,1]} \frac{\|\bar\cG(\bU_t^{\dagger}(s),\bV_t^{\dagger}(s))\|}{\sqrt{nq}} \le \frac{L_2}{\sqrt{nq}} \sup_{s\in[0,1]} \|\bU_t^{\dagger}(s)\bV_t^{\dagger}(s)^\T-\bU^*(\bV^*)^\T\|_{\Fn} \le C\epsilon\sqrt r\,\kappa\sigma_{\min}.
\end{equation*}
The theorem scaling condition implies that $\sqrt r\,\kappa\epsilon/\alpha\le c_0$, and therefore, we have
\begin{equation} \label{eq_rect_step4_gamma3}
\gamma_{3,t}^{\dagger} \le C\eta\epsilon\sqrt r\,\kappa\sigma_{\min}\|\tilde\bw_t\| \le \frac1{16}\eta\alpha\sigma_{\min}\|\tilde\bw_t\|.
\end{equation}

Finally, exactly as in the proof of Theorem~\ref{thm_general_theory_uv}, the induction hypothesis~\ref{item_uv_ind_rot_dagger} implies
\begin{equation} \label{eq_rect_step4_balance_smallness}
\vartheta_t^{\dagger} \le \frac{\alpha}{10\beta\kappa},
\end{equation} after shrinking $c_0$ if necessary. Combining this with
\eqref{eq_rect_step4_A_dagger_bounds}--\eqref{eq_rect_step4_gamma3}, we obtain
\begin{equation} \label{eq_rect_step4_gamma4}
\gamma_{4,t}^{\dagger} \le \eta\vartheta_t^{\dagger} \Big(\lambda_{\max}(\bar\cA_t^{\dagger})\|\tilde\bw_t\|+\|\tilde\cG_t^{\dagger}\tilde\bw_t\|+\|\bar\cG_t^{\dagger}\tilde\bw_t\|\Big) \le \frac58\eta\alpha\sigma_{\min}\|\tilde\bw_t\|.
\end{equation}
Combining \eqref{eq_rect_step4_gamma1}, \eqref{eq_rect_step4_gamma2},
\eqref{eq_rect_step4_gamma3}, and \eqref{eq_rect_step4_gamma4}, we conclude that $\|\bar\bw_{t+1}\|\le (1-\frac14\eta\alpha\sigma_{\min})\|\tilde\bw_t\|$.
With the induction hypothesis, we know 
\begin{equation} \label{eq_rect_step4__l2}
\|(\bar\bU^{t+1}-\tilde\bU,\bar\bV^{t+1}-\tilde\bV)\|_{\Fnw} \le \rho\|(\tilde\bE_U^t,\tilde\bE_V^t)\|_{\Fnw} \le \rho^{t+1}\phi_{nq}^{\dagger}\tau_*.
\end{equation}

\medskip
\noindent{\bf Weighted $\ell_{\infty}$ contraction for iterate $(\bU^t,\bV^t)$.} 
Write $\bar\cH_{ii,t}^{UU,\dagger}$, $\bar\cH_{i\ell,t}^{UV,\dagger}$, and $\bar\cH_{\ell\ell,t}^{VV,\dagger}$ for the averaged Hessian blocks defined exactly as in Step~2 of the proof of Theorem~\ref{thm_general_theory_uv}, but now for the re-centered objective $h_{\alpha,\dagger}^{\natural}$ along the segment $\{(\bU_t^{\dagger}(s),\bV_t^{\dagger}(s)):0\le s\le 1\}$.
The same proof as in Theorem~\ref{thm_general_theory_uv}, using
\eqref{eq_rect_step4_segment_localization}, \eqref{eq_rect_step4_tilde_scale}, and \eqref{eq_rect_step4_tilde_infty_scale}, yields
\begin{equation} \label{eq_rect_step4_delta12}
\delta_{1,t}^{\dagger} \le \Big(1-\frac78\eta\alpha\sigma_{\min}\Big) \|(\tilde\bE_U^t,\tilde\bE_V^t)\|_{\twinfw}, \qquad
\delta_{2,t}^{\dagger} \le C\eta\beta\sqrt{\kappa\sigma_{\min}}\,\omega_* \|(\tilde\bE_U^t,\tilde\bE_V^t)\|_{\Fnw},
\end{equation}
where $\delta_{1,t}^{\dagger}$ and $\delta_{2,t}^{\dagger}$ denote the diagonal and off-diagonal Hessian contributions similar to $\delta_{1,t}$ and $\delta_{2,t}$ in Section~\ref{supp_prove_thm_general_theory_uv}, respectively.

The new terms come from the stochastic and deterministic parts of $\cG$.
Define
\begin{align*}
\delta_{3,t}^{\dagger,U} &:= \frac{\eta}{q} \max_{i\in[n]} \Big\| \Big[ \int_0^1 \tilde\cG(\bU_t^{\dagger}(s),\bV_t^{\dagger}(s))ds\;\tilde\bE_V^t \Big]_{i,\cdot} \Big\|, \\
\delta_{3,t}^{\dagger,V} &:= \frac{\eta}{n} \max_{\ell\in[q]} \Big\| \Big[ \int_0^1 \tilde\cG(\bU_t^{\dagger}(s),\bV_t^{\dagger}(s))^\T ds\;\tilde\bE_U^t \Big]_{\ell,\cdot} \Big\|,
\end{align*}
and let $\delta_{3,t}^{\dagger}:=\delta_{3,t}^{\dagger,U}\vee \delta_{3,t}^{\dagger,V}$.
By~\eqref{assump_gradient_noise_rect_3} and the theorem's scaling condition for $\bar\Delta_{\infty}(n,q,\delta)$, one has
\begin{equation} \label{eq_rect_step4_delta3}
\delta_{3,t}^{\dagger} \le \eta \bar\Delta_{\infty}(n,q,\delta) \|(\tilde\bE_U^t,\tilde\bE_V^t)\|_{\twinfw} \le \frac14\eta\alpha\sigma_{\min} \|(\tilde\bE_U^t,\tilde\bE_V^t)\|_{\twinfw}.
\end{equation}

Next, we further define
\begin{align*}
\delta_{4,t}^{\dagger,U} &:= \frac{\eta}{q} \max_{i\in[n]} \Big\| \Big[ \int_0^1 \bar\cG(\bU_t^{\dagger}(s),\bV_t^{\dagger}(s))ds\;\tilde\bE_V^t \Big]_{i,\cdot} \Big\|, \\
\delta_{4,t}^{\dagger,V} &:= \frac{\eta}{n} \max_{\ell\in[q]} \Big\| \Big[ \int_0^1 \bar\cG(\bU_t^{\dagger}(s),\bV_t^{\dagger}(s))^\T ds\;\tilde\bE_U^t \Big]_{\ell,\cdot} \Big\|,
\end{align*}
and let $\delta_{4,t}^{\dagger}:=\delta_{4,t}^{\dagger,U}\vee \delta_{4,t}^{\dagger,V}$.
By Assumption~\ref{assump_rect_lip} with $\bar\cG(\bU,\bV)$ in place of $\cG(\bU,\bV)$, $\bar\cG(\bU^*,\bV^*)=\zero$, and
\eqref{eq_rect_step4_segment_localization}, we have for every $s\in[0,1]$,
\begin{align*}
q^{-1/2}\|\bU_t^{\dagger}(s)\bV_t^{\dagger}(s)^\T-\bU^*(\bV^*)^\T\|_{\twinf} \le &\, \|\bU^*\|_{2\to\infty}\frac{\|\bV_t^{\dagger}(s)-\bV^*\|_{\Fn}}{\sqrt q} \\&\,+
\Big(\frac{\|\bV^*\|}{\sqrt q}+\frac{\|\bV_t^{\dagger}(s)-\bV^*\|_{\Fn}}{\sqrt q}\Big) \|\bU_t^{\dagger}(s)-\bU^*\|_{2\to\infty} \\
\le &\, \omega_*\,\epsilon\tau_* + C\sqrt{\kappa\sigma_{\min}}\,\epsilon\omega_* \\\le &\,
C\alpha\sqrt{\frac{\sigma_{\min}}{\kappa}}\,\omega_*,
\end{align*}
where the last step uses $\tau_*\le \sqrt{r\kappa\sigma_{\min}}$ and the theorem scaling condition $\sqrt r\,\kappa\epsilon/\alpha\le c_0$.
Therefore, one has
\begin{equation} \label{eq_rect_step4_delta4}
\begin{aligned}\delta_{4,t}^{\dagger} \le &\, C\eta L_{\infty} \Big( q^{-1/2}\sup_{s\in[0,1]} \|\bU_t^{\dagger}(s)\bV_t^{\dagger}(s)^\T-\bU^*(\bV^*)^\T\|_{\twinf} \Big) \|(\tilde\bE_U^t,\tilde\bE_V^t)\|_{\Fnw}
\\\le &\,  C\eta\alpha\sqrt{\frac{\sigma_{\min}}{\kappa}}\,\omega_* \|(\tilde\bE_U^t,\tilde\bE_V^t)\|_{\Fnw}.
\end{aligned}\end{equation}

Finally, let $\delta_{5,t}^{\dagger}$ denote the balancing term coming from $\bLambda_t^{\dagger}-\bI_r$ and $(\bLambda_t^{\dagger})^{-1}-\bI_r$.
Exactly as in the proof of Theorem~\ref{thm_general_theory_uv}, \eqref{eq_rect_step4_A_dagger_bounds}, \eqref{eq_rect_step4_delta12}, \eqref{eq_rect_step4_delta3}, \eqref{eq_rect_step4_delta4}, and \eqref{eq_rect_step4_balance_smallness} imply
\begin{equation} \label{eq_rect_step4_delta5}
\delta_{5,t}^{\dagger} \le \frac14\eta\alpha\sigma_{\min} \|(\tilde\bE_U^t,\tilde\bE_V^t)\|_{\twinfw} + C\eta\alpha\sqrt{\frac{\sigma_{\min}}{\kappa}}\,\omega_* \|(\tilde\bE_U^t,\tilde\bE_V^t)\|_{\Fnw}.
\end{equation}
Combining \eqref{eq_rect_step4_delta12}--\eqref{eq_rect_step4_delta5}, and using $\beta\ge \alpha$ and $\kappa\ge 1$, we obtain
\begin{equation*}
\|(\bar\bU^{t+1}-\tilde\bU,\bar\bV^{t+1}-\tilde\bV)\|_{\twinfw} \le \Big(1-\frac{9}{16}\eta\alpha\sigma_{\min}\Big) \|(\tilde\bE_U^t,\tilde\bE_V^t)\|_{\twinfw} + C\eta\beta\sqrt{\kappa\sigma_{\min}}\,\omega_* \|(\tilde\bE_U^t,\tilde\bE_V^t)\|_{\Fnw}.
\end{equation*}
Since $\tfrac{\beta}{\alpha}\kappa^{3/2}\sqrt{r}\phi_{nq}^{\dagger}/\psi_{nq}^{\dagger}\le c_0$ by \eqref{eq_rect_step4_dagger_scaling}, we know $C\beta\kappa\sqrt r\,\phi_{nq}^{\dagger}\le\tfrac1{16}\alpha\psi_{nq}^{\dagger}$ after shrinking $c_0$. Therefore with induction hypotheses \ref{item_uv_ind_l2_dagger}--\ref{item_uv_ind_linf_dagger}, we arrive at
\begin{equation} \label{eq_rect_step4__linf}
\|(\bar\bU^{t+1}-\tilde\bU,\bar\bV^{t+1}-\tilde\bV)\|_{\twinfw} \le (2\rho-1)\rho^{t}\psi_{nq}^{\dagger}\omega_*.
\end{equation}

\medskip
\noindent{\bf Update the alignment and transfer the bounds.}
Set
\begin{equation*}
\delta_{t+1}^{\dagger} := \max\Big\{ n^{-1/2}\|\bar\bU^{t+1}-\tilde\bU\|_{\Fn}, q^{-1/2}\|\bar\bV^{t+1}-\tilde\bV\|_{\Fn} \Big\}.
\end{equation*}
Equation~\eqref{eq_rect_step4__l2} already shows that $\delta_{t+1}^{\dagger} \le \|(\bar\bU^{t+1}-\tilde\bU,\bar\bV^{t+1}-\tilde\bV)\|_{\Fnw} \le \rho^{t+1}\phi_{nq}^{\dagger}\tau_*$.
Then using \eqref{eq_rect_step4_dagger_scaling}, \eqref{eq_rect_step4_tilde_scale}, and $\tau_*\le \sqrt{r\kappa\sigma_{\min}}$, we obtain $\delta_{t+1}^{\dagger}\le Cc_0\sqrt{\sigma_{\min}} <\big\{\sigma_r(\tilde\bU/\sqrt n)\wedge \sigma_r(\tilde\bV/\sqrt q)\big\}/80$, after shrinking $c_0$ if necessary. 
Moreover, the induction hypothesis~\ref{item_uv_ind_rot_dagger} implies
\begin{equation*}
\|\bG_t^{\dagger}-\bR^0\|\vee \|(\bG_t^{\dagger})^{-\T}-\bR^0\| \le \iota_0\frac{\alpha}{\beta\kappa}<\frac16,
\end{equation*}
so all singular values of $\bG_t^{\dagger}$ lie in $[2/3,3/2]$.
Applying Lemma~\ref{lemma_alignment_near_rotation} to the pair $(\bU^{t+1},\bV^{t+1})$ with $\bm P=\bG_t^{\dagger}$ and reference pair $(\tilde\bU,\tilde\bV)$, we conclude that $\bG_{t+1}^{\dagger}$ exists and
\begin{equation} \label{eq_rect_step4_G_increment}\begin{aligned}
\|\bG_{t+1}^{\dagger}-\bG_t^{\dagger}\| \vee \|(\bG_{t+1}^{\dagger})^{-\T}-(\bG_t^{\dagger})^{-\T}\| \le &\, \frac{5\delta_{t+1}^{\dagger}}{\sigma_r(\tilde\bU/\sqrt n)\wedge \sigma_r(\tilde\bV/\sqrt q)}\\ \le&\, C\rho^{t+1}\phi_{nq}^{\dagger}\frac{\tau_*}{\sqrt{\sigma_{\min}}}\\
\le&\, C\rho^{t+1}\phi_{nq}^{\dagger}\sqrt{r\kappa}.\end{aligned}
\end{equation} 
This proves induction hypothesis~\ref{item_uv_ind_exist_dagger} at time $t+1$.
Similar to \eqref{eq_uv_G_increment_telesc}, telescoping the above increments yields
\begin{equation*}
\|\bG_{t+1}^{\dagger}-\bR^0\|\vee \|(\bG_{t+1}^{\dagger})^{-\T}-\bR^0\| \le C\phi_{nq}^{\dagger}\frac{\sqrt{r\kappa}}{1-\rho}.
\end{equation*}
By the first scaling condition in \eqref{eq_rect_step4_dagger_scaling}, the right-hand side is at most $\iota_0\alpha/(\beta\kappa)$ after shrinking $c_0$ if necessary. Hence induction hypothesis~\ref{item_uv_ind_rot_dagger} also holds at time $t+1$.

Since $\bG_{t+1}^{\dagger}$ minimizes the weighted Frobenius distance to $(\tilde\bU,\tilde\bV)$, we have
\begin{equation*}
\|(\widetilde{\bU}^{t+1}-\tilde\bU,\widetilde{\bV}^{t+1}-\tilde\bV)\|_{\Fnw} \le \|(\bar\bU^{t+1}-\tilde\bU,\bar\bV^{t+1}-\tilde\bV)\|_{\Fnw}.
\end{equation*}
Combining this with \eqref{eq_rect_step4__l2} proves the induction hypothesis~\ref{item_uv_ind_l2_dagger} at time $t+1$.

To transfer the weighted $\ell_{\infty}$ bound, define $\bQ_{t+1}^{\dagger}:=(\bG_t^{\dagger})^{-1}\bG_{t+1}^{\dagger}$. Then we know $\widetilde{\bU}^{t+1}=\bar\bU^{t+1}\bQ_{t+1}^{\dagger}$ and $\widetilde{\bV}^{t+1}=\bar\bV^{t+1}(\bQ_{t+1}^{\dagger})^{-\T}$.
Therefore, one has
\begin{align*}
\|(\widetilde{\bU}^{t+1}-\tilde\bU,&\widetilde{\bV}^{t+1}-\tilde\bV)\|_{\twinfw} \le \|(\bar\bU^{t+1}-\tilde\bU,\bar\bV^{t+1}-\tilde\bV)\|_{\twinfw} \\ &\quad+ \Big(\|\bar\bU^{t+1}\|_{2\to\infty}\vee \|\bar\bV^{t+1}\|_{2\to\infty}\Big) \Big(\|\bQ_{t+1}^{\dagger}-\bI_r\|\vee \|(\bQ_{t+1}^{\dagger})^{-\T}-\bI_r\|\Big).
\end{align*}
Further with \eqref{eq_rect_step4__linf} and \eqref{eq_rect_step4_tilde_infty_scale}, we have
\begin{equation*}
\|\bar\bU^{t+1}\|_{2\to\infty}\vee \|\bar\bV^{t+1}\|_{2\to\infty} \le \|\tilde\bU\|_{2\to\infty}\vee \|\tilde\bV\|_{2\to\infty} + \|(\bar\bU^{t+1}-\tilde\bU,\bar\bV^{t+1}-\tilde\bV)\|_{\twinfw} \le 3\omega_*,
\end{equation*}after shrinking $c_0$ if necessary.
Also, all singular values of $\bG_t^{\dagger}$ lie in $[2/3,3/2]$, which yields with Lemma~\ref{lemma_alignment_near_rotation} that
\begin{equation*}
\|\bQ_{t+1}^{\dagger}-\bI_r\| \vee \|(\bQ_{t+1}^{\dagger})^{-\T}-\bI_r\| \le C\Big( \|\bG_{t+1}^{\dagger}-\bG_t^{\dagger}\| \vee \|(\bG_{t+1}^{\dagger})^{-\T}-(\bG_t^{\dagger})^{-\T}\| \Big)
\le C\rho^{t+1}\phi_{nq}^{\dagger}\sqrt{r\kappa},
\end{equation*} where the last step uses \eqref{eq_rect_step4_G_increment}.
Substituting the above bound and \eqref{eq_rect_step4__linf} yields
\begin{equation*}
\|(\widetilde{\bU}^{t+1}-\tilde\bU,\widetilde{\bV}^{t+1}-\tilde\bV)\|_{\twinfw} \le (2\rho-1)\rho^{t}\psi_{nq}^{\dagger}\omega_* + C\rho^{t+1}\phi_{nq}^{\dagger}\sqrt{r\kappa}\,\omega_*.
\end{equation*}
Since $\tfrac{\beta}{\alpha}\kappa^{3/2}\sqrt{r}\phi_{nq}^{\dagger}/\psi_{nq}^{\dagger} \le c_0$ and $\eta =\{(10(\alpha+\beta)\kappa\sigma_{\min}\}^{-1}$, we know  
\begin{equation*}
    \rho^{t+1}\phi_{nq}^{\dagger}\sqrt{r\kappa}\le \frac{1}{4}\eta\alpha\sigma_{\min}\rho^t\psi_{nq}^{\dagger} = (1-\rho)\rho^t\psi_{nq}^{\dagger},
\end{equation*}after shrinking $c_0$ if necessary. This eventually leads to
\begin{equation*}
\|(\widetilde{\bU}^{t+1}-\tilde\bU,\widetilde{\bV}^{t+1}-\tilde\bV)\|_{\twinfw}\le \rho^{t+1}\psi_{nq}^{\dagger}\omega_*,
\end{equation*}
which proves the induction hypothesis~\ref{item_uv_ind_linf_dagger} at time $t+1$ and closes the induction.

To conclude, we have shown that, for all $t\ge 0$,
\begin{equation}
\label{eq_rect_recentered_contraction}
\disttwo\{(\bU^t,\bV^t),(\tilde\bU,\tilde\bV)\} \le \rho^t\phi_{nq}^{\dagger}\tau_*, \qquad \distinf\{(\bU^t,\bV^t),(\tilde\bU,\tilde\bV)\} \le \rho^t\psi_{nq}^{\dagger}\omega_*.
\end{equation}
Hence, combining \eqref{eq_rect_recentered_contraction} with the optimizer bounds \eqref{eq_rect_step2_l2_final}, \eqref{eq_rect_step2_linf_final}, and the definition \eqref{eq_rect_step4_radii}, we prove the iterate bounds in Theorem~\ref{thm_general_theory_UV_noise}.

\subsection{Proof sketch for the \texorpdfstring{$\ell_2$}{l2} part of Theorems~\ref{thm_general_theory_uv} and~\ref{thm_general_theory_UV_noise}}\label{supp_sec_prove_thm_general_theory_noisy_uv_l2}
We outline the common $\ell_2$ argument under the larger local region $\cD=\cDuvtwo$. Here, similar to the proof of Theorem~\ref{thm_general_theory_UV_noise}, we take $\epsilon = c_0\alpha/(\kappa\sqrt{r})$ without loss of generality.
In the noisy case, we work on the event where \eqref{assump_gradient_noise_rect_1} holds with $\cD=\cDuvtwo$, and assume Assumptions~\ref{assump_rect_rip_weighted} and~\ref{assump_rect_lip_1} hold with $\cD=\cDuvtwo$ and $\bar\cG(\bU,\bV)$ in place of $\cG(\bU,\bV)$. The noiseless case is recovered by setting $\tilde\cG\equiv 0$. Set
\begin{equation*}
\varepsilon:=M\frac{\Delta_2(n,q,\delta)}{\alpha\sigma_{\min}}, \qquad \bar\cD^{(2)}_{uv}(\varepsilon) := \Big\{ (\bU,\bV): \|(\bU-\bU^*,\bV-\bV^*)\|_{\Fnw}\le \varepsilon\tau_* \Big\},
\end{equation*}
where $M>0$ is a sufficiently large universal constant. After shrinking $c_0$ if necessary, the smallness assumption on $\Delta_2(n,q,\delta)/(\alpha\sigma_{\min})$ ensures $\bar\cD^{(2)}_{uv}(\varepsilon)\subseteq \cDuvtwo$.

As in Step~1 of the proof of Theorem~\ref{thm_general_theory_UV_noise}, define
\begin{equation*}
(\tilde\bU,\tilde\bV)\in \argmin_{(\bU,\bV)\in\bar\cD^{(2)}_{uv}(\varepsilon)}\Big\{ q^{-1}\|\nabla_{\bU}h_{\alpha}^{\natural}(\bU,\bV)\|_{\Fn}^2 + n^{-1}\|\nabla_{\bV}h_{\alpha}^{\natural}(\bU,\bV)\|_{\Fn}^2 \Big\},
\end{equation*}
where $h_{\alpha}^{\natural}(\bU,\bV):=\cL(\bU\bV^\T)+p_{\alpha}^{\natural}(\bU,\bV)$. Since $\bar\cG(\bU^*,\bV^*)=\zero$ and $\nabla p_{\alpha}^{\natural}(\bU^*,\bV^*)=\zero$, we know by definition that
\begin{align*}
&\,\Big\{q^{-1}\|\nabla_{\bU}h_{\alpha}^{\natural}(\tilde\bU,\tilde\bV)\|_{\Fn}^2 + n^{-1}\|\nabla_{\bV}h_{\alpha}^{\natural}(\tilde\bU,\tilde\bV)\|_{\Fn}^2\Big\}^{1/2}\\
\le &\,\Big\{q^{-1}\|\nabla_{\bU}h_{\alpha}^{\natural}(\bU^*,\bV^*)\|_{\Fn}^2 + n^{-1}\|\nabla_{\bV}h_{\alpha}^{\natural}(\bU^*,\bV^*)\|_{\Fn}^2\Big\}^{1/2}\le C\sqrt{nq}\,\Delta_2(n,q,\delta)\tau_*.
\end{align*}
Repeating the weighted Frobenius-norm mean-value expansion from the optimizer analysis of Theorem~\ref{thm_general_theory_UV_noise}, but dropping all row-wise terms, yields
\begin{equation*}
\begin{aligned}
\|(\tilde\bU-\bU^*,\tilde\bV-\bV^*)\|_{\Fnw} \le&\; C\frac{\Delta_2(n,q,\delta)}{\alpha\sigma_{\min}}\tau_* + C\frac{\Delta_2(n,q,\delta)}{\alpha\sigma_{\min}}\|(\tilde\bU-\bU^*,\tilde\bV-\bV^*)\|_{\Fnw} \\
&\;+ C\frac{\sqrt{\kappa\sigma_{\min}}}{\alpha\sigma_{\min}} \|(\tilde\bU-\bU^*,\tilde\bV-\bV^*)\|_{\Fnw}^2.
\end{aligned}
\end{equation*}
The theorem scaling allows us absorb the last two terms into the left-hand side, so
\begin{equation*}
\|(\tilde\bU-\bU^*,\tilde\bV-\bV^*)\|_{\Fnw} \le
C\frac{\Delta_2(n,q,\delta)}{\alpha\sigma_{\min}}\tau_* \le \frac{\varepsilon}{2}\tau_*.
\end{equation*}
The same argument from Lemma~\ref{lemma_general_convex_asym} then shows that $h_{\alpha}^{\natural}$ is strongly convex on the aligned $\epsilon$-neighborhood, so $(\tilde\bU,\tilde\bV)$ is an interior stationary point and hence the unique minimizer of $h_{\alpha}^{\natural}$ on that set. The same identification argument as in step~2 in the proof of Theorem~\ref{thm_general_theory_UV_noise} in Section~\ref{supp_sec_prove_thm_general_theory_UV_noise}, when restricted to the weighted Frobenius neighborhood, gives
\begin{equation*}
p_{\alpha}^{\natural}(\tilde\bU,\tilde\bV)=0, \qquad \nabla_{\bU}\cL(\tilde\bU\tilde\bV^\T)=\zero, \qquad \nabla_{\bV}\cL(\tilde\bU\tilde\bV^\T)=\zero,
\end{equation*}
and $(\tilde\bU,\tilde\bV)=(\hat\bU\hat\bG,\hat\bV\hat\bG^{-\T})$. This proves the bound for the estimator.

Now we prove the result for the iterates. Let
\begin{equation*}
\phi_{nq}^{\dagger} := \phi_{nq} + C\frac{\Delta_2(n,q,\delta)}{\alpha\sigma_{\min}},
\end{equation*}
and fix a sufficiently small constant $\iota_0\in(0,1/30)$. Similar to the weighted $\ell_2$ contraction argument in Step~3 of the proof of Theorem~\ref{thm_general_theory_UV_noise}, one can prove by induction on $t$ that, for all $0\le s\le t$,
\begin{enumerate}[label=$(\roman*)$]
    \item \label{item_l2_uv_induc1}the optimal alignment matrix $\bG_s^\dagger\in\argmin_{\bG\in GL(r)} \|(\bU^s\bG-\tilde\bU,\bV^s\bG^{-\T}-\tilde\bV)\|_{\Fnw}$ exists;
    \item \label{item_l2_uv_induc2}for $\tilde\bE_s:= (\bU^s\bG_s^\dagger-\tilde\bU,\, \bV^s(\bG_s^\dagger)^{-\T}-\tilde\bV)$, then $\|\tilde\bE_s\|_{\Fnw}\le \rho^s\phi_{nq}^{\dagger}\tau_*$;
    \item \label{item_l2_uv_induc3}$\|\bG_s^\dagger-\bR^0\|\vee \|(\bG_s^\dagger)^{-\T}-\bR^0\|\le \iota_0\alpha/(\beta\kappa)$.
\end{enumerate}
The base case follows from the initialization bound together with the estimate for $\|(\tilde\bU-\bU^*,\tilde\bV-\bV^*)\|_{\Fnw}$, exactly as in Step~3 of Theorem~\ref{thm_general_theory_UV_noise}. Now assume \ref{item_l2_uv_induc1}--\ref{item_l2_uv_induc3} hold up to time $t$, and define the  iterate $ \bar\bU^{t+1}:=\bU^{t+1}\bG_t^\dagger$, $\bar\bV^{t+1}:=\bV^{t+1}(\bG_t^\dagger)^{-\T}$.
The induction hypotheses imply that the line segment between $(\tilde\bU,\tilde\bV)$ and $(\bU^t\bG_t^\dagger,\bV^t(\bG_t^\dagger)^{-\T})$ stays inside $\cDuvtwo$. Therefore, the same one-step decomposition as in Step~3 gives
\begin{equation*}
\|(\bar\bU^{t+1}-\tilde\bU,\bar\bV^{t+1}-\tilde\bV)\|_{\Fnw} \le \gamma_{1,t}^\dagger+\gamma_{2,t}^\dagger+\gamma_{3,t}^\dagger+\gamma_{4,t}^\dagger,
\end{equation*}
with
\begin{equation*}
\gamma_{1,t}^\dagger \le
\Bigl(1-\frac78\eta\alpha\sigma_{\min}\Bigr)\|\tilde\bE_t\|_{\Fnw},
\qquad
\gamma_{2,t}^\dagger+\gamma_{3,t}^\dagger+\gamma_{4,t}^\dagger
\le
\frac58\eta\alpha\sigma_{\min}\|\tilde\bE_t\|_{\Fnw}.
\end{equation*}
Hence
\begin{equation*}
\|(\bar\bU^{t+1}-\tilde\bU,\bar\bV^{t+1}-\tilde\bV)\|_{\Fnw} \le \rho\|\tilde\bE_t\|_{\Fnw} \le \rho^{t+1}\phi_{nq}^{\dagger}\tau_*.
\end{equation*}
Since the right-hand side is again of order $o(\sqrt{\sigma_{\min}})$ under the same scaling assumptions, the same alignment perturbation lemma as in Step~3 shows that $\bG_{t+1}^\dagger$ exists and the same strategy of telescoping the increment yields
\begin{equation*}
\|\bG_{t+1}^\dagger-\bR^0\|\vee \|(\bG_{t+1}^\dagger)^{-\T}-\bR^0\| \le \iota_0\frac{\alpha}{\beta\kappa}.
\end{equation*}
This proves \ref{item_l2_uv_induc1} and \ref{item_l2_uv_induc3}. Finally, by the optimality of $\bG_{t+1}^\dagger$,
\begin{equation*}
\|\tilde\bE_{t+1}\|_{\Fnw} \le \|(\bar\bU^{t+1}-\tilde\bU,\bar\bV^{t+1}-\tilde\bV)\|_{\Fnw} \le \rho^{t+1}\phi_{nq}^{\dagger}\tau_*.
\end{equation*}
This closes the induction. Combining the bound on $\|\tilde\bE_t\|_{\Fnw}$ with the estimate for $\|(\tilde\bU-\bU^*,\tilde\bV-\bV^*)\|_{\Fnw}$ yields the stated contraction toward $(\bU^*,\bV^*)$.

\section{Proof of Examples}\label{sec_prove_examples}
Throughout this section, we use the notation defined in Section~\ref{Sec_prelmi}. We present a proof of \eqref{supp_sec_prove_example0} in Section~\ref{supp_sec_prove_example0}. The proof of Theorems~\ref{prop_ms_verify} and~\ref{Thm_example2} are then provided in Sections~\ref{supp_sec_prove_example_1} and~\ref{supp_sec_prove_example2}, respectively.

\subsection{Proof of Bounds for Linear  Model in~\eqref{example0}}\label{supp_sec_prove_example0}
As a warm-up, we verify \eqref{example0}.  For the quadratic loss, we have $\nabla_{\bX}\cL(\bX) = n^{-1}(\bX-\bX^*+\bE)$ and $\tilde\cG(\bZ) = \nabla_{\bX}\cL(\bZ\bZ^\T) - \EE\nabla_{\bX}\cL(\bZ\bZ^\T) = n^{-1}\bE$.
Note that $\tilde\cG(\bZ)$ is independent of $\bZ$, so all suprema over $\bZ\in\cD$ are automatic.
For operator norm of $\bE$, by the standard spectral norm bound for an $n\times n$ matrix with independent mean-zero sub-Gaussian entries~\citep{bandeira2016sharp}, there is
\begin{equation*}
    \|\bE\|\lesssim \sigma\{\sqrt n+\sqrt{\log(1/\delta)}\}
\end{equation*}
with probability at least \(1-\delta\). Therefore, for \eqref{assump_gradient_noise_1} to hold, we let \begin{equation*}
    \Delta_2(n,\delta)\asymp\sigma\tfrac{\sqrt{n + \log(1/\delta)}}{n}.
\end{equation*}
Next, we control $\Delta_{\infty}(n,\delta)$. For each row $i\in[n]$, let $\be_i^\T$ denote the $i$th row of $\bE$. Then $\|\bE\bZ^*\|_{\twinf} = \max_{1\le i\le n}\|\be_i^\T\bZ^*\|_2$. For a fixed $i$, any unit vector $\bu\in\mathbb S^{r-1}$, we know $\be_i^\T\bZ^*\bu$ is a mean-zero sub-Gaussian random variable with sub-Gaussian norm bounded by $\|\be_i^\T\bZ^*\bu\|_{\psi_2}\lesssim \sigma\|\bZ^*\bu\|_2 \le \sigma\|\bZ^*\|$. A standard $\varepsilon$-net argument on $\mathbb S^{r-1}$ then gives, for every $u>0$,
\begin{equation*}
    \PP\left( \|\be_i^\T\bZ^*\|_2\ge C\sigma\|\bZ^*\|\sqrt{r+u}\right)\le e^{-u}.\end{equation*}
Taking \(t=\log(n/\delta)\) and applying a union bound over $i=1,\ldots,n$, we obtain
\begin{equation*}
    \|\bE\bZ^*\|_{\twinf} \lesssim \sigma\|\bZ^*\|\sqrt{r+\log(n/\delta)},\end{equation*}
with probability at least $1-\delta$. Hence, for \eqref{assump_gradient_noise_2} to hold, we let 
\begin{equation*}
    \Delta_{\infty}(n,\delta)\asymp\sigma\sqrt{\frac{r+\log(n/\delta)}{n}}.\end{equation*}
Finally, we bound the maximal row-wise $\ell_1$ norm. By definition,  $\|\bE\|_{\infty\to 1} = \max_{1\le i\le n}\sum_{j=1}^{n}|E_{ij}|$.
For each $i,j\in[n]$, since $E_{ij}$ is sub-Gaussian with $\|E_{ij}\|_{\psi_2}\le\sigma$, the folded variable $|E_{ij}|-\EE|E_{ij}|$ is sub-exponential with sub-exponential norm bounded by $C\sigma$, and satisfied $\EE|E_{ij}|\le C\sigma$.
Therefore, Bernstein's inequality gives, for each fixed row \(i\) and every $t>0$, we know $ \sum_{j=1}^{n}|E_{ij}| \le C\sigma n + C\sigma\{\sqrt{nt}+t\}$
with probability at least $1-e^{-t}$. Taking again $t=\log(n/\delta)$ and applying a union bound over $i=1,\dots,n$, we get
\begin{equation*}
    \|\bE\|_{\infty\to 1}\lesssim \sigma n + \sigma\sqrt{n\log(n/\delta)} + \sigma\log(n/\delta)
\end{equation*}
with probability at least \(1-\delta\). Finally, for \eqref{assump_gradient_noise_2} to hold, we let 
\[
    \bar\Delta_{\infty}(n,\delta)\asymp   \sigma+\sigma\sqrt{\frac{\log(n/\delta)}{n}}
        +\frac{\log(n/\delta)}{n}.
\]
When \(\log(n/\delta)\lesssim n\), the final term is dominated by the square-root term, and the right-hand side is of order \(\sigma\). This proves the claimed scale of \(\bar\Delta_\infty(n,\delta)\).

\subsection{Proof of Theorem~\ref{prop_ms_verify}}\label{supp_sec_prove_example_1}

For the sensing model $y_i = \langle \bA_i,\bX^*\rangle+\xi_i$, we treat the measurement matrix $\bA_i$ as fixed. When they are random in nature, our analysis can be viewed as conditioning on them. Then for the empirical loss $\cL(\bM)=\frac{1}{2m}\|\by-\cA(\bX)\|^2$, we define the population version as $\bar \cL(\bX)=\frac1{2m}\|\cA(\bX) - \cA(\bX^*)\|^2$ where $\bX^* = \bU^*(\bV^*)^\T$. Therefore, for any $(\bU,\bV)$, $\bar\cG(\bU, \bV) = \cA^*\cA(\bU\bV^\T - \bX^*)/m$ where $\cA^*\cA(\bH) = \sum_{i=1}^m\langle\cA_i,\bH\rangle\bA_i\in\RR^{n\times q}$ and $\tilde\cG(\bU,\bV) = -m^{-1}\sum_{i=1}^m\xi_i\bA_i$ with $\cA^*(\cdot)$ being the self-adjoint operator of $\cA(\cdot)$.

\underline{\it Verification of Assumption~\ref{assump_rect_rip_weighted}.} Because $\nabla_{\bX}^2\cL(\bX)[\bH_1,\bH_2] = m^{-1}\langle \cA(\bH_1),\cA(\bH_2)\rangle$, Assumption~\ref{assump_rect_rip_weighted} is readily verified by \eqref{eq_rip_sensing} and $\|\bH\|_{\Fn}=1$, with $\alpha = 1-\delta_0$ $\beta = 1+ \delta_0$, and $\epsilon = \infty$ ($\cD = \RR^{n\times r}\times \RR^{q\times r}$).

\underline{\it Verification of Assumption~\ref{assump_rect_lip_1}.} For any $\bH:=\bU_1\bV_1^\T-\bU_2\bV_2^\T$ with $\bU_1,\bU_2\in\RR^{n\times r}$ and $\bV_1,\bV_2\in\RR^{q\times r}$, because $\mathrm{rank}(\bH)\le 2r$, we know 
\begin{align*}
\|\bar{\cG}(\bU_1,\bV_1)-\bar{\cG}(\bU_2,\bV_2)\| &=\frac{1}{m}\|\cA^*\cA(\bH)\| \\ &=\sup_{\|\ba\|_2=\|\bb\|_2=1}\frac{1}{m}\big|\ba^\T \cA^*\cA(\bH)\bb\big| \\ &=\sup_{\|\ba\|_2=\|\bb\|_2=1}\frac{1}{m}\big|\langle \ba\bb^\T,\cA^*\cA(\bH)\rangle\big| \\ &=\sup_{\|\ba\|_2=\|\bb\|_2=1}\frac{1}{m}\big|\langle \cA(\ba\bb^\T),\cA(\bH)\rangle\big| \\ 
&\le \sup_{\|\ba\|_2=\|\bb\|_2=1}\frac{1}{m}\|\cA(\ba\bb^\T)\|_2\,\|\cA(\bH)\|_2.
\end{align*}Here, the first equality follows from that $\cA^*\cA(\cdot)$ is linear operator. Now $\mathrm{rank}(\ba\bb^\T)=1\le 2r$ and $\|\ba\bb^\T\|_{\Fn}=1$, so \eqref{eq_rip_sensing} yields $\|\cA(\ba\bb^\T)\|\le \sqrt{1+\delta_0}$. Similarly, because $\mathrm{rank}(\bH)\le 2r$, $\|\cA(\bH)\|\le \sqrt{1+\delta_0}\,\|\bH\|_{\Fn}$. Combining these, we conclude that
Recalling the definition of $\bH$, we obtain
\[
\|\bar{\cG}(\bU_1,\bV_1)-\bar{\cG}(\bU_2,\bV_2)\| \le (1+\delta_0)\|\bU_1\bV_1^\T-\bU_2\bV_2^\T\|_{\Fn}\le 2\|\bU_1\bV_1^\T-\bU_2\bV_2^\T\|_{\Fn}.
\]
That is, Assumption~\ref{assump_rect_lip_1} holds for $L_2 = 2$ and $\epsilon = \infty$ ($\cD = \RR^{n\times r}\times \RR^{q\times r}$.

\underline{\it Verification of noise condition for $\Delta_2(n,q,\delta)$ in~\eqref{assump_gradient_noise_rect_1}.} Matrix Bernstein inequality (Theorem 4.1.1 in \citet{tropp2015introduction}) for sums of independent mean-zero sub-exponential rectangular matrices gives an event $\mathcal E$ with $\PP(\mathcal E)\ge 1-\delta$ such that on $\mathcal E$,
\begin{equation*}
(nq)^{-1/2}\|\tilde\cG(\bU,\bV)\| = \frac{1}{m}(nq)^{-1/2}\|\sum_{i=1}^m\xi_i\bA_i\| \le C\sigma_{\xi}  \sqrt{\frac{\nu_A\log((n+q)/\delta)}{m^2nq}} 
\end{equation*}where $\nu_A := \max\big\{\|\sum_{i=1}^m\bA_i\bA_i^\T\|,\|\sum_{i=1}^m\bA_i^\T\bA_i\|\big\}$. With \eqref{eq_rip_sensing}, one can check that $\nu_A \le m(1+\delta_0)(n\vee q)$. Consequently, for \eqref{assump_gradient_noise_rect_1}, one may take
\begin{equation*}
\Delta_{2}(n,q,\delta)\asymp \sigma_{\xi}  \sqrt{\frac{(1+\delta_0)\log((n+q)/\delta)}{m(n\wedge q)}} . 
\end{equation*}
The theorem's scaling condition implies that $\Delta_2(n,q,\delta)\le c_0(1-\delta_0)^2\sigma_{\min}/(\kappa\sqrt{r})$ as required by Theorem~\ref{thm_general_theory_UV_noise}.
Thus, we have verified every condition required for the $\ell_2$ error contraction part of Theorem~\ref{thm_general_theory_UV_noise}. This proves Theorem~\ref{prop_ms_verify}.

\subsection{Proof of Theorem~\ref{Thm_example2}}\label{supp_sec_prove_example2}

In this proof, we first verify the conditions required by Theorem~\ref{thm_general_theory_UV_noise} except that on $\bar\Delta_{\infty}(n,q,\delta)$. Next, we show how to establish the result in Theorem~\ref{thm_general_theory_UV_noise} under the Bernoulli low-rank model with a leave-one-out argument that remedies that $\bar\Delta_{\infty}(n,q,\delta)$ cannot be properly bounded. Define the scaled logistic loss $\ell_{\alpha_0}(x;y):=\nu_{\star}\{\log(1+\exp(\alpha_0+x))-y(\alpha_0+x)\}$. Recall $\sigma(x) = \exp(\alpha_0+x)/\{\exp(\alpha_0+x)+1\}$ and then we note that the derivatives of $\ell_{\alpha_0}$ can be written as
\begin{equation*}
    s(x) = \partial_x \ell_{\alpha_0}(x;y) = \nu_{\star}\{\sigma(x) - y\},\qquad w(x) = \partial_x^2\ell_{\alpha_0}(x;y)=\nu_{\star}\sigma(x)\{1-\sigma(x)\}
\end{equation*}
Recall that $\nu_{\star} = e^{-(\alpha_0+M_2)}$. Obviously, $P_{ij}^*\le e^{\alpha_0+M_2}\le \nu_{\star}^{-1}$ uniformly across $i\in[n]$ and $j\in[q]$.

\paragraph{\it \underline{ Verifying of conditions of Theorem~\ref{thm_general_theory_UV_noise}.}} We first prove the following assertions.
\begin{enumerate}[label = (\alph*), leftmargin=0.6cm]
\item\label{item_bern_1}
Recall that $\alpha=\frac14 e^{-M_{\star}}$, $\beta=1$. For any $(\bU,\bV)\in\cDuvinf$ and any $\bH$ of the form $\cP_{(\bU,\bV)}(\bR,\bL)=\bU\bR^\T+\bL\bV^\T$, we have
\begin{equation*}
\alpha\|\bH\|_{\Fn}^2 \le \nabla_{\bX}^2\cL(\bU\bV^\T)[\bH,\bH] \le \beta\|\bH\|_{\Fn}^2,
\end{equation*} which verifies Assumption~\ref{assump_rect_rip_weighted} under the Bernoulli model.

\item\label{item_bern_2}
Since the Hessian is diagonal entrywise, Assumption~\ref{assump_row_cross_curvature_uv} holds with the same $\beta$.

\item\label{item_bern_3} We have $\cG(\bU,\bV) = \nabla_{\bX}\cL(\bU\bV^\T) = \nu_{\star}\{\sigma\{(\bU\bV^\T)_{ij}\} - Y_{ij}\}_{n\times q}$ For $\bar\cG(\bU,\bV) = \EE\cG(\bU,\bV) = \nu_{\star}\{\sigma(\bU\bV^\T) - \sigma(\bU^*\bV^*{}^\T)\}$, we have
\begin{align*}
&\|\bar\cG(\bU_1,\bV_1)-\bar\cG(\bU_2,\bV_2)\| \le \|\bU_1\bV_1^\T-\bU_2\bV_2^\T\|_{\Fn},\\
&\|\bar\cG(\bU_1,\bV_1)- \bar\cG(\bU_2,\bV_2)\|_{\twinf} \le \|\bU_1\bV_1^\T-\bU_2\bV_2^\T\|_{\twinf},\\ &\| \bar\cG(\bU_1,\bV_1)^\T - \bar\cG(\bU_2,\bV_2)^\T\|_{\twinf} \le \|\bV_1\bU_1^\T-\bV_2\bU_2^\T\|_{\twinf},
\end{align*}
which verifies Assumption~\ref{assump_rect_lip} with $\bar\cG(\bU,\bV)$ under the place of $\cG(\bU,\bV)$ under the Bernoulli model.

\item\label{item_bern_4}
Writing $\bP^*= \sigma(\bX^*)$, we have
\begin{equation*}
\tilde\cG(\bU,\bV) = \cG(\bU,\bV)-\bar\cG(\bU,\bV) = \nu_{\star}(\bP^* - \bY),
\end{equation*}
which is independent of $(\bU,\bV)$. Therefore, $\Delta_{2}(n,q,\delta)$, $\Delta_{\infty}(n,q,\delta)$, and $\bar\Delta_{\infty}(n,q,\delta)$ in \eqref{assump_gradient_noise_rect} does not depend on the region $\cD$. For \eqref{assump_gradient_noise_rect_1}, we can take
\begin{align*}
\Delta_2(n,q,\delta)\asymp \nu_{\star}\sqrt{\frac{e^{ \alpha_0 + M_2}}{n\wedge q}+\frac{\log((n+q)/\delta)}{nq}}.
\end{align*}

\item\label{item_bern_5}
With $R_{\star}>1$, for \eqref{assump_gradient_noise_rect_2}, we can take $\Delta_\infty(n,q,\delta)\asymp \sqrt{\nu_{\star}L_{\star}R_{\star}/(n\wedge q)}$.

\item\label{item_bern_6}
Let $\bar v_{\infty} = \max\left\{ \max_{i\in[n]}q^{-1}\sum_{j=1}^q 2P_{ij}^*(1-P_{ij}^*), \max_{j\in[q]}n^{-1}\sum_{i=1}^n 2P_{ij}^*(1-P_{ij}^*) \right\} \le \frac12\wedge 2e^{M_2}$. For \eqref{assump_gradient_noise_rect_3}, we can only take
\begin{align*}
\bar\Delta_\infty(n,q,\delta)\asymp  \nu_{\star}\bar v_\infty + C\nu_{\star}\left\{ \sqrt{\frac{\log(n/\delta)}{q}} \vee \sqrt{\frac{\log(q/\delta)}{n}} \right\}.
\end{align*}
Subsequently, we have 
\begin{equation*}
    \frac{\bar\Delta_{\infty}(n,q,\delta)}{\alpha\sigma_{\min}} \le \frac{4e^{M_1 + M_2+\alpha_0}}{\sigma_{\min}}
\end{equation*}
\end{enumerate}

For \ref{item_bern_1}, we note that for any $(\bR,\bL)\in\RR^{q\times r}\times\RR^{n\times r}$, we have $$\nabla_{\bX}^2\cL(\bU\bV^\T)[\bH,\bH] = e^{-\alpha_0-M_2}\sum_{i=1}^n\sum_{j=1}^q \sigma\big((\bU\bV^\T)_{ij}\big)\big\{1-\sigma\big((\bU\bV^\T)_{ij}\big)\big\}\,H_{ij}^2.$$
Note that $\inf_{x\in[-M_1,M_2]}\sigma(x)\{1-\sigma(x)\} = \frac{e^{\alpha_0-M_1}}{(1+e^{\alpha_0-M_1})^2}\ge \frac12 e^{\alpha_0-M_1}$ and $\sup_{x\in[-M_1,M_2]}\sigma(x)\{1-\sigma(x)\} \le  \nu_{\star}^{-1}$, which proves~\ref{item_bern_1}.

For \ref{item_bern_2}, fix $(\bU,\bV)\in\cDuvinf$, $i\in[n]$, $\bh\in\RR^r$, and let $\bL_i=\be_i\bh^\T$. Then $\cP_{(\bU,\bV)}(\zero,\bL_i)=\bL_i\bV^\T=\be_i(\bV\bh)^\T$, which is supported only on row $i$.
Now let $(\bL,\bR)\in\RR^{n\times r}\times\RR^{q\times r}$ with $\bL_{-i}$ having zero $i$th row, and define $\bH_1:=\cP_{(\bU,\bV)}(\zero,\bL_i)=\be_i(\bV\bh)^\T$, $\bH_2:=\cP_{(\bU,\bV)}(\bR,\bL_{-i})=\bU\bR^\T+\bL_{-i}\bV^\T$.
Since $\bL_{-i}$ has zero $i$th row, the $i$th row of $\bH_2$ is $(\bH_2)_{i,\cdot}=\bU_{i,\cdot}\bR^\T$. One can then verify
\begin{equation*}
\nabla_{\bX}^2\cL(\bU\bV^\T)[\bH_1,\bH_2] = \sum_{j=1}^q w\big((\bU\bV^\T)_{ij}\big)\,(\bV\bh)_j\,(\bU_{i,\cdot}\bR^\T)_j.
\end{equation*}
By~\ref{item_bern_1}, we know $w(\cdot)\le 1$. Therefore, by Cauchy--Schwarz, $\big| \nabla_{\bX}^2\cL(\bU\bV^\T)[\bH_1,\bH_2] \big| \le \|\bV\bh\|\,\|\bU_{i,\cdot}\bR^\T\|$.
This proves the first inequality in Assumption~\ref{assump_row_cross_curvature_uv}.
The second part is identical by symmetry.

For~\ref{item_bern_3}, let $\bX_1:=\bU_1\bV_1^\T$ and $\bX_2:=\bU_2\bV_2^\T$.
Since $w(x)\le 1$ in the considered local region, the mean value theorem gives
\begin{equation*}
\sigma\big((\bX_1)_{ij}\big)-\sigma\big((\bX_2)_{ij}\big) = \sigma'(\xi_{ij})\big\{(\bX_1)_{ij}-(\bX_2)_{ij}\big\}\le e^{\alpha_0+M_2}\big\{(\bX_1)_{ij}-(\bX_2)_{ij}\big\}
\end{equation*}
Then
\begin{equation*}\|\sigma(\bX_1) - \sigma(\bX_2)\|_{\Fn}^2 \le e^{2\alpha_0+2M_2}\sum_{i=1}^n\sum_{j=1}^q \big\{(\bX_1-\bX_2)_{ij}\big\}^2 = e^{2\alpha_0+2M_2}\|\bX_1-\bX_2\|_{\Fn}^2.
\end{equation*}
Therefore, $\|\bar\cG(\bU_1,\bV_1)-\bar\cG(\bU_2,\bV_2)\|\le \|\bar\cG(\bU_1,\bV_1)-\bar\cG(\bU_2,\bV_2)\|_{\Fn}\le \|\bX_1-\bX_2\|_{\Fn}$. Similarly, one can verify the row-wise Lipschitz conditions.

For~\ref{item_bern_4}, recall that $\bP^*=\sigma(\bX^*)$. Then $\tilde\cG(\bU,\bV) = \cG(\bU,\bV) - \bar\cG(\bU,\bV) = e^{-\alpha_0-M_2}(\bP-\bY)$, which is independent of $(\bU,\bV)$. To bound $ \bP-\bY$, define the symmetric dilation
\begin{equation*}
\cA= \begin{pmatrix}
\zero & \bY-\bP^*\\ (\bY-\bP^*)^\T & \zero
\end{pmatrix}.
\end{equation*}
We introduce the following lemma
\begin{lemma}[Theorem 5.2 in \citet{LeiRinaldo2015}]
    Let $\bY=(Y_{ij})\in\{0,1\}^{n\times q}$ have independent entries with $Y_{ij}\sim \mathrm{Bernoulli}(P_{ij}^*),$ for $i\in[n]$ and $j\in[q]$. Assume that $(n+q)\max_{i\in[n],\,j\in[q]} P_{ij}^*\le d$. Then for any $C_0>0$, there exists a constant $C=C(C_0)>0$ such that
\begin{equation*}
\|\bY-\bP^*\| \le C\sqrt{d+\log(n+q)}
\end{equation*}
with probability at least $1-(n+q)^{-C_0}$.
\end{lemma}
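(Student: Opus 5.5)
The plan is to reduce to a symmetric matrix, bound the expected spectral norm with a sharp inequality for independent-entry matrices, and then add a concentration step. This is simpler than reproducing the Feige--Ofek light/heavy-pair combinatorics used by \citet{LeiRinaldo2015}, which I keep only as a fallback.

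\emph{Step 1 (reduction).} Work with the symmetric dilation $\cA$ displayed just before the lemma, a $(n+q)\times(n+q)$ symmetric matrix. Its nonzero eigenvalues are $\pm$ the singular values of $\bY-\bP^*$, so $\|\cA\|=\|\bY-\bP^*\|$. The entries of $\cA$ on and above the diagonal are independent and mean zero, and each is bounded by $1$ in absolute value. Each row of $\cA$ has variance sum $\sum_j P^*_{ij}(1-P^*_{ij})\le (n+q)\max_{i,j}P^*_{ij}\le d$.

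\emph{Step 2 (expectation bound).} Apply the Bandeira--van Handel bound \citep{bandeira2016sharp},
\[
\EE\|\cA\|\lesssim \sigma+\sigma_*\sqrt{\log(n+q)},
\]
where $\sigma^2=\max_i\sum_j\EE \cA_{ij}^2\le d$ and $\sigma_*=\max_{ij}\|\cA_{ij}\|_\infty\le 1$. This gives $\EE\|\bY-\bP^*\|\le C_1\{\sqrt d+\sqrt{\log(n+q)}\}$ with no extra $\log$ factor multiplying $d$, which is exactly the form claimed. Matrix Bernstein alone would only give $\sqrt{d\log(n+q)}$, so this step is where the sharp constant-free dependence comes from.

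\emph{Step 3 (concentration).} View $\|\bY-\bP^*\|$ as a function of the independent entries $Y_{ij}\in[0,1]$. The map is convex and $1$-Lipschitz with respect to the Frobenius (Euclidean) norm, since $\|\cdot\|\le\|\cdot\|_{\Fn}$. Talagrand's concentration inequality for convex Lipschitz functions of bounded independent variables then yields
\[
\PP\big(\|\bY-\bP^*\|\ge \EE\|\bY-\bP^*\|+t\big)\le C e^{-ct^2}.
\]
Taking $t=\sqrt{(C_0/c)\log(n+q)}$ plus a constant makes this probability at most $(n+q)^{-C_0}$. Combining with Step 2 gives $\|\bY-\bP^*\|\le C(C_0)\sqrt{d+\log(n+q)}$, using $\sqrt a+\sqrt b\le\sqrt{2(a+b)}$.

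\emph{Main obstacle.} The only delicate point is Step 2, namely obtaining $\sqrt d+\sqrt{\log(n+q)}$ rather than $\sqrt{d\log(n+q)}$, and I rely on the cited sharp bound for it. If one prefers a self-contained argument instead, I would follow \citet{LeiRinaldo2015}. That route discretizes the unit sphere by a $1/2$-net and splits $\bx^\T\cA\by$ into light pairs $|x_iy_j|\le\sqrt d/(n+q)$ and heavy pairs. Light pairs are handled by Bernstein's inequality and a union bound over the net. Heavy pairs are handled by a bounded-degree step together with a discrepancy (edge-count) property of the random bipartite graph. In that route the heavy-pair discrepancy step is the hard one.
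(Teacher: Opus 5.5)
Your argument is correct. It takes a genuinely different route from the one behind the citation.

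\textbf{What the paper does.} The paper gives no proof of its own. It imports Theorem 5.2 of \citet{LeiRinaldo2015}, which is stated for the symmetric adjacency matrix of a random graph under $d\ge c_0\log n$, with conclusion $C\sqrt d$. That theorem is proved by the Feige--Ofek net argument: light pairs are handled by Bernstein and a union bound over the net, and heavy pairs by a bounded-degree and edge-discrepancy property. The rectangular version stated here follows by applying it to the dilation $\cA$ (a bipartite random graph on $n+q$ vertices) with $d+c_0\log(n+q)$ in place of $d$.

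\textbf{Your route.} You combine the dilation with the Bandeira--van Handel bound $\EE\|\cA\|\lesssim\sigma+\sigma_*\sqrt{\log(n+q)}$ and Talagrand's convex-Lipschitz concentration. This is valid: $\bY\mapsto\|\bY-\bP^*\|$ is convex and $1$-Lipschitz in the Frobenius norm, the entries lie in $[0,1]$, and the median-to-mean shift is $O(1)$.

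\textbf{One detail to tighten.} The $\sigma_*\sqrt{\log}$ form of the Bandeira--van Handel bound is stated for symmetric entries, and yours are only centered. Either symmetrize first, using $\EE\|\cA\|\le\EE\|\cA-\cA'\|$ with $\cA'$ an independent copy (the entries then become symmetric, bounded by $2$, with doubled variances). Or use their tail inequality for centered bounded entries, which gives the high-probability statement directly and makes your Step 3 unnecessary.

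\textbf{What each route buys.} Yours is short and needs no lower bound on $d$. It also only uses the maximal row/column variance sum $\max_i\sum_jP^*_{ij}(1-P^*_{ij})$ rather than $(n+q)\max_{ij}P^*_{ij}$. That row-sum quantity is in fact what the paper checks when it invokes the lemma. The Lei--Rinaldo route is elementary and self-contained, with no reliance on the moment and comparison machinery behind the sharp bound. Its light/heavy decomposition is also the one that extends to degree-regularized matrices in the very sparse regime $d\ll\log(n+q)$.
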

Applying the lemma with $\|\cA\| = \|\bY - \bP^*\|$ and $\max\Big\{\max_i\sum_{j=1}^q P_{ij}^*,\ \max_j\sum_{i=1}^n P_{ij}^*\Big\} \le (n\vee q)e^{M_2+\alpha_0}$, we obtain \ref{item_bern_4}.

 For~\ref{item_bern_5}, fix $i\in[n]$ and $\bu\in\SSS^{r-1}$. Write $a_j=\langle \bV^*_{j,\cdot},\bu\rangle$ and $S_{i,\bu}:=\sum_{j=1}^q (Y_{ij}-P_{ij}^*)a_j $. The variables $(Y_{ij}-P_{ij}^*)a_j$ are independent and centered. Moreover,
\begin{equation*}
    \sum_{j=1}^q {\rm Var}\{(Y_{ij}-P_{ij}^*)a_j\} = \sum_{j=1}^q P_{ij}^*(1-P_{ij}^*)a_j^2 \le (\nu_{\star})^{-1}\sum_{j=1}^q a_j^2 \le \|\bV^*\|^2/\nu_{\star}.
\end{equation*}
 Also, we know $\max_{j\in[q]}|a_j|\le \|\bV^*\|_{2\to\infty}$. Therefore, Bernstein's inequality gives, for any $t>0$,
\begin{equation*}
    \PP\left(|S_{i,\bu}| \ge C\{\|\bV^*\|\sqrt{t/\nu_{\star}} +\|\bV^*\|_{2\to\infty}t\} \right) \le 2e^{-t}.
\end{equation*}
Let $\cN$ be a $1/2$-net of $\SSS^{r-1}$ with $|\cN|\le 5^r$.
Taking $t=C\{r+\log(n/\delta)\}$ and applying a union bound over $\bu\in\cN$ and $i\in[n]$, together with the standard net inequality,
yields
\begin{equation*}
    \|(\bY-\bP^*)\bV^*\|_{2\to\infty} \le C\left\{ \|\bV^*\|\sqrt{\nu_{\star}^{-1}\{r+\log(n/\delta)\}} + \|\bV^*\|_{2\to\infty}\{r+\log(n/\delta)\} \right\}.
\end{equation*}
Multiplying by the loss scaling $\nu_{\star}$ and dividing by
$\sqrt q\,\|\bV^*\|$ gives
\begin{equation*}
\frac{ \nu_{\star}\|(\bY-\bP^*)\bV^*\|_{2\to\infty} }{ \sqrt q\,\|\bV^*\| }
\le C\left\{ \sqrt{ \frac{\nu_{\star}\{r+\log(n/\delta)\}}{q} } + \nu_{\star} \frac{\|\bV^*\|_{2\to\infty}}{\|\bV^*\|} \frac{r+\log(n/\delta)}{\sqrt q} \right\}.
\end{equation*}The bound for $(\bY-\bP^*)^\T\bU^*$ follows in the same way. With $R_{\star}>1$, we finish the proof.

For~\ref{item_bern_6}, because $Z_{ij} := |Y_{ij} - P_{ij}^*|$ are independent and $\EE Z_{ij} = 2P_{ij}^*(1-P_{ij}^*)$, we know by Hoeffding' inequality that 
\begin{equation*}
\PP\Big(\frac1q\sum_{j=1}^q Z_{ij} \ge \frac1q\sum_{j=1}^q 2P_{ij}^*(1-P_{ij}^*) + t \Big) \le e^{-2qt^2}.
\end{equation*}Union bound over $i\in[n]$ yields that with probability at least $1-\delta$,
\begin{equation*}
q^{-1}\|\bY-\bP^*\|_{\infty\to 1} \le \max_{i\in[n]}\frac1q\sum_{j=1}^q 2P_{ij}^*(1-P_{ij}^*) + C\sqrt{\frac{\log(n/\delta)}{q}}.
\end{equation*}
For $n^{-1}\|(\bY-\bP^*)^\T\|_{\infty\to 1}$, we can obtain a similar result. Combine these two, and we prove~\ref{item_bern_6}.


\paragraph{\it \underline{Leave-one-out analysis.}} The above analysis shows that $\bar\Delta_{\infty}(n,q,\delta)$ is comparably large in the considered setup. In this part, we show how the leave-one-out analysis can circumvent this issue. In the proof of Theorem~\ref{thm_general_theory_UV_noise}, the  upper bound for $\bar\Delta_{\infty}(n,q,\delta)$ is used in Step~1 to bound $\max_{i\in[n]}\|\Delta_{4,i}^{(U)}\| \vee \max_{\ell\in[q]}\|\Delta_{4,\ell}^{(V)}\| $ in \eqref{eq_rect_score_term_bound_2}, and in Step~3 to bound $\delta_{3,t}^{\dagger}$ in \eqref{eq_rect_step4_delta3}. In what follows, we show that both steps can instead be established without invoking $\bar\Delta_{\infty}(n,q,\delta)$. 

For $\max_{i\in[n]}\|\Delta_{4,i}^{(U)}\| \vee \max_{\ell\in[q]}\|\Delta_{4,\ell}^{(V)}\|$, note that for the Bernoulli response model, $\tilde\cG(\bU,\bV) = \nu_{\star}(\bP^* - \bY)$ is independent of $(\bU,\bV)$, and thus $\tilde\cG_{i\ell} = \int_0^1\tilde G_{i\ell}ds =\nu_{\star}( P^*_{i\ell} - Y_{i\ell})$. Hence $\max_{i\in[n]}\|\Delta_{4,i}^{(U)}\|= \max_{i\in[n]}\big\|\nu_{\star}\sum_{\ell=1}^q (P^*_{i\ell} - Y_{i\ell})\bE_{V,\ell}^\T\big\|= \nu_{\star}\big\|(\bP^* - \bY)\bE_{V}\big\|_{\twinf}$ and $\max_{\ell\in[q]}\|\Delta_{4,\ell}^{(V)}\|=  \big\|\nu_{\star}(\bP^* - \bY)^\T\bE_{U}\big\|_{\twinf}$. To control these quantities, we introduce an auxiliary point $(\bar\bU,\bar\bV)$, which plays the role of a leave-one-out proxy. As shown in the next lemma, $(\bar\bU,\bar\bV)$ is very close to $(\tilde\bU,\tilde\bV)$ in weighted Frobenius norm, while each row $\bar\bU_{i,}$ depends only weakly on $\{Y_{i\ell}\}_{\ell\in[q]}$, and each row $\bar\bV_{j,}$ depends only weakly on $\{Y_{ij}\}_{i\in[n]}$, yielding a sharp control on $q^{-1}\|(\bP^*-\bY)(\bar\bV-\bV^*)\|_{2\to\infty}\vee n^{-1}\|(\bP^*-\bY)^\T(\bar\bU-\bU^*)\|_{2\to\infty}$. Write $\ell_{\alpha_0}'(x) = \partial_{x}\ell_{\alpha_0}(x;Y_{ij})$ and $\ell_{\alpha_0}''(x) = \partial_{x}^2\ell_{\alpha_0}(x;Y_{ij})$ for short when $Y_{ij}$ is clear from the context.

\begin{lemma}\label{lemma_rect_auxiliary_bar}
Assume the conditions of the Bernoulli response model in Example~2, and suppose the event in
Theorem~\ref{thm_general_theory_UV_noise} holds. Construct $(\bar\bU,\bar\bV)$ as follows:
\begin{align*}    \bar\bU_{i,} &=\bU_{i,}^*-\Big\{\sum_{j\in[q]}\ell_{\alpha_0}''(X_{ij}^*)(\bV_{j,}^*)^\T\bV_{j,}^*\Big\}^{-1} \sum_{j\in[q]}\ell_{\alpha_0}'(X_{ij}^*)(\bV_{j,}^*)^\T,\qquad i\in[n],\\
    \bar\bV_{j,}&=\bV_{j,}^*-\Big\{\sum_{i\in[n]}\ell_{\alpha_0}''(X_{ij}^*)(\bU_{i,}^*)^\T\bU_{i,}^*\Big\}^{-1}\sum_{i\in[n]}\ell_{\alpha_0}'(X_{ij}^*)(\bU_{i,}^*)^\T,\qquad j\in[q].
\end{align*}
Let $R_{\bar Z} := \big\|\bS_Z^2\nabla_{\bZ}h_{\alpha}^{\natural}(\bar\bU,\bar\bV)\big\|_{\twinfw}$.
Then, with probability at least $1-\delta$, the following hold:
\begin{enumerate}
    \item The score at $(\bar\bU,\bar\bV)$ is bounded by
    \begin{equation}\label{eq_lemma_rect_auxiliary_bar_score_simple}
    R_{\bar Z}\le C\Delta_{\infty}(n,q,\delta)\omega_* .
    \end{equation}
    \item The auxiliary point is close to $(\tilde\bU,\tilde\bV)$ in weighted Frobenius norm:
    \begin{equation}\label{eq_lemma_rect_auxiliary_bar_close}
    \|(\bar\bU-\tilde\bU,\bar\bV-\tilde\bV)\|_{\Fnw}\le C\,\frac{R_{\bar Z}}{\alpha\sigma_{\min}}.
    \end{equation}
    \item The quadratic noise terms satisfy
    \begin{equation}\label{eq_lemma_rect_auxiliary_bar_noise}\begin{aligned}
    &\,\nu_{\star}\Big(\frac{\|(\bP^*-\bY)(\bar\bV-\bV^*)\|_{2\to\infty}}{q}\vee \frac{\|(\bP^*-\bY)^\T(\bar\bU-\bU^*)\|_{2\to\infty}}{n}\Big)\\\le &\,C\Big\{
    \nu_\star L_\star \sqrt{\frac{\beta}{\alpha nq\sigma_{\min}}} + \frac{\nu_\star\beta\,\omega_*}{\alpha (n\wedge q)\sigma_{\min}} + \frac{\nu_\star^2L_\star\,\omega_*}{\alpha nq\sigma_{\min}}
\Big\}\le C\Delta_{\infty}(n,q,\delta)\omega_*.
    \end{aligned}
    \end{equation}
\end{enumerate}
\end{lemma}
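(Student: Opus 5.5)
The three parts are proved in order, and each rests on the fact that $(\bar\bU,\bar\bV)$ is a one-step Newton correction of $(\bU^*,\bV^*)$ that uses only the \emph{diagonal} Hessian blocks. Write $\bE_{\bar U}:=\bar\bU-\bU^*$ and $\bE_{\bar V}:=\bar\bV-\bV^*$. By construction,
\[
(\bE_{\bar U})_{i,}^\T=-\bH_i^{-1}\sum_{j}\ell_{\alpha_0}'(X^*_{ij})(\bV^*_{j,})^\T,\qquad \bH_i:=\sum_j \ell''_{\alpha_0}(X^*_{ij})(\bV^*_{j,})^\T\bV^*_{j,}.
\]
Since $\ell'_{\alpha_0}(X^*_{ij})=\nu_\star(P^*_{ij}-Y_{ij})$, we have $\sum_j\ell'_{\alpha_0}(X_{ij}^*)\bV^*_{j,}=\nu_\star[(\bP^*-\bY)\bV^*]_{i,}$. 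Lemma~\ref{lemma_rect_block_hessian} at $(\bU^*,\bV^*)$ gives $\lambda_{\min}(\bH_i)\ge \alpha q\sigma_{\min}$.

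\textbf{Size of the correction.} By \ref{item_bern_5}, $\|\bE_{\bar U}\|_{\twinf}\le C\Delta_\infty\|\bV^*\|/(\sqrt q\,\alpha\sigma_{\min})\le C\Delta_\infty\omega_*/(\alpha\sigma_{\min})$, and the same holds for $\bE_{\bar V}$. For the Frobenius size, $\|\bE_{\bar V}\|_{\Fn}^2\le(\alpha n\sigma_{\min})^{-2}\nu_\star^2\|(\bP^*-\bY)^\T\bU^*\|_{\Fn}^2$. Taking expectations, $\EE\|(\bP^*-\bY)^\T\bU^*\|_{\Fn}^2\le q\|\bU^*\|_{\Fn}^2/\nu_\star$, and a Bernstein bound upgrades this to high probability. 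The result is $\|(\bE_{\bar U},\bE_{\bar V})\|_{\Fnw}\lesssim \sqrt{\nu_\star r\kappa/((n\wedge q)\alpha^2\sigma_{\min})}$, which is small under \eqref{eq_bern_prop_scaling}.

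\textbf{Part 1.} Expand the $i$th row of $q^{-1}\nabla_{\bU}\cL(\bar\bU\bar\bV^\T)$ around $\bX^*$. The zeroth-order term $q^{-1}\nu_\star[(\bP^*-\bY)\bV^*]_{i,}$ is cancelled exactly by the diagonal term $q^{-1}\bH_i(\bE_{\bar U})_{i,}^\T$, because the weights $\ell''$ are deterministic. Three remainders are left.
\begin{itemize}
\item The Taylor remainder is controlled by $|\ell'''|\le\nu_\star^{-1}\cdot\nu_\star\lesssim1$ times $\|(\bar\bU\bar\bV^\T-\bX^*)_{i,}\|^2/q$, which is of order $\omega_*^2\|\cdot\|^2$.
\item The cross term $q^{-1}\sum_j\ell''_{ij}(\bV^*_{j,})^\T(\bU^*_{i,}\bE_{\bar V,j}^\T)$ is bounded by $\beta\sqrt{\kappa\sigma_{\min}}\,\omega_*\|\bE_{\bar V}\|_{\Fn}/\sqrt q$, as in \eqref{eq_uv_delta2}.
\item The term $q^{-1}\nu_\star(\bP-\bY)\bE_{\bar V}$ evaluated at the perturbed point reduces to the quantity handled in part 3, plus a deterministic piece.
\end{itemize}
The penalty gradient is bounded as in \eqref{eq_rect_pen_term_bound} using $\|M(\bar\bU,\bar\bV)\|\lesssim\sqrt{\kappa\sigma_{\min}}\|(\bE_{\bar U},\bE_{\bar V})\|_{\Fnw}$. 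Collecting terms, each contribution is at most $C\Delta_\infty\omega_*$, because $R_\star$ was chosen in \eqref{eq_bern_prop_scaling_1} precisely to dominate these products. This gives \eqref{eq_lemma_rect_auxiliary_bar_score_simple}.

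\textbf{Part 2.} Both $(\bar\bU,\bar\bV)$ and $(\tilde\bU,\tilde\bV)$ lie in $\cDuvd$. There $h_\alpha^\natural$ is strongly convex with $\lambda_{\min}(\bS_z\nabla^2_{\bz}h\,\bS_z)\ge\alpha\sigma_{\min}/4$ by \eqref{eq_rect_uniform_strong_convexity}, and $\nabla h_\alpha^\natural(\tilde\bU,\tilde\bV)=\zero$ by \eqref{eq_rect_stationary_tilde}. The integral mean value theorem along the segment between the two points therefore gives
\[
\alpha\sigma_{\min}\sqrt{nq}\,\|(\bar\bU-\tilde\bU,\bar\bV-\tilde\bV)\|_{\Fnw}\lesssim\|\bS_Z\nabla h(\bar\bU,\bar\bV)\|_{\Fn}.
\]
The right-hand side is at most $\sqrt{2nq}\,R_{\bar Z}$, as in \eqref{eq_rect_score_f}, which is \eqref{eq_lemma_rect_auxiliary_bar_close}.

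\textbf{Part 3, the main obstacle.} We need to bound $q^{-1}\nu_\star[(\bP^*-\bY)\bE_{\bar V}]_{i,}$, where $\bE_{\bar V}$ depends on row $i$ of $\bY$. Decompose $\bE_{\bar V}=\bE_{\bar V}^{(-i)}+(\bE_{\bar V}-\bE_{\bar V}^{(-i)})$, where $\bE_{\bar V}^{(-i)}$ is the same formula with row $i$ of $\bY$ replaced by $\bP^*$.
\begin{itemize}
\item The first piece is independent of row $i$. Conditional Bernstein, with variance $\nu_\star^{-1}\|\bE_{\bar V}^{(-i)}\|_{\Fn}^2$ and range $\|\bE^{(-i)}_{\bar V}\|_{\twinf}$, followed by a union bound over $i$ and a net over $\SSS^{r-1}$, gives the term $\nu_\star L_\star\sqrt{\beta/(\alpha nq\sigma_{\min})}$ after inserting the Frobenius size from the first paragraph.
\item In the difference, row $j$ changes by $(\alpha n\sigma_{\min})^{-1}\nu_\star(Y_{ij}-P^*_{ij})\bU^*_{i,}$. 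Hence $(\bP^*-\bY)(\bE_{\bar V}-\bE_{\bar V}^{(-i)})$ has $i$th row of size at most $\nu_\star\omega_*\sum_j(Y_{ij}-P^*_{ij})^2/(\alpha n\sigma_{\min})$. Its mean is at most $q/\nu_\star$, and a Bernstein deviation of order $L_\star$ controls the fluctuation. Dividing by $q$ and multiplying by $\nu_\star$ produces the second and third terms.
\end{itemize}
The column statement is symmetric. The final inequality $\le C\Delta_\infty\omega_*$ follows from the definition of $R_\star$ together with \eqref{eq_bern_prop_scaling}. The delicate step is the bookkeeping of $\nu_\star$ powers in the sparse regime: the variance carries $\nu_\star^{-1}$ while the loss scaling carries $\nu_\star$, and the proof must verify that these combine into $\sqrt{\nu_\star L_\star R_\star/(n\wedge q)}$.
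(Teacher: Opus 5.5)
\textbf{Gap in Part~2.} Part~2 is circular. You invoke $\nabla_{\bZ}h_\alpha^\natural(\tilde\bU,\tilde\bV)=\zero$ from \eqref{eq_rect_stationary_tilde}, but in the Bernoulli model that identity is not available when this lemma is needed.

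In the general proof, stationarity of $(\tilde\bU,\tilde\bV)$ is deduced only after showing that it is an interior point of $\cDuvde$. That interiority step used $\bar\Delta_\infty(n,q,\delta)\le\alpha\sigma_{\min}/4$ through \eqref{eq_rect_score_term_bound_2}, and this condition fails here. The paper uses Parts~2 and~3 of this lemma precisely to re-prove \eqref{eq_rect_score_term_bound_2}, via $\nu_\star\|(\bP^*-\bY)(\tilde\bV-\bV^*)\|_{2\to\infty}\le\nu_\star\|(\bP^*-\bY)(\bar\bV-\bV^*)\|_{2\to\infty}+\nu_\star\|\bP^*-\bY\|\,\|\bar\bV-\tilde\bV\|_{\Fn}$. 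Only after that does it obtain interiority and then stationarity. So Part~2 may use only the defining property of $(\tilde\bU,\tilde\bV)$: it minimizes $\|\bS_Z^2\nabla_{\bZ}h_\alpha^\natural\|_{\twinfw}$ over $\cDuvde$.

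The repair is what the paper does.
\begin{enumerate}
\item Your own size bounds give $\|(\bar\bU-\bU^*,\bar\bV-\bV^*)\|_{\Fnw}\lesssim\sqrt{r\kappa}\,\Delta_2/(\alpha\sqrt{\sigma_{\min}})\le\varepsilon\tau_*$, using the lower bound on $\varepsilon$ in \eqref{eq_thm_general_theory_noisy_uv_scaling} and $\tau_*\ge\sqrt{\sigma_{\min}}$.
\item They also give $\|(\bar\bU-\bU^*,\bar\bV-\bV^*)\|_{\twinfw}\lesssim\Delta_\infty\omega_*/(\alpha\sigma_{\min})\le\epsilon\omega_*$.
\item Hence $(\bar\bU,\bar\bV)\in\cDuvde$, so minimality yields $\|\bS_Z^2\nabla_{\bZ}h_\alpha^\natural(\tilde\bU,\tilde\bV)\|_{\twinfw}\le R_{\bar Z}$.
\item Both endpoints then satisfy $\|\bS_Z\nabla_{\bZ}h_\alpha^\natural\|_{\Fn}\le\sqrt{2nq}\,R_{\bar Z}$.
\item Your mean-value argument along the segment, which stays in the convex set $\cDuvde$, combined with \eqref{eq_rect_uniform_strong_convexity}, gives \eqref{eq_lemma_rect_auxiliary_bar_close} with $2\sqrt{2nq}\,R_{\bar Z}$ on the right.
\end{enumerate}
The curvature bound \eqref{eq_rect_uniform_strong_convexity} is legitimately available here, since it uses only $\Delta_2$ and the Lipschitz bound, not $\bar\Delta_\infty$.

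\textbf{Rest of the proposal.} Apart from this, your route is essentially the paper's.

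Your leave-row-$i$-out split of $\bE_{\bar V}$ is exactly the paper's split of $\sum_{i'}\xi_{i'j}\bU^*_{i',}$ into the $i'=i$ and $i'\neq i$ parts, because $\bH_{V,j}$ is deterministic. The conditional Bernstein treatment of the $i'\neq i$ part and the $\sum_j(Y_{ij}-P^*_{ij})^2$ bound for the $i'=i$ part match as well.

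In Part~1, you bound the $\omega_{ij}$ cross term and the penalty gradient deterministically through $\|(\bE_{\bar U},\bE_{\bar V})\|_{\Fnw}$, whereas the paper uses vector and matrix Bernstein. Your bounds are of order $\kappa\,\omega_*\sqrt{\nu_\star r/(n\wedge q)}/\alpha$. This is dominated by $\Delta_\infty\omega_*$ thanks to the first term of $R_\star$ in \eqref{eq_bern_prop_scaling_1}, so that simplification is fine.
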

\begin{proof}
    See Section~\ref{supp_sec_prove_lemma_rect_auxiliary_bar}.
\end{proof}

With this lemma, we write $\max_{i\in[n]}\|\Delta_{4,i}^{(U)}\|$ as 
\begin{align*}
    \max_{i\in[n]}\|\Delta_{4,i}^{(U)}\| = &\,\nu_{\star}\big\|(\bP^* - \bY)\bE_{V}\big\|_{\twinf}\\
    \le &\,\nu_{\star}\big\|(\bP^* - \bY)(\bar\bV - \bV^*)\big\|_{\twinf} + \nu_{\star}\big\|\bP^* - \bY\big\|\big\|\bar\bV - \tilde\bV\big\|_{\Fn}\\ \le &\,C\frac{\Delta_{\infty}}{\alpha\sigma_{\min}}\omega_*,
\end{align*}where the second inequality follows from parts 2 and 3 of
Lemma~\ref{lemma_rect_auxiliary_bar}, and the last inequality follows from the scaling conditions.
By symmetry, the same bound holds for $\max_{\ell\in[q]}\|\Delta_{4,\ell}^{(V)}\|$.
Therefore, \eqref{eq_rect_score_term_bound_2} can be similarly established. 

Consequently, the minimizer $(\tilde\bU,\tilde\bV)$ lies in the interior of $\cDuvde$. The remainder of Step~1, namely the proof that $(\tilde\bU,\tilde\bV)$ is an interior stationary point of $h_\alpha^\natural(\cdot)$, now proceeds exactly as in the proof of Theorem~\ref{thm_general_theory_UV_noise}, since it only uses \eqref{eq_rect_step1_l2_half}, \eqref{eq_rect_step1_linf_half}, and the already verified curvature and Lipschitz conditions.
Likewise, Step~2 can be repeated, because once the conclusions of Step~1 are available, the subsequent argument only invokes the same local curvature, Lipschitz, and noise bounds, all of which have already been verified for Example~2 above.

It remains to sharpen the row-wise analysis in Step~3 by deriving a bound for $\delta_{3,t}^{\dagger}$ as in \eqref{eq_rect_step4_delta3}. Our idea is to invoke the LOO initialization and construct a LOO sequence along with $(\bU^t,\bV^t)$ such that the sequence is close to $(\bU^t,\bV^t)$ and has tractable dependence with respect to the noise matrix $\bY - \bP^*$. We summarize the result in Lemma~\ref{lem_tentative_loo_gradient_mc}.

\begin{lemma}
\label{lem_tentative_loo_gradient_mc}
Under the setting of Theorem~\ref{thm_general_theory_UV_noise}, let $\tilde\bE^t_V $ and $\tilde\bE_U^t$ be defined as in \eqref{eq_define_loo_et}.  On an event with probability at least $1-\delta$, uniformly for all integers $0\le t\le T_\star$, we have
\begin{equation}\label{eq_loo_mc_delta3_target}
\begin{aligned}\delta_{3,t}^{\dagger,U}\vee \delta_{3,t}^{\dagger,V} = &\,
\frac{\eta\nu_{\star}}{q}\|(\bY-\bP^*)\tilde\bE_V^t\|_{2\to\infty}\vee \frac{\eta\nu_{\star}}{n}\|(\bY-\bP^*)^\T\tilde\bE_U^t\|_{2\to\infty}\\ \le &\,\eta\big\{\frac{1}{8}\alpha\sigma_{\min}\rho^t\psi_{nq}^{\dagger} + C\Delta_{\infty}(n,q,\delta)\big\}\omega_*.
\end{aligned}\end{equation}
Consequently, the induction hypotheses in Step~3 of the proof of
Theorem~\ref{thm_general_theory_UV_noise} is verified at time $t+1$.
\end{lemma}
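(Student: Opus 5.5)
Our plan is the standard leave-one-out (LOO) argument, run jointly with the induction of Step~3 in the proof of Theorem~\ref{thm_general_theory_UV_noise}.

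\textbf{LOO sequences.} For each $i\in[n]$, start from the initializer $(\bU^{0,-i},\bV^{0,-i})$ and run the gradient update \eqref{eq_one_step_update_uv} on the LOO loss. This loss replaces row $i$ of $\bY$ by $\bP^*_{i,\cdot}$, i.e.\ it is built from $\bY^{-i}$. The resulting iterates $(\bU^{t,-i},\bV^{t,-i})$ are measurable with respect to $\sigma(\bY^{-i})$ and hence independent of the row $(Y_{i\ell}-P^*_{i\ell})_{\ell\in[q]}$. We build column-wise sequences $(\bU^{t,-\ell},\bV^{t,-\ell})$ in the same way. Each LOO loss changes $\tilde\cG$ only in one row or column, so it satisfies the same curvature and Lipschitz conditions that were verified in items \ref{item_bern_1}--\ref{item_bern_5}. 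Step~3 therefore applies verbatim, with its $\ell_2$ and $\ell_\infty$ parts, and gives contraction of each LOO sequence toward its own LOO target. We align each LOO iterate to $(\tilde\bU,\tilde\bV)$ with the matrix $\bG_t^{\dagger}$ of the full sequence, which avoids a new alignment argument.

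\textbf{Splitting the row term.} We bound row $i$ of $(\bY-\bP^*)\tilde\bE_V^t$ by writing
\[
\tilde\bE_V^t=\big(\bV^{t,-i}\bG_t^{\dagger -\T}-\tilde\bV\big)+\big(\bV^t-\bV^{t,-i}\big)\bG_t^{\dagger -\T}.
\]
\emph{First piece.} Conditionally on $\bY^{-i}$, the sum $\sum_\ell (Y_{i\ell}-P^*_{i\ell})(\cdot)_{\ell}$ is a sum of independent centered terms. Bernstein's inequality, as in the proof of item \ref{item_bern_5}, bounds it by
\[
C\Big\{\sqrt{\nu_\star^{-1}L_\star}\,\|\bV^{t,-i}\bG-\tilde\bV\|_{\Fn}+L_\star\|\bV^{t,-i}\bG-\tilde\bV\|_{\twinf}\Big\}.
\]
A union bound over $i$, $\ell$, $t\le T_\star$ and a net on $\SSS^{r-1}$ produces the factor $\ell_\star$. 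After multiplying by $\eta\nu_\star/q$, the two contributions are:
\begin{itemize}
\item the Frobenius part, at most $C\eta\sqrt{\nu_\star L_\star/q}\,\rho^t\phi^\dagger_{nq}\tau_*$, which the scaling \eqref{eq_bern_prop_scaling} absorbs into $\tfrac1{16}\eta\alpha\sigma_{\min}\rho^t\psi_{nq}^{\dagger}\omega_*$ together with $\Delta_\infty$;
\item the $\twinf$ part, of size $\nu_\star L_\star/q\cdot\rho^t\psi^\dagger\omega_*$, which is small by $(\nu_\star+1)L_\star/(n\wedge q)\le c_0$.
\end{itemize}
\emph{Second piece.} We use $\|\bY-\bP^*\|\,\|\bV^t-\bV^{t,-i}\|_{\Fn}$ together with the bound on $\|\bY-\bP^*\|$ from item \ref{item_bern_4}. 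This requires an extra induction hypothesis for the proximity of the LOO sequences:
\[
\max_i\Big\|\big(\bU^t-\bU^{t,-i},\,\bV^t-\bV^{t,-i}\big)\Big\|_{\Fnw}\le c_0\frac{\alpha\sigma_{\min}}{\sqrt{\nu_\star\beta+\nu_\star^2L_\star/q}}\,\rho^t\psi^\dagger_{nq}\omega_*+C\Delta_\infty\omega_*/\sqrt{\nu_\star},
\]
which matches the initialization assumptions at $t=0$. Columns are handled symmetrically.

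\textbf{Propagating proximity (main obstacle).} The hard step is to carry the proximity hypothesis forward in $t$. We subtract the two updates and linearize with the mean-value expansion along the segment joining the two aligned iterates. Lemma~\ref{lemma_general_convex_asym} then gives contraction by $\rho$ of the difference, in the same way as $\gamma_{1,t}$--$\gamma_{4,t}$. The only forcing term is the gradient discrepancy at the LOO iterate, $\nu_\star\,\be_i\,(\bY-\bP^*)_{i,\cdot}\bV^{t,-i}$ together with its $\bV$-block counterpart. By independence and Bernstein's inequality, its weighted norm is at most
\[
C\sqrt{\nu_\star/q}\,\big(\|\bV^{t,-i}\|_{\twinf}\sqrt{L_\star}+\ldots\big)\lesssim\sqrt{\nu_\star L_\star R_\star/(n\wedge q)}\,\omega_*.
\]
Summing the geometric series $\sum_s\rho^{t-s}$ costs a factor $(\eta\alpha\sigma_{\min})^{-1}$. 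This is exactly where $R_\star$ in \eqref{eq_bern_prop_scaling_1} is calibrated: it must absorb the factors $\zeta_r$, $\kappa$ and $\omega_*$, so that the accumulated forcing stays below $C\Delta_\infty\omega_*$. One further difficulty is alignment drift: the LOO iterates must also satisfy item~\ref{item_uv_ind_rot_dagger}, which we get from Lemma~\ref{lemma_alignment_near_rotation} by the same telescoping.

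\textbf{Conclusion.} Combining the two pieces yields \eqref{eq_loo_mc_delta3_target}. We substitute this bound for \eqref{eq_rect_step4_delta3} in the Step~3 recursion. The extra additive term $C\eta\Delta_\infty\omega_*$ is absorbed because $\psi^\dagger_{nq}$ already contains $C\Delta_\infty/(\alpha\sigma_{\min})$, so $C\eta\Delta_\infty\omega_*\le\tfrac18\eta\alpha\sigma_{\min}\rho^{t}\psi^\dagger_{nq}\omega_*$ once $C$ in $\psi^\dagger$ is enlarged. The recursion then closes at time $t+1$ exactly as in \eqref{eq_rect_step4__linf}.
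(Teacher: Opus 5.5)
Your architecture (LOO sequences run on $\bY^{-i}$, a Frobenius proximity induction, Bernstein on an independent row) is the right one. However, the first piece of your split fails as written, and this is exactly the point where the paper needs its extra machinery.

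\textbf{The independence claim is false.} You apply Bernstein conditionally on $\bY^{-i}$ to $(\bY-\bP^*)_{i,\cdot}\{\bV^{t,-i}(\bG_t^{\dagger})^{-\T}-\tilde\bV\}$. Neither $\tilde\bV$ (the empirical target, a function of all of $\bY$) nor $\bG_t^{\dagger}$ (the alignment of the full iterate to $\tilde\bV$) is $\cF_{-i}$-measurable, so the summands are not independent of row $i$. Using the common alignment $\bG_t^{\dagger}$ therefore buys nothing for independence. It also gives up the vanishing of the benign-regularizer gradient at the LOO endpoint, on which contraction of the difference rests; the paper instead gives each LOO sequence its own optimal $\bG_t^{-i}$.

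\textbf{The missing idea is decoupling.} The paper proceeds in two steps.
\begin{itemize}
\item It realigns the LOO iterate to the deterministic $(\bU^*,\bV^*)$ via $\check\bG_t^{-i}$. Then $\check\bV^{t,-i}-\bV^*$ is $\cF_{-i}$-measurable, and the cost of switching alignments is controlled by \eqref{eq_loo_mc_reference_change}.
\item It isolates the genuinely dependent, non-decaying piece $(\bY-\bP^*)_{i,\cdot}(\bV^*-\tilde\bV)$ through the explicit Newton proxy $(\bar\bU,\bar\bV)$ of Lemma~\ref{lemma_rect_auxiliary_bar}. Part~3 of that lemma bounds $(\bY-\bP^*)_{i,\cdot}(\bar\bV-\bV^*)$ by a diagonal-plus-off-diagonal Bernstein argument using the closed form of $\bar\bV$. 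Part~2 makes $\bar\bV-\tilde\bV$ small enough for a row-norm bound.
\end{itemize}
Together these give the five-term split $T_{1,t},\dots,T_{5,t}$ in \eqref{eq_loo_mc_recentered_V_proof}. Without the proxy, the best bound on that piece is $q^{-1}\nu_\star\|(\bY-\bP^*)_{i,\cdot}\|\,\|\tilde\bV-\bV^*\|_{\Fn}\lesssim\sqrt{\nu_\star}\,\Delta_2\tau_*/(\alpha\sigma_{\min})$. This exceeds $\Delta_\infty\omega_*$ by a factor of order $\sqrt{\nu_\star}$ (up to $L_\star$, $R_\star$ and problem constants), which is unbounded in the sparse regime the example targets.

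\textbf{Your closing step is also wrong.} $C\eta\Delta_\infty\omega_*$ cannot be absorbed into $\tfrac18\eta\alpha\sigma_{\min}\rho^t\psi^\dagger_{nq}\omega_*$, because $\rho^t\to0$ while $\Delta_\infty$ is fixed. The absorption fails once $\rho^t\psi_{nq}$ reaches the statistical level $\Delta_\infty/(\alpha\sigma_{\min})$, which is precisely the regime the result is about. Enlarging $C$ in $\psi^\dagger_{nq}$ only postpones the failure by $O(\log C/(1-\rho))$ iterations. The recursion $x_{t+1}\le\rho x_t+C\eta\Delta_\infty\omega_*$ must instead be closed with an additive floor, $x_t\le\rho^t\psi^\dagger_{nq}\omega_*+C\Delta_\infty\omega_*/(\alpha\sigma_{\min})$, which is how the paper finishes.

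\textbf{Two smaller points.}
\begin{itemize}
\item In your second piece, use the row norm $\|(\bY-\bP^*)_{i,\cdot}\|\lesssim\{(q\beta+\nu_\star L_\star)/\nu_\star\}^{1/2}$ rather than $\|\bY-\bP^*\|$. Your proximity normalization $\{\nu_\star\beta+\nu_\star^2L_\star/q\}^{1/2}$ only matches the row norm, and the spectral norm loses a factor $\{(n\vee q)/q\}^{1/2}$.
\item Invoking Step~3 ``verbatim'' for the LOO sequences, including its $\ell_\infty$ part, is circular: $\cL^{-i}$ still carries noise in every row $k\neq i$, so it would need the same $\bar\Delta_\infty$-free bound you are proving. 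The paper avoids this by tracking only the $i$th row of the $i$th LOO sequence ($b_t$), where the stochastic term vanishes because $\be_i^\T(\bY^{-i}-\bP^*)=\zero$. All other row bounds come from the full sequence plus Frobenius proximity.
\end{itemize}
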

\begin{proof}
    See Section~\ref{supp_sec_prove_lem_tentative_loo_gradient_mc}.
\end{proof}
The bound \eqref{eq_loo_mc_delta3_target} replaces the bound for $\bar\Delta_\infty(n,q,\delta)$ in \eqref{eq_rect_step4_delta3}. The contraction part is absorbed exactly as in Step~3, while the non-contracting part contributes the final statistical error $C\Delta_\infty(n,q,\delta)\omega_*/(\alpha\sigma_{\min})$. Hence, the result of Theorem~\ref{thm_general_theory_UV_noise} applies to the Bernoulli low-rank model without assuming $\bar\Delta_{\infty}(n,q,\delta)\le \alpha\sigma_{\min}/4$.

\section{Proof of Technical Lemmas}

\subsection{Proof of Lemma~\ref{lemma_general_convex_1} and Lemma~\ref{lemma_general_convex}}\label{supp_Sec_prove_lemma_general_convex}

    Throughout the proof, we write the objective as $ h_{\alpha}^*(\cdot)$. In what follows, we prove Lemma~\ref{lemma_general_convex} only, and Lemma~\ref{lemma_general_convex_1} can be immediately obtained with $\cD=\cDtwo$, Weyl's inequality, and 
    \begin{equation*}
        \|\cG(\bZ)\|\le L_2\|\bZ\bZ^\T - \bZ^*(\bZ^*)^\T\|_{\Fn}\le L_2(2\epsilon+\epsilon^2)\|\bZ^*\|_{\Fn}^2\le 4L_2\epsilon\|\bZ^*\|_{\Fn}^2,
    \end{equation*}by Assumption~\ref{assump_gradient_lips_1}.
    For any $\bZ\in\cD$ and any direction $\bW \in \mathbb{R}^{n \times r}$, the Hessian quadratic form for $ h_{\alpha}^*(\bZ)$ is:
\begin{equation*}
    \nabla^2_{\bZ} h_{\alpha}^*(\bZ)[\bW, \bW] = \nabla^2_{\bZ} \cL(\bZ\bZ^\T)[\bW, \bW] + \nabla^2_{\bZ} p^*_{\alpha}(\bZ)[\bW, \bW].
\end{equation*}
For $\nabla_{\bz}^2\cL(\bZ\bZ^\T)$, one can check $\big\{\mathrm{vec}(\bW^\T)\big\}^\T \nabla_{\bz}^2\cL(\bZ\bZ^\T)\mathrm{vec}(\bW^\T) =\nabla^2_{\bZ} \cL(\bZ\bZ^\T)[\bW, \bW]$. Using the matrix chain rule, there is 
\begin{align*}
    \nabla^2_{\bZ} \cL(\bZ\bZ^\T)[\bW, \bW] = \nabla^2_{\bX} \cL(\bZ\bZ^\T)[\cP_{\bZ}(\bW), \cP_{\bZ}(\bW)] + 2 \langle \nabla_{\bX} \cL(\bZ\bZ^\T), \bW\bW^\T \rangle,
\end{align*}where $\cP_{\bZ}(\bW) = \bZ\bW^\T + \bW\bZ^\T$.
With $\cG(\bZ) = \nabla_{\bX}\cL(\bZ\bZ^\T)$,  Assumption~\ref{assump_rsc} then yields:
\begin{equation*}
    \nabla^2_{\bZ} \cL(\bZ\bZ^\T)[\bW, \bW] - 2\langle \cG(\bZ),\bW\bW^\T\rangle\ge \alpha\|\cP_{\bZ}(\bW)\|_{\Fn}^2.
\end{equation*}
With the decomposition $\cP_{\bZ}(\bW)=\cP_{\bZ^*}(\bW)+\cP_{\bZ-\bZ^*}(\bW)$ and the inequality $\|\bA+\bB\|_{\Fn}^2\ge \frac12\|\bA\|_{\Fn}^2-\|\bB\|_{\Fn}^2$, we arrive at 
\begin{equation*}
    \|\cP_{\bZ}(\bW)\|_{\Fn}^2\ge \frac12\|\cP_{\bZ^*}(\bW)\|_{\Fn}^2 -\|\cP_{\bZ-\bZ^*}(\bW)\|_{\Fn}^2.
\end{equation*}
By noting that $\|\cP_{\bZ^*}(\bW)\|_{\Fn}^2 = 2|\bW\bZ^{*\T}\|_{\Fn}^2
+2\tr\big\{(\bW^\T\bZ^*)^2$ and $\|\cP_{\bZ-\bZ^*}(\bW)\|_{\Fn} = \|(\bZ-\bZ^*)\bW^\T+\bW(\bZ-\bZ^*)^\T\|_{\Fn} \le 2\|\bZ-\bZ^*\|_{\Fn}\|\bW\|_{\Fn}$, we obtain
\begin{align}\label{eq:loss-lower}
    \nabla^2_{\bZ} \cL(\bZ\bZ^\T)[\bW, \bW] \ge \alpha\left[ \|\bZ^*\bW^\T\|_\Fn^2 + \tr\{(\bW^\T\bZ^*)^2\} \right] -4\alpha\|\bW\|_{\Fn}^2\|\bZ - \bZ^*\|_{\Fn}^2.
\end{align}

Next, for $p_{\alpha}^*(\bZ)$, one can compute that the second-order derivative of the regularization term $p^*_{\alpha}(\bZ)$ in direction $\bW$ is:
\begin{align*}
    \nabla^2_{\bZ}p^*(\bZ)[\bW, \bW] =&\, \frac{\alpha}{2} \| (\bZ^*)^\T \bW - \bW^\T \bZ^* \|_\Fn^2\\=&\, \alpha \left\{ \|\bW^\T \bZ^*\|_\Fn^2 - \tr(\bW^\T \bZ^*)^2) \right\}.
\end{align*}
Adding this to \eqref{eq:loss-lower}, the trace terms cancel, and we get
\begin{align*}
    \nabla^2_{\bZ}  h^*_{\alpha}(\bZ)[\bW, \bW] -2\langle \cG(\bZ),\bW\bW^\T\rangle \ge&\, 
    \alpha\|\bW(\bZ^*)^\T\|_\Fn^2 + \alpha \|\bW^\T \bZ^*\|_\Fn^2\\&\,-4\|\bW\|_{\Fn}^2\|\bZ - \bZ^*\|_{\Fn}^2 \\
    \ge &\,\left\{\alpha\sigma_{r}(\bZ^*)^2-4\|\bZ - \bZ^*\|_{\Fn}^2\right\}\|\bW\|_{\Fn}^2.
\end{align*}
The last inequality follows from  $\|\bW(\bZ^*)^\T\|_\Fn^2 = \tr\{\bW^\T \bW \bZ^*(\bZ^*)^\T\} \ge \sigma_r(\bZ^*)^2\|\bW\|_{\Fn}^2$.

Now let $\bz=\mathrm{vec}(\bZ^\T)$. Using $\langle \cG(\bZ),\bW\bW^\T\rangle = \big\{\mathrm{vec}(\bW^\T)\big\}^\T \big(\cG(\bZ)\otimes \bI_r\big) \mathrm{vec}(\bW^\T)$, we obtain 
\begin{align*}
\lambda_{\min}\big\{n^{-1}\nabla_{\bz}^2 h_{\alpha}^*(\bZ)
\!-\!2n^{-1}\cG(\bZ)\otimes \bI_r\big\}
\!=\!
\min_{\|\bW\|_{\Fn}=1}
\big(
\nabla_{\bZ}^2 h_{\alpha}^*(\bZ)[\bW,\bW]
\!-\!2\langle \cG(\bZ),\bW\bW^\T\rangle
\big)/n.
\end{align*}
Taking the minimum over $\bZ\in\cD$ completes the proof.

\subsection{Proof of Lemma~\ref{lemma_stationary_tildeZ}}\label{supp_sec_prove_lemma_stationary_tildeZ}
First, we note that with Lemma~\ref{lemma_general_convex} and $\|\tilde\bZ - \bZ^*\|_{\Fn}\le \epsilon\|\bZ^*\|$,
\begin{equation*}
\lambda_{\min}\Big\{ n^{-1}\nabla_{\bz}^2 h_{\alpha}^*(\bZ)-2n^{-1}\cG(\bZ)\otimes \bI_r \Big\} \ge \alpha\sigma_{\min}-4n^{-1}\epsilon^2\|\bZ^*\|_{\Fn}^2.
\end{equation*}
Then with $\|\bZ^*\|_{\Fn}^2\le nr\kappa\sigma_{\min}$, $\epsilon^2 r\kappa\le c_0\alpha$ for sufficiently small $c_0$, we know 
\begin{equation*}
\lambda_{\min}\Big\{ n^{-1}\nabla_{\bz}^2 h_{\alpha}^*(\bZ)-2n^{-1}\cG(\bZ)\otimes \bI_r \Big\} \ge 3\alpha\sigma_{\min}/4.
\end{equation*}
In bounding $\gamma_2$ in the proof of Theorem~\ref{thm_general_theory_noisy_Z}, we know
\begin{align*}
n^{-1}\|\cG(\bZ)\| \le 
\Delta_2(n,\delta) + (2+\epsilon)\epsilon C \sqrt{r}\kappa\sigma_{\min}.
\end{align*}
With $\epsilon\sqrt{r}\kappa\le c_0\alpha$ and $\Delta_2(n,\delta)/(\alpha\sigma_{\min})\le c_0$ for sufficiently small, we concluded that, for any $\bZ\in\cD_{\epsilon}$,
\begin{align*}
\lambda_{\min}\bigl(n^{-1}\nabla_{\bz}^2 h_{\alpha}^*(\bZ)\bigr)\ge&\,\lambda_{\min}\Big\{ n^{-1}\nabla_{\bz}^2 h_{\alpha}^*(\bZ)-2n^{-1}\cG(\bZ)\otimes \bI_r \Big\} - 2n^{-1}\|\cG(\bZ)\| \\ \ge&\,
\frac{1}{2}\alpha\sigma_{\min}>0.
\end{align*}

Since $\tilde\bZ$ is a strict interior point of $\cD_{\epsilon}$, it is a local minimizer to $\|\nabla_{\bZ}h_{\alpha}^*(\bZ)\|_{\twinf}^2$.
Suppose, for contradiction, that $\nabla_{\bZ}h_{\alpha}^*(\tilde\bZ)\neq \zero$. Let
\begin{equation*}
\tilde\bz:=\mathrm{vec}(\tilde\bZ^\T), \qquad 
\bv:=\mathrm{vec}\bigl(\{\nabla_{\bZ}h_{\alpha}^*(\tilde\bZ)\}^\T\bigr),
\qquad
\bH:=\nabla_{\bz}^2 h_{\alpha}^*(\tilde\bZ).\end{equation*}
Then $\bv\neq \zero$ and $\bH\succ 0$. Define $\bd:=-\bH^{-1}\bv$ and let $\bD\in\RR^{n\times r}$ be the matrix satisfying $\mathrm{vec}(\bD^\T)=\bd$.
By differentiability of $\nabla_{\bZ}h_{\alpha}^*(\cdot)$,
\begin{equation*}
\mathrm{vec}\Bigl(\bigl\{\nabla_{\bZ}h_{\alpha}^*(\tilde\bZ+s\bD)\bigr\}^\T\Bigr) = \bv+s\bH\bd+o(s) = (1-s)\bv+o(s), \text{ as } s\to 0.
\end{equation*}
Equivalently,
\begin{equation*}
\nabla_{\bZ}h_{\alpha}^*(\tilde\bZ+t\bD) = (1-t)\nabla_{\bZ}h_{\alpha}^*(\tilde\bZ)+\bR_t, \qquad \|\bR_t\|_{\twinf}=o(t).
\end{equation*}
Consequently, for sufficiently small $t>0$, there is
\begin{align*}
\|\nabla_{\bZ}h_{\alpha}^*(\tilde\bZ+t\bD)\|_{\twinf} < &\, (1-t)\|\nabla_{\bZ}h_{\alpha}^*(\tilde\bZ)\|_{\twinf} + \|\bR_t\|_{\twinf}\\\le &\, (1-t)\|\nabla_{\bZ}h_{\alpha}^*(\tilde\bZ)\|_{\twinf} + o(t) <
\|\nabla_{\bZ}h_{\alpha}^*(\tilde\bZ)\|_{\twinf},
\end{align*}
This contradicts the fact that $\tilde\bZ$ is a local minimizer of $\|\nabla_{\bZ}h_{\alpha}^*(\bZ)\|_{\twinf}^2$. This completes the proof.

\subsection{Proof of Lemma~\ref{lemma_general_convex_asym}}\label{supp_prove_lemma_general_convex_asym}

Fix any $(\bU,\bV)\in\cD$, and let $(\bL,\bR)\in\RR^{n\times r}\times\RR^{q\times r}$ be arbitrary. Let $\bw:=\mathrm{vec}\big((\bL^\T,\bR^\T)\big)$, and define
\begin{equation*}
\bar\bL:=q^{-1/2}\bL, \qquad \bar\bR:=n^{-1/2}\bR.
\end{equation*}
Then, by construction, $\bS_Z\bw=\mathrm{vec}\big((\bar\bL^\T,\bar\bR^\T)\big)$, where by abuse of notation
$\bS_Z$ stands for $\bS_Z\otimes \bI_r$ when acting on $\bz$.

By the chain rule,
\begin{equation}\label{eq_rect_quad_form}\begin{aligned}
\bw^\T\bS_z\big\{\nabla_{\bz}^2 h_\alpha^*(\bU,\bV)-\cG_e(\bU,\bV)\big\}\bS_z\bw &=
\nabla_{\bX}^2\cL(\bU\bV^\T)\big[\bU\bar\bR^\T+\bar\bL\bV^\T,\, \bU\bar\bR^\T+\bar\bL\bV^\T\big] \\
&\qquad + \nabla_{(\bU,\bV)}^2p_\alpha^*(\bU,\bV)\big[(\bar\bL,\bar\bR),(\bar\bL,\bar\bR)\big].\end{aligned}
\end{equation}
Here $\cG_e(\bU,\bV)$ is the block matrix collecting the first-order terms produced by the nonlinear map $(\bU,\bV)\mapsto \bU\bV^\T$.

Since the map inside $p_\alpha^*(\bU,\bV)$ is linear in $(\bU,\bV)$, one can check
\begin{equation}\begin{aligned}
\nabla_{(\bU,\bV)}^2p_\alpha^*(\bU,\bV)\big[(\bar\bL,\bar\bR),(\bar\bL,\bar\bR)\big] &= \frac{\alpha nq}{2} \left\| n^{-1}(\bU^*)^\T\bar\bL - q^{-1}\bar\bR^\T\bV^* \right\|_{\Fn}^2\\ &= 
\frac{\alpha}{2} \left\| n^{-1/2} \bL^\T\bU^* - q^{-1/2}(\bV^*)^\T\bR \right\|_{\Fn}^2.\end{aligned}
\label{eq_rect_penalty_hessian_correct}
\end{equation}

Hence, plugging \eqref{eq_rect_penalty_hessian_correct} into \eqref{eq_rect_quad_form}, and with Assumption~\ref{assump_rect_rip_weighted}, we have
\begin{equation}\begin{aligned}
\bw^\T\bS_z\big\{\nabla_{\bz}^2 h_\alpha^*(\bU,\bV) -\cG_e(\bU,\bV)\big\}\bS_z\bw \ge &\,\alpha\|\bU\bar\bR^\T+\bar\bL\bV^\T\|_{\Fn}^2
\\&\,+ \frac{\alpha}{2} \left\| n^{-1/2}\bL^\T\bU^*-q^{-1/2}(\bV^*)^\T\bR \right\|_{\Fn}^2.\end{aligned}
\label{eq_rect_step1}
\end{equation}

Now decompose
\begin{equation*}
\bU\bar\bR^\T+\bar\bL\bV^\T = \underbrace{n^{-1/2}\bU^*\bR^\T+q^{-1/2}\bL(\bV^*)^\T}_{=: \bA} + \underbrace{n^{-1/2}(\bU-\bU^*)\bR^\T+q^{-1/2}\bL(\bV-\bV^*)^\T}_{=: \bB}.
\end{equation*}
Using $\|\bA+\bB\|_{\Fn}^2\ge \frac12\|\bA\|_{\Fn}^2-\|\bB\|_{\Fn}^2$, we obtain from \eqref{eq_rect_step1} that
\begin{equation}\begin{aligned}
\bw^\T\bS_z\big\{\nabla_{\bz}^2 h_\alpha^*(\bU,\bV)-\cG_e(\bU,\bV)\big\}\bS_z\bw \ge &\,\frac{\alpha}{2}\|\bA\|_{\Fn}^2 -\alpha\|\bB\|_{\Fn}^2
\\&\,+ \frac{\alpha}{2} \left\| n^{-1/2}\bL^\T\bU^*-q^{-1/2}(\bV^*)^\T\bR \right\|_{\Fn}^2.\end{aligned}
\label{eq_rect_step2}\end{equation}

We now combine the first and third terms. By direct expansion, one can obtain\begin{align*}
\|\bA\|_{\Fn}^2 =&\, \frac12\left\| n^{-1/2}\bU^*\bR^\T+q^{-1/2}\bL(\bV^*)^\T \right\|_{\Fn}^2 \\ =&\,
n^{-1}\|\bU^*\bR^\T\|_{\Fn}^2 + q^{-1}\|\bL(\bV^*)^\T\|_{\Fn}^2 + \frac{2}{\sqrt{nq}}\tr\!\big(\bL^\T\bU^*\bR^\T\bV^*\big),
\end{align*} and \begin{align*}
\big\| n^{-1/2}\bL^\T\bU^*-q^{-1/2}(\bV^*)^\T\bR \big\|_{\Fn}^2 = n^{-1}\|\bL^\T\bU^*\|_{\Fn}^2 + q^{-1}\|(\bV^*)^\T\bR\|_{\Fn}^2 \!-\! \frac{2}{\sqrt{nq}}\tr\!\big(\bL^\T\bU^*\bR^\T\bV^*\big).
\end{align*}
The cross terms cancel, so
\begin{equation}\begin{aligned}
\frac12\|\bA\|_{\Fn}^2 + \frac12 \left\| n^{-1/2}\bL^\T\bU^*-q^{-1/2}(\bV^*)^\T\bR \right\|_{\Fn}^2 =&\, \frac12 n^{-1}\|\bU^*\bR^\T\|_{\Fn}^2 + \frac12 q^{-1}\|\bL(\bV^*)^\T\|_{\Fn}^2\\
&\, + \frac12 n^{-1}\|\bL^\T\bU^*\|_{\Fn}^2 + \frac12 q^{-1}\|(\bV^*)^\T\bR\|_{\Fn}^2.\end{aligned} \label{eq_rect_identity1}
\end{equation}
Using the balancing condition $n^{-1}(\bU^*)^\T\bU^* = q^{-1}(\bV^*)^\T\bV^* = \bSigma^*$, we have
\begin{align*}
n^{-1}\|\bU^*\bR^\T\|_{\Fn}^2 = \tr(\bR\bSigma^*\bR^\T) \quad\text{ and }\quad q^{-1}\|\bL(\bV^*)^\T\|_{\Fn}^2 = \tr(\bL^\T\bL\bSigma^*).
\end{align*}
Therefore \eqref{eq_rect_identity1} further simplifies to
\begin{equation} \label{eq_rect_identity2}
\eqref{eq_rect_identity1} = \tr(\bL^\T\bL\bSigma^*)+\tr(\bR\bSigma^*\bR^\T)
\ge \sigma_{\min}\big(\|\bL\|_{\Fn}^2+\|\bR\|_{\Fn}^2\big).
\end{equation}

It remains to bound $\|\bB\|_{\Fn}^2$. By Cauchy Schwarz,
\begin{equation}\begin{aligned} \|\bB\|_{\Fn}
\le &\, n^{-1/2}\|\bU-\bU^*\|_{\Fn}\|\bR\|_{\Fn} + q^{-1/2}\|\bV-\bV^*\|_{\Fn}\|\bL\|_{\Fn}\\
\le&\,\Big( \frac{\|\bU-\bU^*\|_{\Fn}^2}{n} + \frac{\|\bV-\bV^*\|_{\Fn}^2}{q} \Big)^{1/2} \big(\|\bL\|_{\Fn}^2+\|\bR\|_{\Fn}^2\big)^{1/2},\end{aligned}
\label{eq_rect_B_bound1}
\end{equation}

Substituting \eqref{eq_rect_identity2} and \eqref{eq_rect_B_bound1} into \eqref{eq_rect_step2}, we obtain
\begin{align*}
\bw^\T\bS_z\big\{\nabla_{\bz}^2 h_\alpha^*(\bU,\bV)-\cG_e(\bU,\bV)\big\}\bS_z\bw \ge &\,\alpha \sigma_{\min}\big(\|\bL\|_{\Fn}^2+\|\bR\|_{\Fn}^2\big) \\&\,-
\Big( \frac{\|\bU-\bU^*\|_{\Fn}^2}{n} + \frac{\|\bV-\bV^*\|_{\Fn}^2}{q} \Big)\big(\|\bL\|_{\Fn}^2+\|\bR\|_{\Fn}^2\big).
\end{align*}
Since $\|\bw\|^2=\|\bL\|_{\Fn}^2+\|\bR\|_{\Fn}^2$, taking the infimum over all $\|\bw\|=1$ and then
the minimum over $(\bU,\bV)\in\cD$ yields the results.

Now we study $h_{\alpha}^{\natural}(\cdot,\cdot)$. Let $\cA_*(\bU,\bV):= n^{-1}(\bU^*)^\T \bU-q^{-1}\bV^\T \bV^* $, and $\cA_{\natural}(\bU,\bV):= n^{-1}(\bU-\bU^*)^\T \bU-q^{-1}\bV^\T(\bV-\bV^*)$.
Then one has
\begin{equation*}
p_\alpha^*(\bU,\bV)=\frac{\alpha nq}{4}\|\cA_*(\bU,\bV)\|_{\Fn}^2,
\qquad p_\alpha^\natural(\bU,\bV)=\frac{\alpha nq}{4}\|\cA_\natural(\bU,\bV)\|_{\Fn}^2.
\end{equation*}

Since $\cA_*$ is linear, $D\cA_*(\bU,\bV)[(\bar\bL,\bar\bR)] = n^{-1}(\bU^*)^\T \bar\bL-q^{-1}\bar\bR^\T \bV^*$ and $D^2\cA_{*}(\bU,\bV) = \zero$. For $\cA_{\natural}$, we note that
\begin{align*}
D\cA_\natural(\bU,\bV)[(\bar\bL,\bar\bR)] =&\, n^{-1}(\bU^*)^\T \bar\bL-q^{-1}\bar\bR^\T \bV^*\quad(:=\bDelta_1) \\&\,+ n^{-1}\bar\bL^\T \bE_U+n^{-1}\bE_U^\T \bar\bL - q^{-1}\bar\bR^\T \bE_V-q^{-1}\bE_V^\T \bar\bR\quad(:=\bDelta_2),
\end{align*}
where $\bE_U = \bU - \bU^*$ and $\bE_V = \bV - \bV^*$, and
\begin{equation*}
D^2\cA_\natural(\bU,\bV)[(\bar\bL,\bar\bR),(\bar\bL,\bar\bR)]
=
2n^{-1}\bar\bL^\T\bar\bL-2q^{-1}\bar\bR^\T\bar\bR.
\end{equation*}

Therefore with identity $D^2\big(\|\cA(\btheta)\|_{\Fn}^2\big)[\bH,\bH] = 2\|D\cA(\btheta)[\bH]\|_{\Fn}^2 + 2\langle\cA(\btheta),D^2\cA(\btheta)[\bH,\bH]\rangle$ for any mapping $\cA(\btheta)$, we have
\begin{equation}\begin{aligned}
&\nabla_{(\bU,\bV)}^2 p_\alpha^\natural(\bU,\bV)\big[(\bar\bL,\bar\bR),(\bar\bL,\bar\bR)\big] - \nabla_{(\bU,\bV)}^2 p_\alpha^*(\bU,\bV)\big[(\bar\bL,\bar\bR),(\bar\bL,\bar\bR)\big]\\
&\qquad = \frac{\alpha nq}{2} \Big( \|D\cA_\natural\|_{\Fn}^2-\|D\cA_*\|_{\Fn}^2 \Big) +
\frac{\alpha nq}{2} \Big\langle \cA_\natural,\, D^2\cA_\natural[(\bar\bL,\bar\bR),(\bar\bL,\bar\bR)] \Big\rangle .\end{aligned}
\label{eq_natural_perturb_1}
\end{equation}

We now bound the right hand side of \eqref{eq_natural_perturb_1}. First, as $\|\bU^*\|^2/n\le \kappa\sigma_{\min}$ and $\|\bV^*\|^2/q\le \kappa\sigma_{\min}$, one has
\begin{equation*}
\sqrt{nq}\,\|\bDelta_1\|_{\Fn} = \sqrt{nq}\,\|D\cA_*\|_{\Fn} \le C\sqrt{\kappa\sigma_{\min}} \big(\|\bL\|_{\Fn}^2+\|\bR\|_{\Fn}^2\big)^{1/2}.
\end{equation*}
Similarly, with
\begin{equation*}
    d:= \frac{\|\bU-\bU^*\|_{\Fn}^2}{n} + \frac{\|\bV-\bV^*\|_{\Fn}^2}{q}\le c_0\sigma_{\min}/\kappa
\end{equation*}
one can verify with Cauchy inequality that
\begin{equation*}
\sqrt{nq}\,\|\bDelta_2\|_{\Fn} \le C\sqrt d\, \big(\|\bL\|_{\Fn}^2+\|\bR\|_{\Fn}^2\big)^{1/2}\le C\sqrt{\frac{c_0\sigma_{\min}}{\kappa} }\big(\|\bL\|_{\Fn}^2+\|\bR\|_{\Fn}^2\big)^{1/2}.
\end{equation*}
Hence with Cauchy inequality,
\begin{equation}\begin{aligned}
    \frac{\alpha nq}2\Big| \|D\cA_\natural\|_{\Fn}^2-\|D\cA_*\|_{\Fn}^2 \Big| = &\, \frac{\alpha nq}2\Big| \|\bDelta_1 + \bDelta_2\|_{\Fn}^2-\|\bDelta_1\|_{\Fn}^2 \Big| \\= &\,\frac{\alpha nq}2\Big|2 \Tr\big(\bDelta_1 \bDelta_2^\T \big) + \|\bDelta_2\|_{\Fn}^2\Big|\\
    \le &\,C\sqrt{c_0}\alpha\sigma_{\min}\big(\|\bL\|_{\Fn}^2+\|\bR\|_{\Fn}^2\big)
\end{aligned}\label{eq_natural_perturb_2}\end{equation}

Moreover, with $\cA_\natural = n^{-1}\bE_U^\T \bU^*-q^{-1}(\bV^*)^\T \bE_V + n^{-1}\bE_U^\T \bE_U-q^{-1}\bE_V^\T \bE_V$, we have
\begin{equation*}
\|\cA_\natural\|_{\Fn} \le C\big(\sqrt{\kappa\sigma_{\min}d}+d\big)\le C\sqrt{c_0}\sigma_{\min}.
\end{equation*}
Together with
\begin{equation*}
nq\left\| D^2\cA_\natural[(\bar\bL,\bar\bR),(\bar\bL,\bar\bR)] \right\|_{\Fn} \le C\big(\|\bL\|_{\Fn}^2+\|\bR\|_{\Fn}^2\big),
\end{equation*}and Cauchy inequality,
we arrive at
\begin{align}
\frac{\alpha nq}{2}
\left| \Big\langle \cA_\natural,\, D^2\cA_\natural[(\bar\bL,\bar\bR),(\bar\bL,\bar\bR)] \Big\rangle \right| &\le C\sqrt{c_0}\alpha\sigma_{\min}\big(\|\bL\|_{\Fn}^2+\|\bR\|_{\Fn}^2\big).
\label{eq_natural_perturb_3}
\end{align}

Combining \eqref{eq_natural_perturb_1}, \eqref{eq_natural_perturb_2}, and
\eqref{eq_natural_perturb_3}, we obtain
\begin{align*}
\Big| \nabla_{(\bU,\bV)}^2 p_\alpha^\natural(\bU,\bV)\big[(\bar\bL,\bar\bR),(\bar\bL,\bar\bR)\big] - &\,\nabla_{(\bU,\bV)}^2 p_\alpha^*(\bU,\bV)\big[(\bar\bL,\bar\bR),(\bar\bL,\bar\bR)\big] \Big| \\ \le&\,C\sqrt{c_0}\alpha\sigma_{\min}\big(\|\bL\|_{\Fn}^2+\|\bR\|_{\Fn}^2\big).
\end{align*}

With \eqref{eq_lambda_min_asym_lower} and Weyl's inequality, we arrive at 
\begin{align*}
\min_{(\bU,\bV)\in\cD}\lambda_{\min}\!\Big[\bS_z\big\{ \nabla_{\bz}^2 h_{\alpha}^{\natural}(\bU,\bV)-   \cG_e(\bU,\bV) \big\}\bS_z\Big] \ge &\,{\alpha}\sigma_{\min}  - \alpha d - C\sqrt{c_0}\alpha\sigma_{\min}\\\ge &\,{\alpha}\sigma_{\min}  - \alpha \frac{c_0\sigma_{\min}}{\kappa} - C\sqrt{c_0}\alpha\sigma_{\min},
\end{align*}
The proof is then finished after shrinking $c_0$ sufficiently small.

\subsection{Proof of Lemma~\ref{lemma_id_rec}}\label{supp_Sec_prove_lemma_id_rec}

For any $\bH\in\RR^{r\times r}$, let $\bG (s)=\bI_r+s\bH$. Since $\bG (s)^{-\T}=\bI_r-s\bH^\T+o(s)$,
we have
\begin{align*}
\frac{d}{ds}\Phi(\bG (s))\Big|_{s=0} &= \frac{2}{n}\langle \bU-\bU^*,\,\bU\bH\rangle -\frac{2}{q}\langle \bV-\bV^*,\,\bV\bH^\T\rangle \\ &= 2\Big\langle n^{-1}(\bU-\bU^*)^\T\bU - q^{-1}\bV^\T(\bV-\bV^*), \,\bH \Big\rangle.
\end{align*}
If $\bI_r$ is a local minimizer, the derivative above vanishes for every $\bH$, which proves that $\bM^*(\bU,\bV) = \zero$. The equivalence with \eqref{st_id_rec} is immediate from the
definition of $p_\alpha^\natural$.

\subsection{Proof of Lemma~\ref{lemma_alignment_near_rotation}}\label{subsec_prove_lemma_alignment_near_rotation}
\begin{proof}
    The proof is similar to that of Lemma 1 in \cite{ma2021beyond}. For completeness, we state it here. Define 
    \begin{equation*}
        f_{alg}(\bQ) = n^{-1}\big\|\bU\bQ - \bU^{*}\big\|_{\Fn}^2 + q^{-1}\big\|\bV\bQ^{-\T} - \bV^{*}\big\|_{\Fn}.
    \end{equation*}
    First the first part, consider the minimizer of $f_{alg}$ inside a small region around $\bP$
    \begin{align}
        \min_{\bQ\in\RR^{r\times r}\text{ is invertible}}&\,f_{alg}(\bQ)\label{eq_opti_aux_for_x_star}\\&\,\|\bQ-\bP\|_{\Fn}\le 5\delta\frac{\sqrt{n}}{\sigma_{r}\left(\bm{U}^{*}\right)}.\nonumber
    \end{align}
    By Weyl's inequality, we obtain that for any feasible $\bQ$, $\sigma_{\min}(\bQ)\ge \sigma_{\min}(\bP) - 5\delta\sqrt{n}/\sigma_r(\bU^{*})$ $\ge 1/2$ given that $\delta\le\sigma_r(\bU^{*})/(80\sqrt{n})$. Therefore $f_{alg}(\bQ)$ is continuous over the feasible region \eqref{eq_opti_aux_for_x_star}. Notably this region is compact inside the manifold of invertible matrices, which implies the existence of a solution to \eqref{eq_opti_aux_for_x_star}. Then it suffices to show that this solution corresponds to the optimal alignment $\bQ^{*}$. Note that by definition
    \begin{equation*}
        n^{-1}\big\|\bU\bQ^*-\bU^{*}\big\|_{\Fn}^2\le f_{alg}(\bQ^{*})\le f_{alg}(\bP) \le 2\delta^2.
    \end{equation*}
    With $n^{-1}\|\bU\bQ^*-\bU^{*}\|_{\Fn}^2\ge n^{-1}\|\bU\bQ^*-\bU\bP\|_{\Fn}^2 - n^{-1}\|\bU\bP-\bU^{*}\|_{\Fn}^2$ and $n^{-1/2}\|\bU\bP-\bU^{*}\|_{\Fn}\le \delta$, there is
    \begin{align*}
        \|\bQ^*-\bP\|_{\Fn}\le &\, \sigma_r(\bU)^{-1}\big\|\bU\bQ^*-\bU\bP\big\|_{\Fn}\\\le &\,\sqrt{n}\sigma_r(\bU)^{-1}\sqrt{n^{-1}\|\bU\bQ^*-\bU^{*}\|_{\Fn}^2 + n^{-1}\|\bU\bP-\bU^{*}\|_{\Fn}^2}\\ \le &\,\sqrt{3n}\sigma_r(\bU)^{-1}\delta.
    \end{align*}
    Then it suffices to show that $\sigma_{r}(\bU^*)^{-1}\ge \sqrt{3}\sigma_r(\bU)^{-1}/5$. Again, by Weyl's inequality,
    \begin{equation*}
        \big|\sigma_{\min}(\bU\bP) - \sigma_{\min}(\bU)\big|\le \big\|\bU\bP - \bU\big\|_{\Fn}\le \delta\le \frac{1}{80}\sigma_r(\bU^{*}).
    \end{equation*} Therefore, we know $\tfrac{79}{80}\sigma_r(\bU)\le\sigma_{\min}(\bU\bP)\le \sigma_{r}(\bU)\sigma_{\max}(\bP)\le \tfrac{3}{2}\sigma_r(\bU)$,
    which gives $\sigma_{r}(\bU)\ge \tfrac{79}{120}\sigma_r(\bU) \ge 0.66\sigma_r(\bU)\ge \sqrt{3}\sigma_r(\bU)/5$.
    We then conclude that
    \begin{equation*}\|\bQ^* - \bP\|_{\Fn}\le \sqrt{3}\sigma_r(\bU)^{-1}\delta\le 5 \delta \{\sigma_r(\bU^*/\sqrt{n})\}^{-1},\end{equation*}
    which proves the first part. The second part follows similarly by noting that $2/3\le \sigma_{r}(\bP^{-\T})\le \sigma_1(\bP^{-\T})\le 3/2$.
\end{proof}

\subsection{Proof of Lemma~\ref{lemma_rect_auxiliary_bar}}\label{supp_sec_prove_lemma_rect_auxiliary_bar}
Throughout this proof, all concentration bounds are stated on finite events whose failure probabilities are summed by a union bound. Namely, one may replace $\delta$ by a sufficiently small constant multiple of $\delta$ in each Bernstein bound below.
For each $i\in[n]$ and $j\in[q]$, write $\xi_{ij}:=\ell_{\alpha_0}'(X_{ij}^*)=\nu_{\star}\{\sigma(X_{ij}^*)-Y_{ij}\}$ and $\omega_{ij}:=\ell_{\alpha_0}''(X_{ij}^*)=\nu_{\star}\sigma(X_{ij}^*)\{1-\sigma(X_{ij}^*)\}$. We then have $\EE\xi_{ij}=0$, $\EE\xi_{ij}^2 = \nu_\star\omega_{ij} \le \nu_\star\beta$, and $|\xi_{ij}|\le \nu_\star$.
By construction,
\begin{align}\label{eq_rect_bar_newton_u}
\sum_{j=1}^q \xi_{ij}\bV_{j,}^* + \sum_{j=1}^q \omega_{ij}(\bar\bE_{U,i}\bV_{j,}^{*\,\T})\bV_{j,}^* = \zero, \qquad i\in[n],\\\label{eq_rect_bar_newton_v}
\sum_{i=1}^n \xi_{ij}\bU_{i,}^* + \sum_{i=1}^n \omega_{ij}(\bar\bE_{V,j}\bU_{i,}^{*\,\T})\bU_{i,}^* = \zero, \qquad j\in[q].
\end{align}

We first bound $R_{\bar Z} = \big\|\bS_Z^2\nabla_{\bZ}h_{\alpha}^{\natural}(\bar\bU,\bar\bV)\big\|_{\twinfw}$. For the Bernoulli model, for each $(i,j)$, $\bar X_{ij}-X_{ij}^*
= \bar\bE_{U,i}\bV_{j,}^{*\,\T} + \bU_{i,}^*\bar\bE_{V,j}^{\T} + \bar\bE_{U,i}\bar\bE_{V,j}^{\T}$,
and a second-order Taylor expansion gives
\begin{equation*}
\ell'_{\alpha_0}(\bar X_{ij}) = \ell'_{\alpha_0}(X_{ij}^*) + \ell''_{\alpha_0}(X_{ij}^*)(\bar X_{ij}-X_{ij}^*) + \bar R_{ij}\; \text{, where}\;
|\bar R_{ij}| \le C|\bar X_{ij}-X_{ij}^*|^2.
\end{equation*}
Substituting this into the row gradients and using
\eqref{eq_rect_bar_newton_u}--\eqref{eq_rect_bar_newton_v}, we obtain
\begin{align}
q^{-1}\nabla_{\bU_{i,}}\cL(\bar\bU,\bar\bV) ={}&\, q^{-1}\sum_{j=1}^q \xi_{ij}\bar\bE_{V,j} + q^{-1}\sum_{j=1}^q \omega_{ij}(\bU_{i,}^*\bar\bE_{V,j}^{\T})\bV_{j,}^* +\bar\bR_{U,i}, \label{eq_rect_bar_grad_u} \\
n^{-1}\nabla_{\bV_{j,}}\cL(\bar\bU,\bar\bV) ={}&\, n^{-1}\sum_{i=1}^n \xi_{ij}\bar\bE_{U,i} + n^{-1}\sum_{i=1}^n \omega_{ij}(\bV_{j,}^*\bar\bE_{U,i}^{\T})\bU_{i,}^* +\bar\bR_{V,j}, \label{eq_rect_bar_grad_v}
\end{align}
where, by telescoping, the remainders satisfy
\begin{equation}\label{eq_rect_bar_grad_remainder} \begin{aligned}
\max_{i\in[n]}\|\bar\bR_{U,i}\| \vee \max_{j\in[q]}\|\bar\bR_{V,j}\| \le C\beta\sqrt{\kappa\sigma_{\min}}\, \|(\bar\bE_U,\bar\bE_V)\|_{\Fnw}\, \|(\bar\bE_U,\bar\bE_V)\|_{\twinfw}+ C\beta\omega_*\, \|(\bar\bE_U,\bar\bE_V)\|_{\Fnw}^2.
\end{aligned}
\end{equation}
We next invoke the explicit form of $(\bar\bE_U,\bar\bE_V)$ to control \eqref{eq_rect_bar_grad_u} and \eqref{eq_rect_bar_grad_v}. Let $\bH_{V,j}:=\sum_{i=1}^n \omega_{ij}(\bU_{i,}^*)^\T\bU_{i,}^*$ and $\bH_{U,i}:=\sum_{j=1}^q \omega_{ij}(\bV_{j,}^*)^\T\bV_{j,}^*$ for $j\in[q]$ and $i\in[n]$.
Then the error can be written as $\bar\bE_{V,j} = - \bH_{V,j}^{-1}\sum_{i=1}^n \xi_{ij}(\bU_{i,}^*)^\T$ and $\bar\bE_{U,i} = -\bH_{U,i}^{-1}\sum_{j=1}^q \xi_{ij}(\bV_{j,}^*)^\T$.
By Lemma~\ref{lemma_rect_block_hessian}, we know that, uniformly over $i\in[n]$ and $j\in[q]$, it holds that
\begin{equation*}
\|\bH_{V,j}^{-1}\|\le \frac{1}{\alpha n\sigma_{\min}},\quad\text{ and }\quad
\|\bH_{U,i}^{-1}\|\le \frac{1}{\alpha q\sigma_{\min}}.
\end{equation*}
Consider the first term on the right side of \eqref{eq_rect_bar_grad_u}. For each $i\in[n]$, there is
\begin{equation}\begin{aligned}
q^{-1}\big\|(\xi\bar\bE_V)_{i,\cdot}\big\| =&\,q^{-1}\Big\|\sum_{j=1}^q \xi_{ij}\bH_{V,j}^{-1}\sum_{i'=1}^n \xi_{i'j}(\bU_{i'}^*)^\T\Big\|
\\ \le &\, \Big\|\sum_{j=1}^q \xi_{ij}^2\bH_{V,j}^{-1}(\bU_i^*)^\T\Big\| + \Big||\sum_{i'\neq i}\sum_{j=1}^q \xi_{ij}\xi_{i'j}\bH_{V,j}^{-1}(\bU_{i'}^*)^\T\Big\|\\ \le &\,\frac{\|\bU_i^*\|}{\alpha n\sigma_{\min}}\cdot \frac{1}{q}\sum_{j=1}^q \xi_{ij}^2 + \frac{1}{q}\Big\|\sum_{i'\neq i}\sum_{j=1}^q \xi_{ij}\xi_{i'j}\bH_{V,j}^{-1}(\bU_{i'}^*)^\T\Big\|.
\end{aligned}\label{eq_rect_bar_XiEV_bound1}
\end{equation}
Hence, by Bernstein's inequality applied to the bounded variables
$\xi_{ij}^2-\EE\xi_{ij}^2$, uniformly over $i\in[n]$, the diagonal part in \eqref{eq_rect_bar_XiEV_bound1} is bounded by
\begin{equation}\label{eq_rect_bar_XiEV_diag_new}
    \frac{\|\bU_i^*\|}{\alpha n\sigma_{\min}}\cdot \frac{1}{q}\sum_{j=1}^q \xi_{ij}^2 \le C\frac{\nu_\star\beta\,\omega_*}{\alpha n\sigma_{\min}} + C\frac{\nu_\star^2L_\star\,\omega_*}{\alpha nq\sigma_{\min}} .
\end{equation}
Here, we use that, if $\bA_1,\ldots,\bA_m\in\RR^r$ are fixed, or measurable with respect to a $\sigma$-field independent of $\{\xi_{ij}\}_{j=1}^m$, then with probability at least $1-Ce^{-cL}$,
\begin{equation}\label{eq_aux_cond_bern_template_new}
\big\| \sum_{j=1}^m \xi_{ij}\bA_j \big\| \le C\Big\{ \sqrt{\nu_\star\beta L}\, \big(\sum_{j=1}^m\|\bA_j\|^2\big)^{1/2} + \nu_\star L\,\max_{j\in[m]}\|\bA_j\| \Big\}.
\end{equation}
For the off-diagonal part, let $T_{i,2} := q^{-1}\sum_{j=1}^q \xi_{ij}\bA_{ij}^{(-i)}$ for $\bA_{ij}^{(-i)} := \bH_{V,j}^{-1}\sum_{i'\neq i}\xi_{i'j}(\bU_{i'}^*)^\T $.
Applying \eqref{eq_aux_cond_bern_template_new} conditionally on $\cF_{-i}$, and then applying Bernstein again to the $\cF_{-i}$-measurable quantities $\{\bA_{ij}^{(-i)}\}_{j=1}^q$, yields, uniformly over $i\in[n]$,
\begin{equation*}
    \|T_{i,2}\| \le C\Big\{ \nu_\star L_\star\sqrt{\frac{\beta}{\alpha nq\sigma_{\min}}} + \frac{\nu_\star^2L_\star\,\omega_*}{\alpha nq\sigma_{\min}}\Big\}.
\end{equation*}
Together with the bound for the residual, we conclude that
\begin{equation}
\label{eq_rect_bar_XiEV_bound2}
q^{-1}\|\xi\bar\bE_V\|_{2\to\infty} \le  C\Big\{\nu_\star L_\star
    \sqrt{\frac{\beta}{\alpha nq\sigma_{\min}}} + \frac{\nu_\star\beta\,\omega_*}{\alpha n\sigma_{\min}} + \frac{\nu_\star^2L_\star\,\omega_*}{\alpha nq\sigma_{\min}}\Big\}.
\end{equation}
By symmetry, one can obtain as well
\begin{equation}
\label{eq_rect_bar_XtEU_bound2} n^{-1}\|\xi^\T\bar\bE_U\|_{2\to\infty} \le C\Big\{\nu_\star L_\star
    \sqrt{\frac{\beta}{\alpha nq\sigma_{\min}}} + \frac{\nu_\star\beta\,\omega_*}{\alpha q\sigma_{\min}} + \frac{\nu_\star^2L_\star\,\omega_*}{\alpha nq\sigma_{\min}}\Big\}.
\end{equation}

Next, consider the second term on the right side of \eqref{eq_rect_bar_grad_u}, which can be bounded following the same strategy. Specifically, let $\bZ_{ijk}:=q^{-1}\omega_{ij}\xi_{kj}\{\bU_{i,}^*\bH_{V,j}^{-1}(\bU_{k,}^*)^{\T}\}\bV_{j,}^*$, which are independent and centered across $k,i\in[n]$ and $j\in[q]$, as the only randomness is from $\xi_{kj}$. Then the second term of \eqref{eq_rect_bar_grad_u} can be written as $-\sum_{j=1}^q\sum_{k=1}^n\bZ_{ijk}$. For any unit vector $\ba\in\mathbb S^{r-1}$, with $\omega_{ij}\le \beta$ and $\|\bH_{V,j}^{-1}\|\le \{\alpha n\sigma_{\min}\}^{-1}$, we know 
\begin{align*}
\sum_{j\in[q],k\in[n]}\EE(\ba^\T\bZ_{ijk})^2 &=\nu_{\star}q^{-2}\sum_{j=1}^q\omega_{ij}^2(\ba^\T\bV_{j,}^*)^2 \bU_{i,}^*\bH_{V,j}^{-1}\Big\{\sum_{k=1}^n\omega_{kj}(\bU_{k,}^*)^\T\bU_{k,}^*\Big\}\bH_{V,j}^{-1}(\bU_{i,}^*)^\T\\
&\le \frac{\nu_{\star}\beta^2\|\bU_{i,}^*\|^2}{\alpha n q^2\sigma_{\min}}\sum_{j=1}^q(\ba^\T\bV_{j,}^*)^2 \le C\frac{\beta^2\omega_*^2\kappa}{\alpha nq}.
\end{align*}
Moreover, by definition, one can check that $\max_{j\in[q],i,k\in[n]}\|\bZ_{ijk}\|\le C\nu_{\star}{\beta\omega_*^3}/{(\alpha nq\sigma_{\min})}$. Thus vector Bernstein, a $1/2$-net argument, and a union bound over $i\in[n]$ give
\begin{equation}\label{eq_rect_bar_cross_bound_u}
q^{-1}\max_{i\in[n]} \Big\|\sum_{j=1}^q \omega_{ij}(\bU_{i,}^*\bar\bE_{V,j}^{\T})\bV_{j,}^*\Big\| = \max_{i\in[n]}\big\|\sum_{j=1}^q\sum_{k=1}^n\bZ_{ijk}\big\|\le C\beta\omega_* \sqrt{\frac{\nu_\star\kappa L_\star}{\alpha nq}} + C\frac{\nu_\star\beta\omega_*^3L_\star}{\alpha nq\sigma_{\min}}.\end{equation}
By symmetry, $n^{-1}\max_{j\in[q]} \big\|\sum_{i=1}^n \omega_{ij}(\bV_{j,}^*\bar\bE_{U,i}^{\T})\bU_{i,}^*\big\|$ can be bounded similarly.

For the penalty term in $R_{\bar Z} = \big\|\bS_Z^2\nabla_{\bZ}h_{\alpha}^{\natural}(\bar\bU,\bar\bV)\big\|_{\twinfw}$, recall that $\bM(\bU,\bV) = n^{-1}(\bU-\bU^*)^\T\bU-q^{-1}\bV^\T(\bV-\bV^*)$. Hence, at $(\bar\bU,\bar\bV)$, we telescope
\begin{equation}\label{eq_rect_bar_M_expand}
\bM(\bar\bU,\bar\bV) = n^{-1}\bar\bE_U^\T\bU^* - q^{-1}(\bV^*)^\T\bar\bE_V + n^{-1}\bar\bE_U^\T\bar\bE_U - q^{-1}\bar\bE_V^\T\bar\bE_V.
\end{equation}
We now invoke the expansion of $(\bar\bE_U,\bar\bE_V)$. Let $\bM_U = -n^{-1}\sum_{i=1}^n \bH_{U,i}^{-1}\big\{\sum_{j=1}^q \xi_{ij}(\bV_{j,}^*)^\T\big\}\bU_{i,}^*$, $\bM_V = -q^{-1}\sum_{j=1}^q
(\bV_{j,}^*)^\T \bH_{V,j}^{-1} \big\{\sum_{i=1}^n \xi_{ij}(\bU_{i,}^*)^\T\big\}$, and $\bM_{\rm quad} = n^{-1}\bar\bE_U^\T\bar\bE_U-q^{-1}\bar\bE_V^\T\bar\bE_V$. Then  \eqref{eq_rect_bar_M_expand} becomes
\begin{equation}\label{eq_rect_bar_M_decomp}
\bM(\bar\bU,\bar\bV) = \bM_U-\bM_V+\bM_{\rm quad},
\end{equation}
Note that $\bM_U$ is a sum of independent mean zero random matrices over $i\in[n]$, while
$\bM_V$ is a sum of independent mean-zero random matrices over $j\in[q]$.
For each $i$, letting $\bM_{U,i} =  - n^{-1}\bH_{U,i}^{-1} \big\{\sum_{j=1}^q \xi_{ij}(\bV_{j,}^*)^\T\big\}\bU_{i,}^*$, we have $\EE(\bM_{U,i})=\zero$ and
\begin{align*}
\Big\|\EE(\bM_{U,i}\bM_{U,i}^\T)\Big\| &\le \frac{\|\bU_{i,}^*\|^2}{n^2} \Big\| \bH_{U,i}^{-1} \EE\Big[ \Big\{\sum_{j=1}^q \xi_{ij}(\bV_{j,}^*)^\T\Big\} \times \Big\{\sum_{j=1}^q \xi_{ij}\bV_{j,}^*\Big\} \Big] \bH_{U,i}^{-1} \Big\| \\
&= \frac{\nu_{\star}\|\bU_{i,}^*\|^2}{n^2} \big\| \bH_{U,i}^{-1} \Big\{\sum_{j=1}^q \omega_{ij}(\bV_{j,}^*)^\T\bV_{j,}^*\Big\} \bH_{U,i}^{-1} \big\| \\
&= \frac{\nu_{\star}\|\bU_{i,}^*\|^2}{n^2}\|\bH_{U,i}^{-1}\| \le \frac{\nu_{\star}\|\bU_{i,}^*\|^2}{\alpha n^2 q\sigma_{\min}}.
\end{align*}
Here, the expectation is taken with respect to $\bY$, and note that $w_{ij}$ is independent of $\bY$, and so is $\bH_{U,i}$. Thus, we arrive at
\begin{equation*}
\Big\|\sum_{i=1}^n \EE(\bM_{U,i}\bM_{U,i}^\T)\Big\| \le \frac{\nu_{\star}\|\bU^*\|_{\Fn}^2}{\alpha n^2 q\sigma_{\min}}\le C \frac{\nu_{\star}\kappa r}{\alpha nq}.
\end{equation*}
Moreover, since $|\xi_{ij}|\le 1$ for all $(i,j)$, we know $\|\sum_{j=1}^q \xi_{ij}(\bV_{j,}^*)^\T\| \le \sum_{j=1}^q \|\bV_{j,}^*\| \le q\|\bV^*\|_{2\to\infty}$, which yields
\begin{equation*}
\|\bM_{U,i}\| \le \frac{1}{n}\|\bH_{U,i}^{-1}\| \Big\|\sum_{j=1}^q \xi_{ij}(\bV_{j,}^*)^\T\Big\| \|\bU_{i,}^*\| \le \frac{\nu_{\star}\|\bU_{i,}^*\|\,\|\bV^*\|_{2\to\infty}}{\alpha n\sigma_{\min}}.
\end{equation*}
Thus matrix Bernstein yields
\begin{equation}\label{eq_rect_bar_M_linear_bound}
\|\bM_U\|\le C \sqrt{\frac{\nu_{\star}\kappa L_{\star}}{\alpha nq}} + C\frac{\nu_{\star}\omega_*\|\bV^*\|_{2\to\infty}L_{\star}}{\alpha n\sigma_{\min}} ,
\end{equation}
with probability at least $1-\delta$. The second term is of lower order under the present
scaling assumptions. A completely symmetric argument gives the same bound for $\|\bM_V\|$.

On the other hand, it is easy to see that $\|\bM_{\rm quad}\| \le \|(\bar\bE_U,\bar\bE_V)\|_{\Fnw}^2$
Combining this with \eqref{eq_rect_bar_M_decomp},  and \eqref{eq_rect_bar_M_linear_bound}, we obtain
\begin{equation}\label{eq_rect_bar_M_bound_refined}
\|\bM(\bar\bU,\bar\bV)\|\le C\sqrt{\frac{\nu_\star\beta\kappa L_\star}{\alpha nq}} + C \frac{\nu_\star\omega_*^2L_\star}{\alpha m_\star\sigma_{\min}} + \|(\bar\bE_U,\bar\bE_V)\|_{\Fnw}^2 .
\end{equation}
For $p_\alpha^\natural(\bU,\bV) = \alpha nq\|\bM^*(\bU,\bV)\|_{\Fn}^2/4$, we know  $q^{-1}\nabla_{\bU}p_\alpha^\natural(\bar\bU,\bar\bV) = \alpha\{\bar\bU\bM(\bar\bU,\bar\bV)^\T + \bar\bE_U\bM(\bar\bU,\bar\bV)\}/2$ and $n^{-1}\nabla_{\bV}p_\alpha^\natural(\bar\bU,\bar\bV) = -\alpha\{\bar\bE_V\bM(\bar\bU,\bar\bV)^\T + \bar\bV\bM(\bar\bU,\bar\bV)\}/2$. We then conclude that
\begin{align}
\big\|\bS_Z^2\nabla_{\bZ}p_{\alpha}^{\natural}(\bar\bU,\bar\bV)\big\|_{\twinfw}&\le
\alpha\Big(\omega_*+\|(\bar\bE_U,\bar\bE_V)\|_{\twinfw} \Big)\|\bM(\bar\bU,\bar\bV)\|\nonumber\\
&\le C\alpha\Big(
\omega_*+\|(\bar\bE_U,\bar\bE_V)\|_{\twinfw} \Big)
\Bigg[\nu_{\star}\sqrt{\frac{\beta\kappa L_{\star}}{\alpha nq}}+
\|(\bar\bE_U,\bar\bE_V)\|_{\Fnw}^2
\Bigg].
\label{eq_rect_bar_penalty_bound_refined}
\end{align}
Combining \eqref{eq_rect_bar_grad_u}--\eqref{eq_rect_bar_cross_bound_u},
\eqref{eq_rect_bar_grad_remainder}, and \eqref{eq_rect_bar_penalty_bound_refined}, we obtain
 \begin{equation}\label{eq_lemma_rect_auxiliary_bar_score}
    \begin{aligned}
    R_{\bar Z}\le {}\,&C\nu_{\star}L_{\star}\sqrt{\frac{\beta }{\alpha n q\,\sigma_{\min}}} + C\frac{\nu_{\star}\beta\,\omega_*}{\alpha (n\wedge q)\sigma_{\min}} + C\frac{\nu_{\star}^2L_{\star}\omega_*}{\alpha\sigma_{\min}{nq}} + C\beta\omega_*\sqrt{\frac{\nu_{\star}\kappa L_{\star}}{\alpha nq}} +\frac{\nu_{\star}\beta\omega_*^3 L_{\star}}{\alpha nq\sigma_{\min}}\\
    &\quad+C\alpha\Big(\omega_*+\|(\bar\bE_U,\bar\bE_V)\|_{\twinfw}\Big)\Bigg[\sqrt{\frac{\nu_{\star}\kappa L_{\star}}{\alpha nq}} + \frac{\nu_{\star}\omega_*^2L_{\star}}{\alpha(n\wedge q)\sigma_{\min}} +\|(\bar\bE_U,\bar\bE_V)\|_{\Fnw}^2\Bigg]\\
    &\quad+ C (\alpha+\beta)\sqrt{\kappa\sigma_{\min}}\,\|(\bar\bE_U,\bar\bE_V)\|_{\Fnw}\,\|(\bar\bE_U,\bar\bE_V)\|_{\twinfw}\\
    &\quad + C(\alpha+\beta)\omega_*\,\|(\bar\bE_U,\bar\bE_V)\|_{\Fnw}^2.\end{aligned}
    \end{equation}

We next bound $(\bar\bE_U,\bar\bE_V)$. Recall that $\bar\bE_{U,i} = -\bH_{U,i}^{-1}\sum_{j=1}^q \xi_{ij}(\bV_{j,}^*)^\T$ and $\bar\bE_{V,j} = -\bH_{V,j}^{-1}\sum_{i=1}^n \xi_{ij}(\bU_{i,}^*)^\T$. With $\|\bH_{U,i}^{-1}\|\le (\alpha q\sigma_{\min})^{-1}$ and $\|\bH_{V,j}^{-1}\|\le (\alpha n\sigma_{\min})^{-1}$, we have $\|\bar\bE_U\|_{2\to\infty} \le \|\xi\bV^*\|_{2\to\infty}/(\alpha q\sigma_{\min})$ and $\|\bar\bE_V\|_{2\to\infty} \le \|\xi^\T\bU^*\|_{2\to\infty}/(\alpha n\sigma_{\min})$.
Consequently, one has
\begin{equation*}
\|(\bar\bE_U,\bar\bE_V)\|_{\twinfw}
\le \frac{1}{\alpha\sigma_{\min}} \left\{ q^{-1}\|\xi\bV^*\|_{2\to\infty} \vee n^{-1}\|\xi^\T\bU^*\|_{2\to\infty} \right\}.
\end{equation*}
Invoking item~\ref{item_bern_5}, we get
\begin{equation}
\label{eq_rect_bar_twinf_fixed}
\|(\bar\bE_U,\bar\bE_V)\|_{\twinfw} \le \frac{C\Delta_{\infty}(n,q,\delta)}{\alpha\sigma_{\min}} \omega_*.
\end{equation}
Similarly, by summing the row-wise bounds, there is
\begin{align*}
n^{-1}\|\bar\bE_U\|_{\Fn}^2 &= n^{-1}\sum_{i=1}^n \|\bar\bE_{U,i}\|^2 \le \frac{1}{\alpha^2 q^2\sigma_{\min}^2}\, n^{-1}\|\xi\bV^*\|_{\Fn}^2,\\
q^{-1}\|\bar\bE_V\|_{\Fn}^2 &= q^{-1}\sum_{j=1}^q \|\bar\bE_{V,j}\|^2 \le \frac{1}{\alpha^2 n^2\sigma_{\min}^2}\, q^{-1}\|\xi^\T\bU^*\|_{\Fn}^2.
\end{align*}
Therefore, we arrive at
\begin{equation}\label{eq_rect_bar_F_direct}
\|(\bar\bE_U,\bar\bE_V)\|_{\Fnw} \le
C\frac{\tau_*}{\alpha\sigma_{\min}} \left\{ \frac{\nu_\star}{n\wedge q} + \frac{\nu_\star^2\log((n+q)/\delta)}{nq} \right\}^{1/2} \le C\frac{\Delta_2(n,q,\delta)}{\alpha\sigma_{\min}}\tau_* ,
\end{equation}
which together with \eqref{eq_rect_bar_twinf_fixed} gives
\begin{equation}
\frac{\|(\bar\bE_U,\bar\bE_V)\|_{\twinfw}}{\omega_*}\le \frac{\Delta_{\infty}(n,q,\delta)}{\alpha\sigma_{\min}}, \qquad
\frac{\|(\bar\bE_U,\bar\bE_V)\|_{\Fnw}}{\tau_*}\le \frac{\Delta_{2}(n,q,\delta)}{\alpha\sigma_{\min}}.
\end{equation}
Next, substituting \eqref{eq_rect_bar_twinf_fixed} and \eqref{eq_rect_bar_F_direct} into the last three lines of \eqref{eq_lemma_rect_auxiliary_bar_score}, and using $\|\bU^*\|_{\twinf}\vee \|\bV^*\|_{\twinf}\le\omega_*$, one can check that with the scaling conditions~\eqref{eq_bern_prop_scaling}, the right side of \eqref{eq_lemma_rect_auxiliary_bar_score} can be further bounded by $C\Delta_{\infty}(n,q,\delta)\omega_*$, which therefore proves \eqref{eq_lemma_rect_auxiliary_bar_score_simple}.

We now show the second part of Lemma~\ref{lemma_rect_auxiliary_bar}.
Define $\bDelta = (\bDelta_U^\T,\bDelta_V^\T)^\T$ with $(\bDelta_U,\bDelta_V):=(\tilde\bU-\bar\bU,\tilde\bV-\bar\bV)$, and let $(\bU(s),\bV(s))=(\bar\bU,\bar\bV)+s(\bDelta_U,\bDelta_V)$ for $s\in[0,1]$. Applying the integral mean value theorem to
$\mathrm{vec}\{\nabla_{\bZ}h_{\alpha}^{\natural}(\cdot)^\T\}$ along this segment yields
\begin{equation}\label{eq_rect_bar_mvt_refined}
\bS_z\widehat{\cH}\bS_z\,\mathrm{vec}\big((\bS_Z^{-1}\bDelta)^\T\big) = \mathrm{vec}\Big( \bS_Z\{\nabla_{\bZ}h_{\alpha}^{\natural}(\tilde\bU,\tilde\bV) - \nabla_{\bZ}h_{\alpha}^{\natural}(\bar\bU,\bar\bV)\}^\T \Big),
\end{equation} where $\widehat{\cH} := \int_0^1 \nabla_{\bz}^2 h_{\alpha}^{\natural}(\bU(s),\bV(s))\,ds$ and recall that we have defined 
\begin{equation*}
\bS_Z=\begin{pmatrix}q^{-1/2}\bI_n&\zero\\ \zero&n^{-1/2}\bI_q\end{pmatrix},\qquad
\bS_z=\begin{pmatrix}q^{-1/2}\bI_{nr}&\zero\\ \zero&n^{-1/2}\bI_{qr}\end{pmatrix}.
\end{equation*}
Compared with a similar expansion \eqref{eq_rect_mvt_optimizer}, the terms corresponding to $\Gamma_2$ and $\Gamma_3$ are absorbed into $\widehat{\cH}$. In particular, as in Step~1 of the proof of Theorem~\ref{thm_general_theory_UV_noise}, Lemma~\ref{lemma_general_convex_asym} gives
\begin{equation*}
\lambda_{\min}\!\left(\bS_z \int_0^1\{\nabla_{\bz}^2 h_{\alpha}^{\natural}(\bU(s),\bV(s))-\cG_e(\bU(s),\bV(s))\}\,ds \bS_z \right) \ge \frac{\alpha\sigma_{\min}}{2}.
\end{equation*}
For the Bernoulli model, $\tilde\cG(\bU,\bV)=\bP^*-\bY$ is constant, hence $\int_0^1\{\cG_e(\bU(s),\bV(s))-\EE\cG_e(\bU(s),\bV(s))\}\,ds$ $= \tilde\cG_{\rm ave}$, with $\|\bS_z\tilde\cG_{\rm ave}\bS_z\|\le C\Delta_2(n,q,\delta)$, while $\int_0^1\EE\cG_e(\bU(s),\bV(s))\,ds = \bar\cG_{\rm ave}$ satisfies
\begin{equation*}
\|\bS_z\bar\cG_{\rm ave}\bS_z\| \le C\sqrt{\kappa\sigma_{\min}} \Big( \|(\bar\bE_U,\bar\bE_V)\|_{\Fnw} + \|(\bDelta_U,\bDelta_V)\|_{\Fnw} \Big).
\end{equation*}
Therefore, under the same small constant conditions as in
Theorem~\ref{thm_general_theory_UV_noise}, both terms are absorbed, and hence
\begin{equation}\label{eq_rect_bar_Hhat_lb_refined}
\lambda_{\min}(\bS_z\widehat{\cH}\bS_z)\ge \frac{\alpha\sigma_{\min}}{4}.
\end{equation}
Since $(\tilde\bU,\tilde\bV)$ minimizes $(\bU,\bV)\mapsto \|\bS_Z^2\nabla_{\bZ}h_{\alpha}^{\natural}(\bU,\bV)\|_{\twinfw}^2$ over $\cDde$, we have $\big\|\bS_Z^2\nabla_{\bZ}h_{\alpha}^{\natural}(\tilde\bU,\tilde\bV)\big\|_{\twinfw}$ $\le R_{\bar Z}$. We therefore arrive at
\begin{equation}\label{eq_rect_bar_scoreF_refined}
\begin{aligned}
\big\|\bS_Z\{\nabla_{\bZ}h_{\alpha}^{\natural}(\tilde\bU,\tilde\bV) - \nabla_{\bZ}h_{\alpha}^{\natural}(\bar\bU,\bar\bV)\}\big\|_{\Fn} &\le\big\|\bS_Z\nabla_{\bZ}h_{\alpha}^{\natural}(\tilde\bU,\tilde\bV)\big\|_{\Fn} + \big\|\bS_Z\nabla_{\bZ}h_{\alpha}^{\natural}(\bar\bU,\bar\bV)\big\|_{\Fn} \\
&\le 2\sqrt{2nq}\,R_{\bar Z}.
\end{aligned}
\end{equation}
Combining \eqref{eq_lemma_rect_auxiliary_bar_score_simple} \eqref{eq_lemma_rect_auxiliary_bar_score}, \eqref{eq_rect_bar_mvt_refined}, \eqref{eq_rect_bar_Hhat_lb_refined}, and
\eqref{eq_rect_bar_scoreF_refined}, we obtain \eqref{eq_lemma_rect_auxiliary_bar_close}.

Finally, because $\xi=\nu_{\star}(\bP^*-\bY)$, the two inequalities \eqref{eq_rect_bar_XiEV_bound2} and \eqref{eq_rect_bar_XtEU_bound2}, after replacing $\log(n/\delta)$ and $\log(q/\delta)$ by $\log((n+q)/\delta)$, using $(n\wedge q)^{-1}\ge n^{-1}\vee q^{-1}$ and invoking
scaling condition~\eqref{eq_bern_prop_scaling}, give
\eqref{eq_lemma_rect_auxiliary_bar_noise}.


\subsection{Proof of Lemma~\ref{lem_tentative_loo_gradient_mc}}\label{supp_sec_prove_lem_tentative_loo_gradient_mc}
\begin{proof}
Write $L_{\star} = r+\log\{(n+q)T_{\star}/\delta\}$. The Bernstein inequalities below are applied with failure probability of order $\delta/T_\star$ for each step $t\in\{1,\dots,T_{\star}\}$ and then union bounded over rows, columns. Let $\cL^{-i}(\bX)$ and $\cL^{-\ell}(\bX)$ denote the same empirical loss as $\cL(\bX)$, but constructed from $\bY^{-i}$ and $\bY^{-\ell}$, respectively, where the leave-one-out data matrices are defined in Theorem~\ref{Thm_example2}. Denote the corresponding gradient matrices by
\begin{equation}
\cG^{-i}(\bU,\bV):=\nabla_{\bX}\cL^{-i}(\bU\bV^\T),\quad \cG^{-\ell}(\bU,\bV):=\nabla_{\bX}\cL^{-\ell}(\bU\bV^\T).
\end{equation}

Starting from $(\bU^{0,-i},\bV^{0,-i})$, define the row wise leave-one-out gradient iterates by
\begin{equation*}
\bU^{t+1,-i} = \bU^{t,-i}-\frac{\eta}{q}\nabla_{\bU}\cL^{-i}(\bU^{t,-i}\bV^{t,-i\T}), \quad \bV^{t+1,-i} = \bV^{t,-i}-\frac{\eta}{n}\nabla_{\bV}\cL^{-i}(\bU^{t,-i}\bV^{t,-i\T}).
\end{equation*}
Starting from $(\bU^{0,-\ell},\bV^{0,-\ell})$, define the column wise leave-one-out gradient iterates similarly by
\begin{equation*}
\bU^{t+1,-\ell} = \bU^{t,-\ell}-\frac{\eta}{q}\nabla_{\bU}\cL^{-\ell}(\bU^{t,-\ell}\bV^{t,-\ell\T}), \qquad \bV^{t+1,-\ell} = \bV^{t,-\ell}-\frac{\eta}{n}\nabla_{\bV}\cL^{-\ell}(\bU^{t,-\ell}\bV^{t,-\ell\T}).
\end{equation*}
By construction, $(\bU^{0,-i},\bV^{0,-i})$ is independent of $\{\bY_{i\ell}\}_{\ell\in[q]}$, and the data matrix $\bY^{-i}$ has deterministic $i$th row equal to $\bP^*_{i,\cdot}$. Hence the whole trajectory $\{(\bU^{t,-i},\bV^{t,-i})\}_{t\ge 0}$ is independent of $\{\bY_{i\ell}\}_{\ell\in[q]}$.
Likewise, $\{(\bU^{t,-\ell},\bV^{t,-\ell})\}_{t\ge 0}$ is independent of $\{\bY_{k\ell}\}_{k\in[n]}$.

Next, recall that at Step~3 in the proof of Theorem~\ref{thm_general_theory_UV_noise}, we have defined $\bG_t^{\dagger}$ to be an optimal alignment matrix for the main iterate $(\bU^t,\bV^t)$ relative to $(\tilde\bU,\tilde\bV)$ with $\tilde\bU^t:=\bU^t\bG_t^{\dagger}$ and $\tilde\bV^t:=\bV^t(\bG_t^{\dagger})^{-\T}$. Analogously, for each $i\in[n]$, let $\bG_t^{-i}$ be an optimal alignment matrix for $(\bU^{t,-i},\bV^{t,-i})$ relative to $(\tilde\bU,\tilde\bV)$, i.e., $\bG_t^{-i}\in\argmin_{\bG\in GL(r)} \|(\bU^{t,-i}\bG-\tilde\bU,\bV^{t,-i}\bG^{-\T}-\tilde\bV)\|_{\Fnw}$, and define
\begin{equation*}
\tilde\bU^{t,-i}:=\bU^{t,-i}\bG_t^{-i}, \qquad \tilde\bV^{t,-i}:=\bV^{t,-i}(\bG_t^{-i})^{-\T}.
\end{equation*}
For each $\ell\in[q]$, let $\bG_t^{-\ell}$ be an optimal alignment matrix for $(\bU^{t,-\ell},\bV^{t,-\ell})$ relative to $(\tilde\bU,\tilde\bV)$, and define
\begin{equation*}
\tilde\bU^{t,-\ell}:=\bU^{t,-\ell}\bG_t^{-\ell}, \qquad \tilde\bV^{t,-\ell}:=\bV^{t,-\ell}(\bG_t^{-\ell})^{-\T}.
\end{equation*}
Write the error as
\begin{equation*}
\tilde\bE_U^{t,-i}:=\tilde\bU^{t,-i}-\bU^*, \;
\tilde\bE_V^{t,-i}:=\tilde\bV^{t,-i}-\bV^*, \;
\tilde\bE_U^{t,-\ell}:=\tilde\bU^{t,-\ell}-\bU^*, \;
\tilde\bE_V^{t,-\ell}:=\tilde\bV^{t,-\ell}-\bV^*.
\end{equation*}
Let $\bLambda_t^{\dagger}:=(\bG_t^{\dagger})^\T\bG_t^{\dagger}$, $\bLambda_t^{-i}:=(\bG_t^{-i})^\T\bG_t^{-i}$, and $\bLambda_t^{-\ell}:=(\bG_t^{-\ell})^\T\bG_t^{-\ell}$. 
By the same identity as in Step~3 of the proof of Theorem~\ref{thm_general_theory_uv}, the balancing penalty has vanishing gradient at each aligned iterate.  Therefore, the row-wise leave-one-out sequence admits
\begin{align*}
\bar\bU^{t+1,-i}:=&\, \bU^{t+1,-i}\bG_t^{-i}\\ =&\,
\tilde\bU^{t,-i} -\frac{\eta}{q}\nabla_{\bU}h_\alpha^{-i,\natural}(\tilde\bU^{t,-i},\tilde\bV^{t,-i}) -\frac{\eta}{q}\nabla_{\bU}h_\alpha^{-i,\natural}(\tilde\bU^{t,-i},\tilde\bV^{t,-i})(\bLambda_t^{-i}-\bI_r),\\
\bar\bV^{t+1,-i}:=&\,\bV^{t+1,-i}(\bG_t^{-i})^{-\T}\\ =&\, \tilde\bV^{t,-i}
-\frac{\eta}{n}\nabla_{\bV}h_\alpha^{-i,\natural}(\tilde\bU^{t,-i},\tilde\bV^{t,-i}) -\frac{\eta}{n}\nabla_{\bV}h_\alpha^{-i,\natural}(\tilde\bU^{t,-i},\tilde\bV^{t,-i})\big\{(\bLambda_t^{-i})^{-1}-\bI_r\big\},
\end{align*}
where we write $h_\alpha^{-i,\natural}(\bU,\bV):=\cL^{-i}(\bU\bV^\T)+p_\alpha^\natural(\bU,\bV)$.
Similarly, for $h_\alpha^{-\ell,\natural}(\bU,\bV):=\cL^{-\ell}(\bU\bV^\T)+p_\alpha^\natural(\bU,\bV)$, the column-wise leave-one-out sequence admits
\begin{align*}
\bar\bU^{t+1,-\ell}:= &\, \bU^{t+1,-\ell}\bG_t^{-\ell} \\=&\,
\tilde\bU^{t,-\ell} -\frac{\eta}{q}\nabla_{\bU}h_\alpha^{-\ell,\natural}(\tilde\bU^{t,-\ell},\tilde\bV^{t,-\ell}) -\frac{\eta}{q}\nabla_{\bU}h_\alpha^{-\ell,\natural}(\tilde\bU^{t,-\ell},\tilde\bV^{t,-\ell})(\bLambda_t^{-\ell}-\bI_r), \\
\bar\bV^{t+1,-\ell} :=&\, \bV^{t+1,-\ell}(\bG_t^{-\ell})^{-\T}\\
=&\, \tilde\bV^{t,-\ell} -\frac{\eta}{n}\nabla_{\bV}h_\alpha^{-\ell,\natural}(\tilde\bU^{t,-\ell},\tilde\bV^{t,-\ell}) -\frac{\eta}{n}\nabla_{\bV}h_\alpha^{-\ell,\natural}(\tilde\bU^{t,-\ell},\tilde\bV^{t,-\ell})\big\{(\bLambda_t^{-\ell})^{-1}-\bI_r\big\}.
\end{align*}
Recall that the full data gradient update can also be written as
\begin{align*}\bar\bU^{t+1} =&\,
\tilde\bU^t -\frac{\eta}{q}\nabla_{\bU}h_\alpha^\natural(\tilde\bU^t,\tilde\bV^t) -\frac{\eta}{q}\nabla_{\bU}h_\alpha^\natural(\tilde\bU^t,\tilde\bV^t)(\bLambda_t^{\dagger}-\bI_r),\\
\bar\bV^{t+1}= &\, \tilde\bV^t-\frac{\eta}{n}\nabla_{\bV}h_\alpha^\natural(\tilde\bU^t,\tilde\bV^t)-\frac{\eta}{n}\nabla_{\bV}h_\alpha^\natural(\tilde\bU^t,\tilde\bV^t)\big\{(\bLambda_t^{\dagger})^{-1}-\bI_r\big\}.\end{align*}

We focus on establishing the error bounds for the row-wise leave-one-out sequence. The column-wise argument can be similarly obtained by replacing $(n,\bU)$ with $(q,\bV)$. Let $\bDelta_U^{t,-i}:=\tilde\bU^{t,-i}-\tilde\bU^t$ and $\bDelta_V^{t,-i}:=\tilde\bV^{t,-i}-\tilde\bV^t$. These quantities would be much smaller than the error between the iterates and LOO iterates to the true $(\bU^*,\bV^*)$. Define
\begin{align*}
& a_t^{\mathrm{row}} := \max_{i\in[n]} \Big\{ n^{-1/2}\|\tilde\bU^{t,-i}-\tilde\bU^t\|_{\Fn} \vee q^{-1/2}\|\tilde\bV^{t,-i}-\tilde\bV^t\|_{\Fn} \Big\},\\
& a_t^{\mathrm{col}} := \max_{\ell\in[q]} \Big\{ n^{-1/2}\|\tilde\bU^{t,-\ell}-\tilde\bU^t\|_{\Fn} \vee q^{-1/2}\|\tilde\bV^{t,-\ell}-\tilde\bV^t\|_{\Fn} \Big\},\\
& b_t := \max_{i\in[n]}\|\tilde\bU^{t,-i}_{i,\cdot}-\bU^*_{i,\cdot}\| \vee \max_{\ell\in[q]}\|\tilde\bV^{t,-\ell}_{\ell,\cdot}-\bV^*_{\ell,\cdot}\|,\\
&c_t := \max_{i\in[n]} \Big( \|\bG_t^{-i}-\bR^0\| \vee \|(\bG_t^{-i})^{-\T}-\bR^0\| \Big) \vee \max_{\ell\in[q]} \Big( \|\bG_t^{-\ell}-\bR^0\| \vee \|(\bG_t^{-\ell})^{-\T}-\bR^0\| \Big).
\end{align*}
In addition, for the full data gradient descent sequence, we write
\begin{align*}
    d_t:=\| (\tilde\bU^t - \tilde\bU,&\,\tilde\bV^t - \tilde\bV)\|_{\Fnw}, \qquad r_t:=\| (\tilde\bU^t - \tilde\bU, \tilde\bV^t - \tilde\bV)\|_{\twinfw},\\&\,
    \vartheta_t := \|\bLambda_t^{\dagger}-\bI_r\| \vee \|(\bLambda_t^{\dagger})^{-1}-\bI_r\|.
\end{align*}
One can show by induction that, for all $0\le s\le t$,
\begin{align}
 a_t^{\rm row} &\le \rho^t a_0^{\rm row} +\frac{C}{\alpha\sigma_{\min}} \left\{ \tau_*\sqrt{\frac{\nu_{\star}\beta L_\star}{nq}} + \nu_{\star}\omega_*\frac{L_{\star}}{q\sqrt{n}} + \omega_*\sqrt{\nu_{\star}\frac{q\beta+\nu_{\star}L_\star}{n^2q}} \right\}, \label{eq_loo_pf_ind_row}\\
 a_t^{\rm col} &\le \rho^t a_0^{\rm col} +\frac{C}{\alpha\sigma_{\min}} \left\{ \tau_*\sqrt{\frac{\nu_{\star}\beta L_\star}{nq}} + \nu_{\star}\omega_*\frac{L_{\star}}{n\sqrt{q}} +\omega_*\sqrt{\nu_{\star}\frac{n\beta+L_\star}{nq^2}} \right\}, \label{eq_loo_pf_ind_col}\\
 b_t &\le\rho^tb_0+ C\frac{\Delta_{\infty}(n,q,\delta)}{\alpha\sigma_{\min}}\omega_*, \label{eq_loo_pf_ind_b}\\
 c_t &\le 2\iota_0\frac{\alpha}{\beta\kappa}, \label{eq_loo_pf_ind_c}\\
 \delta_{3,t}^{\dagger,U}\vee \delta_{3,t}^{\dagger,V} &\le \eta\big\{\frac{1}{8}\alpha\sigma_{\min}\rho^t\psi_{nq}^{\dagger} + C\Delta_{\infty}(n,q,\delta)\big\}\omega_*.\label{eq_loo_pf_ind_delta}
\end{align}
The quantities $d_t$, $r_t$ and $\vartheta_t$ satisfy the same local contraction arguments as in \ref{item_uv_ind_l2_dagger}--\ref{item_uv_ind_rot_dagger} from Step~3 of Section~\ref{supp_sec_prove_thm_general_theory_UV_noise}, where $\iota_0$ is the constant appearing in induction hypothesis~\ref{item_uv_ind_rot_dagger}. In particular, once \eqref{eq_loo_pf_ind_delta} is available, the same argument as in the proof of Theorem~\ref{thm_general_theory_UV_noise} yields the contractions for $d_{t+1}$, $r_{t+1}$, and $\vartheta_{t+1}$.

We therefore proceed as follows. First, assuming \eqref{eq_loo_pf_ind_delta} holds for the previous step, we obtain the bounds for $d_{t+1}$, $r_{t+1}$, and $\vartheta_{t+1}$ exactly as in the proof of Theorem~\ref{thm_general_theory_UV_noise}. Next, assuming that the local contraction bounds \ref{item_uv_ind_l2_dagger}--\ref{item_uv_ind_rot_dagger} hold, we verify \eqref{eq_loo_pf_ind_row}--\eqref{eq_loo_pf_ind_c}. Finally, we prove \eqref{eq_loo_pf_ind_delta}. This closes the induction. At time $t=0$, the bounds \eqref{eq_loo_pf_ind_row}--\eqref{eq_loo_pf_ind_c} follow from the initialization assumptions in Theorem~\ref{Thm_example2}.

For $a_t^{\mathrm{row}}$, we begin by bounding $\bar\bU^{t+1,-i}:=\bU^{t+1,-i}\bG_t^{-i}$ and $\bar\bV^{t+1,-i}:=\bV^{t+1,-i}(\bG_t^{-i})^{-\T}$. Recall that $\bar\bU^{t+1}:=\bU^{t+1}\bG_t^{\dagger}$ and $\bar\bV^{t+1}:=\bV^{t+1}(\bG_t^{\dagger})^{-\T}$. Let
\begin{equation}
\bar\bDelta_U^{t+1,-i}:=\bar\bU^{t+1,-i}-\bar\bU^{t+1}, \qquad \bar\bDelta_V^{t+1,-i}:=\bar\bV^{t+1,-i}-\bar\bV^{t+1}.
\end{equation} 
Using the aligned update identities for the main sequence and the leave-one-out sequence, we obtain
\begin{align}
\bar\bDelta_U^{t+1,-i} ={}&\, \bDelta_U^{t,-i} -\frac{\eta}{q}\Big\{ \nabla_{\bU}h_\alpha^{-i,\natural}(\tilde\bU^{t,-i},\tilde\bV^{t,-i}) - \nabla_{\bU}h_\alpha^\natural(\tilde\bU^t,\tilde\bV^t) \Big\} \nonumber\\
&\, -\frac{\eta}{q}\Big\{ \nabla_{\bU}h_\alpha^{-i,\natural}(\tilde\bU^{t,-i},\tilde\bV^{t,-i})(\bLambda_t^{-i}-\bI_r) - \nabla_{\bU}h_\alpha^\natural(\tilde\bU^t,\tilde\bV^t)(\bLambda_t^{\dagger}-\bI_r) \Big\}, \label{eq_loo_pf__du} \\
\bar\bDelta_V^{t+1,-i} ={}&\, \bDelta_V^{t,-i} -\frac{\eta}{n}\Big\{ \nabla_{\bV}h_\alpha^{-i,\natural}(\tilde\bU^{t,-i},\tilde\bV^{t,-i}) - \nabla_{\bV}h_\alpha^\natural(\tilde\bU^t,\tilde\bV^t) \Big\} \nonumber\\
&\, -\frac{\eta}{n}\Big\{ \nabla_{\bV}h_\alpha^{-i,\natural}(\tilde\bU^{t,-i},\tilde\bV^{t,-i})\big((\bLambda_t^{-i})^{-1}-\bI_r\big) - \nabla_{\bV}h_\alpha^\natural(\tilde\bU^t,\tilde\bV^t)\big((\bLambda_t^{\dagger})^{-1}-\bI_r\big) \Big\}. \label{eq_loo_pf__dv}
\end{align}
We first analyze the principal gradient difference $\nabla_{\bU}h_\alpha^{-i,\natural}(\tilde\bU^{t,-i},\tilde\bV^{t,-i}) - \nabla_{\bU}h_\alpha^\natural(\tilde\bU^t,\tilde\bV^t)$. Note that it can be decomposed as
\begin{align}
 \Big\{ \nabla_{\bU}h_\alpha^{-i,\natural}(\tilde\bU^{t,-i},\tilde\bV^{t,-i}) \!- \! \nabla_{\bU}h_\alpha^{\natural}(\tilde\bU^{t,-i},\tilde\bV^{t,-i}) \Big\}  - \Big\{\nabla_{\bU}h_\alpha^{\natural}(\tilde\bU^{t,-i},\tilde\bV^{t,-i})\! -  \!\nabla_{\bU}h_\alpha^\natural(\tilde\bU^t,\tilde\bV^t)\Big\}.
\label{eq_loo_pf_grad_split_u}
\end{align}
The same decomposition holds for the $\bV$ gradient. We first bound the second term.
Since $\ell_{\alpha_0}''(x;y)=\nu_{\star}\sigma(x)\{1-\sigma(x)\}$ does not depend on $y$, the Hessians of $h_\alpha^\natural$ and $h_\alpha^{-i,\natural}$ are identical. Hence, by the fundamental theorem of calculus (Theorem 4.2 in \citet{lang2012real}, Chapter XIII) along the segment $\big(\bU_{t,i}(s), \bV_{t,i}(s)\big) = \big(\tilde\bU^t+s\bDelta_U^{t,-i}, \tilde\bV^t+s\bDelta_V^{t,-i}\big)$ for $s\in[0,1]$, we have
\begin{align*}
&\,\begin{pmatrix}
\mathrm{vec}\Big(\big[ \nabla_{\bU}h_\alpha^\natural(\tilde\bU^{t,-i},\tilde\bV^{t,-i}) - \nabla_{\bU}h_\alpha^\natural(\tilde\bU^t,\tilde\bV^t) \big]^\T \Big)\\ 
\mathrm{vec}\Big( \big[ \nabla_{\bV}h_\alpha^\natural(\tilde\bU^{t,-i},\tilde\bV^{t,-i}) - \nabla_{\bV}h_\alpha^\natural(\tilde\bU^t,\tilde\bV^t) \big]^\T \Big) \end{pmatrix}
\\=&\, \int_0^1 \nabla_{\bz}^2 h_\alpha^\natural(\bU_{t,i}(s),\bV_{t,i}(s))\,ds \begin{pmatrix} \mathrm{vec}\big((\bDelta_U^{t,-i})^\T\big)\\ \mathrm{vec}\big((\bDelta_V^{t,-i})^\T\big)
\end{pmatrix}.
\end{align*}
One can check that by the induction hypotheses and the bounds for the sequence $(\bU^{t,-i},\bV^{t,-i})$, the whole segment lies in the same local region $\cDuvinf$. Therefore the same Hessian contraction argument as in Step~1 of the proof of Theorem~\ref{thm_general_theory_UV_noise} gives
\begin{equation}
\label{eq_row_loo_main_contr_new}
\Big\| \bS_z^{-1} \begin{pmatrix} \mathrm{vec}\big((\bDelta_U^{t,-i})^\T\big) \\ \mathrm{vec}\big((\bDelta_V^{t,-i})^\T\big)\end{pmatrix}
\!-\! \eta \bS_z \int_0^1 \nabla_{\bz}^2 h_\alpha^\natural(\bU_{t,i}(s),\bV_{t,i}(s))\,ds \,\bS_z \bw_t^{-i} \Big\| \le \Big(1-\frac{5}{8}\eta\alpha\sigma_{\min}\Big)\|\bw_t^{-i}\|,
\end{equation}
where we write $\bw_t^{-i} = \bS_z^{-1}\big(\mathrm{vec}\big\{(\bDelta_U^{t,-i})^\T\big\}^\T,\mathrm{vec}\big\{(\bDelta_V^{t,-i})^\T\big\}^\T\big)^\T$.

We now bound the first term in \eqref{eq_loo_pf_grad_split_u}.  Note that
\begin{align}
\label{eq_row_loo_pert_u_exact_new}
\nabla_{\bU}h_\alpha^\natural(\tilde\bU^{t,-i},\tilde\bV^{t,-i}) - \nabla_{\bU}h_\alpha^{-i,\natural}(\tilde\bU^{t,-i},\tilde\bV^{t,-i}) = \nu_{\star}\be_i\Big\{\be_i^\T(\bP^*-\bY)\tilde\bV^{t,-i}\Big\},\\
\nabla_{\bV}h_\alpha^\natural(\tilde\bU^{t,-i},\tilde\bV^{t,-i}) - \nabla_{\bV}h_\alpha^{-i,\natural}(\tilde\bU^{t,-i},\tilde\bV^{t,-i}) = \nu_{\star}(\bP^*_{i,\cdot}-\bY_{i,\cdot})^\T \tilde\bU^{t,-i}_{i,\cdot}. \label{eq_row_loo_pert_v_exact_new}
\end{align}
Now the perturbation is evaluated at $(\tilde\bU^{t,-i},\tilde\bV^{t,-i})$, which depends on the $i$th row of the data through $\bG_t^{-i}$. To control this dependence, we let 
\begin{equation*}
    \check\bG_t^{-i}\in\argmin_{\bG\in GL(r)} \|(\bU^{t,-i}\bG-\bU^*,\bV^{t,-i}\bG^{-\T}-\bV^*)\|_{\Fnw},
\end{equation*}and set $\check\bU^{t,-i}:=\bU^{t,-i}\check\bG_t^{-i}$, $\check\bV^{t,-i}:=\bV^{t,-i}(\check\bG_t^{-i})^{-\T}$. Then one can easily check with Lemma~\ref{lemma_alignment_near_rotation} that 
\begin{equation}\label{eq_loo_mc_reference_change}
    \big\|\check\bG_t^{-i}(\bG_t^{-i})^{-1} - \bI_r\big\|\vee \big\|\bG_t^{-i}(\check\bG_t^{-i})^{-1} - \bI_r\big\|\le C\big(\rho^t\phi_{nq} + \frac{\Delta_{2}(n,q,\delta)}{\alpha\sigma_{\min}}\big)\frac{\tau_*}{\sqrt{\sigma_{\min}}}.
\end{equation}

Subsequently, let $\cF_{-i}:=\sigma\!\big(\{Y_{k\ell}:k\neq i,\ \ell\in[q]\}\big)$. Then $(\check\bU^{t,-i},\check\bV^{t,-i})$ is $\cF_{-i}$-measurable, while $\{Y_{ij}\}_{j=1}^q$ is independent of $\cF_{-i}$. By definition, we know $\nu_{\star}P_{ij}^*(1-P_{ij}^*)\le \beta$. Conditional on $\cF_{-i}$, with probability at least $1-Ce^{-cL_{\star}}$,
\begin{equation*}
\Big\|\sum_{j=1}^q \nu_\star (Y_{ij}-P_{ij}^*)(\check\bV^{t,-i})_{j,}\Big\| \le
C\left\{ \sqrt{\nu_\star\beta L_{\star}}\,\|\check\bV^{t,-i}\|_{\Fn} + \nu_\star L_{\star}\,\|\check\bV^{t,-i}\|_{2\to\infty} \right\}.
\end{equation*}
Furthermore, with \eqref{eq_loo_mc_reference_change}, one can obtain that 
\begin{align*}
\big\|\be_i^\T(\bY-\bP^*)\tilde\bV^{t,-i}\big\| \le &\,\big\|\be_i^\T(\bY-\bP^*)\check\bV^{t,-i}\big\|\Big(1 + \big\|\bG_t^{-i}(\check\bG_t^{-i})^{-1} - \bI_r\big\|\Big) \\ \le &\, C\big\{\sqrt{\beta L_{\star}/\nu_{\star}}\,\|\check\bV^{t,-i}\|_{\Fn} + L_{\star}\|\check\bV^{t,-i}\|_{\twinf}\big\}.
\end{align*}
Next, because $\|(\bY_{i,\cdot}-\bP^*_{i,\cdot})^\T \tilde\bU^{t,-i}_{i,\cdot}\|_{\Fn} = \|\bY_{i,\cdot}-\bP^*_{i,\cdot}\|_2\, \|\tilde\bU^{t,-i}_{i,\cdot}\|$, Bernstein inequality gives, with probability at least $1-\delta/n$,
\begin{equation*}
\|\bY_{i,\cdot}-\bP^*_{i,\cdot}\|_2 \le C\sqrt{(q\beta+\nu_{\star}L_{\star})/{\nu_{\star}}}.
\end{equation*}
Therefore, using the induction hypotheses and the full sequence bounds, we get
\begin{equation*}
\|\tilde\bV^{t,-i}\|_{\Fn} \le \|\tilde\bV^t\|_{\Fn}+\|\bDelta_V^{t,-i}\|_{\Fn} \le C\sqrt q\,\tau_*,
\qquad \|\tilde\bU^{t,-i}_{i,\cdot}\| \le \|\bU^*_{i,\cdot}\|+b_t \le C\omega_*.
\end{equation*}
Substituting these estimates into \eqref{eq_row_loo_pert_u_exact_new} and \eqref{eq_row_loo_pert_v_exact_new}, we obtain uniformly over $i\in[n]$,
\begin{align}
\frac{1}{q\sqrt n} \Big\| \nabla_{\bU}h_\alpha^\natural(\tilde\bU^{t,-i},\tilde\bV^{t,-i}) - \nabla_{\bU}h_\alpha^{-i,\natural}(\tilde\bU^{t,-i},\tilde\bV^{t,-i}) \Big\|_{\Fn} &\le C\left\{ \tau_*\sqrt{\frac{\nu_\star\beta L_\star}{nq}} + \nu_\star\omega_*\frac{L_\star}{q\sqrt n} \right\}, \label{eq_row_loo_pert_u_final_new} \\
\frac{1}{n\sqrt q} \Big\| \nabla_{\bV}h_\alpha^\natural(\tilde\bU^{t,-i},\tilde\bV^{t,-i}) - \nabla_{\bV}h_\alpha^{-i,\natural}(\tilde\bU^{t,-i},\tilde\bV^{t,-i}) \Big\|_{\Fn}
&\le C\omega_* \sqrt{\frac{\nu_\star(q\beta+\nu_\star L_\star)}{n^2q}} .
\label{eq_row_loo_pert_v_final_new}
\end{align}
Now for the second term in \eqref{eq_loo_pf__du} and \eqref{eq_loo_pf__dv} that controls the balancing along the LOO iterates, similar to handling $\gamma_{4,t}^{\dagger}$ in the proof of Theorem~\ref{thm_general_theory_UV_noise}.
Using $\|\bLambda_t^{-i}-\bLambda_t^*\|
\vee \|(\bLambda_t^{-i})^{-1}-(\bLambda_t^*)^{-1}\| \le 6\iota_0\alpha/(\beta\kappa)$,
together with the bounds
\eqref{eq_row_loo_pert_u_final_new} and \eqref{eq_row_loo_pert_v_final_new}, we know that the second term in \eqref{eq_loo_pf__du} and \eqref{eq_loo_pf__dv} can be bounded similar to \eqref{eq_row_loo_pert_u_final_new} and \eqref{eq_row_loo_pert_v_final_new}.
Thus, the error arising from balancing can be absorbed into \eqref{eq_row_loo_pert_u_final_new} and \eqref{eq_row_loo_pert_v_final_new}.

Combining \eqref{eq_loo_pf__du}, \eqref{eq_loo_pf__dv},
\eqref{eq_row_loo_main_contr_new}, \eqref{eq_row_loo_pert_u_final_new},
and \eqref{eq_row_loo_pert_v_final_new}, we know $\bar a^{\mathrm{row}}_{t} =\max_{i\in[n]}n^{-1/2}\big\|\bar\bDelta_U^{t+1,-i}\big\|_{\Fn}\vee \max_{i\in[n]}q^{-1/2}\big\|\bar\bDelta_V^{t+1,-i}\big\|_{\Fn} $ can be controlled by 
\begin{align*}
\bar a^{\mathrm{row}}_{t}\le \Big(1-\frac{5}{8}\eta\alpha\sigma_{\min}\Big)a_t^{\mathrm{row}} + C\eta \left\{ \tau_*\sqrt{\frac{\nu_\star\beta L_\star}{nq}} + \nu_\star\omega_*\frac{L_\star}{q\sqrt n} + \omega_*\sqrt{\frac{\nu_\star(q\beta+\nu_\star L_\star)}{n^2q}} \right\}.\end{align*}
The column-wise version $a_{t}^{\rm col}$ follows by swapping $n,\bU$ and $q,\bV$.

We next bound $b_{t+1}$. The key point is that $\be_i^\T(\bY^{-i}-\bP^*)=0$.
In particular, following the same row-wise decomposition for $\tilde\bU^{t,-i}_{i,\cdot}-\bU^*_{i,\cdot}$ as in Step~3 of the proof of Theorem~\ref{thm_general_theory_UV_noise}, the analogue of the stochastic term $\delta_{3,t}^{\dagger,(U)}$ vanishes identically at row $i$. This reduces to the deterministic scheme. Therefore, a similar argument can yield
\begin{equation*}
\|\bar\bU_{i,\cdot}^{t+1,-i}-\bU^*_{i,\cdot}\| \le \Big(1-\frac12\eta\alpha\sigma_{\min}\Big) \|\tilde\bU_{i,\cdot}^{t,-i}-\bU^*_{i,\cdot}\| +C\eta{\Delta_{\infty}(n,q,\delta)}\omega_*.
\end{equation*}

Similar to the proof of Lemma~\ref{lemma_alignment_near_rotation}, we can show the following: suppose $(\bU_1,\bV_1)$ and $(\bU_2,\bV_2)$ lie within the same local neighborhood of the reference $(\tilde\bU,\tilde\bV)$, and let their optimal alignments to $(\tilde\bU,\tilde\bV)$ be $\bG_1$ and $\bG_2$. If
\begin{equation*}
\max\left\{ n^{-1/2}\|\bU_1\bG_0-\bU_2\bH_0\|_{\Fn}, q^{-1/2}\|\bV_1\bG_0^{-\T}-\bV_2\bH_0^{-\T}\|_{\Fn} \right\}\le \delta\end{equation*}
for two well-conditioned alignments $\bG_0,\bH_0$, then there is
\begin{equation*}
\|\bG_1-\bG_2\|\vee \|\bG_1^{-\T}-\bG_2^{-\T}\| \le C\frac{\delta}{\sqrt{\sigma_{\min}}}.\end{equation*}Then with $(\bU_1,\bV_1) = (\bar\bU^{t+1,-i}, \bar\bV^{t+1,-i})$, and $(\bU_2,\bV_2) = (\tilde\bU^{t+1}, \tilde\bV^{t+1})$ and we know 
\begin{equation*}
    \big\|\bG_{t+1}^{-i} - \bG_{t+1}^{\dagger}\big\|\vee \big\|(\bG_{t+1}^{-i})^{-\T} - (\bG_{t+1}^{\dagger})^{-\T}\big\|\le C\frac{\bar a_{t+1}^{\mathrm{row}}}{\sqrt{\sigma_{\min}}}.
\end{equation*}
The smallness of $\bar a_{t+1}^{\mathrm{row}}$ and the induction hypothesis for $\|\bG_s^{\dagger}-\bR^0\|\vee \|(\bG_s^{\dagger})^{-\T}-\bR^0\|$ yields \eqref{eq_loo_pf_ind_c}.

Now we are ready to bound $\|(\bY - \bP^*)\tilde\bE_V^{t}\|_{\twinf}$ and $\|(\bY - \bP^*)^\T\tilde\bE_U^{t}\|_{\twinf}$ for step $t$. We analyze $\be_i^\T(\bY - \bP^*)\tilde\bE_V^t = \be_i^\T(\bY - \bP^*)(\tilde\bV^t - \tilde\bV)$.
We write $\bW = \nu_{\star}(\bY  -\bP^*)$ and for each $i\in[n]$, decompose $\tilde\bE_{V}^{t} = (\tilde\bV^{t}-\tilde\bV^{t,-i}) + (\tilde\bV^{t,-i} - \check\bV^{t,-i}) + (\check\bV^{t,-i}-\bV^*) + (\bV^* - \bar\bV) + (\bar\bV - \tilde\bV)$. Then
\begin{equation}\label{eq_loo_mc_recentered_V_proof}
\begin{aligned}
q^{-1}\big\|\{\bW\tilde\bE_V^t\}_{i,}\big\| \le{}\;& q^{-1}\|\bW_{i,}(\tilde\bV^{t}-\tilde\bV^{t,-i})\| + q^{-1}\big\|\bW_{i,}(\tilde\bV^{t,-i}-\check\bV^{t,-i})\big\|\\&\; + q^{-1}\big\|\bW_{i,}(\check\bV^{t,-i}-\bV^*)\big\| + q^{-1}\big\|\bW_{i,}(\bV^* - \bar\bV)\big\| + q^{-1}\big\|\bW_{i,}(\bar\bV - \tilde\bV)\big\|\\ := &\, T_{1,t} + T_{2,t} + T_{3,t} + T_{4,t} + T_{5,t}.
\end{aligned}\end{equation}

The first term is bounded by Cauchy's inequality and the row-norm concentration for $\bY-\bP^*$.  Thus
\begin{equation*}
T_{1,t} \le q^{-1}\nu_{\star}\|\bY - \bP^*\|_{\twinf}\|\tilde\bV^{t} - \tilde\bV^{t, -i}\big\|_{\Fn}\le  C a_t^{\rm row}\sqrt{\frac{\nu_\star(q\beta+\nu_\star L_\star)}{q}}.
\end{equation*}
By the initialization assumptions, we further have
\begin{equation*}
    T_{1,t}\le C a_0^{\rm row}\Big\{\frac{\nu_\star(q\beta+\nu_\star L_\star)}{q}\Big\}^{1/2} \le c_0C\alpha\sigma_{\min}\psi_{nq}^{\dagger}\omega_* + C\Delta_\infty(n,q,\delta)\omega_*.
\end{equation*}
Here, the second inequality follows from \eqref{eq_loo_pf_ind_row} and the scaling conditions in \eqref{eq_bern_prop_scaling}.

For the second term, \eqref{eq_loo_mc_reference_change} and $\tilde\bV^{t,-i}-\check\bV^{t,-i} = \check\bV^{t,-i} \big\{ \check\bG_t^{-i}(\bG_t^{-i})^{-1}-\bI_r \big\}^{\T},$ imply
\begin{align*}
T_{2,i}\le &\,q^{-1}\nu_{\star}\|(\bY - \bP^*)_{i,}\check\bV^{t,-i}\|\times \|\bI_r - \check\bG_t^{-i}(\bG_t^{-i})^{-1}\| \\\le &\,C\left\{ \tau_*\sqrt{\frac{\nu_\star\beta L_\star}{q}} + \nu_\star\omega_*\frac{L_\star}{q} \right\}
\left( \rho^t\phi_{nq}^{\dagger} + \frac{\Delta_2(n,q,\delta)}{\alpha\sigma_{\min}} \right) \frac{\tau_*}{\sqrt{\sigma_{\min}}},
\end{align*}
where for $q^{-1}\|(\bY - \bP^*)_{i,}\check\bV^{t,-i}\|$ we applied the Bernstein inequality with $q^{-1/2}\|\check\bV^{t,-i}\|_{\Fn}\le C\tau_*$ and $\|\check\bV^{t,-i}\|_{\twinf}\le C\omega_*$, as $\check\bV^{t,-i}$ is $\cF_{-i}$-measurable. Using $d_t\le \rho^t\phi_{nq}^{\dagger}\tau_*$, $\tau_*\le \sqrt{r\kappa\sigma_{\min}}$, $\tau_*\le \sqrt r\,\omega_*$,  $\zeta_r\phi_{nq}^{\dagger}/\psi_{nq}^{\dagger}\le c_0$, and
\eqref{eq_bern_prop_scaling}, we obtain
\begin{equation*}
    T_{2,t} \le
    c_0\alpha\sigma_{\min}\rho^t\psi_{nq}^{\dagger}\omega_* + C\Delta_\infty(n,q,\delta)\omega_* .
\end{equation*}

For the third term, from the theorem's scaling condition and \eqref{eq_loo_mc_reference_change}, \eqref{eq_loo_pf_ind_row}, we use $\check\bV^{t,-i}-\bV^* = \check\bV^{t,-i} - \tilde\bV^{t} + \tilde\bV^{t} - \tilde\bV + \tilde\bV - \bV^*$ to obtain
\begin{align*}
    T_{3,i} \le&\, C\sqrt{\frac{\nu_\star\beta L_\star}{q}}\times q^{-1/2}\|\check\bV^{t,-i}-\bV^*\|_{\Fn} + C\frac{\nu_\star L_\star}{q}\times \|\check\bV^{t,-i}-\bV^*\|_{2\to\infty}\\\le &\,C\sqrt{\frac{\nu_\star\beta L_\star}{q}} \left\{ a_t^{\rm row} + d_t + \frac{\Delta_2(n,q,\delta)}{\alpha\sigma_{\min}}\tau_* \right\} +  C\frac{\nu_\star L_\star}{q}\omega_*.
\end{align*}
Since $\sqrt{{\nu_\star\beta L_\star}/{q}}\le  \big\{(\nu_{\star}q\beta + \nu_{\star}^2L_{\star})/q\big\}^{1/2}$ under \eqref{eq_bern_prop_scaling}, $a_t^{\rm row}$ can be controlled by \eqref{eq_loo_pf_ind_row}. The $d_t$ part is controlled by the same argument as in the treatment of $T_{2,t}$, and the remaining terms are absorbed into $C\Delta_\infty(n,q,\delta)\omega_*$ by
\eqref{eq_bern_prop_scaling}. Then we arrive at
\begin{equation*}
    T_{3,t} \le
    c_0\alpha\sigma_{\min}\rho^t\psi_{nq}^{\dagger}\omega_* + C\Delta_{\infty}(n,q,\delta)\omega_* .
\end{equation*}

The fourth term is exactly Lemma~\ref{lemma_rect_auxiliary_bar}, because changing the sign of the noise matrix does not affect the norm:
\begin{equation*}
T_{4,t}\le C\Delta_\infty(n,q,\delta)\omega_* .
\end{equation*}
For the fifth term, apply \eqref{eq_lemma_rect_auxiliary_bar_close}, Bernstein's row bound, and \eqref{eq_bern_prop_scaling} to get
\begin{align*}
T_{5,t}\le C\Delta_\infty(n,q,\delta)\omega_* .
\end{align*}

Combining the bounds for $T_{1,t}$--$T_{5,t}$ and shrinking the constants
in \eqref{eq_bern_prop_scaling}, we obtain uniformly over $i\in[n]$, we know $q^{-1}\|\{\bW\tilde\bE_V^t\}_{i,\cdot}\|\le \alpha\sigma_{\min}\rho^t\psi_{nq}^{\dagger}\omega_*/8 +C\Delta_\infty(n,q,\delta)\omega_* $.
Taking the maximum over $i\in[n]$ gives the upper bound for $\delta_{3,t}^{\dagger,U}$ in \eqref{eq_loo_mc_delta3_target}. The bound for $\delta_{3,t}^{\dagger,V}$ in \eqref{eq_loo_mc_delta3_target} can be obtained by symmetry.

Finally, replacing \eqref{eq_rect_step4_delta3} in Step~3 by
\eqref{eq_loo_pf_ind_delta} gives again
\[
\|(\tilde\bE_U^{t+1},\tilde\bE_V^{t+1})\|_{\twinfw} \le \rho\|(\tilde\bE_U^t,\tilde\bE_V^t)\|_{\twinfw} + C\eta\Delta_\infty(n,q,\delta)\omega_* .
\]
Following the same strategy, one can prove
\[
r_t \le \rho^t\psi_{nq}^{\dagger}\omega_* + C\frac{\Delta_\infty(n,q,\delta)}{\alpha\sigma_{\min}}\omega_* .
\]
The derivation of the induction hypothesis
\ref{item_uv_ind_rot_dagger} in Section~\ref{supp_sec_prove_thm_general_theory_UV_noise} is unchanged, since it only uses the Frobenius norm contraction. This closes the original induction and completes the proof of Lemma~\ref{lem_tentative_loo_gradient_mc}.

\end{proof}


\end{document}